%% file: main.tex
\documentclass{article}

\usepackage[english]{babel}

\usepackage[scale=0.8]{geometry}

\usepackage{graphicx} 
\usepackage{url}

\usepackage{nicefrac}
\usepackage{amssymb,latexsym,amsfonts,amsmath,amsthm}
\usepackage{color}
\usepackage{enumerate}
\usepackage{enumitem}
\usepackage{algorithm}
\usepackage{algorithmic}
\usepackage{bm}
\usepackage{subfigure} 
\usepackage{mathtools}
\usepackage[titletoc,title]{appendix}
\usepackage[T1]{fontenc}
\usepackage{dsfont}
\usepackage{bbm}

\usepackage{url}
\usepackage[font=small,labelfont=bf]{caption}
\usepackage{hyperref}[] 
\usepackage{cleveref}
\usepackage{thm-restate}
\usepackage{placeins}
\usepackage{natbib}

\usepackage{minitoc}

\let\hat\widehat

\theoremstyle{plain}
\newtheorem{theorem}{Theorem}[section]
\newtheorem{proposition}[theorem]{Proposition}
\newtheorem{lemma}[theorem]{Lemma}
\newtheorem{corollary}[theorem]{Corollary}
\theoremstyle{definition}
\newtheorem{definition}[theorem]{Definition}
\newtheorem{assumption}[theorem]{Assumption}
\theoremstyle{remark}

\newtheorem*{theorem*}{Theorem}

\usepackage[textsize=tiny]{todonotes}

\usepackage{enumitem}
\setlist{leftmargin=2mm}

\newcommand{\R}{\mathbb{R}}
\newcommand{\E}{\mathbb{E}}
\newcommand{\cZ}{\mathcal{Z}}

\newcommand{\cS}{\mathcal{S}}

\newcommand{\LP}{\mathrm{LP}}

\newcommand{\cH}{\mathcal{H}}
\newcommand{\cF}{\mathcal{F}}
\newcommand{\calF}{\mathcal{F}}
\newcommand{\calH}{\mathcal{H}}

\newcommand{\ind}{\mathbb{I}}

\DeclareMathOperator*{\Median}{Median}

\newcommand{\calG}{\mathcal{G}}
\newcommand{\ellB}{\ell_B}
\newcommand{\Rhat}{\widehat{R}}
\newcommand{\Rad}{\mathfrak{R}}

\newcommand{\Prob}{\mathbb{P}} 
\newcommand{\Var}{\mathrm{Var}}

\usepackage[textsize=tiny]{todonotes}

\title{On the Generalization and Robustness in Conditional Value-at-Risk}
\author{Dinesh Karthik Mulumudi \\
Dept. of Mathematics \\
IISER, Pune.\\
\texttt{dineshkarthikforml@gmail.com }
\and
Piyushi Manupriya  \\
Dept. of Computer Science and Automation \\
Indian Institute of Sciecne, Bangalore.\\
\texttt{piyushim@iisc.ac.in}
\and
Gholamali Aminian\\
The Alan Turing Institute \\
Greater London, England, UK.\\
\texttt{gaminian@turing.ac.uk}
  \and
  Anant Raj \\
  Dept. of Computer Science and Automation \\
  Indian Institute of Sciecne, Bangalore. \\
  \texttt{anantraj@iisc.ac.in} 
}

\begin{document}

\maketitle
\doparttoc
\faketableofcontents

\begin{abstract}
Conditional Value-at-Risk (CVaR) is a widely used risk-sensitive objective for learning under rare but high-impact losses, yet its statistical behavior under heavy-tailed data remains poorly understood.
Unlike expectation-based risk, CVaR depends on an endogenous, data-dependent quantile, which couples tail averaging with threshold estimation and fundamentally alters both generalization and robustness properties. In this work, we develop a learning-theoretic analysis of CVaR-based empirical risk minimization under heavy-tailed and contaminated data.
We establish sharp, high-probability generalization and excess risk bounds under minimal moment assumptions, covering fixed hypotheses, finite and infinite classes, and extending to $\beta$-mixing dependent data; we further show that these rates are minimax optimal.
To capture the intrinsic quantile sensitivity of CVaR, we derive a uniform Bahadur-Kiefer type expansion that isolates a threshold-driven error term absent in mean-risk ERM and essential in heavy-tailed regimes.
We complement these results with robustness guarantees by proposing a truncated median-of-means CVaR estimator that achieves optimal rates under adversarial contamination.
Finally, we show that CVaR decisions themselves can be intrinsically unstable under heavy tails, establishing a fundamental limitation on decision robustness even when the population optimum is well separated. Together, our results provide a principled characterization of when CVaR learning generalizes and is robust, and when instability is unavoidable due to tail scarcity.
\end{abstract}

\section{Introduction}

Statistical Learning Theory (SLT) provides the standard framework for analyzing generalization in learning algorithms. Its central paradigm, Empirical Risk Minimization (ERM), selects a hypothesis $h\in\mathcal H$ by minimizing empirical risk, yielding generalization under suitable complexity control \cite{Shalev-Shwartz_Ben-David_2014}. Classical ERM, however, is risk-neutral and optimizes expected loss, which can be inadequate in high-stakes domains—such as finance, healthcare, and safety-critical learning—where rare but extreme losses dominate performance \cite{doi:10.1287/mnsc.18.7.356,6855488,pmlr-v89-cardoso19a}. 

These limitations have motivated the study of \emph{risk-sensitive} learning objectives that emphasize tail behavior, including Entropic Risk, Mean-Variance, and Conditional Value-at-Risk (CVaR).
From a learning-theoretic standpoint, several works have begun to investigate generalization guarantees for such objectives.
For example, generalization bounds for tilted empirical risk minimization under finite-moment assumptions were derived in \cite{aminian2025generalization}, while \cite{lee2021learningboundsrisksensitivelearning} studied CVaR and related risk measures under bounded-loss and light-tailed assumptions.
However, these settings do not fully capture the regimes in which CVaR is most relevant in practice, where data are often heavy-tailed and may exhibit temporal dependence.
This gap motivates a systematic study of CVaR generalization beyond bounded or light-tailed models, which we address in this work by developing guarantees under heavy-tailed losses for both i.i.d.\ and dependent data.

Another key challenge specific to CVaR-based learning is structural.
CVaR depends on an \emph{endogenous, data-dependent threshold}, the Value-at-Risk (VaR), which couples tail averages with empirical quantile estimation.
Unlike smooth risk functionals, this coupling creates an intrinsic interaction between tail fluctuations and threshold estimation error.
As a consequence, standard generalization analyses that treat the objective as a fixed functional of the loss distribution are insufficient.
Controlling CVaR generalization therefore requires explicitly accounting for how fluctuations of the empirical quantile propagate into fluctuations of the tail risk.
To address this issue, we develop a refined empirical-process analysis based on \emph{uniform Bahadur-Kiefer–type expansions}
\cite{bahadur1966note,kiefer1967bahadur}, adapted to the CVaR setting to capture the effects of this endogenous threshold.

Robustness presents a closely related challenge and arises at multiple levels.
At the \emph{functional} level, although CVaR is often viewed as robust due to its emphasis on tail outcomes, its dependence on an empirical quantile can amplify sensitivity to perturbations near the tail.
At the \emph{estimator} level, robustness under heavy-tailed data and adversarial contamination has been extensively studied through trimmed means and median-of-means estimators
\cite{lugosi2019mean,lugosi2019risk,lugosi2021robust},
yielding sharp guarantees for estimation primitives and robust ERM, but not directly for tail-based risk functionals such as CVaR.
At the \emph{decision} level, the stability of CVaR-optimal solutions under distributional perturbations remains comparatively less understood.

In this work, we study both the \emph{generalization} and \emph{robustness} properties of CVaR-based learning under heavy-tailed and contaminated data.
On the generalization side, we derive non-asymptotic excess risk guarantees under minimal moment assumptions and complement them with refined empirical-process tools that explicitly account for quantile sensitivity.
On the robustness side, we analyze the stability of CVaR objectives and solutions under distributional perturbations.
Together, these results clarify the statistical regimes in which empirical CVaR reliably approximates its population counterpart, as well as settings in which instability is intrinsic.

\vspace{2mm}
\noindent\textbf{Contributions:} We make following contributions in this paper: 
\begin{itemize}
\item \textbf{Sharp generalization theory for heavy-tailed CVaR (Section~\ref{sec:iid} and \ref{sec:depen}).}
We derive high-probability generalization and excess risk bounds for empirical CVaR minimization under minimal $(\lambda+1)$-moment assumptions ($0<\lambda\le1$), covering fixed hypotheses and finite and infinite classes via VC/pseudo-dimension and Rademacher complexity. The bounds explicitly characterize the dependence on $\alpha$, tail exponent, hypothesis complexity, and sample size, extend CVaR theory beyond bounded and sub-Gaussian regimes, and are shown to be minimax optimal. We further extend the theory to $\beta$-mixing dependent data with matching lower bounds.

\item \textbf{Uniform Bahadur-Kiefer expansions for CVaR (Section~\ref{sec:bahadur}).}
We establish the first uniform Bahadur-Kiefer–type expansion for CVaR that explicitly accounts for its endogenous and data-dependent threshold. The result captures the nonstandard coupling between empirical quantile fluctuations and tail averaging, yielding a sharp decomposition of CVaR error into a classical empirical process term and an additional threshold-driven component governed by local tail geometry. This identifies a second-order source of generalization error absent in mean-risk ERM and is essential in heavy-tailed regimes.

\item \textbf{Robust CVaR-ERM under adversarial contamination (Section~\ref{sec:estimator}).}
We propose a robust CVaR-ERM estimator based on truncation and median-of-means aggregation applied to the Rockafellar-Uryasev lift, jointly robustifying estimation over the decision variable and the endogenous threshold. We prove non-asymptotic excess risk guarantees under heavy-tailed losses and oblivious adversarial contamination, with rates that optimally decompose into a statistical term and an unavoidable contamination-dependent term. This provides the first learning-theoretic robustness guarantees for CVaR-ERM without bounded-loss assumptions.

\item \textbf{Fundamental limits of decision robustness (Section~\ref{sec:decision}).}
We analyze the stability of CVaR decisions themselves and show that, unlike mean-risk ERM, CVaR minimization can be intrinsically unstable under heavy-tailed losses. Even with a unique and well-separated population minimizer, a single observation can flip the empirical CVaR-ERM decision with polynomially small but unavoidable probability. This establishes a sharp impossibility result for decision robustness in tail-scarce regimes.

\end{itemize}

\subsection{Related Work}
\vspace{-2mm}
Optimized certainty equivalents include expectation, CVaR, entropic, and mean-variance risks. 
Early generalization bounds were obtained under bounded-loss assumptions \cite{lee2021learningboundsrisksensitivelearning}, and later extended to unbounded losses via tilted ERM under bounded $(1+\lambda)$-moment conditions, achieving rates $\mathcal{O}(n^{-\lambda/(1+\lambda)})$ \cite{li2021tiltedempiricalriskminimization}.For CVaR, concentration inequalities under light- and heavy-tailed losses were established in \cite{kolla2019concentration,prashanth2019concentration} under a strictly increasing CDF assumption, but learning-theoretic generalization and robustness guarantees in the genuinely heavy-tailed regime remain largely open.

Our fixed-hypothesis analysis builds on truncation-based techniques \cite{brownlees2015erm,prashanth2019concentration}, with extensions to finite and infinite hypothesis classes via standard learning-theoretic tools \cite{Shalev-Shwartz_Ben-David_2014}, and matching minimax lower bounds obtained through classical constructions \cite{10.5555/1522486}. 
Related work on CVaR and spectral risk learning under heavy tails \cite{holland2021learning,holland2022spectral} is restricted to finite-variance i.i.d.\ settings.
Results on robust mean estimation under heavy tails and contamination \cite{laforgue2021generalization,de2025optimality} focus on estimation primitives rather than tail-based risks, while learning under heavy-tailed dependence has primarily addressed expected-risk objectives \cite{roy2021empirical,shen2026sgd}. Minimax guarantees for quantile-based risks largely assume light-tailed regimes \cite{el2024minimax}.

The asymptotic behavior of sample quantiles is classically characterized by the Bahadur and Bahadur-Kiefer representations.
Bahadur~\cite{bahadur1966note} and Kiefer~\cite{kiefer1967bahadur} established linear and uniform expansions that form a foundation of empirical process theory \cite{van2000asymptotic}, but rely on smoothness and light-tail assumptions.
These results do not directly apply to CVaR, where the quantile is endogenous and coupled with tail averages.
Our work develops a uniform Bahadur-Kiefer expansion tailored to CVaR, explicitly accounting for the random active set induced by the data-dependent threshold and allowing for heavy-tailed losses.

On the robustness side, robust ERM under weak moment assumptions has been studied in \cite{10.1093/imaiai/iaab004}.
Trimmed-mean and median-of-means estimators \cite{lugosi2019mean,lugosi2019risk} admit finite-sample and minimax-optimal guarantees \cite{oliveira2025finite,oliveira2025trimmed,lugosi2021robust}, but do not directly address tail-based risk functionals such as CVaR.

\section{Background and Problem Formulation}
\subsection{Conditonal Value at Risk (CVaR)}
    Let $L$ be a real-valued random variable representing the loss of a decision or portfolio. 
    For a tail probability $\alpha \in (0,1)$, the \emph{Value-at-Risk} at level $\alpha$ is defined as the upper $(1-\alpha)$-quantile of $L$:
    \[
    \mathrm{VaR}_{\alpha}(L)
    \;:=\;
    \inf\{t \in \mathbb{R} : \mathbb{P}(L \le t) \ge 1-\alpha\}.
    \]
    Equivalently, $ \mathbb{P}\!\left(L > \mathrm{VaR}_{\alpha}(L)\right) \le \alpha.$ The \emph{Conditional Value-at-Risk} at level $\alpha$ is defined as the expected loss in the $\alpha$-tail of the distribution:
    \[
    \mathrm{CVaR}_{\alpha}(L)
    \;:=\;
    \mathbb{E}\!\left[L \mid L \ge \mathrm{VaR}_{\alpha}(L)\right],
    \]
    whenever the conditional expectation is well-defined. More generally, CVaR admits the variational representation \cite{Rockafellar2000OptimizationOC}:
    \[
    \mathrm{CVaR}_{\alpha}(L)
    \;=\;
    \inf_{t \in \mathbb{R}}
    \left\{
    t + \frac{1}{\alpha}\,\mathbb{E}\big[(L - t)_+\big]
    \right\},
    \]
    which remains valid without continuity assumptions and is particularly convenient for learning and optimization.

    Unlike VaR, CVaR is a coherent risk measure: it is monotone, translation invariant, positively homogeneous, and subadditive. In particular, CVaR is convex in the underlying loss distribution, which makes it amenable to empirical risk minimization and generalization analysis.

\subsection{Setup, Notation and Assumptions} \label{sec:notations}
	Let $\mathcal{H}$ be a hypothesis class and let $\ell(h, x) \geq 0$ be a non-negative loss of hypothesis $h \in \mathcal{H}$ on example $x$. Let $Z_1, \ldots, Z_n$ be i.i.d. samples drawn from a distribution $P$. We define the Rockerfellar-Uryasev (RU) population and empirical objective as follows:
    \begin{align*}
        \phi_P(h,\theta) = \theta + \frac{1}{\alpha} \mathbb{E}_P[(\ell(h,Z) - \theta)_+] ~~~\text{and} ~~ \phi_{P_n}(h,\theta) = \theta + \frac{1}{n\alpha} \sum_{i=1}^n [(\ell(h,Z_i) - \theta)_+],
    \end{align*}
    where $P_n = \frac{1}{n}\sum_{i=1}^n \delta_{Z_i}$ and $\{Z_1,\cdots Z_n\}$ are i.i.d samples from $P$. Define the population Conditional Value-at-Risk (CVaR) at level $\alpha \in (0,1)$ by
	\begin{equation}
    	R_\alpha^P(h) = \inf_{\theta \in \mathbb{R}} \left\{ \theta + \frac{1}{\alpha} \mathbb{E} \left[ \left( \ell(h, Z_1) - \theta \right)_{+} \right] \right\} 
	\end{equation}
	and its empirical counterpart
    \begin{equation}
    \widehat{R}_\alpha^P(h) = \inf_{\theta \in \mathbb{R}} \left\{ \theta + \frac{1}{\alpha n} \sum_{i=1}^n \left( \ell(h, Z_i) - \theta \right)_{+} \right\}    
    \end{equation}
	where $\left( e \right)_{+} \equiv \max \left\{ e, 0 \right\}$. \cite{rockafellar2000conditional}
    
We omit the superscript denoting the underlying distribution whenever it is clear from context. Our objective is to learn the population CVaR minimizer $h^* \in \arg\min_{h \in \mathcal{H}} R_\alpha(h)$, and we denote by $\hat h \in \arg\min_{h \in \mathcal{H}} \widehat R_\alpha(h)$
the empirical CVaR minimizer.

    \begin{assumption}[Bounded \((1+\lambda)\)-th Moment]\label{asm:moment}
		There exists a constant \( M \in \mathbb{R}^+ \) and \( \lambda \in (0,1) \) such that the loss function \( (h, Z) \mapsto \ell(h, Z) \) satisfies:
	$\mathbb{E} \left[ \left( \ell(h, Z) \right)^{1+\lambda} \right] \leq M$, 
		uniformly for all \( h \in \mathcal{H} \), where \( \mathcal{Z} = \mathcal{X} \times \mathcal{Y} \) is the instance space.
	\end{assumption}
    
    \begin{assumption}[$\beta$-mixing dependence]
    \label{ass:beta-mixing}
    The process $\{Z_i\}_{i\in\mathbb{Z}}$ is strictly stationary and $\beta$-mixing with exponentially decaying coefficients: there exist constants $c_0,c_1,\gamma>0$ such that
    \[
    \beta(k)
    :=
    \sup_{t\in\mathbb{Z}}
    \beta\bigl(\sigma(Z_{-\infty}^t),\sigma(Z_{t+k}^\infty)\bigr)
    \le c_0 e^{-c_1 k^\gamma},
    \qquad k\ge1.
    \]
    \end{assumption}
    
    \begin{assumption}[Hypothesis class complexity]
    \label{ass:complexity_D}
    The hypothesis class $\mathcal{H}$ has finite pseudo-dimension:
    \[
    \mathrm{Pdim}(\mathcal{H}) \le d < \infty.
    \]
    \end{assumption}
    	\begin{assumption}[Statistical Complexity of Truncated Class]
		\label{ass:complexity}
		For any truncation level $B > 0$, define the truncated function class:
		$\calF_B = \left\{ \min(\phi(\cdot; h, \theta), B) : h \in \calH, \theta \in \Theta \right\}.$
		We assume that for any probability measure $Q$, the uniform covering number satisfies:
		\[
		\log \mathcal{N}(\calF_B, \|\cdot\|_{L_2(Q)}, u) \le d \log\left(\frac{C_0 B}{u}\right), \quad \forall u \in (0, B].
		\]
	\end{assumption}

\section{Generalization in CVaR}

This section establishes non-asymptotic generalization guarantees for Conditional Value-at-Risk (CVaR) under heavy-tailed losses, ranging from fixed-hypothesis concentration to sharp excess risk bounds for infinite classes, with matching minimax lower bounds and extensions to dependent data. We further develop a random active-set theory for CVaR and derive uniform Bahadur–Kiefer expansions that explicitly capture the role of the endogenous threshold.

The core difficulty is structural: CVaR depends on tail behavior through a data-dependent quantile, requiring joint control of tail fluctuations and threshold instability.

\subsection{Generalization with i.i.d. Data}\label{sec:iid}

We begin with the i.i.d. setting. The first step is to understand how the empirical CVaR concentrates around its population counterpart for a fixed hypothesis. 

The key technical ingredient is a concentration inequality for heavy-tailed random variables with finite $(1+\lambda)$-moment, as developed in \cite{brownlees2015erm, prashanth2019concentration}. Applying these tools to the Rockafellar-Uryasev variational representation of CVaR yields the following fixed-hypothesis deviation bound.

\begin{theorem}\label{thm:fixed and finite}
Under Assumption \ref{asm:moment} and for any fixed $h \in \cH$ and $\epsilon > 0$ with probability at least $1 - \delta$ we,
\begin{equation} \label{prop fixed lower}
\widehat{R}_{\alpha}(h) - R_{\alpha}(h) \leq \frac{2}{\alpha} M^{\frac{1}{1+\lambda}} \left( \frac{\log(2/\delta)}{n} \right)^{\frac{\lambda}{1+\lambda}}.
\end{equation}
For a finite $\mathcal{H}$, we extend the bound to all hypotheses using the union bound.
\begin{equation} \label{finite Excess Risk Bound}
R_{\alpha}(\widehat{h}) - R_{\alpha}(h^*) \leq \frac{4}{\alpha} M^{\frac{1}{1+\lambda}} \left( \frac{\log(4|\mathcal{H}|/\delta)}{n} \right)^{\frac{\lambda}{1+\lambda}}.
\end{equation}
\end{theorem}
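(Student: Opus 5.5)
The plan is to exploit the Rockafellar--Uryasev representation, which turns the CVaR deviation into a deviation of a single empirical mean. Fix $h$ and let $\theta^\star\in\arg\min_\theta \phi_P(h,\theta)$, for instance $\theta^\star=\mathrm{VaR}_\alpha(\ell(h,Z))$, which exists because the objective is convex and coercive. Since $\widehat R_\alpha(h)=\inf_\theta\phi_{P_n}(h,\theta)\le \phi_{P_n}(h,\theta^\star)$, we get
\[
\widehat R_\alpha(h)-R_\alpha(h)\le \phi_{P_n}(h,\theta^\star)-\phi_P(h,\theta^\star)=\frac{1}{\alpha}\Big(\frac1n\sum_{i=1}^n Y_i-\E Y_1\Big),
\]
where $Y_i=(\ell(h,Z_i)-\theta^\star)_+$. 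This one-sided bound needs no control on where the empirical minimizer sits, which is exactly why the statement \eqref{prop fixed lower} is only an upper deviation.

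Next I verify the moment condition for $Y_i$. Since $\ell\ge0$ and $\theta^\star\ge0$ (the quantile of a nonnegative variable), we have $0\le Y_i\le \ell(h,Z_i)$. Hence $\E[Y^{1+\lambda}]\le M$ by Assumption~\ref{asm:moment}. I then apply the truncated-mean (or Bernstein-with-truncation) concentration of \cite{brownlees2015erm,prashanth2019concentration}. Concretely, write $Y=Y\ind\{Y\le B\}+Y\ind\{Y>B\}$. The bounded part satisfies Bernstein with variance at most $B^{1-\lambda}M$. The bias obeys $\E[Y\ind\{Y>B\}]\le M B^{-\lambda}$, and with probability at least $1-\delta/2$ no sample exceeds $B$, or we control the excess via Markov. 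Choosing $B=(Mn/\log(2/\delta))^{1/(1+\lambda)}$ balances the terms to $M^{1/(1+\lambda)}(\log(2/\delta)/n)^{\lambda/(1+\lambda)}$, and the constant $2$ absorbs the pieces. In the upper-tail direction one can simply use the cited inequality for the empirical mean of a nonnegative variable with finite $(1+\lambda)$-moment. This bookkeeping is the main obstacle: obtaining exactly constant $2$ for the plain (untruncated) empirical mean. If the cited inequality is only stated for a truncated estimator, I would split off the event where some $Y_i>B$, which has probability at most $nMB^{-(1+\lambda)}$. With the choice of $B$ above this event is small and can be charged to $\delta$, and the constant is then adjusted accordingly.

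For the finite-class excess risk bound \eqref{finite Excess Risk Bound}, I need two-sided control. The reverse direction uses the empirical minimizer $\hat\theta$. It is cleaner to handle it via the standard decomposition
\[
R(\hat h)-R(h^*)=\big[R(\hat h)-\widehat R(\hat h)\big]+\big[\widehat R(\hat h)-\widehat R(h^*)\big]+\big[\widehat R(h^*)-R(h^*)\big],
\]
where the middle term is $\le0$. The last term is bounded by the fixed-$h$ result at confidence $\delta/2$. For the first term, I need $R(h)-\widehat R(h)$ uniformly over $h\in\cH$. I would bound it using the lower-tail analogue of the mean inequality applied to the functional $\theta\mapsto\phi_{P_n}(h,\theta)$ at the empirical quantile, together with the $(1/\alpha)$-Lipschitz structure in $\theta$ to reduce to a fixed threshold. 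Alternatively, I would invoke the two-sided CVaR concentration of \cite{prashanth2019concentration} directly. A union bound over $|\cH|$ hypotheses at level $\delta/(2|\cH|)$, for two sides, gives the $\log(4|\cH|/\delta)$ factor, and summing the two terms doubles the constant to $4/\alpha$.
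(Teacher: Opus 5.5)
Your reduction is the same as the paper's. Since $\theta^\star$ is suboptimal for the empirical objective, $\widehat R_\alpha(h)-R_\alpha(h)\le\frac1\alpha(\bar Y_n-\E Y)$, where $\bar Y_n=\frac1n\sum_i Y_i$ and $0\le Y_i\le\ell(h,Z_i)$.

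The gap is the next step, and it is not a matter of constants. No bound of the form $\bar Y_n-\E Y\le C\,M^{1/(1+\lambda)}(\log(2/\delta)/n)^{\lambda/(1+\lambda)}$ holds for the plain empirical mean of a nonnegative variable with only a $(1+\lambda)$-th moment. Your own bookkeeping shows this:
\begin{itemize}
\item With $B=(Mn/\log(2/\delta))^{1/(1+\lambda)}$, the union bound gives $nMB^{-(1+\lambda)}=\log(2/\delta)\ge 1$, so $\{\max_i Y_i>B\}$ is not a small event.
\item Pushing that event below $\delta/2$ forces $B=(2Mn/\delta)^{1/(1+\lambda)}$. Bernstein's linear term $B\log(2/\delta)/n$ then carries a factor $\delta^{-1/(1+\lambda)}$.
\item Bounding $\frac1n\sum_i(Y_i-B)_+$ by Markov instead costs $MB^{-\lambda}/\delta$, again polynomial in $1/\delta$.
\end{itemize}

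This loss is genuine. Take $\ell(h,Z)\in\{0,x\}$ with $\Prob(\ell(h,Z)=x)=p:=2\delta/n$, $x=(M/p)^{1/(1+\lambda)}$, $\delta<1/2$ and $n\ge 1/\alpha$. Then $R_\alpha(h)=px/\alpha$. On the event that some sample equals $x$, which has probability at least $1-e^{-2\delta}>\delta$, one has $\widehat R_\alpha(h)\ge x/(\alpha n)$. Hence
\[
\widehat R_\alpha(h)-R_\alpha(h)\;\ge\;\frac{M^{1/(1+\lambda)}}{\alpha\,n^{\lambda/(1+\lambda)}}\Big((2\delta)^{-\frac{1}{1+\lambda}}-(2\delta)^{\frac{\lambda}{1+\lambda}}\Big).
\]
For all sufficiently small $\delta$ (depending only on $\lambda$), and whatever $n$, this exceeds $\frac{2}{\alpha}M^{1/(1+\lambda)}(\log(2/\delta)/n)^{\lambda/(1+\lambda)}$.

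The finite-class bound fails the same way. Add to $\cH$ a competitor with constant loss equal to $R_\alpha(h)$ plus slightly more than the finite-class right-hand side; it is admissible for large $n$. ERM then picks this competitor with probability $>\delta$. Citing \cite{brownlees2015erm,prashanth2019concentration} does not rescue the step, because their $\log(1/\delta)$ guarantees concern Catoni-type or truncated estimators, not the plain empirical mean or CVaR.

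The paper's proof takes your route and has the same hole. It invokes its heavy-tail mean proposition, whose proof (truncation plus Hoeffding) actually yields a $\delta^{-1/(1+\lambda)}$-dependent bound. It then asserts the $\log(2/\delta)$ version by appeal to ``known results''.

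Your argument can still deliver the following:
\begin{itemize}
\item The lower deviation $R_\alpha(h)-\widehat R_\alpha(h)$ does admit a $\log(1/\delta)$-type bound. Because $\min(Y,B)\le Y$, only Bernstein on the truncated part and the bias $MB^{-\lambda}$ enter, with no event on $\max_i Y_i$. You still must uniformize over $\theta$, e.g.\ on a grid.
\item In your decomposition, that lower deviation is the only term needing a union bound over $\cH$.
\item The upper deviation is needed only at the single $h^*$. There an honest bound is, e.g., $\frac1\alpha(4M/\delta)^{1/(1+\lambda)}n^{-\lambda/(1+\lambda)}$, via von Bahr--Esseen and Markov.
\end{itemize}
A correct version must either carry such a polynomial confidence term or replace the empirical CVaR by a truncated or median-of-means estimator.
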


While the finite-class bound follows directly from a union bound, it does not capture the true statistical complexity of learning when $\mathcal{H}$ is infinite. We therefore turn to localized complexity arguments that yield dimension-dependent rates without incurring unnecessary logarithmic penalties.

For a fixed natural number $n-1$, consider the space $\{0,1\}^{n}$ endowed with the Hamming metric.  Let $N$ denote its packing number.

\begin{theorem}[Localized VC Bounds for Heavy-Tailed CVaR] \label{thm:vc}
Assume $\mathcal H$ has pseudo-dimension $d<\infty$ and Assumption~\ref{asm:moment} holds. Then for any $\delta\in(0,1)$, with probability at least $1-\delta$,
\begin{equation}\label{eq:inf iid}
R_\alpha(\widehat h)-R_\alpha(h^*) \;\le\; \frac{C_\lambda}{\alpha}\, M^{\frac{1}{1+\lambda}} \left( \frac{d\log n+\log(1/\delta)}{n} \right)^{\frac{\lambda}{1+\lambda}}
\end{equation}
where $C_\lambda>0$ depends only on $\lambda$. Moreover, this rate is minimax optimal. There exist universal constants $c,C>0$ (depending only on $\lambda$) such that for any $\alpha\in(0,1)$, $M>0$, $\lambda\in(0,1]$, and $n\ge C\,(\log N)/\alpha$,
\small
\begin{align}
&\inf_A \sup_{P\in\mathcal P(M,\lambda)} \mathbb E\!\left[ R_\alpha(A(S);P)-R_\alpha(h_P^*;P) \right]   \;\ge\; c\,\frac{M^{1/(1+\lambda)}}{\alpha} \left(\frac{\log N}{n}\right)^{\frac{\lambda}{1+\lambda}}. \label{eq:minimax lower}
\end{align}
\normalsize
\end{theorem}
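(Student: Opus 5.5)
The argument has two parts: a uniform upper bound, obtained by combining the Rockafellar--Uryasev lift with truncation and a covering argument over the truncated class, and a minimax lower bound via a Fano-type construction indexed by a Hamming packing.

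\textbf{Upper bound.} We write $R_\alpha(h)=\inf_\theta \phi_P(h,\theta)$ and $\widehat R_\alpha(h)=\inf_\theta\phi_{P_n}(h,\theta)$. This gives the standard decomposition
\[
R_\alpha(\widehat h)-R_\alpha(h^*)\le 2\sup_{h\in\mathcal H,\theta\in\Theta}|\phi_{P_n}(h,\theta)-\phi_P(h,\theta)|.
\]
Here $\theta$ can be restricted to a compact interval $\Theta=[0,\theta_{\max}]$. Indeed, since $\ell\ge 0$, the minimizing $\theta$ lies in $[0,\mathrm{VaR}_\alpha]$, and Markov's inequality gives $\mathrm{VaR}_\alpha(\ell(h,Z))\le (M/\alpha)^{1/(1+\lambda)}$; the same restriction holds empirically on a high-probability event. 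It therefore suffices to control $\frac{1}{\alpha}\sup|(P_n-P)(\ell-\theta)_+|$.

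We truncate at a level $B$, splitting $(\ell-\theta)_+=\min((\ell-\theta)_+,B)+[(\ell-\theta)_+-B]_+$.
\begin{itemize}
\item The population bias term is at most $\mathbb E[\ell\,\mathbb 1\{\ell>B\}]\le M B^{-\lambda}$, uniformly in $h$.
\item The empirical tail term is at most $\frac1n\sum_i \ell(h,Z_i)\mathbb 1\{\ell(h,Z_i)>B\}$. Its expectation is again at most $MB^{-\lambda}$.
\item For the bounded part we use that $\{(\ell(h,\cdot)-\theta)_+\wedge B\}$ has pseudo-dimension $O(d)$, since composing with a monotone map and adding one parameter $\theta$ preserves VC-type structure; this is consistent with Assumption~\ref{ass:complexity}. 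A Bernstein/Talagrand bound with variance proxy $\sup\mathbb E[\min(\ell,B)^2]\le MB^{1-\lambda}$ then gives a deviation of order $\sqrt{MB^{1-\lambda}(d\log n+\log(1/\delta))/n}+B(d\log n+\log(1/\delta))/n$.
\end{itemize}
Choosing $B=(Mn/(d\log n+\log(1/\delta)))^{1/(1+\lambda)}$ makes every term of order $M^{1/(1+\lambda)}((d\log n+\log(1/\delta))/n)^{\lambda/(1+\lambda)}$. Dividing by $\alpha$ yields \eqref{eq:inf iid}.

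\textbf{Main obstacle.} The hard step is the uniform control of the empirical tail term $\sup_h\frac1n\sum_i\ell(h,Z_i)\mathbb 1\{\ell(h,Z_i)>B\}$ with high probability rather than in expectation, since the $(1+\lambda)$ moment gives no exponential concentration. My first attempt is to bound it by the number of exceedances times the loss size. Uniformly in $h$, the VC relative-deviation bound on the class $\{\mathbb 1\{\ell(h,\cdot)>B\}\}$ controls the count of exceedances by $n\Pr(\ell>B)+O(d\log n+\log(1/\delta))$. For the sum itself I would use a peeling over dyadic levels $B2^k$, applying Markov at each level. If this fails to give clean $\log(1/\delta)$ dependence, the fallback is to replace the plain empirical mean by a median-of-means over blocks; this requires the ERM to be redefined.

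\textbf{Lower bound.} We use Fano's method. Let $\{\omega^{(j)}\}_{j\le N}$ be a $\Omega(n)$-separated Hamming packing of $\{0,1\}^n$. For each $j$ we define $P_j$ on $\mathcal Z$ so that the loss of hypothesis $h_k$ equals $0$ except on a rare event of probability $p$, where it takes value $a$, with the event's probability perturbed to $p(1\pm\eta)$ according to $\omega^{(j)}$.
\begin{itemize}
\item \emph{Moment constraint.} Take $a^{1+\lambda}p\asymp M$.
\item \emph{CVaR gap.} Take $p\le\alpha$, so that $\mathrm{CVaR}_\alpha\approx ap/\alpha$ and choosing a wrong hypothesis costs $\asymp a p\eta/\alpha$.
\item \emph{Indistinguishability.} The pairwise KL divergence is $\asymp n p\eta^2$. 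Taking $\eta$ constant and $p\asymp \log N/n$ keeps it below $\tfrac12\log N$ (this is where the condition $n\ge C\log N/\alpha$ ensures $p\le\alpha$).
\end{itemize}
The resulting gap is then $\frac{ap}{\alpha}=\frac{M^{1/(1+\lambda)}p^{\lambda/(1+\lambda)}}{\alpha}\asymp \frac{M^{1/(1+\lambda)}}{\alpha}(\log N/n)^{\lambda/(1+\lambda)}$, and Fano's inequality delivers \eqref{eq:minimax lower}.
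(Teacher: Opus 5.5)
The step you flag as the main obstacle is a genuine gap, and neither of your routes closes it. Markov at each dyadic level only gives tails polynomial in $1/\delta$. The median-of-means fallback bounds a different estimator, whereas the statement concerns the plain empirical CVaR minimizer $\widehat h$.

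In fact no argument can close it: for $\widehat h$, the $\log(1/\delta)$ form of the upper bound is false. Take $\mathcal H=\{h_1,h_2\}$ (so $d=1$) and $\ell(h_1,Z)=X\in\{0,a\}$ with $\Pr(X=a)=p:=2\delta/n$ and $a:=(M/p)^{1/(1+\lambda)}$. Let $\ell(h_2,\cdot)\equiv c:=pa/\alpha+\gamma$ with $\gamma:=a/(2\alpha n)$, and assume $\alpha n\ge 1$ and $\delta<1/4$.
\begin{itemize}
\item Since $p<\alpha$, $R_\alpha(h_1)=pa/\alpha$. Hence $h^*=h_1$ and $h_2$ has excess risk $\gamma$.
\item With probability at least $2\delta(1-2\delta)>\delta$, exactly one $X_i$ equals $a$. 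Then $\widehat R_\alpha(h_1)=a/(\alpha n)>c=\widehat R_\alpha(h_2)$, so $\widehat h=h_2$ and
\[
R_\alpha(\widehat h)-R_\alpha(h^*)=\frac{1}{2\alpha}\,M^{\frac{1}{1+\lambda}}\,n^{-\frac{\lambda}{1+\lambda}}\,(2\delta)^{-\frac{1}{1+\lambda}}.
\]
\item The loss of $h_2$ satisfies the moment bound provided $2\delta\ge\alpha^{-(1+\lambda)}n^{-\lambda}$.
\end{itemize}
At equality in that condition, the excess risk is the constant $\tfrac12 M^{1/(1+\lambda)}$, while the claimed right-hand side is $O\big((\log n/n)^{\lambda/(1+\lambda)}\big)$.

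The paper's proof does not supply the missing step either. It controls only the truncated process $P_nf_h^B$ at the fixed threshold $\theta^*$, and never bounds the empirical remainder $P_n(f_h-f_h^B)$. Its assertion that optimality of $\widehat h$ forces $P_nf^B_{\widehat h}\le 0$ also does not follow. The reason is that $\widehat R_\alpha(\widehat h)$ is only bounded above, not below, by $\theta^*+\frac1\alpha P_n(\ell(\widehat h,\cdot)-\theta^*)_+$.

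Your ingredients do prove a correct result if you replace the two-sided supremum with the one-sided split $R_\alpha(\widehat h)-R_\alpha(h^*)\le\sup_h\big(R_\alpha(h)-\widehat R_\alpha(h)\big)+\big(\widehat R_\alpha(h^*)-R_\alpha(h^*)\big)$.
\begin{itemize}
\item The first term is a lower deviation, where truncation costs nothing empirically: $P_n$ of the untruncated hinge dominates $P_n$ of the truncated one. So your bounded-class Bernstein/Talagrand argument gives the claimed rate with $\log(1/\delta)$ uniformly over $\mathcal H$, and no uniform empirical tail term ever appears.
\item The second term involves only the single hypothesis $h^*$. Von Bahr--Esseen plus Markov give $\frac{C}{\alpha}M^{1/(1+\lambda)}n^{-\lambda/(1+\lambda)}\delta^{-1/(1+\lambda)}$, which the example shows is unavoidable.
\end{itemize}
So for $\widehat h$ the correct high-probability bound carries this extra term. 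The claimed rate does hold in expectation, since $\mathbb E[\widehat R_\alpha(h^*)]\le R_\alpha(h^*)$, and at constant confidence.

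If you are willing to change the estimator, you do not need median-of-means. Minimizing the empirical CVaR of $\min(\ell,B)$ with $B=(Mn/(d\log n+\log(1/\delta)))^{1/(1+\lambda)}$ already attains the stated $\log(1/\delta)$ rate. The upward deviation at $h^*$ is then that of a bounded variable, and the population truncation bias is at most $M/(\alpha B^{\lambda})$.

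Your lower-bound sketch is the same Fano construction as the paper's and is fine at that level. It uses a packing, rare atoms with probabilities perturbed by a constant factor, and $a^{1+\lambda}p=M$. The pairwise KL is of order $p$ per sample, and $p\asymp\log N/n$ with $n\gtrsim\log N/\alpha$ ensuring $p\le\alpha$.
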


\noindent \emph{\textbf{Discussion:}}
This theorem shows that empirical CVaR minimization achieves the optimal tradeoff between model complexity, sample size, and tail heaviness. Importantly, the rate matches the minimax lower bound up to constants, demonstrating that no improvement is possible without stronger assumptions.

\paragraph{Truncated Empirical CVaR Estimation.}
Although empirical CVaR minimization is statistically optimal, finite-sample instability can arise from extreme losses. We therefore introduce a truncated CVaR estimator that limits the influence of large observations while preserving optimal rates and remaining amenable to first-order optimization. Truncation simultaneously stabilizes estimation and enables standard complexity-based analysis, with the induced bias controlled under finite-moment assumptions.

\begin{theorem}[High-Probability Generalization Bound]\label{thm:inf_bound}
Suppose Assumption \ref{asm:moment} holds. Let $\Rad_n(\ell \circ \calH)$ be the Rademacher complexity of the loss class. Fix $\delta \in (0,1)$. If we set the truncation level $B = (Mn)^{\frac{1}{1+\lambda}}$, then with probability at least $1-\delta$:
\begin{equation}
R_\alpha(\hat{h}_B) - R_\alpha(h^*) \;\leq\; \frac{4}{\alpha} \Rad_n(\ell \circ \calH) + \frac{C_{\lambda, \delta}}{\alpha} M^{\frac{1}{1+\lambda}} \left( \frac{1}{n} \right)^{\frac{\lambda}{1+\lambda}},
\end{equation}
where the constant $C_{\lambda, \delta}$ is given by:
\begin{equation}
C_{\lambda, \delta} = \frac{1}{\lambda} + \sqrt{8\log(1/\delta)} + \frac{2}{3}\log(1/\delta).
\end{equation}
\end{theorem}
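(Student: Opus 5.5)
We work with the truncated Rockafellar--Uryasev objective. Let $\ell_B(h,z)=\min(\ell(h,z),B)$ and define
\[
\phi^B_{P}(h,\theta)=\theta+\tfrac{1}{\alpha}\E[(\ell_B(h,Z)-\theta)_+],
\]
with $\phi^B_{P_n}$ its empirical analogue. Set $\hat h_B\in\arg\min_h\inf_\theta\phi^B_{P_n}(h,\theta)$. Write $R^B_\alpha(h)=\inf_\theta\phi^B_P(h,\theta)$ and $\Rhat^B_\alpha(h)=\inf_\theta\phi^B_{P_n}(h,\theta)$. The plan is the standard three-term decomposition
\[
R_\alpha(\hat h_B)-R_\alpha(h^*)=\bigl[R_\alpha(\hat h_B)-R^B_\alpha(\hat h_B)\bigr]+\bigl[R^B_\alpha(\hat h_B)-\Rhat^B_\alpha(\hat h_B)\bigr]+\bigl[\Rhat^B_\alpha(\hat h_B)-\Rhat^B_\alpha(h^*)\bigr]+\bigl[\Rhat^B_\alpha(h^*)-R^B_\alpha(h^*)\bigr]+\bigl[R^B_\alpha(h^*)-R_\alpha(h^*)\bigr].
\]
The third bracket is $\le 0$ by optimality of $\hat h_B$. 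The last bracket is $\le 0$ because $\ell_B\le\ell$ and CVaR is monotone.

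\textbf{Truncation bias.} Since $|\inf_\theta f-\inf_\theta g|\le\sup_\theta|f-g|$ and $u\mapsto(u-\theta)_+$ is 1-Lipschitz, for every $h$,
\[
0\le R_\alpha(h)-R^B_\alpha(h)\le\tfrac1\alpha\E[(\ell-B)_+]\le\tfrac1\alpha\E[\ell\,\mathbb{I}\{\ell>B\}]\le \tfrac{1}{\alpha}\frac{M}{B^\lambda}.
\]
With $B=(Mn)^{1/(1+\lambda)}$ this gives $\frac{M^{1/(1+\lambda)}}{\alpha}n^{-\lambda/(1+\lambda)}$. A slightly sharper layer-cake computation, $\int_B^\infty \Prob(\ell>t)\,dt\le M\int_B^\infty t^{-1-\lambda}dt=M B^{-\lambda}/\lambda$, is presumably where the $1/\lambda$ in $C_{\lambda,\delta}$ comes from. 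I will use that version, accepting the constant $1/\lambda$.

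\textbf{Uniform deviation of the truncated class.} It remains to control
\[
\sup_h|R^B_\alpha(h)-\Rhat^B_\alpha(h)|\le\sup_{h,\theta\in[0,B]}|\phi^B_P(h,\theta)-\phi^B_{P_n}(h,\theta)|.
\]
Restricting to $\theta\in[0,B]$ is justified because the optimal thresholds of both truncated problems lie in $[0,B]$, as they are quantiles of variables supported in $[0,B]$. The class $\{(\ell_B(h,\cdot)-\theta)_+/\alpha\}$ takes values in $[0,B/\alpha]$. I would apply Bousquet's version of Talagrand's inequality. The variance proxy is $\sigma^2\le\frac{1}{\alpha^2}\E[\ell_B^2]\le \frac{1}{\alpha^2}B^{1-\lambda}M$, and the range is $B/\alpha$. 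This gives, with probability $1-\delta$,
\[
\sup|\cdot|\le 2\E\sup|\cdot|+\sqrt{\tfrac{2\sigma^2\log(1/\delta)}{n}}+\ldots+\tfrac{2B\log(1/\delta)}{3\alpha n}.
\]
Plugging in $B=(Mn)^{1/(1+\lambda)}$ gives $\sigma^2/n=M^{2/(1+\lambda)}n^{-2\lambda/(1+\lambda)}/\alpha^2$ and $B/n=M^{1/(1+\lambda)}n^{-\lambda/(1+\lambda)}$. So both the $\sqrt{8\log(1/\delta)}$ term and the $\frac23\log(1/\delta)$ term land exactly at the target rate, which matches the stated constant.

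For the expected supremum, symmetrization gives $2\Rad_n$ of the class. Because $\theta$ enters through the 1-Lipschitz map $(\cdot-\theta)_+$ composed with the truncation $\min(\cdot,B)$ (also 1-Lipschitz), Talagrand's contraction lemma reduces it to $\frac1\alpha\Rad_n(\ell\circ\calH)$. The main obstacle is the supremum over $\theta$. Contraction handles a fixed Lipschitz map, not a family indexed by $\theta$. I would treat $\theta$ separately: bound by $\Rad_n(\ell\circ\calH)$ plus the Rademacher complexity of $\{\theta:\theta\in[0,B]\}$, which is $O(B/\sqrt n)$. That is too large, so instead I would use that for fixed samples the empirical optimal $\theta$ is an order statistic. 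Alternatively, I would absorb the $\theta$-term into the lower-order $n^{-\lambda/(1+\lambda)}$ contribution using $\E|\sum\sigma_i|\le\sqrt n$ times a localized scale $\sigma$ rather than $B$. Collecting constants then yields the factor $4/\alpha$: two deviation terms, each contributing $2\Rad_n/\alpha$.
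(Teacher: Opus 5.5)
\textbf{Gap in the expected supremum over $\theta$.} Your outline follows the paper's proof almost line for line. It has the same decomposition and the same layer-cake bias bound $M/(\alpha\lambda B^\lambda)$; that bound is looser, not sharper, than your $M/(\alpha B^\lambda)$ since $\lambda\le 1$, but this is harmless. It also uses Bousquet's inequality with $K=B/\alpha$ and $\sigma^2\le MB^{1-\lambda}/\alpha^2$, and symmetrization plus contraction for $\E Z_n$.

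The genuine gap is the step you flag and leave open: the supremum over $\theta$ inside the expected supremum. Neither of your patches closes it.
\begin{itemize}
\item In the localized-scale idea, the quantity multiplying $\E|\sum_i\sigma_i|$ is the length of the $\theta$-range, not $\sigma$.
\item Even a $\sigma$-scale term would be $\sqrt{MB^{1-\lambda}/n}=M^{1/(1+\lambda)}n^{-\lambda/(1+\lambda)}$. That is of the target order, so it would change $C_{\lambda,\delta}$; it is not lower order.
\item Knowing that the empirical threshold is an order statistic only gives $\hat\theta_h\le B$. That is exactly where the $B/\sqrt n$ came from.
\end{itemize}
The paper does not resolve this step either. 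It asserts $\E Z_n\le 2\Rad_n(\calG_B)\le\frac{2}{\alpha}\Rad_n(\ell\circ\calH)+O(n^{-1/2})$ with $\theta\in[0,B]$ and then silently drops the remainder. That remainder is of order $B/(\alpha\sqrt n)=\alpha^{-1}M^{1/(1+\lambda)}n^{(1-\lambda)/(2(1+\lambda))}$, which grows with $n$. In fact the additive $\theta$ in $\calG_B$ alone already forces $\Rad_n(\calG_B)\gtrsim B/\sqrt n$. So your objection is correct, and it applies to the paper's argument too.

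\textbf{The missing idea: shrink the threshold range.} Replace $[0,B]$ by $[0,R]$ with $R=M^{1/(1+\lambda)}/\alpha$.
\begin{itemize}
\item The truncated population threshold lies in $[0,R]$ by Theorem~\ref{thm:bounded thm}, since $\E[\ell_B^{1+\lambda}]\le M$.
\item You only need the empirical threshold when $R^B_\alpha(h)>\Rhat^B_\alpha(h)$. In that case nonnegativity of the hinge gives $0\le\hat\theta_h\le\Rhat^B_\alpha(h)<R^B_\alpha(h)\le\frac1\alpha\E\ell_B(h,Z)\le R$.
\end{itemize}
Hence, deterministically, $\sup_h|R^B_\alpha(h)-\Rhat^B_\alpha(h)|\le\frac1\alpha\sup_{h,\,\theta\in[0,R]}|(P_n-P)(\ell_B(h,\cdot)-\theta)_+|$. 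Now write $(u-\theta)_+=\max(u,\theta)-\theta$ and use that the Rademacher complexity of a pointwise maximum of two classes is at most the sum. The $\theta$-part then costs only $O(R/(\alpha\sqrt n))=O(M^{1/(1+\lambda)}\alpha^{-2}n^{-1/2})$, which is genuinely lower order than $n^{-\lambda/(1+\lambda)}$ for $\lambda<1$.

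\textbf{The exact constants do not follow.} In Bousquet's inequality the variance proxy under the square root is $\sigma^2+2K\,\E Z_n$, not $\sigma^2$. With the form you display ($2\E\sup$ and a linear coefficient $\frac23$ per deviation term), doubling over the two deviation terms gives $\frac{8}{\alpha}\Rad_n(\ell\circ\calH)$ and $\frac43\log(1/\delta)$. These are not the $\frac4\alpha$ and $\frac23$ you say match. The paper reaches the stated $C_{\lambda,\delta}$ only by using Bousquet without the cross term and by discarding the $\theta$-remainder. What your corrected route actually proves is the stated bound up to absolute constants, plus the lower-order $n^{-1/2}$ term.
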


\paragraph{Takeaway:}
\emph{Truncated empirical CVaR achieves optimal high-probability excess risk bounds while offering improved robustness to heavy-tailed noise.}

\subsection{Generalization with Dependent Data} \label{sec:depen}

We now extend the analysis to dependent observations. Let $\{Z_i\}_{i\in\mathbb{Z}}$ be a strictly stationary stochastic process, and suppose only a finite segment $Z_1,\dots,Z_n$ is observed. Dependence reduces the effective sample size and requires refined concentration arguments.

\begin{theorem}[Upper Bound for CVaR under $\beta$-Mixing Heavy-Tailed Data]\label{thm:upper-bound_D}
Under Assumptions~\ref{asm:moment}-\ref{ass:C_local_stability} Then for any $\delta\in(0,1)$, with probability at least $1-\delta - n^{-O(1)}$,
\[
\sup_{h\in\mathcal{H}} \bigl| \widehat{R}_\alpha(h) - R_\alpha(h) \bigr| \le \frac{C_{\lambda,\gamma}}{\alpha} \, M^{\frac{1}{1+\lambda}} \Bigl( \frac{d\log n + \log(1/\delta)}{n/\log n} \Bigr)^{\frac{\lambda}{1+\lambda}},
\]
and consequently, for the empirical CVaR minimizer $\hat{h}$,
\[
R_\alpha(\hat{h}) - R_\alpha(h^*) \le \frac{2C_{\lambda,\gamma}}{\alpha} \, M^{\frac{1}{1+\lambda}} \Bigl( \frac{d\log n + \log(1/\delta)}{n/\log n} \Bigr)^{\frac{\lambda}{1+\lambda}},
\]
where $C_{\lambda,\gamma}>0$ depends only on $\lambda,\gamma$ and the mixing constants.
\\
We also establish \textbf{Minimax Lower Bound.}
\label{thm:lower-bound_D}
Let $\mathcal{P}(\lambda,M,\beta)$ be the class of strictly stationary $\beta$-mixing distributions with $\beta(k)\le c_0 e^{-c_1 k^\gamma}$ and $\sup_{h\in\mathcal{H}}\mathbb{E}[\ell(h,Z)^{1+\lambda}]\le M$. Assume $\mathrm{VCdim}(\mathcal{H})\ge d\ge 1$. Then for any $\alpha\in(0,1)$ and $n$ satisfying $n \ge C_0 d/\alpha$,
\[
\inf_{\hat{h}}\sup_{P\in\mathcal{P}} \mathbb{E}_P\bigl[ R_\alpha(\hat{h}) - R_\alpha(h^*_P) \bigr]
\ge \frac{c}{\alpha} \, M^{\frac{1}{1+\lambda}} \Bigl( \frac{d}{n/\log n} \Bigr)^{\frac{\lambda}{1+\lambda}},
\]
where $C_0,c>0$ depend only on $\lambda,\gamma$ and the mixing constants.
\end{theorem}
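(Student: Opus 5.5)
The upper bound reduces to the i.i.d. analysis of Theorem~\ref{thm:vc} via Berbee-type blocking, and the lower bound embeds an i.i.d. hard instance into a block-repeated process.

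\textbf{Upper bound.} First I would reduce the CVaR deviation to a uniform deviation of the Rockafellar--Uryasev objective. Since $|\inf_\theta a(\theta)-\inf_\theta b(\theta)|\le\sup_\theta|a-b|$, we have
\[
\sup_h|\widehat R_\alpha(h)-R_\alpha(h)|\le \sup_{h,\theta\in\Theta}|\phi_{P_n}(h,\theta)-\phi_P(h,\theta)|,
\]
with $\theta$ restricted to a compact range $\Theta=[0,(M/\alpha)^{1/(1+\lambda)}]$. This restriction is justified by Markov's inequality: it places $\mathrm{VaR}_\alpha$ in $\Theta$. The empirical quantile also lies in $\Theta$ with high probability.

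Next I would truncate at a level $B$. The bias $\frac1\alpha\mathbb E[(\ell-B)_+]\le \frac1\alpha M B^{-\lambda}$ is controlled by Assumption~\ref{asm:moment}.

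For the truncated class $\mathcal F_B$ of Assumption~\ref{ass:complexity}, I would use Berbee's coupling. Split $1,\dots,n$ into $2m$ alternating blocks of length $a_n=\lceil (c_1^{-1}\cdot C\log n)^{1/\gamma}\rceil\asymp \log n$ (taking $\gamma\ge1$, otherwise a polylog). Replace the odd blocks by independent copies, at total-variation cost $m\beta(a_n)\le n\,c_0e^{-c_1a_n^\gamma}=n^{-O(1)}$; this is the source of the extra $n^{-O(1)}$ in the probability.

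On the independent blocks, the block sums $\sum_{i\in\text{block}} f(Z_i)$ are bounded by $a_nB$ and have variance at most $a_n^2 B^{1-\lambda}M$. Applying Bernstein/Talagrand with the covering bound $d\log(C_0B/u)$ yields a deviation of order
\[
\frac1\alpha\Big(\sqrt{\tfrac{MB^{1-\lambda}(d\log n+\log(1/\delta))}{m}}+\tfrac{B(d\log n+\log(1/\delta))}{m}\Big),
\]
where $m=n/a_n\asymp n/\log n$. The even blocks are handled identically.

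Balancing against the bias $MB^{-\lambda}$ gives $B=(Mm/(d\log n+\log(1/\delta)))^{1/(1+\lambda)}$, and hence the stated rate with $n$ replaced by $n/\log n$. The excess-risk bound then follows from the standard decomposition
\[
R(\hat h)-R(h^*)\le [R(\hat h)-\widehat R(\hat h)]+[\widehat R(h^*)-R(h^*)]\le 2\sup_h|\cdot|.
\]

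\textbf{Lower bound.} Starting from the i.i.d. hard instance of Theorem~\ref{thm:vc}, i.e.\ Assouad/Fano over $2^d$ hypotheses with a rare heavy atom of mass $p\asymp\alpha\,\cdot$ tuned so that the $(1+\lambda)$-moment equals $M$, I would construct a stationary process that repeats each draw within blocks of length $k\asymp(\log n)^{1/\gamma}$, with a uniformly random phase to ensure stationarity. This process is $\beta$-mixing with $\beta(j)=0$ for $j\ge k$, so it belongs to $\mathcal P$. Since $n$ samples carry only the information of $n/k$ i.i.d. draws, the KL divergence between hypotheses shrinks by a factor $k$, and the i.i.d. bound with $n\mapsto n/\log n$ gives the claim. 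The condition $n\ge C_0 d/\alpha$ ensures the perturbation parameter stays in the admissible range.

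\textbf{Main obstacle.} I expect the hardest step to be the heavy-tailed concentration for block sums. Block sums of truncated variables have variance scaling up to $a_n^2$ rather than $a_n$, and I must check that this costs only the $\log n$ factor absorbed into $n/\log n$. A closely related issue is making the truncation level $B$ uniform over $\theta\in\Theta$ without extra logarithmic penalties.
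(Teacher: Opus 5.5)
\textbf{Upper bound.} Your route is the paper's: reduce to an inf--sup over a compact $\theta$-range, truncate at $B$, apply Berbee blocking and i.i.d.\ concentration across blocks, balance $B$, then use the $2\sup_h$ ERM step. Your variance-sensitive step (block averages bounded by $B$, variance at most $MB^{1-\lambda}$) is what makes the balancing come out right. The paper instead feeds the blocks into Theorem~\ref{thm:vc} with moment $B^{1+\lambda}$, and balancing that against $MB^{-\lambda}$ does not give the exponent $\lambda/(1+\lambda)$.

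One step fails, in your sketch and in the paper's alike: truncation removes only the \emph{population} half of the remainder. The empirical half, $\frac1n\sum_i\big((\ell(h,Z_i)-\theta)_+-B\big)_+\ge 0$, is not $O(MB^{-\lambda})$ with high probability. At your $B$, Markov allows up to $nM/B^{1+\lambda}\asymp (d\log n+\log(1/\delta))\log n\gg1$ exceedances in expectation, so there is no exceedance-free event to work on. Per-$h$ moments also do not control the supremum of this term over $h$.

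This is not a technicality. Since $\widehat R_\alpha(h)\ge\max_i\ell(h,Z_i)/(\alpha n)$, the upper deviation has only polynomial tails in $\delta$. For a concrete instance, take i.i.d.\ $Z\sim\mathrm{Unif}[0,1]$ and $\ell(h,z)=K\mathbf 1\{z\in[h,h+w]\}$ with $K^{1+\lambda}w=M$ (pseudo-dimension at most $2$). Then for every sample (once $\alpha n\ge1$, $w\le\alpha$), $\sup_h\big(\widehat R_\alpha(h)-R_\alpha(h)\big)\ge K/(\alpha n)-M/(\alpha K^{\lambda})$, which is of order $1/\alpha$ for $K=n$. So the two-sided bound cannot hold in this generality.

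What your argument does prove is a one-sided bound on $\sup_h\big(R_\alpha(h)-\widehat R_\alpha(h)\big)$, where the empirical remainder has the favorable sign. That controls the $\hat h$ term of the excess-risk decomposition. The $h^*$ term is a single heavy-tailed upper deviation and carries polynomial dependence on $1/\delta$.

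\textbf{Lower bound.} Here you depart from the paper, in the right direction. The paper applies Berbee's coupling and then treats the $N\asymp n/\log n$ blocks as $N$ single draws from the i.i.d.\ instances $P_v$. For those instances each block still holds $a_n$ i.i.d.\ samples, so that step loses no information and cannot produce the $\log n$ factor. An explicitly dependent hard instance is needed.

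Your construction, however, is not in $\mathcal P$. With deterministic blocks and a uniform random phase, value changes occur a.s.\ infinitely often and only at block boundaries. The phase is therefore determined by the infinite past and by the infinite future, so $\beta(j)\ge1-1/k$ for every $j$ and the process is not even $\alpha$-mixing. Even $\beta(j)=0$ for $j\ge k$ would not suffice: membership requires $\beta(j)\le c_0e^{-c_1j^\gamma}$ for all $j\ge1$. At $j=k-1$ with $k\asymp(\log n)^{1/\gamma}$ this bound is $n^{-\Omega(1)}$, so the within-block dependence must be polynomially weak.

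The repair uses sparsity:
\begin{itemize}
\item Draw i.i.d.\ spikes $\xi_s$ carrying labels $j\in\{1,\dots,m\}$ at total rate $s$, with $1-(1-s)^k=p$ and label proportions as in $P_v$.
\item Let $Z_t$ be the label of the most recent spike in $\{t-k+1,\dots,t\}$, or $0$ if there is none.
\end{itemize}
This process is stationary and $(k-1)$-dependent, and its marginal has exactly the $P_v$ form. It satisfies $\beta(j)\lesssim p$ for $j<k$ and $\beta(j)=0$ for $j\ge k$. The sample is a function of $n+k-1$ i.i.d.\ $\xi_s$, so the KL divergence is $\lesssim (n/k)\,p\,\theta^2$.

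Membership forces $p\lesssim c_0e^{-c_1k^\gamma}$, i.e.\ $k\asymp(\log n)^{1/\gamma}$, giving effective sample size $n/(\log n)^{1/\gamma}$. This equals the stated $n/\log n$ only for $\gamma=1$; it is stronger for $\gamma<1$ and weaker for $\gamma>1$. Your upper bound, with blocks of length $(\log n)^{1/\gamma}$, gives $n/\log n$ only for $\gamma\ge1$. So the two halves as sketched match only at $\gamma=1$.
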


\emph{Discussion.}
Dependence introduces an unavoidable logarithmic degradation in the effective sample size, but the fundamental tail-dependent rate remains unchanged. The matching lower bound shows this loss is information-theoretically necessary.

\subsection{Random active-set theory and uniform Bahadur - Kiefer expansions for CVaR} \label{sec:bahadur}
While the preceding results bound uniform CVaR deviations under heavy tails, they do not explain how these fluctuations affect the empirical minimizer. The challenge is structural: CVaR depends on an \emph{endogenous} quantile, coupling threshold estimation with tail averaging. We make this coupling explicit via uniform Bahadur–Kiefer expansions for CVaR-ERM, revealing two distinct error sources—one from tail moments and another from the local geometry at the CVaR threshold.
For, fixed $\alpha\in(0,1)$. For $h\in\cH$ and $\theta\in\R$, we write $X_h:=\ell(h,Z)$ and $X_{h,i}:=\ell(h,Z_i)$ and we define the population RU threshold
\begin{equation}
	\theta^\star(h)
	:=\inf\{\theta\in\R:\ P(X_h>\theta)\le \alpha\}, \notag
\end{equation}
and let the empirical RU threshold be the \emph{minimal} empirical minimizer
\small
\begin{equation}
	\hat\theta_n(h)
	:=\inf\{\theta\in\R:\ P_n(X_h>\theta)\le \alpha\},
	\quad
	P_n:=\frac1n\sum_{i=1}^n\delta_{Z_i}. \notag
\end{equation} \normalsize
By convexity of $\theta\mapsto \widehat\Phi_n(h,\theta)$, $\hat\theta_n(h)$ is always a minimizer of
$\widehat\Phi_n(h,\cdot)$. We also assume that for all $h\in\mathcal H$, $P(X_h>\theta^\star(h))=\alpha$ (equivalently, $P(X_h=\theta^\star(h))=0$). However, before stating the final theorem statement, we would like to state some assumptions:
\begin{assumption}[Uniform tail-indicator deviation]
	\label{ass:B3_E1}
	There exist constants $V\ge 1$ and $C_0>0$ such that for all $\delta\in(0,1)$, with probability at least $1-\delta$,
	\begin{align*}
		\sup_{h\in\mathcal H}\sup_{t\in\R}\Big|P_n(X_h>\theta)-P(X_h>\theta)\Big|
		\ \le\
		\varepsilon_n(\delta)  
		:=C_0\sqrt{\frac{V\log(en)+\log(2/\delta)}{n}}.
	\end{align*}
\end{assumption}

\begin{assumption}[Two-sided local quantile margin]
	\label{ass:B3_margin_local_}
	There exist constants $\kappa\ge 1$, $c_-,c_+>0$, and $u_0>0$ such that for all $h\in\mathcal H$ and all $u\in[0,u_0]$,
	\begin{equation}
		c_-u^\kappa
		\le
		P(X_h>\theta^\star(h)-u)-P(X_h>\theta^\star(h))
		\le
		c_+u^\kappa,
		\label{eq:B3_margin_left}
	\end{equation}
	\begin{equation}
		c_-u^\kappa
		\le
		P(X_h>\theta^\star(h))-P(X_h>\theta^\star(h)+u)
		\le
		c_+u^\kappa.
		\label{eq:B3_margin_right}
	\end{equation}
\end{assumption}

\begin{assumption}[Uniform heavy-tail deviation for the hinge at $t^\star$]
	\label{ass:B3_E2}
	There exists a function $\eta_n(\delta)$ such that for all $\delta\in(0,1)$, with probability at least $1-\delta$,
	\begin{equation}
		\sup_{h\in\mathcal H}\Big|
		(P_n-P)\big[(X_h-\theta^\star(h))_+\big]
		\Big|\ \le\ \eta_n(\delta).
		\label{eq:B3_E2}
	\end{equation}
\end{assumption}

\paragraph{Discussion on Assumptions:} Assumptions~\ref{ass:B3_E1}–\ref{ass:B3_E2} are mild and hold for broad model classes. Uniform control of tail indicators holds when the class ${(z,\theta) \mapsto!\mathbf 1{\ell(h,z)>\theta}}$ has finite VC dimension or polynomial entropy, covering linear models with Lipschitz losses, GLMs, and bounded-norm neural networks, yielding $\varepsilon_n(\delta)!\asymp!\sqrt{(\log n)/n}$. Regular quantiles require only local tail geometry near the VaR-holding for densities bounded away from zero ($\kappa=1$) and more generally for flat or cusp-like tails-without global smoothness. Control of the hinge empirical process follows under a $(\lambda+1)$-moment envelope with $0<\lambda\le1$, giving $\eta_n(\delta) \asymp! n^{-\lambda/(\lambda+1)}$ under heavy tails.
Now, we present the main result of the section below in theorem~\ref{thm:B3_BK}

\begin{theorem}[Uniform Bahadur-Kiefer expansion for CVaR with endogenous threshold]
\label{thm:B3_BK}
Fix $\alpha\in(0,1)$ and let $Z_1,\dots,Z_n\overset{\mathrm{iid}}{\sim}P$ with empirical measure $P_n$.
Suppose Assumptions~\ref{ass:B3_E1}, \ref{ass:B3_margin_local_}, and \ref{ass:B3_E2} hold, and that
$\varepsilon_n(\delta)\le (c_-/2)\,u_0^\kappa$.
Then, for some constant $C_1\geq0$ with probability at least $1-\delta$, the following uniform expansion holds:
\small
\begin{align*}
&\sup_{h\in\mathcal H}\Big|
 \hat{R}_\alpha^P(h)- {R}_\alpha^P(h)
-\frac{1}{\alpha}(P_n-P)\big[(X_h-\theta^\star(h))_+\big]
  -\frac{1}{\alpha}\big(\hat t_n(h)-\theta^\star(h)\big)\big(\alpha-P(X_h>\theta^\star(h))\big)
\Big|
\le
\frac{C_1}{\alpha}\,
\varepsilon_n\left(\frac{\delta}{2}\right)^{\frac{\kappa+1}{\kappa}}.
\label{eq:BK_clean}
\end{align*} \normalsize
In particular,
\small
\[
\sup_{h\in\mathcal H}|\hat{R}_\alpha^P(h)- {R}_\alpha^P(h)|
\le
\frac{1}{\alpha}\eta_n(\delta/2)
+
\frac{1}{\alpha}\Big(
\varepsilon_n(\delta/2)\,\Delta_n(\delta/2)
+
c_+\,\Delta_n(\delta)^{\kappa+1}
\Big).
\] \normalsize
\end{theorem}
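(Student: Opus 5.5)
The plan is to write the empirical and population CVaR through the Rockafellar–Uryasev objectives. Set $\Phi(h,\theta):=\phi_P(h,\theta)$ and $\widehat\Phi_n(h,\theta):=\phi_{P_n}(h,\theta)$, so that
\[
R_\alpha(h)=\Phi(h,\theta^\star(h)),\qquad \widehat R_\alpha(h)=\widehat\Phi_n(h,\hat t_n(h)),
\]
with $\hat t_n(h)=\hat\theta_n(h)$. The error then splits exactly as
\[
\widehat R_\alpha(h)-R_\alpha(h)=\big[\widehat\Phi_n(h,\theta^\star)-\Phi(h,\theta^\star)\big]+\big[\widehat\Phi_n(h,\hat t_n)-\widehat\Phi_n(h,\theta^\star)\big].
\]
The first bracket equals $\frac1\alpha(P_n-P)[(X_h-\theta^\star(h))_+]$ identically, which is the leading term in the expansion. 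So everything reduces to controlling the threshold increment in the second bracket, uniformly in $h$.

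\textbf{Step 1: localize the empirical threshold.} On the event of Assumption~\ref{ass:B3_E1} at level $\delta/2$, I will show
\[
\sup_h|\hat t_n(h)-\theta^\star(h)|\le\Delta_n:=(2\varepsilon_n/c_-)^{1/\kappa}\le u_0 .
\]
Take $\theta=\theta^\star+u$ with $u=\Delta_n$. The right margin inequality~\eqref{eq:B3_margin_right} gives $P(X_h>\theta)\le\alpha-c_-u^\kappa$. Combined with the uniform indicator deviation, this yields $P_n(X_h>\theta)\le\alpha-c_-u^\kappa+\varepsilon_n<\alpha$, so $\hat t_n\le\theta^\star+u$. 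Symmetrically, at $\theta^\star-u$ the left margin gives $P_n(X_h>\theta^\star-u)>\alpha$, hence $\hat t_n\ge\theta^\star-u$. The condition $\varepsilon_n\le (c_-/2)u_0^\kappa$ guarantees $\Delta_n\le u_0$, so the margin assumption applies at this radius.

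\textbf{Step 2: expand the threshold increment.} For fixed data, $\theta\mapsto\widehat\Phi_n(h,\theta)$ is convex with right derivative $1-\frac1\alpha P_n(X_h>\theta)$. Writing $t=\hat t_n$ and $\theta^\star=\theta^\star(h)$, I use
\[
\widehat\Phi_n(h,t)-\widehat\Phi_n(h,\theta^\star)=\int_{\theta^\star}^{t}\Big(1-\tfrac1\alpha P_n(X_h>s)\Big)\,ds .
\]
I then decompose $P_n(X_h>s)=P(X_h>\theta^\star)+[P(X_h>s)-P(X_h>\theta^\star)]+(P_n-P)(X_h>s)$. The first piece produces the linear term $\frac1\alpha(t-\theta^\star)(\alpha-P(X_h>\theta^\star))$, which vanishes under $P(X_h>\theta^\star)=\alpha$ but I keep it to match the statement. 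By Assumption~\ref{ass:B3_margin_local_}, the second piece integrates to at most $\frac{c_+}{\alpha}|t-\theta^\star|^{\kappa+1}$. The third piece is bounded by $\frac1\alpha\varepsilon_n|t-\theta^\star|$. Plugging in $|t-\theta^\star|\le\Delta_n\asymp\varepsilon_n^{1/\kappa}$, both remainders are of order $\varepsilon_n^{(\kappa+1)/\kappa}/\alpha$. This gives the claimed remainder $C_1\alpha^{-1}\varepsilon_n(\delta/2)^{(\kappa+1)/\kappa}$ with $C_1$ depending on $c_\pm,\kappa$. The "in particular" bound follows from the triangle inequality after intersecting with the event of Assumption~\ref{ass:B3_E2} at level $\delta/2$; a union bound gives total probability $1-\delta$.

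\textbf{Main obstacle.} The delicate step is Step 1, the uniform localization of the \emph{minimal} empirical minimizer. Strict versus non-strict inequalities in $P_n(X_h>\theta)\le\alpha$ must be handled carefully, and the supremum in Assumption~\ref{ass:B3_E1} must range over $\theta$, which may be data-dependent through $h$. I will use the fact that the assumption controls all thresholds simultaneously, so no chaining over $\hat t_n$ is needed. Ties at atoms are excluded by $P(X_h=\theta^\star)=0$ together with monotonicity of $s\mapsto P_n(X_h>s)$.
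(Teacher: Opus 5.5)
Your proposal is correct and is essentially the paper's own proof. The paper first proves the uniform localization $\sup_h|\widehat\theta_n(h)-\theta^\star(h)|\le\Delta_n$ on the tail-deviation event, using exactly your margin argument at $\theta^\star\pm\Delta_n$. It then writes the threshold increment via the hinge integral identity and splits it into the (here vanishing) linear correction, a population curvature remainder $\frac{c_+}{\alpha(\kappa+1)}|\widehat\theta_n-\theta^\star|^{\kappa+1}$, and an empirical coupling term $\frac{1}{\alpha}\varepsilon_n|\widehat\theta_n-\theta^\star|$, which is your three-way split of $P_n(X_h>s)$.

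Your two-term decomposition, with the correct sign in $\int_{\theta^\star}^{t}(1-\frac{1}{\alpha}P_n(X_h>s))\,ds$, is cleaner than the paper's four-term bookkeeping. Two small points should be made explicit. Working on the event at level $\delta/2$ requires $\varepsilon_n(\delta/2)\le(c_-/2)u_0^\kappa$, which is how the paper's appendix restatement phrases the hypothesis. Also, your argument yields $\Delta_n(\delta/2)$ rather than $\Delta_n(\delta)$ in the last term of the ``in particular'' bound.
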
 
\noindent Proof is discussed in the Appendix~\ref{ap:rnd_active}.
\paragraph{Discussion:} Theorem~\ref{thm:B3_BK} separates CVaR generalization into a standard empirical process term at the population threshold, an explicit correction due to estimating the endogenous quantile, and a remainder governed by $\varepsilon_n(\delta)$ and the local quantile curvature $\kappa$. This reveals an additional source of error stemming from instability of the tail boundary, that is independent of classical heavy-tail concentration and absent in mean-risk ERM. The correction term
	\[
	\frac{1}{\alpha}\big(\widehat\theta_n(h)-\theta^\star(h)\big)\big(\alpha-P(X_h>\theta^\star(h))\big)
	\]
	vanishes whenever $P(X_h>\theta^\star(h))=\alpha$, i.e.\ whenever the tail map $\theta\mapsto P(X_h>\theta)$ crosses the level $\alpha$
	without a flat region at the minimizing threshold. When $P(X_h>\theta^\star(h))<\alpha$, the set of minimizers of $\theta\mapsto\Phi(h,\theta)$
	can be non-singleton (a ``flat'' region in $\theta$), and $\widehat\theta_n(h)$ selects an endpoint of the empirical minimizer set.
	The correction term captures precisely this endogenous selection effect at first order.

\section{Robustness of CVaR }
\label{sec:robustness_cvar}

This section develops a robustness theory for CVaR under heavy-tailed losses with only finite-moment control. Robustness arises at three levels: \emph{functional} robustness of CVaR under distributional perturbations; \emph{estimator} robustness, providing uniform guarantees for empirical CVaR under sampling noise and contamination; and \emph{decision} robustness, which concerns stability of the CVaR minimizer and is fundamentally shaped by the endogenous RU threshold. We treat these in turn, highlighting how threshold sensitivity and tail scarcity induce intrinsic instability.

\subsection{Functional Robustness} \label{sec:fun}

We first analyze CVaR as a distributional functional, yielding metric-dependent continuity results independent of estimation. Under geometric metrics (e.g., Wasserstein), stability follows from loss regularity in $z$, whereas under weaker metrics (e.g., Lévy–Prokhorov or total variation), finite moments imply only tail-Hölder continuity with exponent set by the moment order. This distinction separates intrinsic (functional) robustness from estimation-induced effects.
\begin{proposition}[Tail-Hölder robustness of $\mathrm{CVaR}$ ]
\label{thm:tail-holder-cvar}

Fix $\alpha\in(0,1)$, $\lambda\in (0,1]$, and set $\kappa_\alpha=(1-\alpha)^{-1}$.
Let $P,Q$ be distributions on $\mathcal Z$ and $\ell:\mathcal{H}\times\mathcal Z\to\mathbb R$.
For fixed $h\in\mathcal{H}$, let $L_P=\ell(h,Z)$, $Z\sim P$, and $L_Q=\ell(h,Z')$, $Z'\sim Q$.
Assume
$\mathbb E_P[|L_P|^{\lambda+1}]\le M_p$, $\mathbb E_Q[|L_Q|^{\lambda+1}]\le M_p .$ Given, $W_r(P,Q)$ and $\pi(P,Q)$ are $r$-Wasserstein and Lévy-Prokhorov metric between $P$ and $Q$ respectively, then

\medskip
\noindent\textbf{(i) Wasserstein stability under Hölder losses.}
If $\ell(\theta,\cdot)$ is $\beta$-Hölder in $z$ uniformly over $\theta$ with constant $L_\beta$ and $r\ge\beta$, then for all $P,Q$ with finite $r$-moments,
\vspace{-1mm}
\small\[
\sup_{h\in\mathcal{H}}
\big|R_\alpha^P(h)-R_\alpha^Q(h)\big|
\;\le\;
\kappa_\alpha L_\beta\, W_r(P,Q)^\beta .
\] \normalsize
\medskip
\noindent\textbf{(ii) Lévy-Prokhorov tail-Hölder continuity.}
If $\mathcal Z=\mathbb R^m$ and $\ell(h,\cdot)$ is $L_h$-Lipschitz, then for $\pi(P,Q)\le\varepsilon$,
\[
\big|R_\alpha^P(h)-R_\alpha^Q(h)\big|
\;\le\;
\kappa_\alpha L_h\big(2\varepsilon+2M_p^{1/(\lambda+1)}\varepsilon^{\lambda/(\lambda+1)}\big),
\]
and the tail term $\varepsilon^\frac{\lambda}{1+\lambda}$ is unavoidable.
\end{proposition}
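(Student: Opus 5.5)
Both parts go through the same Lipschitz property of CVaR as a functional of the loss. From the Rockafellar--Uryasev representation, for any two random variables $L,L'$ on a common probability space and any $\theta$,
\[
\Big|\theta+\tfrac1\alpha\E(L-\theta)_+-\theta-\tfrac1\alpha\E(L'-\theta)_+\Big|\le \tfrac1\alpha\E|L-L'|,
\]
because $x\mapsto(x-\theta)_+$ is $1$-Lipschitz. Taking the infimum over $\theta$ gives $|\mathrm{CVaR}_\alpha(L)-\mathrm{CVaR}_\alpha(L')|\le \E|L-L'|/\alpha$. The statement uses the constant $\kappa_\alpha=(1-\alpha)^{-1}$ rather than $1/\alpha$. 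I expect this reflects the paper's tail convention (averaging the upper $(1-\alpha)$ tail). I will reproduce whichever normalization the paper intends by running the same argument, with the weight on the hinge equal to the reciprocal of the tail mass.

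\emph{Part (i).} Let $\pi$ be an optimal coupling of $(P,Q)$ for $W_r$, and set $L=\ell(h,Z)$, $L'=\ell(h,Z')$ with $(Z,Z')\sim\pi$. Since CVaR is law-invariant, the Lipschitz bound applies to this coupling. The Hölder condition gives
\[
\E|L-L'|\le L_\beta\,\E\|Z-Z'\|^\beta\le L_\beta\big(\E\|Z-Z'\|^r\big)^{\beta/r}=L_\beta W_r(P,Q)^\beta,
\]
where the middle step is Jensen's inequality, valid because $r\ge\beta$. The bound does not depend on $h$, so taking the supremum over $h$ finishes part (i).

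\emph{Part (ii).} Strassen's theorem turns $\pi(P,Q)\le\varepsilon$ into a coupling with $\Prob(\|Z-Z'\|>\varepsilon)\le\varepsilon$. Write $A=\{\|Z-Z'\|\le\varepsilon\}$ and split
\[
\E|L-L'|=\E[|L-L'|\mathbf 1_A]+\E[|L-L'|\mathbf 1_{A^c}].
\]
On $A$, the Lipschitz property gives at most $L_h\varepsilon$. On $A^c$, Hölder's inequality with exponents $\lambda+1$ and $(\lambda+1)/\lambda$ gives
\[
\E[(|L|+|L'|)\mathbf 1_{A^c}]\le\big(\|L\|_{\lambda+1}+\|L'\|_{\lambda+1}\big)\varepsilon^{\lambda/(\lambda+1)}\le 2M_p^{1/(\lambda+1)}\varepsilon^{\lambda/(\lambda+1)}.
\]
Collecting terms yields a bound of the stated form $2\varepsilon+2M_p^{1/(\lambda+1)}\varepsilon^{\lambda/(\lambda+1)}$ up to the factor $\kappa_\alpha L_h$. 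In the paper's display $L_h$ multiplies the tail term as well; this can be matched by assuming $L_h\ge1$ or absorbing the factor into constants.

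\emph{Unavoidability (main obstacle).} I would build a pair in dimension one with $\ell(h,z)=|z|$. Take $P=\delta_0$ and $Q=(1-\varepsilon)\delta_0+\varepsilon\delta_{x}$, with $x=(M_p/\varepsilon)^{1/(\lambda+1)}$ so that $Q$ saturates the moment bound. Then $\pi(P,Q)\le\varepsilon$. For $\varepsilon$ smaller than the tail level, the CVaR of $Q$ is the average of the top tail, which contains the atom at $x$, so it is at least of order $\varepsilon x/\alpha\asymp M_p^{1/(\lambda+1)}\varepsilon^{\lambda/(\lambda+1)}/\alpha$, while the CVaR of $P$ is $0$. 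This shows the exponent $\lambda/(\lambda+1)$ cannot be improved.

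The delicate points are getting the constant convention to match $\kappa_\alpha$ and confirming the moment constraint on both $P$ and $Q$; everything else is routine.
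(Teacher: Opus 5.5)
Your proposal is correct and follows the paper's proof essentially step for step. The paper also writes the RU formula with weight $\kappa_\alpha$ on the hinge, which is exactly how your normalization remark resolves the constant. It then uses a coupling plus Jensen for (i), Strassen's coupling with a good/bad split and H\"older on the bad event for (ii), and the same two-point pair $\delta_0$ versus $(1-\varepsilon)\delta_0+\varepsilon\delta_x$ for unavoidability.

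Like yours, the paper's argument puts $L_h$ only on the linear term; it ends with $\kappa_\alpha(2L_h\varepsilon+2M_p^{1/(\lambda+1)}\varepsilon^{\lambda/(\lambda+1)})$. So you are right to flag the mismatch with the displayed statement, but the fix is the explicit assumption $L_h\ge 1$, not absorbing constants. Indeed, replacing $\ell$ by $L_h|z|$ in your own example leaves the CVaR gap unchanged, while the displayed bound tends to $0$ as $L_h\to 0$.
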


\paragraph{Discussion.}
Part (i) shows that when the loss is regular in $z$, CVaR is stable in Wasserstein distance under finite moments. Part (ii) reveals an unavoidable degradation under weaker perturbations: small distributional changes can shift the upper tail enough to alter CVaR at a Hölder rate $\varepsilon^{\lambda/(\lambda+1)}$. This tail-scarcity effect also governs estimator and decision robustness, where stability depends on mass near the threshold. An analogous Hölder bound holds in total variation distance (Proposition~\ref{thm:cvar_robustness}).
\subsection{Estimator Robustness} \label{sec:estimator}

In this part, we develops a robust estimator theory for CVaR under heavy-tailed losses with only finite-moment control. Building on the previous section, 
we next study robustness at the level of the CVaR \emph{objective} over a hypothesis class. Here the goal is to bound the worst-case sensitivity
\[
\sup_{h \in \mathcal{H}} |R_\alpha^P(h) - R_\alpha^Q(h)|,
\]
which captures both distributional robustness and the effect of adversarial perturbations when $Q$ is a contaminated version of $P$.
The next theorem provides such a bound under total variation perturbations when the contaminated distribution $Q$ also has similar moment cotrol as $P$ and establishes that the resulting exponent is minimax optimal under finite-moment control.

\begin{proposition}[Robustness of $\mathrm{CVaR}$]\label{thm:cvar_robustness}
Under Assumption~\ref{asm:moment}, for any $\alpha\in(0,1)$,
\small
\begin{equation}
\sup_{h\in\mathcal H}
\big|R_\alpha^P(h)-R_\alpha^Q(h)\big|
\;\le\;
\frac{C}{\alpha}\,
d_{\mathrm{TV}}(P,Q)^{\frac{\lambda}{1+\lambda}},
\end{equation} \normalsize
where $d_{\mathrm{TV}}(P,Q)=\sup_{A} |P(A)-Q(A)|$ and
$C=(2M)^{\frac{1}{1+\lambda}}\!\left(1+\tfrac{1}{\lambda}\right)$. Moreover, this dependence is minimax optimal.
\end{proposition}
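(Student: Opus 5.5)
The plan is to work through the Rockafellar--Uryasev representation and reduce the comparison of the two infima to a comparison of expectations of hinge functions, which are then controlled by truncation. For a fixed $h$, write $L_P=\ell(h,Z)$ and $L_Q=\ell(h,Z')$. Then
\[
|R_\alpha^P(h)-R_\alpha^Q(h)| \le \sup_{\theta\in\Theta_h}\frac{1}{\alpha}\big|\mathbb E_P[(\ell(h,Z)-\theta)_+]-\mathbb E_Q[(\ell(h,Z)-\theta)_+]\big|,
\]
where $\Theta_h$ is any set containing minimizers of both objectives. This uses the elementary fact that $|\inf f-\inf g|\le\sup|f-g|$ on a set containing near-minimizers of both functions. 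Since $\ell\ge0$, every minimizer may be taken with $\theta\ge0$: for $\theta<0$ the objective equals $\theta+(\mathbb E L-\theta)/\alpha$, which is decreasing in $\theta$ because $1/\alpha>1$. So it suffices to take $\theta\ge 0$. In that range $0\le(\ell-\theta)_+\le\ell$, so the hinge is dominated by the loss itself and inherits the $(1+\lambda)$-moment bound uniformly in $\theta$. This reduces the problem to controlling $|\mathbb E_P g-\mathbb E_Q g|$ for a nonnegative function $g\le\ell(h,\cdot)$.

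Next I split at a truncation level $B>0$, writing $g=\min(g,B)+(g-B)_+$.
\begin{itemize}
\item For the \emph{bounded part}, the definition of total variation gives $|\mathbb E_P\min(g,B)-\mathbb E_Q\min(g,B)|\le B\,d_{\mathrm{TV}}(P,Q)$, since the function takes values in $[0,B]$.
\item For the \emph{tail part}, under each of $P$ and $Q$ I use the layer-cake formula and Markov: $\mathbb E(\ell-B)_+=\int_B^\infty \Pr(\ell>s)\,ds\le \int_B^\infty M s^{-(1+\lambda)}ds = M B^{-\lambda}/\lambda$. Here I use the moment bound $M$ for $Q$ as well, as stated in the preamble, where the contaminated $Q$ is assumed to have similar moment control.
\end{itemize}
This is the step where the factor $1/\lambda$ in $C$ arises. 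Summing the two tail contributions (one for $P$, one for $Q$) together with the bounded part gives
\[
\alpha|R^P_\alpha(h)-R^Q_\alpha(h)|\le B\,\epsilon+2MB^{-\lambda}/\lambda,\qquad \epsilon:=d_{\mathrm{TV}}(P,Q).
\]
I then choose $B=(2M/\epsilon)^{1/(1+\lambda)}$. This gives $B\epsilon=(2M)^{1/(1+\lambda)}\epsilon^{\lambda/(1+\lambda)}$ and $2MB^{-\lambda}/\lambda=(2M)^{1/(1+\lambda)}\epsilon^{\lambda/(1+\lambda)}/\lambda$, hence the constant $(2M)^{1/(1+\lambda)}(1+1/\lambda)$. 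None of these bounds depends on $h$, so taking the supremum over $\mathcal H$ is immediate.

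For minimax optimality, I would use a two-point construction. Let $P$ be a point mass at $0$, and let $Q=(1-\epsilon)\delta_0+\epsilon\delta_{x}$ with $x=(M/\epsilon)^{1/(1+\lambda)}$, so that $Q$ saturates the moment bound and $d_{\mathrm{TV}}=\epsilon$. Take $h$ to be the identity loss. Then $R_\alpha^P=0$. For $\epsilon\le\alpha$, $R_\alpha^Q$ equals the tail average, which is $\epsilon x/\alpha=M^{1/(1+\lambda)}\epsilon^{\lambda/(1+\lambda)}/\alpha$. This matches both the exponent and the $1/\alpha$ scaling, so no functional of $d_{\mathrm{TV}}$ with a better exponent can hold uniformly over the moment class.

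The main obstacle is making the reduction via $\sup_\theta$ rigorous. This requires verifying that restricting to $\theta\ge0$ loses nothing, and that the hinge is pointwise dominated by $\ell$ uniformly in $\theta$. Once that domination is in place, the truncation argument goes through uniformly. The other delicate point is that the moment condition is needed under $Q$ too; I would state explicitly that Assumption~\ref{asm:moment} is imposed for both distributions.
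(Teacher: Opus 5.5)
Your proposal is correct and follows essentially the same route as the paper: the RU representation restricted to nonnegative thresholds, domination of the hinge by $\ell$, truncation with a $B\,d_{\mathrm{TV}}$ bound on the bounded part and a Markov/layer-cake bound on the tail under both $P$ and $Q$, the choice $B=(2M/\epsilon)^{1/(1+\lambda)}$ giving the same constant, and the same two-point construction $P=\delta_0$, $Q=(1-\epsilon)\delta_0+\epsilon\delta_x$ for the lower bound. Your split $g=\min(g,B)+(g-B)_+$ is slightly cleaner than the paper's $g\mathbf 1\{g\le T\}+g\mathbf 1\{g>T\}$, because it makes the identity $\mathbb E(\ell-B)_+=\int_B^\infty\Pr(\ell>s)\,ds$ exact; the paper's corresponding identity for $\mathbb E[\ell\mathbf 1\{\ell>T\}]$ omits the term $T\Pr(\ell>T)$, and its final bound survives only because $M T^{-\lambda}\le MT^{-\lambda}/\lambda$.
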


\paragraph{Discussion.}
Proposition~\ref{thm:cvar_robustness} characterizes the optimal sensitivity of CVaR to total variation perturbations under heavy tails. Because total variation is bounded, this result combines directly with our generalization bounds to yield excess risk guarantees under domain shift \cite{aminian2025generalization}. The Hölder exponent $\lambda/(1+\lambda)$ reflects the intrinsic limitation imposed by finite-moment control, and the matching minimax lower bound shows that no sharper dependence on $d_{\mathrm{TV}}(P,Q)$ is achievable without stronger tail assumptions.
\paragraph{{Robust Estimation under Adversarial Contamination}}

Unlike the previous setting—where the contaminated distribution retained a finite $(1+\lambda)$-moment—adversarial contamination may introduce arbitrarily heavy tails. We therefore study estimator robustness under \emph{oblivious adversarial contamination} where $\epsilon n$ datapoints have been corrupted, deriving uniform generalization guarantees via a truncated median-of-means (MoM) construction.

Our analysis simultaneously controls heavy-tailed sampling variability and adversarial bias by combining truncation of extreme losses with median aggregation across blocks. These ideas are applied to the Rockafellar–Uryasev lift, treating $(h,\theta)$ jointly and robustifying the empirical estimation of $\mathbb{E}[(\ell(h,Z)-\theta)_+]$ through blockwise truncation and MoM aggregation.

The resulting bound decomposes into a statistical term driven by complexity and effective sample size, and a contamination term proportional to $\epsilon$, both scaling with the optimal heavy-tail exponent $\lambda/(1+\lambda)$.

\begin{theorem}[Robust Generalization Bound]
\label{thm:main}
Suppose Assumptions \ref{asm:moment} and \ref{ass:complexity} hold. Let $\gamma \in (0, 1/2)$ be fixed such that $\epsilon \le 1/2 - \gamma$.
Suppose the sample size satisfies $n \ge C \frac{d \log n}{\gamma^2}$.
With probability at least $1-\delta$:
\small
\begin{align*}
\sup_{h \in \calH} | \widehat{R}_\alpha(h) - R_\alpha(h) |
&\le C_1 \left( \frac{M_\phi d \log n}{n} \right)^{\frac{\lambda}{1+\lambda}}
+ C_2 M_\phi^{\frac{1}{1+\lambda}} \epsilon^{\frac{\lambda}{1+\lambda}}  \asymp \frac{1}{\alpha} \left( \left(\frac{M d \log n}{n}\right)^{\frac{\lambda}{1+\lambda}} + (M \epsilon)^{\frac{\lambda}{1+\lambda}} \right).
\end{align*}
\normalsize
Here $C_1, C_2, M ~\text{and}~M_{\phi} $ are universal constants depending only on $\lambda$.
\end{theorem}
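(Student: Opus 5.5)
The plan is to work entirely on the Rockafellar--Uryasev lift. Write $\phi(z;h,\theta)=\theta+\alpha^{-1}(\ell(h,z)-\theta)_+$, and let the robust estimator be $\widehat R_\alpha(h)=\inf_{\theta\in\Theta}\{\theta+\alpha^{-1}\widehat\mu(h,\theta)\}$. Here $\widehat\mu(h,\theta)$ is the median over $K$ blocks of the block means of the truncated hinge $\min((\ell(h,Z_i)-\theta)_+,B)$. Since $|\inf_\theta f-\inf_\theta g|\le\sup_\theta|f-g|$, it suffices to bound
\[
\sup_{h,\theta}\big|\widehat\mu(h,\theta)-\mathbb E_P(\ell(h,Z)-\theta)_+\big|,
\]
and then multiply by $\alpha^{-1}$. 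First I will restrict $\theta$ to a compact range $\Theta=[0,\theta_{\max}]$. Nonnegativity of the loss gives $\theta^\star(h)\ge0$. Markov's inequality applied to the $(1+\lambda)$-moment gives $\theta^\star(h)\le (M/\alpha)^{1/(1+\lambda)}$. The same restriction holds for the empirical minimizer with high probability, so restricting costs nothing. Write $M_\phi$ for the resulting $(1+\lambda)$-moment bound of the lifted hinge class.

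\textbf{Bias step.} For $g=(\ell-\theta)_+\ge0$ we have $0\le \mathbb E g-\mathbb E\min(g,B)=\mathbb E(g-B)_+\le \mathbb E[g^{1+\lambda}]B^{-\lambda}\le M_\phi B^{-\lambda}$, uniformly over $(h,\theta)$. The truncated variables also satisfy $\mathrm{Var}(\min(g,B))\le M_\phi B^{1-\lambda}$.

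\textbf{Uniform MoM step.} This follows the Lugosi--Mendelson template. Split the $n$ points into $K$ blocks of size $m=n/K$. Call a block good if it contains no corrupted point, so at least $K-\epsilon n$ blocks are good; choose $K\ge \epsilon n/(1/2-\gamma)$ up to constants, together with $K\gtrsim d\log n+\log(1/\delta)$. The median is within $r$ of the truncated mean as soon as fewer than $K/2$ blocks deviate by more than $r$. It therefore suffices to show that, uniformly over $\mathcal F_B$, at most a $\gamma$-fraction of the good blocks deviate.

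For fixed $f$, Chebyshev shows each good block deviates with probability at most $\mathrm{Var}/(mr^2)\le\gamma/4$ once $r\asymp\sqrt{M_\phi B^{1-\lambda}K/(\gamma n)}$. To make this uniform I will replace the indicator of deviation by a Lipschitz surrogate $\psi(x)$ that ramps from $0$ at $r/2$ to $1$ at $r$. I then apply bounded-difference concentration to $\sup_f K^{-1}\sum_k\psi(\cdot)$, followed by symmetrization and contraction. This reduces the problem to the Rademacher complexity of $\mathcal F_B$ on block means. Assumption~\ref{ass:complexity} and Dudley's entropy integral bound that complexity by $\sqrt{d\log(C_0B/r)\,\mathrm{Var}/n}$ plus a $Bd\log n/n$ Bernstein-type term. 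The condition $n\ge Cd\log n/\gamma^2$ is exactly what makes this supremum at most $\gamma/4$. I expect this uniformization of the median to be the main obstacle. The delicate part is handling the $B$-dependent envelope inside the chaining so that it contributes only $d\log n$ and not a power of $B$. If that fails, I will fall back on a finite $u$-net of $\Theta\times\mathcal H$ in $L_2(P_n)$ and bound the net-approximation error using the $1$-Lipschitz dependence of the hinge on $\theta$.

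\textbf{Balancing.} Combining the steps, the total error before the $\alpha^{-1}$ factor is at most
\[
M_\phi B^{-\lambda}+\sqrt{M_\phi B^{1-\lambda}K/n}+B\,d\log n/n .
\]
Taking $K\asymp d\log n+\epsilon n$ and $B\asymp(M_\phi n/K)^{1/(1+\lambda)}$ equalizes the terms, giving $(M_\phi K/n)^{\lambda/(1+\lambda)}$. This splits as $(M_\phi d\log n/n)^{\lambda/(1+\lambda)}+M_\phi^{1/(1+\lambda)}\epsilon^{\lambda/(1+\lambda)}$ by subadditivity of $x\mapsto x^{\lambda/(1+\lambda)}$. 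Multiplying by $\alpha^{-1}$ yields the stated bound.
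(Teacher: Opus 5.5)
Your proposal is correct in outline, but it handles contamination and uniformity differently from the paper.

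\textbf{How the paper argues.} It uses only $K\asymp\gamma^{-2}\log(4/\delta)$ blocks after a random shuffle, and treats outliers as a \emph{bias inside every block}:
\begin{enumerate}
\item a hypergeometric (Chv\'atal) bound gives $N_j/m\le\epsilon+\gamma/4$ per block, and truncation caps each corrupted term by $B$;
\item Bousquet's inequality plus Assumption~\ref{ass:complexity} give a per-block uniform bound at constant confidence;
\item a Hoeffding bound over negatively associated block indicators certifies a majority of good blocks;
\item the resulting $\epsilon B$ term is balanced against the truncation bias $M_\phi B^{-\lambda}$.
\end{enumerate}

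\textbf{How you argue.} You follow Lugosi--Mendelson instead. Corrupted blocks are simply outvoted, so $\epsilon$ enters through the number of blocks $K\gtrsim d\log n+\epsilon n$, i.e.\ through the variance term $\sqrt{M_\phi B^{1-\lambda}K/n}$, rather than through a bias. Uniformity comes from the Lipschitz surrogate, bounded differences, contraction, and desymmetrization to $\mathbb E\sup_f|(P_n-P)f|$.

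\textbf{What each route buys.} Your route has three advantages.
\begin{enumerate}
\item If you run the uniform count on all $K$ blocks of the uncorrupted sample, it needs neither obliviousness nor shuffling, since any adversary spoils at most $\epsilon n$ blocks.
\item Truncation is needed only for the heavy-tail bias/variance trade-off.
\item The confidence enters additively, via $K\gtrsim d\log n+\log(1/\delta)$. With the paper's fixed $K$, the per-block term $\sqrt{M_\phi B^{1-\lambda}d\log m/m}$ with $m=n/K$ multiplies $d\log n$ by $\log(1/\delta)$.
\end{enumerate}
It also sidesteps a weak point in the paper's proof. The per-block count leaves a $(\gamma/4)B$ term that does not vanish with $\epsilon$, and it is silently dropped at the balancing step; a counting bound such as ``at most $K/10$ blocks have $N_j>10\epsilon m$'' would be needed to repair it. In exchange, the paper's uniformization step is much lighter than yours.

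\textbf{Smaller corrections.}
\begin{enumerate}
\item The chaining step you flag as the main obstacle is handled by the standard localized bound for VC-type classes. With uniform entropy $d\log(C_0B/u)$ and $\sigma^2=M_\phi B^{1-\lambda}$,
\[
\mathbb E\sup_f|(P_n-P)f|\lesssim\sigma\sqrt{d\log(C_0B/\sigma)/n}+Bd\log(C_0B/\sigma)/n ,
\]
and at your choice of $B$ one has $B/\sigma\asymp\sqrt{n/K}$, so only $\log n$ appears; no fallback net is needed.
\item The real requirement is $r\gtrsim\gamma^{-1}\mathbb E\sup_f|(P_n-P)f|$. The Chebyshev radius meets it once $K\gtrsim d\log n/\gamma$; the condition $n\ge Cd\log n/\gamma^2$ only guarantees that such a $K\le n$ exists.
\item Assumption~\ref{ass:complexity} is stated for $\min(\phi,B)$, not for your truncated hinge $\min((\ell-\theta)_+,B)$. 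Either truncate $\phi$ itself, as the paper does, or justify the entropy bound for your class separately.
\item The restriction $\theta\in\Theta$ must be built into the estimator, since your truncated objective tends to $-\infty$ as $\theta\to-\infty$. No claim about an unrestricted empirical minimizer is needed, because the population infimum is attained in $\Theta$.
\item Your balancing actually gives $M_\phi^{1/(1+\lambda)}(K/n)^{\lambda/(1+\lambda)}$, not $(M_\phi K/n)^{\lambda/(1+\lambda)}$. This yields $\alpha^{-1}M^{1/(1+\lambda)}\big[(d\log n/n)^{\lambda/(1+\lambda)}+\epsilon^{\lambda/(1+\lambda)}\big]$, the dimensionally consistent form of the stated bound; the paper's statistical-regime computation makes the same slip.
\end{enumerate}
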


\paragraph{Discussion.}
Theorem~\ref{thm:main} clarifies the robustness–generalization tradeoff under contamination: the first term achieves the optimal heavy-tail statistical rate (up to logs and complexity), while the second captures the unavoidable loss from an $\epsilon$-fraction of adversarial corruption. The final $\asymp$ form highlights the intrinsic $1/\alpha$ amplification of tail effects induced by the RU lift.

\paragraph{On the Algorithmic Aspect}: We construct a robust CVaR estimator via a truncated median-of-means (T-MoM) scheme for heavy-tailed data with oblivious adversarial contamination. The sample is partitioned into blocks, the Rockafellar–Uryasev loss is truncated within each block to control variance and adversarial bias, and blockwise estimates are aggregated by a median, yielding robustness to both heavy tails and an $\epsilon$-fraction of corrupted samples. The CVaR estimate is obtained by minimizing this robust objective over the auxiliary threshold.

While statistically principled, this approach is not directly implementable in high dimensions: MoM tournaments over $\eta$-nets are feasible only in low-dimensional settings. We therefore focus on statistical guarantees and defer algorithmic issues to Appendix~\ref{ap:algorithmic}.

\subsection{Decision Robustness} \label{sec:decision}

We now study \emph{decision robustness} for CVaR minimization, focusing on the stability of the argmin rather than the objective value. This is subtle for CVaR, as its endogenous, distribution-dependent threshold can amplify small perturbations into decision-level changes. We denote the population solution set and the (possibly set-valued) Rockafellar–Uryasev threshold minimizers (deined in section~\ref{sec:notations}) by $\cS(P):=\arg\min_{h\in\mathcal{H}}R_\alpha^P(h)$,  $T^\star(h,P):=\arg\min_{\theta\in\R}\Phi_P(h,\theta)$
Decision robustness concerns the continuity of $\cS(P)$ under perturbations of $P$. Unlike mean-risk ERM, CVaR introduces an endogenous threshold $\theta^\star(h,P)\in T^\star(h,P)$ determined by the upper $\alpha$-tail, whose stability is the primary driver of decision robustness.
\paragraph{Endogenous Threshold Stability:} Fix $h$ and write $X:=\ell(h,Z)$ under $P$.
Define $\phi_P(\theta):=\Phi_P(h,\theta)=\theta+\alpha^{-1}\E[(X-\theta)_+]$. The characterization $0\in\partial\phi_P(\theta)$ yields
\begin{equation}
	\theta\in T^\star(h,P)\quad\Longleftrightarrow\quad
	P(X>\theta)\le \alpha \le P(X\ge \theta).
	\label{eq:Tstar_char}
\end{equation}
Thus $T^\star(h,P)$ is the $\alpha$-upper-quantile set of $X$.

\begin{definition}[Quantile margin and generalized density-at-quantile]
	\label{def:margin_C}
	Assume $T^\star(h,P)$ is a singleton $\{\theta^\star(h,P)\}$ and write $\theta^\star$ for short.
	Define the local mass near the threshold $
	\kappa_{h,P}(r):=P\big(|\ell(h,Z)-\theta^\star|\le r\big),
	~ r>0,$ and the generalized density-at-quantile (quantile margin parameter)
	\[
	\mathfrak m_\alpha(h,P):=\liminf_{r\downarrow 0}\frac{\kappa_{h,P}(r)}{r}\in[0,\infty].
	\]
\end{definition}

The next theorem shows that a positive quantile margin guarantees stability of the threshold under weak distributional perturbations, while its absence leads to intrinsic instability.

\begin{theorem}[Threshold stability under Lévy-Prokhorov perturbations]
	\label{thm:C_threshold_stability}
	Fix $h$ and assume $T^\star(h,P)=\{\theta^\star(h,P)\}$ is a singleton. Let $Q$ be another law for $X=\ell(h,Z)$ and let
	$d_{\LP}(P,Q)$ denote the Lévy-Prokhorov distance between these one-dimensional laws.
	If $\mathfrak m_\alpha(h,P)\ge m_0>0$, then for all sufficiently small $\delta:=d_{\LP}(P,Q)$,
	\begin{equation}
		|\theta^\star(h,Q)-\theta^\star(h,P)| \ \le\ C\,\delta/m_0.
		\label{eq:C_threshold_LP}
	\end{equation}
	Conversely, if $\mathfrak m_\alpha(h,P)=0$, then for every $\delta>0$ there exists $Q$ with $d_{\LP}(P,Q)\le\delta$
	such that $T^\star(h,Q)$ contains $\tilde \theta$ with $|\tilde \theta-\theta^\star|\ge c_0$ (an $O(1)$ jump).
\end{theorem}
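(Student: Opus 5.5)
The plan is to work directly with the one-dimensional characterization \eqref{eq:Tstar_char}: $\theta\in T^\star(h,Q)$ iff $Q(X>\theta)\le\alpha\le Q(X\ge\theta)$. I will use the standard consequence of $d_{\LP}(P,Q)\le\delta$ on the real line. It gives $P(X>\theta+\delta)-\delta\le Q(X>\theta)\le P(X>\theta-\delta)+\delta$, and the analogous sandwich for $Q(X\ge\theta)$, which follows by applying the Lévy–Prokhorov inequality to closed half-lines and their $\delta$-enlargements. With this in hand, the forward direction reduces to showing that the tail function $\theta\mapsto P(X>\theta)$ moves away from $\alpha$ at a linear rate on either side of $\theta^\star$. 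The converse reduces to exhibiting a small-mass perturbation that shifts the quantile.

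\textbf{Forward direction.} Take $\theta\in T^\star(h,Q)$ with $\theta>\theta^\star+r$, where $r:=C\delta/m_0$. Then $\alpha\le Q(X\ge\theta)\le P(X\ge\theta-\delta)+\delta\le P(X>\theta^\star+r/2)+\delta$, provided $\delta\le r/2$, which holds once $C\ge 2m_0$ (and $m_0$ is bounded, since $\kappa_{h,P}(r)\le 1$). I then need $P(X>\theta^\star+r/2)\le\alpha-c\,m_0 r$ for some absolute $c>0$. This would contradict $\alpha\le P(X>\theta^\star+r/2)+\delta$ once $C$ is large, and the case $\theta<\theta^\star-r$ is symmetric using $Q(X>\theta)\le\alpha$.

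\textbf{Main obstacle.} The margin condition $\liminf\kappa(r)/r\ge m_0$ controls only the \emph{two-sided} mass $P(|X-\theta^\star|\le r)$. It does not directly say that mass lies on both sides of $\theta^\star$. Singleton-ness of $T^\star$ gives strict crossing but no quantitative rate. I will therefore interpret $\mathfrak m_\alpha$ as requiring (or prove, after shrinking the neighbourhood and reducing $m_0$ to $m_0/2$ by a one-sided liminf argument) that both $P(\theta^\star<X\le\theta^\star+r)$ and $P(\theta^\star-r\le X<\theta^\star)$ are at least $c\,m_0 r$ for small $r$. The "sufficiently small $\delta$" clause absorbs the restriction to small $r$. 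If only the two-sided version is available, I would state the constant $C$ as depending on the split, and note this in the proof.

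\textbf{Converse.} Suppose $\mathfrak m_\alpha=0$. Choose $r_k\downarrow0$ with $\kappa(r_k)\le r_k/k$, so the mass within $r_k$ of $\theta^\star$ is $o(r_k)$. Build $Q$ by moving an amount $\eta\le\delta$ of mass. Take a fraction of $P$-mass from just above $\theta^\star$, or from the far upper tail, and transport it to well below $\theta^\star$ (or the reverse, to add tail mass). This pushes the crossing level $\alpha$ out of the window $[\theta^\star-r_k,\theta^\star+r_k]$. The Lévy–Prokhorov distance of such a $Q$ is at most $\eta$, because LP charges only the moved mass when displacements are arbitrary. Since the window holds little mass, the new quantile set $T^\star(h,Q)$ contains a point $\tilde\theta$ outside the region where $P$ had substantial nearby mass. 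When $P$ has a gap (zero mass) on one side of $\theta^\star$, it jumps by the gap width $c_0$.

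Making $c_0$ genuinely $O(1)$, rather than merely $\gg\delta/m_0$, requires that $P$ have an actual flat region near $\theta^\star$. I expect to formalize this either by restricting to that case or by reading the conclusion as the ratio $|\tilde\theta-\theta^\star|/\delta\to\infty$. This is the second delicate point, after the one-sided margin issue.
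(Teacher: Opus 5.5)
The paper gives no proof of this theorem; its decision-robustness appendix only proves the tail-scarcity proposition. Your proposal therefore has to stand on its own. The two points you flag as delicate are in fact counterexamples to the statement as written, and in each case the fallback you leave open cannot work.

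\emph{Forward direction.} No liminf argument can turn $\mathfrak m_\alpha(h,P)\ge m_0$ into one-sided margins. Give $X$ density $2m_0$ just below $\theta^\star$ and density $x-\theta^\star$ just above it. Then $P(\theta^\star<X\le\theta^\star+s)=s^2/2$, $T^\star(h,P)=\{\theta^\star\}$ and $\mathfrak m_\alpha(h,P)=2m_0$. Move mass $\eta=u^2/2$ from far below $\theta^\star$ to an atom at $\theta^\star+u$. This gives $d_{\LP}(P,Q)\le\eta$ and $T^\star(h,Q)=\{\theta^\star+u\}$, a shift of $\sqrt{2\eta}$, which exceeds $C\eta/m_0$ for every fixed $C$ once $\eta$ is small.

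So the one-sided bounds $P(X>\theta^\star-s)\ge\alpha+c\,m_0 s$ and $P(X>\theta^\star+s)\le\alpha-c\,m_0 s$ for small $s$ must be \emph{assumed}. This is the $\kappa=1$ case of Assumption~\ref{ass:B3_margin_local_}, and your sandwich then mirrors the paper's proof of Theorem~\ref{thm:B3_threshold}. Under that hypothesis your argument is correct and yields $\sup_{\tilde\theta\in T^\star(h,Q)}|\tilde\theta-\theta^\star|\le\delta+\delta/(c\,m_0)$.

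Your parenthetical claim that $m_0$ is bounded because $\kappa_{h,P}(r)\le 1$ is wrong. That only gives $\kappa_{h,P}(r)/r\le 1/r$; a continuous density gives $\mathfrak m_\alpha=2f(\theta^\star)$, and an atom at $\theta^\star$ gives $\mathfrak m_\alpha=\infty$. Translating $P$ by $\delta$ moves the threshold by exactly $\delta$ at L\'evy--Prokhorov cost at most $\delta$. So the additive $\delta$ cannot be dropped, and a bound $C\delta/m_0$ with absolute $C$ fails as soon as $m_0>C$.

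\emph{Converse.} A jump of fixed size $c_0$ is impossible whenever $T^\star(h,P)$ is a singleton. Singleton-ness forces $\mu_+:=\alpha-P(X\ge\theta^\star+c_0/2)>0$ and $\mu_-:=P(X>\theta^\star-c_0/2)-\alpha>0$. The same half-line sandwich you use in the forward direction then confines $T^\star(h,Q)$ to $(\theta^\star-c_0,\theta^\star+c_0)$ whenever $\delta<\min\{\mu_+,\mu_-,c_0/2\}$. In other words, the quantile is weakly continuous at $P$. Your option of restricting to a flat region is therefore empty: $\mathfrak m_\alpha=0$ rules out an atom at $\theta^\star$, and singleton-ness then forces $P$ to charge every one-sided neighbourhood of $\theta^\star$.

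What is true is that no Lipschitz modulus exists, and this has a short proof that should replace your mass-transport sketch. Choose $r_k\downarrow 0$ with $\kappa_{h,P}(r_k)\le r_k/k$, and let $Q_k$ collapse the $P$-mass of $(\theta^\star,\theta^\star+r_k)$ onto the point $\theta^\star+r_k$. Then $d_{\LP}(P,Q_k)\le\kappa_{h,P}(r_k)\le r_k/k$. Also $Q_k(X>\theta^\star+r_k)\le\alpha$ and $Q_k(X\ge\theta^\star+r_k)=P(X>\theta^\star)=\alpha$, using that there is no atom at $\theta^\star$. Hence $\theta^\star+r_k\in T^\star(h,Q_k)$, with shift at least $k$ times the distance.

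The statement should be corrected to these two versions rather than proved as written.
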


\paragraph{Remark.}
Theorem~\ref{thm:C_threshold_stability} reveals an intrinsic fragility of CVaR: when the loss distribution has negligible mass near the VaR, the optimal threshold can change discontinuously under arbitrarily small perturbations, regardless of sample size.

\paragraph{Influence Function of the CVaR Decision:}To quantify how threshold instability propagates to decisions, we analyze the RU stationarity system and derive the influence function of the CVaR minimizer under gross-error contamination, treating $(h,\theta)$ jointly. Define the stationarity map
\begin{equation}
	H(h,\theta;P):=
	\begin{pmatrix}
		\nabla_h \Phi_P(h,\theta)\\
		\partial_\theta \Phi_P(h,\theta)
	\end{pmatrix}.
	\label{eq:stationarity-map}
\end{equation}
When $P(\ell(h,Z)=\theta)=0$, the derivative in $\theta$ is classical: $\partial_\theta \Phi_P(h,\theta)=1-\frac{1}{\alpha}P(\ell(h,Z)>\theta),
$ and $
\nabla_h \Phi_P(h,\theta)=\frac1\alpha\,\E_P\big[\nabla_h\ell(h,Z)\,\mathbf 1\{\ell(h,Z)>\theta\}\big].$

We impose a local regularity condition that is strictly weaker than global strong convexity and fits nonconvex but stable minimizers
(e.g.\ `tilt stability''; here we keep a concrete invertibility condition).

\begin{assumption}[Local strong regularity / invertible Jacobian]
	\label{ass:Jacobian}
	Let $(h^\star,\theta^\star)$ be a locally unique stationary point: $H(h^\star,\theta^\star;P)=0$.
	Assume:
	(i) $\ell(h,z)$ is $C^2$ in $h$ near $h^\star$ for $P$-a.e.\ $z$;
	(ii) $P(\ell(h^\star,Z)=\theta^\star)=0$;
	(iii) the Jacobian $J:=\nabla_{(h,\theta)} H(h^\star,\theta^\star;P)$ exists and is invertible.
\end{assumption}

\begin{assumption}[Local RU stability at the population optimum]
	\label{ass:C_local_stability}
	Assume there exists $(h^\star,\theta^\star)$ such that
	(i) $h^\star\in\cS(P)$ and $\theta^\star\in T^\star(h^\star,P)$;
	(ii) $T^\star(h^\star,P)=\{\theta^\star\}$ is a singleton;
	(iv) the quantile margin $\mathfrak m_\alpha(h^\star,P)\ge m_0>0$.
\end{assumption}
Under these assumptions, the influence function admits a closed-form expression and exhibits a sharp dependence on the quantile margin.

\begin{theorem}[Influence function of the CVaR decision: tail support and quantile-margin blow-up]
	\label{thm:IF-decision}
	Assume Assumption~\ref{ass:Jacobian}. Consider the gross-error path
	\[
	P_\varepsilon=(1-\varepsilon)P+\varepsilon \Delta_z,\qquad \varepsilon\ge 0,
	\]
	and let $(h_\varepsilon,\theta_\varepsilon)$ be the locally unique stationary solution to $H(h,\theta;P_\varepsilon)=0$ near $(h^\star,\theta^\star)$.
	Then $\varepsilon\mapsto(h_\varepsilon,\theta_\varepsilon)$ is differentiable at $\varepsilon=0$ and
    \small
	\begin{align*}
		\left.\frac{d}{d\varepsilon}\begin{pmatrix}h_\varepsilon\\ \theta_\varepsilon\end{pmatrix}\right|_{\varepsilon=0}
		=
		- J^{-1}\Big(
		g(z;h^\star,\theta^\star)-\E_P[g(Z;h^\star,\theta^\star)]
		\Big),
		\label{eq:IF-main}
	\end{align*} \normalsize
	where
    \small
	\begin{equation}
		g(z;h,\theta)=
		\begin{pmatrix}
			\frac{1}{\alpha}\nabla_h\ell(h,z)\,\mathbf 1\{\ell(h,z)>\theta\}\\[1mm]
			1-\frac{1}{\alpha}\mathbf 1\{\ell(h,z)>\theta\}
		\end{pmatrix}.
		\label{eq:gPaperC}
	\end{equation} \normalsize
	Moreover, the $\theta$-component sensitivity satisfies the bound
    \small
    \begin{align*}
		&\left|\left.\frac{d}{d\varepsilon}\theta_\varepsilon\right|_{\varepsilon=0}\right|
		\ \ge\
		\frac{c}{\mathfrak m_\alpha(h^\star,P)} \times \Big| \mathbf 1\{\ell(h^\star,z)>\theta^\star\}-P(\ell(h^\star,Z)>\theta^\star)\Big|,
		\label{eq:tIF-blowup}
	\end{align*} \normalsize
	in the sense that the relevant Jacobian block involves the generalized density-at-quantile; hence the influence blows up as $\mathfrak m_\alpha(h^\star,P)\downarrow 0$.
\end{theorem}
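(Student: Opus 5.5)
The plan is to treat the statement as an application of the implicit function theorem to the stationarity system $H(h,\theta;P_\varepsilon)=0$, viewed as a perturbation of $H(h,\theta;P)=0$ that is linear in $\varepsilon$.

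Because $P_\varepsilon$ is a mixture, $H(h,\theta;P_\varepsilon)=(1-\varepsilon)H(h,\theta;P)+\varepsilon\,\tilde g(z;h,\theta)$. Here $\tilde g(z;h,\theta)$ is the pointwise version of the stationarity map. Off the null set $\{\ell(h,z)=\theta\}$ it coincides with $g(z;h,\theta)$ in \eqref{eq:gPaperC}, and $H(h,\theta;P)=\E_P[g(Z;h,\theta)]$ whenever $P(\ell(h,Z)=\theta)=0$.

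The first step is to check the hypotheses of the implicit function theorem at $(h^\star,\theta^\star,0)$ for the map $F(h,\theta,\varepsilon):=H(h,\theta;P_\varepsilon)$.
\begin{itemize}
\item $F(h^\star,\theta^\star,0)=0$ by assumption.
\item $\partial_\varepsilon F(h,\theta,0)=g(z;h,\theta)-\E_P[g(Z;h,\theta)]$ exists trivially, since $F$ is affine in $\varepsilon$.
\item $\nabla_{(h,\theta)}F(h^\star,\theta^\star,0)=J$ is invertible by Assumption~\ref{ass:Jacobian}(iii).
\end{itemize}
The main obstacle is that the point-mass term $\varepsilon\,g(z;h,\theta)$ is not differentiable in $\theta$: the indicator jumps at $\theta=\ell(h,z)$. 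I would handle this in two cases.
\begin{itemize}
\item \emph{Generic case, $\ell(h^\star,z)\neq\theta^\star$.} By continuity in $h$ (Assumption~\ref{ass:Jacobian}(i)), the indicator $\mathbf 1\{\ell(h,z)>\theta\}$ is locally constant in a neighbourhood of $(h^\star,\theta^\star)$. There $g(z;\cdot)$ is $C^1$, and $F$ is $C^1$ jointly in $(h,\theta,\varepsilon)$.
\item \emph{Degenerate case, $\ell(h^\star,z)=\theta^\star$.} Here I would use a Lipschitz/strong-regularity version of the implicit function theorem, in the spirit of Robinson and Clarke. The point-mass term contributes an $O(\varepsilon)$ perturbation with bounded Lipschitz modulus, so it does not destroy invertibility of $J$ for small $\varepsilon$.
\end{itemize}
In either case we get a locally unique $C^1$ (respectively Lipschitz and differentiable at $0$) solution path $(h_\varepsilon,\theta_\varepsilon)$.

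Next, differentiate the identity $F(h_\varepsilon,\theta_\varepsilon,\varepsilon)=0$ at $\varepsilon=0$. This gives $J\,(\dot h_0,\dot\theta_0)^\top+g(z;h^\star,\theta^\star)-\E_P[g(Z;h^\star,\theta^\star)]=0$, which is exactly the stated influence function after applying $J^{-1}$.

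For the lower bound I would write $J$ in block form.
\begin{itemize}
\item The $\theta\theta$ entry is $\partial_\theta\big(1-\alpha^{-1}P(\ell>\theta)\big)=f_\alpha/\alpha$, where $f_\alpha$ is the density-at-quantile. This equals $\mathfrak m_\alpha(h^\star,P)/(2\alpha)$, since $\kappa_{h,P}(r)/r\to 2f_\alpha$.
\item The $\theta$-component of the forcing term is $-\alpha^{-1}\big(\mathbf 1\{\ell(h^\star,z)>\theta^\star\}-P(\ell(h^\star,Z)>\theta^\star)\big)$.
\end{itemize}
Using the Schur complement, $(J^{-1})_{\theta\theta}=(J_{\theta\theta}-J_{\theta h}J_{hh}^{-1}J_{h\theta})^{-1}$. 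The $h$-component of the forcing term enters $\dot\theta_0$ through $(J^{-1})_{\theta h}$.

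I would then argue the bound in two steps.
\begin{itemize}
\item Because $J_{\theta h}$, $J_{h\theta}$ and $J_{hh}$ stay bounded, with $J_{hh}$ bounded below, the Schur complement is of order $\mathfrak m_\alpha$. Hence $|(J^{-1})_{\theta\theta}|\gtrsim 1/\mathfrak m_\alpha$.
\item The cross contribution from $(J^{-1})_{\theta h}$ does not cancel the leading term; at worst it only affects the constant $c$.
\end{itemize}
Together these give $|\dot\theta_0|\ge c\,\mathfrak m_\alpha^{-1}\,|\mathbf 1\{\cdot\}-P(\cdot)|$, understood, as the statement says, in the sense that the relevant Jacobian block scales like the generalized density-at-quantile. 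The bound therefore blows up as $\mathfrak m_\alpha\downarrow 0$.

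I expect the hardest parts to be making the non-cancellation precise, which needs a mild genericity condition on the cross blocks, and justifying differentiation under the integral in $J$ given only a liminf-defined margin.
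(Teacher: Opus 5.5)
The paper states this theorem without proof (the decision-robustness appendix only proves the tail-scarcity proposition), so I assess your argument on its own.

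\textbf{The influence-function formula.} Implicit differentiation of $(1-\varepsilon)H(h,\theta;P)+\varepsilon\,g(z;h,\theta)=0$ is the natural route. It gives the stated formula when $\ell(h^\star,z)\neq\theta^\star$; note that $\E_P[g(Z;h^\star,\theta^\star)]=H(h^\star,\theta^\star;P)=0$. Two caveats apply here:
\begin{itemize}
\item Assumption~\ref{ass:Jacobian} only provides $J$ at the point, so the classical $C^1$ implicit function theorem is not available. Since the path is given, linearize $H(\cdot;P)$ at $(h^\star,\theta^\star)$ along it, after checking continuity of the path at $\varepsilon=0$.
\item You need $\ell(\cdot,z)$ to be $C^1$ near $h^\star$ for the particular contaminating $z$, which Assumption~\ref{ass:Jacobian}(i) only gives $P$-a.e.
\end{itemize}
Your degenerate case does not work. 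The term $\varepsilon\,\mathbf 1\{\ell(h,z)>\theta\}$ jumps exactly at the base point, so it has no finite Lipschitz modulus and Robinson/Clarke-type results do not apply. Indeed, at $h=h^\star$ the $\theta$-equation $(1-\varepsilon)P(\ell(h^\star,Z)>\theta)+\varepsilon\,\mathbf 1\{\theta^\star>\theta\}=\alpha$ has no root for any $\varepsilon>0$. Such $z$ must be excluded from the formula, not handled.

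\textbf{The lower bound: this is the genuine gap.} From ``$J_{\theta h},J_{h\theta},J_{hh}$ bounded, $J_{hh}$ bounded below'' it does not follow that the Schur complement $S=J_{\theta\theta}-J_{\theta h}J_{hh}^{-1}J_{h\theta}$ is of order $\mathfrak m_\alpha$. With $O(1)$ cross blocks, the correction $J_{\theta h}J_{hh}^{-1}J_{h\theta}$ dominates $J_{\theta\theta}=O(\mathfrak m_\alpha)$. Moreover, write $u:=g(z;h^\star,\theta^\star)-\E_P[g(Z;h^\star,\theta^\star)]$ with components $u_h,u_\theta$. Then $\dot\theta_0=-S^{-1}\bigl(u_\theta-J_{\theta h}J_{hh}^{-1}u_h\bigr)$. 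With $O(1)$ cross blocks the $h$-forcing enters at the same order as the leading term and can cancel it; it does not merely change $c$.

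The missing observation is that the cross blocks are themselves proportional to the density at the quantile, because they come from differentiating the indicator. Let $f$ be the density of $\ell(h^\star,Z)$ at $\theta^\star$, so $\mathfrak m_\alpha=2f$, and let $v:=\E_P[\nabla_h\ell(h^\star,Z)\mid\ell(h^\star,Z)=\theta^\star]$. Since $J$ is the Hessian of $\Phi_P$, one gets $J_{h\theta}=J_{\theta h}^\top=-(f/\alpha)v$. Hence $S=(f/\alpha)(1+O(f))$ and
\[
\dot\theta_0=\frac{2}{\mathfrak m_\alpha}\bigl(1+O(\mathfrak m_\alpha)\bigr)\Bigl(\mathbf 1\{\ell(h^\star,z)>\theta^\star\}-\alpha+O\bigl(\mathfrak m_\alpha\|\nabla_h\ell(h^\star,z)\|\bigr)\Bigr).
\]
Recalling $|\mathbf 1\{\cdot\}-\alpha|\ge\min(\alpha,1-\alpha)$, this gives the claimed bound once three conditions hold:
\begin{itemize}
\item $\mathfrak m_\alpha\bigl(1+\|\nabla_h\ell(h^\star,z)\|\bigr)$ is small;
\item $v$ is bounded;
\item $J_{hh}$ stays uniformly invertible.
\end{itemize}
Without such a regime the inequality is false pointwise for fixed $P$. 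For $z$ in the tail, $\dot\theta_0=\alpha^{-1}\bigl[(1-\alpha)(J^{-1})_{\theta\theta}-(J^{-1})_{\theta h}\nabla_h\ell(h^\star,z)\bigr]$, which vanishes for suitable values of $\nabla_h\ell(h^\star,z)$ whenever $(J^{-1})_{\theta h}\neq0$. So the ``genericity condition'' you anticipate should be replaced by this $O(\mathfrak m_\alpha)$ scaling of the cross blocks together with a bounded-gradient, small-margin regime.
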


\paragraph{Remark.}
Theorems~\ref{thm:C_threshold_stability} and~\ref{thm:IF-decision} together identify a precise mechanism: CVaR decisions are stable only when the threshold is stable. The quantile margin simultaneously governs threshold continuity and the conditioning of the RU stationarity system.

The influence-function characterization naturally leads to a notion of local robustness: the largest contamination level under which the decision remains within a prescribed tolerance.

\begin{definition}[Local decision robustness radius]
	\label{def:C_radius}
	Fix a tolerance $r>0$. Under Assumption~\ref{ass:C_local_stability}, define
	\begin{align*}
	    \varepsilon_{\mathrm{rob}}(r)
	:=\sup\Big\{\varepsilon\in[0,1): \ \|h_\varepsilon-h^\star\|\le r\ \text{for all} ~ 
    \text{contamination directions}\ \Delta_z\Big\},
	\end{align*}

	where $(h_\varepsilon,\theta_\varepsilon)$ is the stationary solution under $P_\varepsilon=(1-\varepsilon)P+\varepsilon\Delta_z$
	selected near $(h^\star,\theta^\star)$.
\end{definition}

\begin{corollary}[Local decision radius via tail-supported IF]
	\label{thm:C_radius}
	Under Assumption~\ref{ass:C_local_stability}, there exists $r_0>0$ such that for all $r\in(0,r_0)$,
    \small
	\begin{equation}
		\varepsilon_{\mathrm{rob}}(r)
		\ \ge\
		\frac{r}{C\|J^{-1}\|}\cdot
		\Bigg(\sup_{z\in\cZ}\Big\|
		g(z;h^\star,\theta^\star)-\E_P[g(Z;h^\star,\theta^\star)]
		\Big\|\Bigg)^{-1},
		\label{eq:C_radius_lb}
	\end{equation} \normalsize
	with $g$ from \eqref{eq:gPaperC}. Moreover, as $\mathfrak m_\alpha(h^\star,P)\downarrow 0$, the bound degenerates because $\|J^{-1}\|\to\infty$,
	i.e.\ the local robustness radius collapses at quantile criticality.
\end{corollary}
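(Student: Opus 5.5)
The plan is to read the corollary as a first-order perturbation argument along the gross-error path, built on Theorem~\ref{thm:IF-decision}.

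\textbf{Step 1: uniform differentiability.} Assumption~\ref{ass:C_local_stability} together with the invertibility in Assumption~\ref{ass:Jacobian} makes the implicit function theorem applicable to the stationarity map $H(h,\theta;P_\varepsilon)$. Because $P_\varepsilon$ depends affinely on $\varepsilon$, we have
\[
H(h,\theta;P_\varepsilon)=(1-\varepsilon)H(h,\theta;P)+\varepsilon\, g(z;h,\theta),
\]
up to the constant offset in the $\theta$-component. We want a neighborhood $U$ of $(h^\star,\theta^\star)$, of radius $r_0$, and a threshold $\varepsilon_0>0$ such that the following hold for every $z$ and every $\varepsilon\le\varepsilon_0$. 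First, the solution $(h_\varepsilon,\theta_\varepsilon)$ exists in $U$ and is unique there. Second, its derivative along the path satisfies
\[
\Big\|\tfrac{d}{d\varepsilon}(h_\varepsilon,\theta_\varepsilon)\Big\|\le C\|J^{-1}\|\,\big\|g(z;h_\varepsilon,\theta_\varepsilon)-\E_P g(Z;h_\varepsilon,\theta_\varepsilon)\big\|.
\]
Here the constant $C$ (say $C=2$) absorbs the perturbation of the Jacobian. The Jacobian of $H(\cdot;P_\varepsilon)$ on $U$ stays within $\|J^{-1}\|^{-1}/2$ of $J$. This uses three facts: continuity of $\nabla_{(h,\theta)}H(\cdot;P)$ near $(h^\star,\theta^\star)$, the positive quantile margin $m_0$ (which keeps the density at the quantile bounded below), and the fact that the Dirac part contributes at most $O(\varepsilon)$. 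A Neumann series argument then gives $\|(\nabla H_\varepsilon)^{-1}\|\le 2\|J^{-1}\|$.

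\textbf{Step 2: integrate the derivative bound.} Write
\[
\|h_\varepsilon-h^\star\|\le\int_0^\varepsilon\Big\|\tfrac{d}{ds}h_s\Big\|\,ds .
\]
Continuity of $\E_P g$ in $(h,\theta)$, guaranteed by Assumption~\ref{ass:Jacobian}(ii) and the margin condition, lets us bound the integrand by $C\|J^{-1}\|\,G$ after enlarging $C$. Here $G:=\sup_z\|g(z;h^\star,\theta^\star)-\E_P g\|$. We therefore get $\|h_\varepsilon-h^\star\|\le C\|J^{-1}\|G\,\varepsilon$ uniformly over the direction $\Delta_z$. Consequently, every $\varepsilon\le r/(C\|J^{-1}\|G)$ keeps $h_\varepsilon$ within the tolerance $r$. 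For $r<r_0$ this $\varepsilon$ is automatically at most $\varepsilon_0$, which is how $r_0$ is chosen. Taking the supremum in Definition~\ref{def:C_radius} yields \eqref{eq:C_radius_lb}.

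\textbf{Step 3: degeneration claim.} The $(\theta,\theta)$ block of $J$ equals
\[
\partial_\theta\big(1-\alpha^{-1}P(\ell>\theta)\big)=\alpha^{-1}f(\theta^\star),
\]
which is comparable to $\mathfrak m_\alpha(h^\star,P)/\alpha$. As $\mathfrak m_\alpha\downarrow 0$, this block vanishes, $J$ approaches a singular matrix, and $\|J^{-1}\|\to\infty$. This matches the lower bound already stated in Theorem~\ref{thm:IF-decision}. Since $G$ stays bounded (at most $1/\alpha$ times the gradient bound plus a constant), the right-hand side of \eqref{eq:C_radius_lb} tends to $0$.

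\textbf{Main obstacle.} The hard step is the uniformity in Step 1. Evaluated at the true parameters, the indicator in $g$ is discontinuous in $\theta$, and $g$ itself can be unbounded in $z$ through $\nabla_h\ell$. I would handle this by assuming, implicitly as the statement's finite supremum does, that $G<\infty$ and that $\nabla_h\ell$ is locally bounded. The argument then applies the implicit function theorem to the smooth map $\E_P$ part plus $\varepsilon$ times a bounded Lipschitz-in-$\varepsilon$ term, rather than differentiating $g$ in $(h,\theta)$ directly.
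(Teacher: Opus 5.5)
Your route differs from the paper's. The paper takes the per-direction derivative at $\varepsilon=0$ from Theorem~\ref{thm:IF-decision}, writes $\|h_\varepsilon-h^\star\|\le\varepsilon\sup_z\|\mathrm{IF}(z;h^\star,P)\|+o(\varepsilon)$, and bounds $\|\mathrm{IF}(z)\|\le\|J^{-1}\|\,\|g(z;h^\star,\theta^\star)-\E_P g\|$. It never checks that the $z$-dependent $o(\varepsilon)$ is uniform, even though Definition~\ref{def:C_radius} quantifies over all directions. You are right that this uniformity is the real issue, but your Step~1 cannot supply it.

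\textbf{Why Step~1 fails.} The identity $H(\cdot;P_\varepsilon)=(1-\varepsilon)H(\cdot;P)+\varepsilon g(z;\cdot)$ is exact; there is no offset. The second term jumps, by $\varepsilon/\alpha$ in the $\theta$-component, across $\{\ell(h,z)=\theta\}$. So for every $z$ with $\ell(h^\star,z)$ near $\theta^\star$ --- exactly the directions that matter --- $H(\cdot;P_\varepsilon)$ is not even continuous on any neighborhood $U$ of $(h^\star,\theta^\star)$. Neither the implicit function theorem nor your Neumann-series bound on its Jacobian applies.

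Concretely, take the scalar case with $\ell(h^\star,z)=\theta^\star+\eta$, and let $f$ be the density of the loss at $\theta^\star$. The solution moves as $\theta_\varepsilon\approx\theta^\star+\varepsilon(1-\alpha)/f(\theta^\star)$ until it reaches the atom at $\theta^\star+\eta$. From then on $H(\cdot;P_\varepsilon)=0$ has no solution at all: the threshold sticks at the atom, where only $0\in\partial_\theta\Phi_{P_\varepsilon}$ holds, and the path has a kink at the contact point. So the derivative bound you claim for every $z$ and every $\varepsilon\le\varepsilon_0$ is false, and Step~2 has nothing to integrate.

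Your proposed remedy does not help. The term $\varepsilon g$ is trivially Lipschitz in $\varepsilon$, but the implicit function theorem needs regularity in $(h,\theta)$, which is exactly what fails. Even away from ties, your $O(\varepsilon)$ control of the Jacobian perturbation would need $\nabla_h^2\ell(h,z)$ bounded uniformly in $z$, which is not assumed; the margin $m_0$ plays no role there.

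\textbf{The fix.} Bound the displacement directly instead of differentiating the path. Let $F:=H(\cdot;P)=\E_P g(Z;\cdot)$, assumed $C^1$ near $x^\star=(h^\star,\theta^\star)$ (already implicit in Theorem~\ref{thm:IF-decision}). Pick a convex $U\ni x^\star$ on which $\|\nabla F-J\|\le(2\|J^{-1}\|)^{-1}$; then $\|x-x^\star\|\le 2\|J^{-1}\|\,\|F(x)\|$ for all $x\in U$. Any stationary point $x_\varepsilon\in U$ for $P_\varepsilon$ satisfies $F(x_\varepsilon)=-\varepsilon\big(g(z;x_\varepsilon)-\E_P g(Z;x_\varepsilon)\big)$. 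This holds whether stationarity is exact or generalized, with the indicator at a tie replaced by some $s\in[0,1]$. Hence
\[
\|h_\varepsilon-h^\star\|\le 2\|J^{-1}\|\,\varepsilon\,\sup_{z\in\cZ,\,x\in U}\big\|g(z;x)-\E_P g(Z;x)\big\|,
\]
uniformly in the direction and with no $o(\varepsilon)$ remainder. Comparing this supremum with your $G$ at $(h^\star,\theta^\star)$ still needs $\nabla_h\ell$ bounded uniformly in $z$ for $h$ near $h^\star$, absorbed into $C$, as you anticipated.

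\textbf{Step~3.} The vanishing of the $(\theta,\theta)$ entry alone would not make $J$ singular. What does is that the whole $\theta$-row, $\tfrac{f(\theta^\star)}{\alpha}\big(-\E[\nabla_h\ell\mid\ell=\theta^\star]^\top,\,1\big)$, scales with the density at the quantile.
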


\begin{proof}
	By Theorem~\ref{thm:IF-decision}, for each fixed direction $\Delta_z$, the decision map $\varepsilon\mapsto h_\varepsilon$ is differentiable at $0$ with derivative
	$h'(0)=-[J^{-1}(\cdot)]_h$ applied to the perturbation vector.
	Thus for sufficiently small $\varepsilon$, a first-order expansion yields
	\[
	\|h_\varepsilon-h^\star\| \le \varepsilon\,\sup_{z}\|\mathrm{IF}(z;h^\star,P)\| + o(\varepsilon).
	\]
	Bounding $\sup_z\|\mathrm{IF}(z)\|$ using Theorem~\ref{thm:IF-decision} yields the right-hand side of \eqref{eq:C_radius_lb}.
	Finally, $\|J^{-1}\|\to\infty$ as $\mathfrak m_\alpha\downarrow 0$ by Theorem~\ref{thm:IF-decision}, so the radius collapses in the quantile-critical regime.
\end{proof}

\paragraph{Intrinsic Limits (Tail Scarcity in i.i.d.\ Data):} CVaR depends on losses exceeding an endogenous threshold $\theta^\star(h,P)$ determined by the upper $\alpha$-tail. Consequently, the empirical CVaR objective can be dominated by a few rare observations near this threshold, even when the population minimizer is strict and well separated. This effect is structural rather than a concentration artifact: under finite $p$-moment assumptions, tail exceedances occur with polynomial probability and can induce $O(1)$ perturbations in the empirical objective. The result below formalizes this \emph{tail-scarcity instability}, showing that, with unavoidable polynomial probability (up to logarithmic factors), a single observation can flip the CVaR-ERM decision despite strict population optimality.

\begin{proposition}
\label{thm:C5_compact}
Fix $\alpha\in(0,1)$ and $1<p<2$.
There exist a distribution $P$ on $\mathbb{R}_+$ with $\mathbb{E}_P[Z^p]<\infty$ and $\mathbb{E}_P[Z^q]=\infty$ for all $q>p$, and a two-point hypothesis class $\mathcal{H}=\{h_A,h_B\}$ with losses
$\ell_n:\mathcal{H}\times\mathbb{R}_+\to\mathbb{R}_+$,
such that for all sufficiently large $n$,
\[
\mathrm{CVaR}_\alpha(\ell_n(h_A,Z);P)
<
\mathrm{CVaR}_\alpha(\ell_n(h_B,Z);P),
\]
yet there exists a constant $c>0$ for which
\small
\begin{align*}
		\Pr_{D\sim P^{\otimes n}}\!\Big(
		\exists\,D'\text{ with }|D\triangle D'|=1
		\text{ for which }
		\arg\min_{h\in\mathcal{H}}\widehat F_n(h;D) 
		\;\neq\;
		\arg\min_{h\in\mathcal{H}}\widehat F_n(h;D')
		\Big)
		\;\ge\;
		c\,\frac{n^{-\lambda}}{(\log n)^2},
\end{align*}
where $\widehat F_n(h;D)=\mathrm{CVaR}_\alpha(\ell_n(h,Z);P_n)$.
\end{proposition}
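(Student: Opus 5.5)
A construction is the natural route. The statement involves $\lambda$ without defining it; I read it as $\lambda=p-1\in(0,1)$, matching the moment exponent $1+\lambda$ of Assumption~\ref{asm:moment}. The plan is to build $P$, $h_A$ and $h_B$ so that the empirical CVaR gap between the two hypotheses is governed by how many of the top $\lfloor\alpha n\rfloor$ order statistics are extreme. A single heavy-tailed exceedance then decides the argmin.

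\textbf{Construction.} Take $Z$ with a Pareto-type tail $P(Z>z)\asymp z^{-p}(\log z)^{-2}$ for large $z$. This gives $\mathbb{E}[Z^p]<\infty$ (the $(\log z)^{-2}$ factor makes the integral converge) and $\mathbb{E}[Z^q]=\infty$ for every $q>p$, and it is also the source of the $(\log n)^2$ in the bound. The two losses are:
\begin{itemize}
\item $h_B$ has loss $\ell_n(h_B,z)=b$, a constant, so its empirical and population CVaR both equal $b$.
\item $h_A$ has loss $\ell_n(h_A,z)=\min(z,\tau_n)$, truncated at a level $\tau_n\asymp n^{1/p}$, plus a bounded part chosen so that the population CVaR of $h_A$ is exactly $b-\gamma_n$ for a small margin $\gamma_n>0$.
\end{itemize}
The $n$-dependence of $\ell_n$ is what the statement allows. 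The population inequality $\mathrm{CVaR}_\alpha(h_A)<\mathrm{CVaR}_\alpha(h_B)$ then holds by design.

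\textbf{Sensitivity of one sample.} By the RU formula, replacing one sample changes the empirical CVaR of $h_A$ by at most $(\alpha n)^{-1}$ times the change in that loss. It changes it by about $(\alpha n)^{-1}(\tau_n-\hat\theta_n)$ when a sample near the empirical VaR is replaced by one of size $\tau_n$, or conversely when a sample at $\tau_n$ is replaced by a small one.

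\textbf{Flip event.} A decision flip occurs on the event $E$ that the empirical gap $\widehat F_n(h_B)-\widehat F_n(h_A)$ lies in $(0,(\alpha n)^{-1}\tau_n)$. On $E$, inserting one large observation reverses the order of the two empirical CVaRs.

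\textbf{Lower bound on $\Pr(E)$.} I would bound the gap via Theorem~\ref{thm:B3_BK}, or directly by a CLT for the truncated tail average:
\begin{itemize}
\item The gap equals $\gamma_n$ plus fluctuations.
\item Those fluctuations are dominated by the number $K$ of samples exceeding a level $\asymp\tau_n$. This $K$ is Binomial with mean $nP(Z>\tau_n)\asymp n\tau_n^{-p}(\log n)^{-2}$.
\item Choose $\tau_n$ and $\gamma_n$ so that the gap is positive when $K=0$ but lies within one jump $(\alpha n)^{-1}\tau_n$ of $0$.
\item Then $\Pr(E)\ge\Pr(\text{gap}\in(0,\text{jump}))$, which is of order $P(\text{a single tail exceedance at the relevant scale})\cdot n$.
\item With the calibration $\tau_n^{p}\asymp n^{1+\lambda}\cdot n^{-\lambda}\ldots$, this should yield $n^{-\lambda}(\log n)^{-2}$.
\end{itemize}
In practice I would parametrize $\tau_n=n^{a}$, compute $\Pr(E)$ as a function of $a$, and pick $a$ so that the exponent is $-\lambda=-(p-1)$. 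For example, the probability that some sample exceeds $n^{2/p}$ is $\asymp n\cdot n^{-2}(\log n)^{-2}$, and then one tunes $a$ between $1/p$ and $2/p$.

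\textbf{Main obstacle.} The hard step is this calibration. The margin $\gamma_n$ must stay positive, so that the population ordering is strict for all large $n$, while simultaneously:
\begin{itemize}
\item the typical CLT-scale fluctuations of the empirical gap, of order $n^{-1/2}$ on the truncated part, must not swamp the single-observation jump; and
\item the anti-concentration of the gap near $0$ must be shown at the scale of that jump.
\end{itemize}
For the anti-concentration, I would first try a conditioning argument. Condition on $K=0$ versus $K=1$, and show that the bulk average has a density bounded below on an interval of width comparable to the jump, via a local limit or Berry–Esseen estimate for the bounded component.
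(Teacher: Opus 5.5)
Your reading $\lambda=p-1$ is correct; the paper's detailed restatement gives the rate as $n^{1-p}$. The problem is that the proposal stops where the proof has to happen, and the method you plan for the missing step would not work.

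\textbf{What is missing.}
\begin{itemize}
\item The construction is never fixed. You set $\tau_n\asymp n^{1/p}$, then propose $\tau_n=n^a$ with $a$ still to be tuned in $(1/p,2/p)$.
\item The calibration formula is left unfinished.
\item The claim that $\Pr(\text{gap}\in(0,J))$ is of order $n\cdot P(\text{one tail exceedance})$ is not derived. It does not follow from your preceding bullet either: if the gap lies in $(0,J)$ whenever $K=0$, then $\Pr(E)$ is bounded below by roughly $\Pr(K=0)$, which is a different event.
\end{itemize}

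\textbf{Why the planned anti-concentration step fails.} Take $X=\min(Z,\tau_n)$ with $1<p<2$. Up to logarithms, $\Var(X)\asymp\tau_n^{2-p}$ and $\E X^3\asymp\tau_n^{3-p}$, so the Berry--Esseen bound is of order $\tau_n^{p/2}/\sqrt n$. The window probability you want to bound from below is $J/\sigma_n\asymp(\tau_n/n)\big/(\tau_n^{1-p/2}/\sqrt n)=\tau_n^{p/2}/\sqrt n$, the same order. So Berry--Esseen cannot certify it. In addition, once $\tau_n\gtrsim n^{1/p}$ the centered sum behaves like a $p$-stable sum, not a Gaussian one. Finally, with $\tau_n\asymp n^{1/p}$ one sample moves the objective by only $O(n^{1/p-1})$. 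That forces a vanishing margin $\gamma_n$, whereas the paper keeps a fixed margin $\gamma_0$.

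\textbf{The idea you are missing.} The paper makes the empirical objective an exact count, so no anti-concentration is needed at all.
\begin{itemize}
\item Put an atom of mass $1-\alpha+\varepsilon$ at $0$, and the remaining mass on $Y$ with $\Pr(Y>y)=y^{-p}(\log y)^{-2}$. By Lemma~\ref{lem:C5_RU_zero_correct}, the RU threshold is then $0$ and CVaR equals the scaled mean. This holds under $P$, and under $P_n$ on the exponentially likely event $N_0\ge(1-\alpha)n+1$.
\item Take $\ell_n(h_A,z)=z$ and $\ell_n(h_B,z)=z+\gamma-Cn\mathbf 1\{z\in(n,2n]\}$ for $z>0$, with $\alpha\gamma<C<1$.
\item The population gap is $\frac{\alpha-\varepsilon}{\alpha}\big(\gamma-Cn\Pr(Y\in(n,2n])\big)\ge\gamma_0>0$, because $n\Pr(Y\in(n,2n])\to0$.
\item The empirical gap is exactly $\frac{1}{\alpha n}\big(N_+\gamma-Cn\cdot\#\{i:Z_i\in(n,2n]\}\big)$. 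So one observation in the bin makes $h_B$ the empirical winner, and replacing it by $0$ restores $h_A$.
\item The flip probability is therefore at least the probability that exactly one sample lands in $(n,2n]$, which is $\asymp n\cdot n^{-p}(\log n)^{-2}$.
\end{itemize}
The relevant scale is $n$, not $n^{1/p}$, because one sample must move the objective by $O(1)$ to overcome a fixed margin.

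\textbf{Your own scheme can also be closed.} The displayed statement uses ``$\neq$'' and lets $D'$ be arbitrary. Take $\tau_n=n$, a fixed $\gamma\in(0,1)$, and $b$ chosen so the population margin is $\gamma$. The empirical gap then tends to $\gamma$ in probability. Inserting a point $\ge n$ raises the empirical CVaR of $h_A$ by $\frac{1}{\alpha}(1-O_P(1/n))>\gamma$. Hence $\Pr(E)\to1$, with no calibration or anti-concentration needed.
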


\paragraph{Remark:}This result exposes a fundamental limitation of CVaR-ERM under heavy-tailed losses: even with a strict and unique population minimizer and finite $p$-th moments ($1<p<2$), the empirical CVaR decision map is not uniformly stable. With an unavoidable polynomial probability of order $n^{1-p}$, a single observation can flip the empirical CVaR-ERM solution. The proofs are deferred to Appendix~\ref{ap:dec_robust}.

\section{Conclusion}
We developed a learning-theoretic framework for CVaR under heavy-tailed and contaminated data, addressing generalization, robustness, and decision stability. Our analysis identifies the endogenous quantile as a fundamental source of both statistical difficulty and instability, captured via refined empirical-process tools. Together, our results clarify when CVaR learning is reliable and when intrinsic limitations arise, providing principled guidance for risk-sensitive learning in heavy-tailed regimes.

\section*{Acknowledgments}
Piyushi Manupriya was initially supported by a grant from Ittiam Systems Private Limited through the Ittiam Equitable AI Lab and then by ANRF-NPDF (PDF/2025/005277) grant. Anant Raj is supported by  a grant from Ittiam Systems Private Limited through the Ittiam Equitable AI Lab, ANRF's Prime Minister Early Career Grant (ANRF/ECRG/2024/003259) and Pratiksha Trust's Young Investigator Award. 

\bibliography{cvar_ref}
\bibliographystyle{plainnat}
\newpage
\appendix

\part{Appendix}
\parttoc
\input{appendix}

\end{document}

%% file: appendix.tex
\section{Justification of Assumptions}

We clarify the validity and necessity of our theoretical assumptions below:

\begin{enumerate}
    \item \textbf{Heavy-Tailed Moments (Assumption~\ref{asm:moment}):} 
    This assumption relaxes standard requirements for strictly bounded losses or finite variances. By requiring only a finite $(1+\lambda)$-th moment, we accommodate heavy-tailed distributions where the loss may occasionally take very large values (noting that infinite variance is possible if $\lambda < 1$). This approach allows the theory to remain applicable to realistic, risk-sensitive settings and aligns with recent literature such as \cite{aminian2025generalization} and \cite{lee2021learningboundsrisksensitivelearning}.

    \item \textbf{$\beta$-Mixing Dependence (Assumption~\ref{ass:beta-mixing}):} 
    The standard i.i.d. assumption is insufficient for sequential applications such as Reinforcement Learning, financial forecasting, and signal processing. We assume $\beta$-mixing because it offers a crucial balance between modeling flexibility and mathematical tractability. Specifically, $\beta$-mixing facilitates the use of blocking techniques (e.g., Yu's method) and coupling tools (e.g., Berbee’s Lemma). These tools allow us to decompose the dependent sequence into "nearly independent" blocks, thereby enabling the application of standard concentration inequalities.

    \item \textbf{Hypothesis Complexity (Assumption~\ref{ass:complexity_D}):} 
    The pseudo-dimension is the natural extension of the Vapnik-Chervonenkis (VC) dimension to real-valued function classes. Assuming a finite pseudo-dimension restricts the "capacity" of the hypothesis class $\mathcal{H}$, which is a standard and necessary condition in statistical learning theory to guarantee uniform convergence and prevent overfitting.

    \item \textbf{Complexity of Truncated Class (Assumption~\ref{ass:complexity}):} 
    Given the heavy-tailed nature of the data (Assumption~\ref{asm:moment}), classical concentration inequalities for bounded variables (such as Hoeffding's inequality) are not directly applicable. To circumvent this, we employ a truncation argument (capping the loss at $B$). Assumption~\ref{ass:complexity} ensures that this truncation operation does not artificially "explode" the complexity of the hypothesis class, ensuring the covering numbers behave well enough to derive generalization bounds.
\end{enumerate}

Further justification for the remaining technical assumptions is provided in the discussion section of the main paper.

\section{Generalization in CVaR}
	\subsection{Fixed Hypothesis $h$}
    \begin{lemma}[Fixed-hypothesis lower deviation] Under Assumption \ref{asm:moment}, with probability at least $1 - \delta$,
	\begin{equation}
		\widehat{R}_{\alpha}(h) - R_{\alpha}(h) \leq \frac{2}{\alpha} M^{\frac{1}{1+\varepsilon}} \left( \frac{\log(2/\delta)}{n} \right)^{\frac{\varepsilon}{1+\varepsilon}}.
	\end{equation}
    \end{lemma}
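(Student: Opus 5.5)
The plan is to bound the empirical CVaR from above by evaluating the empirical Rockafellar--Uryasev objective at the population threshold, and then to control the remaining centered heavy-tailed average with a one-sided concentration bound. Here $\varepsilon$ plays the role of $\lambda$ in Assumption~\ref{asm:moment}.

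\textbf{Step 1 (reduction to a centered average).} Fix $h$ and write $X=\ell(h,Z)\ge 0$. Let $\theta^\star$ be any minimizer of $\theta\mapsto\phi_P(h,\theta)$; one exists, since any upper $\alpha$-quantile works. Because $\widehat R_\alpha(h)$ is an infimum over $\theta$,
\[
\widehat R_\alpha(h)-R_\alpha(h)\le \phi_{P_n}(h,\theta^\star)-\phi_P(h,\theta^\star)=\frac1\alpha\Big(\frac1n\sum_{i=1}^n Y_i-\mathbb E Y\Big),\qquad Y_i:=(X_i-\theta^\star)_+ .
\]
This is exactly why only the \emph{upper} deviation of $\widehat R_\alpha$ is claimed: no uniformity in $\theta$ is needed. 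Since $X\ge0$, we may take $\theta^\star\ge0$. Hence $0\le Y\le X$, and so $\mathbb E[Y^{1+\varepsilon}]\le M$.

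\textbf{Step 2 (one-sided heavy-tailed concentration).} It remains to show that, with probability at least $1-\delta$, the sample mean of i.i.d.\ nonnegative variables with $(1+\varepsilon)$-th moment at most $M$ exceeds its mean by at most $2M^{1/(1+\varepsilon)}(\log(2/\delta)/n)^{\varepsilon/(1+\varepsilon)}$. I would invoke the concentration result of \cite{prashanth2019concentration, brownlees2015erm} quoted in Section~\ref{sec:iid}. If it must be proved directly, I would use truncation. Fix a level $B$ and split $Y_i=Y_i\wedge B+(Y_i-B)_+$. The truncated part is handled by Bernstein's inequality, using $\mathrm{Var}(Y\wedge B)\le B^{1-\varepsilon}M$ and range $B$. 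The upper tail part is where non-negativity helps. Its mean is at most $M/B^{\varepsilon}$, and it is nonzero only on the event $\{\max_i Y_i> B\}$, which has probability at most $nM/B^{1+\varepsilon}$ by Markov's inequality.

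\textbf{Step 3 (balancing), the main obstacle.} I would choose $B=(Mn/\log(2/\delta))^{1/(1+\varepsilon)}$. Then every term (the Bernstein term $\sqrt{B^{1-\varepsilon}M\log(2/\delta)/n}+B\log(2/\delta)/n$ and the bias $M/B^\varepsilon$) becomes of order $M^{1/(1+\varepsilon)}(\log(2/\delta)/n)^{\varepsilon/(1+\varepsilon)}$. The trouble is the Markov bound: it then gives probability about $\log(2/\delta)$, which is useless. The rescue is the one-sided direction. We only need an \emph{upper} bound on the mean of $Y$, so we cannot simply drop the tail part. I would therefore instead use the exponential-moment route directly on $Y$. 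Apply Chernoff with $\mathbb E e^{sY}$ bounded via $e^{x}\le 1+x+x^{1+\varepsilon}$ for $0\le x\le1$, handling large $x$ by truncation inside the MGF. Alternatively, and more safely, I would rely on the cited Catoni/Brownlees-type bound, which gives exactly the constant structure $2M^{1/(1+\varepsilon)}(\log(2/\delta)/n)^{\varepsilon/(1+\varepsilon)}$. The $\log(2/\delta)$ comes from splitting $\delta$ between two events. Pinning down the constant $2$ is the delicate step, and I would defer to the cited inequality for it. Combining with Step 1 and dividing by $\alpha$ gives the claim.
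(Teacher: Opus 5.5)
\textbf{Verdict: there is a genuine gap, and it cannot be closed, because the inequality you need in Step 2 is false.}

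Your Step 1 is exactly the paper's reduction: evaluate the empirical RU objective at a population threshold $\theta^\star\ge 0$, so that $0\le Y\le X$. For the remaining step the paper does what you propose, namely it invokes its auxiliary heavy-tailed concentration proposition. You also located the real bottleneck correctly in Step 3. The trouble is that the one-sided bound you need fails for the plain sample mean, and so the lemma itself fails for small $\delta$.

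\textbf{Counterexample.} Take $\ell(h,Z)=X\in\{0,a\}$ with $\mathbb P(X=a)=p:=4\delta/n$ and $a:=(M/p)^{1/(1+\varepsilon)}$, so $\mathbb E X^{1+\varepsilon}=M$. Let $n\ge 1/\alpha$ and $\delta\le 1/8$. Then $R_\alpha(h)=ap/\alpha$. Consider the event that exactly one sample equals $a$; its probability is $np(1-p)^{n-1}\ge 2\delta$. On it, the scaled-mean identity gives $\widehat R_\alpha(h)=a/(\alpha n)$. Hence, with probability larger than $\delta$,
\[
\widehat R_\alpha(h)-R_\alpha(h)=\frac{a}{\alpha n}(1-4\delta)\ \ge\ \frac{1}{2\alpha}M^{\frac{1}{1+\varepsilon}}n^{-\frac{\varepsilon}{1+\varepsilon}}(4\delta)^{-\frac{1}{1+\varepsilon}}.
\]
This exceeds the claimed right-hand side whenever $1/\delta>64\log(2/\delta)$, for example at $\delta=10^{-4}$. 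The upper deviation of an average of nonnegative heavy-tailed variables is driven by a single large observation. Its dependence on $\delta$ is therefore polynomial, not $\log(1/\delta)^{\varepsilon/(1+\varepsilon)}$.

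\textbf{Why your rescues do not work.}
\begin{itemize}
\item Non-negativity helps the \emph{opposite} direction. For $\mathbb E Y-\bar Y$, plugging $e^{-x}\le 1-x+x^{1+\varepsilon}$ ($x\ge 0$) into a Chernoff bound does give the $\log(1/\delta)$ rate. For $\bar Y-\mathbb E Y$, however, $\mathbb E e^{sY}$ may be infinite for every $s>0$. ``Truncating inside the MGF'' just brings back the event $\{\max_i Y_i>B\}$, whose probability is of order $nM/B^{1+\varepsilon}$ in the worst case.
\item The Catoni and Brownlees--Joly--Lugosi bounds with $\log(1/\delta)$ dependence are for a robust M-estimator of the mean. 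They do not apply to $\frac1n\sum_i Y_i$, which is what $\widehat R_\alpha(h)$ contains.
\item Deferring to the paper's proposition does not help either. Its truncation-plus-Hoeffding argument does not reach the claimed form, and it closes by appealing to unspecified ``known results''.
\end{itemize}

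\textbf{What your route does prove.} Take $B=(2nM/\delta)^{1/(1+\varepsilon)}$, use Markov's inequality for the maximum and Bernstein's inequality for the truncated part. This gives $\widehat R_\alpha(h)-R_\alpha(h)\lesssim \frac1\alpha M^{1/(1+\varepsilon)}n^{-\varepsilon/(1+\varepsilon)}\delta^{-1/(1+\varepsilon)}\log(2/\delta)$. The example above shows this is the correct order up to the logarithm. Obtaining the stated $\log(2/\delta)$ form requires changing the estimator, for instance to a truncated or median-of-means CVaR.
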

	\begin{proof}{of Theorem~\ref{thm:fixed and finite}},In particular result~\ref{prop fixed lower}  \label{proof prop fixed lower}\\
		Fix $h \in \mathcal{H}$. Let $\theta^* = \arg\min_{\theta} \left\{ \theta + \frac{1}{\alpha} \mathbb{E}\left[ (\ell(h, Z_1) - \theta)_+ \right] \right\}$, so
		\begin{equation}
			R_{\alpha}(h) = \theta^* + \frac{1}{\alpha} \mathbb{E} \left[ (\ell(h, Z_1) - \theta^*)_+ \right].
		\end{equation}
		
		Let $\hat{\theta} = \arg\min_{\theta} \left\{ \theta + \frac{1}{\alpha n} \sum_{i=1}^n (\ell(h, Z_i) - \theta)_+ \right\}$, so
		\[
		\widehat{R}_{\alpha}(h) = \hat{\theta} + \frac{1}{\alpha n} \sum_{i=1}^n (\ell(h, Z_i) - \hat{\theta})_+.
		\]
		
		The goal is to bound the following:
		\begin{equation}
			\widehat{R}_{\alpha}(h) - R_{\alpha}(h) = \left( \hat{\theta} + \frac{1}{\alpha n} \sum_{i=1}^n (\ell(h, Z_i) - \hat{\theta})_+ \right) - \left( \theta^* + \frac{1}{\alpha} \mathbb{E}[(\ell(h, Z_1) - \theta^*)_+] \right).
		\end{equation}

		Use population minimizer property, Since $\theta^*$ minimizes the population CVaR:
		\begin{equation}
			\widehat{R}_{\alpha}(h) \leq  {\theta}^* + \frac{1}{\alpha n} \sum_{i=1}^n (\ell(h, Z_i)-\theta^*)
		\end{equation}
		implies that 
		\begin{equation}
			-\widehat{R}_{\alpha}(h) \geq -( {\theta}^* + \frac{1}{\alpha n} \sum_{i=1}^n (\ell(h, Z_i)-\theta^*))
		\end{equation}
		Now Add equation (2) and (5) and multiply by -1 to get. 
		\begin{equation}
			\widehat{R}_{\alpha}(h) - R_{\alpha}(h) \leq \frac{1}{\alpha} \left( \frac{1}{n} \sum_{i=1}^n (\ell(h, Z_i) - {\theta}^*)_+ - \mathbb{E} \left[ (\ell(h, Z_1) - {\theta}^*)_+ \right] \right).
		\end{equation}

		Now we define $Y_i = (\ell(h, Z_i) - {\theta^*})_+$, so $Y_i \geq 0$. Also check the moment condition that:
		\[
		\mathbb{E}[Y_i^{1+\varepsilon}] \leq \mathbb{E}[\ell(h, Z_i)^{1+\varepsilon}] \leq M,
		\]
		since $ (\ell(h, Z_i) - {\theta}^*)_+ \leq \ell(h, Z_i)$. Apply Proposition \ref{heavy_tail_r.v} to get:
		\[
		\frac{1}{n} \sum_{i=1}^n Y_i - \mathbb{E}[Y_i] \leq 2M^{\frac{1}{1+\varepsilon}} \left( \frac{\log(2/\delta)}{n} \right)^{\frac{\varepsilon}{1+\varepsilon}}.
		\]
		Thus:
		\[
		\widehat{R}_{\alpha}(h) - R_{\alpha}(h) \leq \frac{2}{\alpha} M^{\frac{1}{1+\varepsilon}} \left( \frac{\log(2/\delta)}{n} \right)^{\frac{\varepsilon}{1+\varepsilon}}.
		\]
	\end{proof}

    \begin{lemma}[Fixed-hypothesis upper deviation] \label{prop:fixed-lower}
		Under Assumption \ref{asm:moment}, for any fixed $h \in \cH$ and $\delta \in (0,1)$, with probability at least $1-\delta$,
		\[
		R_\alpha(h) - \hat{R}_\alpha(h) \le \frac{2}{\alpha} M^{1/(1+\lambda)} \left( \frac{\log(2/\delta)}{n} \right)^{\lambda/(1+\lambda)}.
		\]
	\end{lemma}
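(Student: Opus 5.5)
The approach is to compare the empirical objective with the population objective after truncating the losses, so that the remaining fluctuation is that of a \emph{bounded} sample average. The argument used for \eqref{prop fixed lower} does not simply mirror. There we used $\widehat R_\alpha(h)\le\phi_{P_n}(h,\theta^*)$ with a deterministic $\theta^*$. Here the natural inequality $R_\alpha(h)\le\phi_P(h,\hat\theta)$ only gives
\[
R_\alpha(h)-\widehat R_\alpha(h)\le\frac1\alpha (P-P_n)\big[(X_h-\hat\theta)_+\big],
\]
with $X_h=\ell(h,Z)$ and a \emph{data-dependent} $\hat\theta$, so Proposition~\ref{heavy_tail_r.v} cannot be applied directly.

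\textbf{Step 1 (truncation).} Fix a level $B>0$ and set $X_h^B=\min(X_h,B)$. Since $(x-\theta)_+\ge(\min(x,B)-\theta)_+$ pointwise, the empirical CVaR of the truncated sample lower-bounds $\widehat R_\alpha(h)$. On the population side, CVaR is $1/\alpha$-Lipschitz along pointwise-ordered perturbations, and Assumption~\ref{asm:moment} controls the tail beyond $B$. Together these give
\[
R_\alpha(h)-\mathrm{CVaR}_\alpha(X_h^B)\le\frac1\alpha\E[(X_h-B)_+]\le\frac{M}{\alpha B^{\lambda}}.
\]
It therefore suffices to bound $\mathrm{CVaR}_\alpha(X_h^B)-\mathrm{CVaR}_\alpha(X_h^B;P_n)$ for variables in $[0,B]$.

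\textbf{Step 2 (lower deviation for bounded variables without a union bound).} To avoid a supremum over $\theta$, I would use convexity of $\theta\mapsto\phi_{P_n}$ at the population threshold $\theta^B_*$ of $X_h^B$. Specifically,
\[
\widehat R^B\ \ge\ \phi_{P_n}(\theta^B_*)+\Big(1-\tfrac1\alpha P_n(X^B_h>\theta^B_*)\Big)(\hat\theta-\theta^B_*),
\]
and $\hat\theta,\theta^B_*\in[0,B]$. Equivalently, one can use the dual form $\mathrm{CVaR}_\alpha(Y;P_n)=\max\{\tfrac1{n\alpha}\sum_i w_iY_i: w_i\in[0,1],\sum_i w_i\le n\alpha\}$ and plug in the weights $w_i=\mathbf 1\{Y_i>\theta^B_*\}$, rescaled when their count exceeds $n\alpha$. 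Either route reduces the problem to two fixed-$\theta$ quantities. The first is the deviation of $\tfrac1n\sum_i(X^B_{h,i}-\theta^B_*)_+$, to which I apply Bernstein with variance $\le\E[(X^B_h)^2]\le B^{1-\lambda}M$. The second is the deviation of the tail frequency $P_n(X^B_h>\theta^B_*)$ from its mean, a Bernoulli average, multiplied by a threshold gap of size at most $B$.

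\textbf{Step 3 (balancing).} Choosing $B=(Mn/\log(2/\delta))^{1/(1+\lambda)}$ makes the bias $M B^{-\lambda}$, the Bernstein terms $\sqrt{B^{1-\lambda}M\log(2/\delta)/n}+B\log(2/\delta)/n$, and the frequency term all of order $M^{1/(1+\lambda)}(\log(2/\delta)/n)^{\lambda/(1+\lambda)}$. Allocating $\delta/2$ to each event then gives the claimed bound with constant $2/\alpha$ after bookkeeping, as in the proof of Proposition~\ref{heavy_tail_r.v}.

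The main obstacle is Step 2, specifically the linear term involving the tail frequency. The naive bound on it is $B\cdot|P_n(X^B_h>\theta^B_*)-P(X^B_h>\theta^B_*)|/\alpha$, which scales like $B/\sqrt n$ and is too large. The plan is to exploit the sign structure instead. When $P_n(X>\theta^B_*)>\alpha$ the optimum moves right, and the excess mass lies \emph{above} $\theta^B_*$. In that case the linear loss is dominated by the gain in the hinge average, because each exceedance contributes at least $\hat\theta-\theta^B_*$. When $P_n(X>\theta^B_*)<\alpha$ the optimum moves left and the linear term has the favorable sign. If this sign argument fails to close, the fallback is a grid over $\theta\in[0,B]$ of mesh $B/n$, with the grid error Lipschitz in $\theta$ with constant $1/\alpha$. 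That is valid but costs $\log(n/\delta)$ in place of $\log(2/\delta)$.
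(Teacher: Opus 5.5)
\paragraph{Gap in Step~2.} Your Step~1 and the fixed-$\theta$ hinge part of Step~2 are sound. The argument fails at the tail-frequency term, which you correctly identify as the crux.

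Put $g:=1-\frac1\alpha P_n(X^B_h>\theta^B_*)$. Since $\hat\theta$ minimizes $\phi_{P_n}$, your convexity inequality gives $\phi_{P_n}(\theta^B_*)+g(\hat\theta-\theta^B_*)\le\phi_{P_n}(\hat\theta)\le\phi_{P_n}(\theta^B_*)$. Hence $g(\hat\theta-\theta^B_*)\le 0$ in \emph{every} case, and in
\[
R_\alpha(X^B_h)-\widehat R^B\le\tfrac1\alpha(P-P_n)\big[(X^B_h-\theta^B_*)_+\big]-g\,(\hat\theta-\theta^B_*)
\]
the linear term is always nonnegative. In particular, when $P_n(X^B_h>\theta^B_*)<\alpha$ you have $g>0$ and $\hat\theta\le\theta^B_*$, so the sign is unfavorable there too.

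Nor is there a separate ``gain in the hinge average'' to absorb it. The contributions $(X^B_{h,i}-\theta^B_*)_+$ of the exceedances are part of $\phi_{P_n}(\theta^B_*)$ and are already spent in the fixed-$\theta$ term. What remains is the exact, nonnegative threshold error $\phi_{P_n}(\theta^B_*)-\min_\theta\phi_{P_n}(\theta)=\frac1\alpha\int_{\theta^B_*}^{\hat\theta}\big(P_n(X^B_h>s)-\alpha\big)\,ds$ (oriented integral). No sign argument removes it.

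Consequences for the rest of the proposal:
\begin{itemize}
\item Step~3's claim that the frequency term is of the target order has no support. With $B=(Mn/\log(2/\delta))^{1/(1+\lambda)}$, your own $B/\sqrt n$ estimate grows with $n$.
\item Your fallback grid gives only $\log(n/\delta)$, not $\log(2/\delta)$.
\item The constant $2$ does not come out of the bookkeeping. At your $B$, your bias bound and the Bernstein variance term alone already contribute $(1+\sqrt2)/\alpha$ times the rate.
\end{itemize}

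\paragraph{The missing idea.} The multiplier of the frequency deviation is $|\hat\theta-\theta^B_*|$, and it lives on the quantile scale, not on the truncation scale $B$.
\begin{itemize}
\item Markov gives $\theta^B_*\le(M/\alpha)^{1/(1+\lambda)}$.
\item For the minimal empirical minimizer, $\{\hat\theta>t\}=\{P_n(X^B_h>t)>\alpha\}$. A Chernoff bound at the fixed point $t=(2M/\alpha)^{1/(1+\lambda)}$ then gives $\hat\theta\le t$ except with probability $e^{-n\alpha/6}$.
\item Use the right subgradient at $\theta^B_*$ when $\hat\theta>\theta^B_*$, and the left one, $1-\frac1\alpha P_n(X^B_h\ge\theta^B_*)$, when $\hat\theta<\theta^B_*$; this handles atoms. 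The coefficient becomes a one-sided deviation of a Bernoulli average at a single fixed point with variance proxy $\alpha$.
\end{itemize}
The threshold term is then at most $\frac1\alpha(2M/\alpha)^{1/(1+\lambda)}\big(\sqrt{2\alpha L/n}+L/(3n)\big)$ with $L\asymp\log(1/\delta)$. This is a constant multiple of the target whenever $n\alpha\gtrsim\log(2/\delta)$. In the complementary regime, the trivial bound $R_\alpha(h)-\widehat R_\alpha(h)\le R_\alpha(h)\le(M/\alpha)^{1/(1+\lambda)}$ (H\"older in the dual form of CVaR) is already a constant multiple of the target. Together these give the lemma with $\log(2/\delta)$, though with a constant larger than $2$.

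\paragraph{Comparison with the paper.} Your fallback is exactly the paper's route: Lemma~\ref{lemma:reduction_empirical_process}, the $2$-Lipschitz bound of Lemma~\ref{lemma:lipschitz_theta}, a grid on $[0,B_n]$, and a union bound. That grid has polynomially many points in $n$ and $1/\delta$. So the argument honestly yields $\log(n/\delta)$ and an unspecified constant $C$; the paper's final display drops the $\log N$ from the union bound.
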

    \begin{proof}
		The proof proceeds in three steps: reducing the difference to an empirical process supremum, establishing pointwise concentration using the moment assumption, and extending this to a uniform bound over the variational parameter $\theta$ via a covering argument.
		
		\textit{Reduction to empirical process.}
		By Lemma~\ref{lemma:reduction_empirical_process}, the deviation of the CVaR risk is bounded by the supremum of the empirical process indexed by $\theta$:
		\begin{equation}
			\label{eq:reduction}
			R_\alpha(h) - \hat R_\alpha(h) 
			\le 
			\frac{1}{\alpha} \sup_{\theta \in \mathbb{R}} (\mathbb{E} - \mathbb{P}_n) g_{h,\theta},
		\end{equation}
		where $g_{h,\theta}(z) := (\ell(h,z) - \theta)_+$. It suffices to bound the term $\sup_{\theta} (\mathbb{E} - \mathbb{P}_n) g_{h,\theta}$.
		
		\textit{Moment control.}
		The Moment Condition states that $\mathbb{E}[\ell(h,Z)^{1+\lambda}] \le M$. Since $0 \le g_{h,\theta}(z) \le \ell(h,z)$ holds for all $\theta \in \mathbb{R}$ and $z \in \mathcal{Z}$, we have uniform moment control over the parametric class:
		\[
		\mathbb{E}[g_{h,\theta}(Z)^{1+\lambda}] \le M, \quad \forall \theta \in \mathbb{R}.
		\]
		
		\textit{Pointwise concentration.}
		Fix $\theta \in \mathbb{R}$. We apply a standard heavy-tailed concentration inequality (Via truncation and Bernstein's inequality similar to Theorem~\ref{heavy_tail_r.v}) for random variables with finite $(1+\lambda)$-th moments. For any failure probability $\delta' \in (0,1)$, with probability at least $1-\delta'$:
		\begin{equation}
			\label{eq:pointwise}
			(\mathbb{E} - \mathbb{P}_n)g_{h,\theta} 
			\le 
			C_0 M^{\frac{1}{1+\lambda}} \left(\frac{\log(1/\delta')}{n}\right)^{\frac{\lambda}{1+\lambda}}.
		\end{equation}
		
		\textit{Uniform bound via covering.}
		By Lemma~\ref{lemma:lipschitz_theta}, the map $\theta \mapsto (\mathbb{E}-\mathbb{P}_n)g_{h,\theta}$ is $2$-Lipschitz.
		
		To handle the range of $\theta$, we use a standard peeling argument (or random truncation). With probability at least $1-\delta/2$, we have $\max_i \ell(h,Z_i) \le B_n$ where $B_n = C_1 (Mn/\delta)^{\frac{1}{1+\lambda}}$. We restrict $\theta$ to the compact interval $[0, B_n]$; outside this interval, the supremum is either zero (if $\theta$ is very large) or dominated by the case $\theta=0$ (since deviations stabilize).
		
		Let $\{\theta_1, \dots, \theta_N\}$ be an $\eta$-net of $[0, B_n]$. Then for any $\theta$ in the interval:
		\[
		(\mathbb{E} - \mathbb{P}_n) g_{h,\theta} \le (\mathbb{E} - \mathbb{P}_n) g_{h,\theta_j} + 2\eta.
		\]
		We choose the spacing $\eta = M^{\frac{1}{1+\lambda}} (n^{-1} \log(2/\delta))^{\frac{\lambda}{1+\lambda}}$. The covering number $N \approx B_n / \eta$ grows polynomially in $n$. Applying the pointwise bound \eqref{eq:pointwise} on the grid and a union bound yields that with probability at least $1-\delta$:
		\[
		\sup_{\theta \in \mathbb{R}} (\mathbb{E} - \mathbb{P}_n) g_{h,\theta} 
		\le 
		C M^{\frac{1}{1+\lambda}} \left(\frac{\log(1/\delta)}{n}\right)^{\frac{\lambda}{1+\lambda}}.
		\]
		Substituting this upper bound back into inequality \eqref{eq:reduction} completes the proof.
	\end{proof}
    
    \begin{corollary}[Tail bounds for fixed hypothesis] \label{cor:fixed-tail}
		Under Assumption \ref{asm:moment}, for any fixed $h \in \cH$ and $\epsilon > 0$,
		\[
		\Prob\left( R_\alpha(h) - \hat{R}_\alpha(h) > \epsilon \right) \le 2 \exp\left( - n C_\lambda \left( \frac{\alpha \epsilon}{M^{1/(1+\lambda)}} \right)^{(1+\lambda)/\lambda} \right),
		\]
		where $C_\lambda = 2^{-(1+\lambda)/\lambda}$.
	\end{corollary}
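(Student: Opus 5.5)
The corollary is an inversion of the high-probability bound in Lemma~\ref{prop:fixed-lower}, made explicit in the constants. I will not use the covering step with its unspecified constant $C$. Instead I will rebuild the tail bound directly from the two ingredients that give constant $2$.

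First, Lemma~\ref{lemma:reduction_empirical_process} gives
\[
R_\alpha(h)-\hat R_\alpha(h)\le \tfrac{1}{\alpha}\sup_\theta(\mathbb E-\mathbb P_n)g_{h,\theta}.
\]
The supremum can be removed by the mirror image of the argument for \eqref{prop fixed lower}. Let $\hat\theta$ be the empirical minimizer. Then
\[
R_\alpha(h)\le \hat\theta+\tfrac1\alpha\mathbb E[(\ell(h,Z)-\hat\theta)_+],
\]
so the deviation is at most $\tfrac1\alpha(\mathbb E-\mathbb P_n)g_{h,\hat\theta}$. Since $\hat\theta$ is random, this is where the supremum originally came from.

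The cleanest route uses the fact that $\theta\mapsto(\mathbb E-\mathbb P_n)g_{h,\theta}$ only needs a lower-tail bound. For nonnegative variables, the lower deviation of a mean is light-tailed. I will use
\[
\mathbb P_n g\ge \mathbb P_n\min(g,\tau),
\]
which truncates from above and is one-sided. The truncation bias satisfies
\[
\mathbb E[g-\min(g,\tau)]\le M\tau^{-\lambda},
\]
uniformly in $\theta$, because $g_{h,\theta}\le \ell(h,\cdot)$. The truncated class $\{\min(g_{h,\theta},\tau)\}$ is monotone in $\theta$, so the supremum over $\theta$ can be handled by a one-dimensional chaining step, or more simply through Proposition~\ref{heavy_tail_r.v} applied at the level used to derive Lemma~\ref{prop:fixed-lower}.

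In practice I will take Lemma~\ref{prop:fixed-lower} as stated, with its explicit constant $2$. For all $\delta\in(0,1)$,
\[
\Prob\Big(R_\alpha(h)-\hat R_\alpha(h)>\tfrac{2}{\alpha}M^{1/(1+\lambda)}(\log(2/\delta)/n)^{\lambda/(1+\lambda)}\Big)\le\delta .
\]
Given $\epsilon>0$, choose $\delta$ so that the threshold equals $\epsilon$:
\[
\log(2/\delta)=n\Big(\tfrac{\alpha\epsilon}{2M^{1/(1+\lambda)}}\Big)^{(1+\lambda)/\lambda}=nC_\lambda\Big(\tfrac{\alpha\epsilon}{M^{1/(1+\lambda)}}\Big)^{(1+\lambda)/\lambda},
\]
with $C_\lambda=2^{-(1+\lambda)/\lambda}$. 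Solving for $\delta$ gives
\[
\delta=2\exp\big(-nC_\lambda(\alpha\epsilon/M^{1/(1+\lambda)})^{(1+\lambda)/\lambda}\big),
\]
which is the claimed bound.

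If this $\delta\ge1$, the claimed bound is at least $1$ and holds trivially. Otherwise $\delta\in(0,1)$, so the lemma applies. Monotonicity of the threshold in $\delta$ makes the events nest correctly.

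The main obstacle is the constant. The covering step in Lemma~\ref{prop:fixed-lower} produces a generic $C$ and an extra $\log n$ from the union bound over the grid. Either of these would spoil the exact $C_\lambda=2^{-(1+\lambda)/\lambda}$.

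To keep the constant $2$, the first thing I would try is to avoid the grid entirely. The quantity $\sup_\theta(\mathbb E-\mathbb P_n)g_{h,\theta}$ is attained where the empirical and population tails cross. I would control it by
\[
(\mathbb E-\mathbb P_n)(\ell-\theta)_+\le (\mathbb E-\mathbb P_n)\min(\ell,\tau)+M\tau^{-\lambda}+(\text{tail-count term}),
\]
and then apply the Bernstein-with-truncation computation of Proposition~\ref{heavy_tail_r.v} to the single variable $\min(\ell,\tau)$. Failing that, I would accept the constant as inherited from the lemma, whose stated form already carries the factor $2$.
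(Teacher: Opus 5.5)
Your core argument (take Lemma~\ref{prop:fixed-lower} with its stated constant $2$, then choose $\delta$ so that $\log(2/\delta)=n\bigl(\alpha\epsilon/(2M^{1/(1+\lambda)})\bigr)^{(1+\lambda)/\lambda}$) is exactly the paper's proof, and your note that the bound holds trivially when this $\delta\ge 1$ is a small tidy-up the paper omits. Your caveat is also fair: the covering step in that lemma's proof only gives a generic constant plus a $\log n$ from the grid union bound, so the exact $C_\lambda=2^{-(1+\lambda)/\lambda}$ rests on the lemma's stated constant in the paper just as it does for you, and the truncation route you sketch is not needed for the corollary as stated.
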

    \begin{proof}\label{proof cor:fixed-tail}
	
	Start from the fixed-$h$ high-probability inequality: for any $\delta \in (0,1)$,
	\begin{equation}
		R_\alpha(h) - \widehat{R}_\alpha(h)
		\;\le\;
		\frac{2}{\alpha} M^{\frac{1}{1+\lambda}}
		\left( \frac{\log(2/\delta)}{n} \right)^{\frac{\lambda}{1+\lambda}}
		\quad
		\text{with probability at least } 1-\delta.
	\end{equation}
	
	Let $\epsilon > 0$ and define
	\begin{equation}
		\delta(\epsilon)
		:=
		2 \exp\!\left(
		- n \left(
		\frac{\alpha \epsilon}{2 M^{1/(1+\lambda)}}
		\right)^{\frac{1+\lambda}{\lambda}}
		\right).
	\end{equation}
	
	We verify that this choice inverts the deviation bound. Indeed, solving
	\begin{equation}
		\epsilon
		=
		\frac{2}{\alpha} M^{\frac{1}{1+\lambda}}
		\left( \frac{\log(2/\delta)}{n} \right)^{\frac{\lambda}{1+\lambda}}
	\end{equation}
	for $\delta$ yields
	\begin{equation}
		\left(
		\frac{\alpha \epsilon}{2 M^{1/(1+\lambda)}}
		\right)^{\frac{1+\lambda}{\lambda}}
		=
		\frac{\log(2/\delta)}{n},
	\end{equation}
	and hence
	\begin{equation}
		\log\!\left(\frac{2}{\delta}\right)
		=
		n \left(
		\frac{\alpha \epsilon}{2 M^{1/(1+\lambda)}}
		\right)^{\frac{1+\lambda}{\lambda}},
	\end{equation}
	which implies
	\begin{equation}
		\delta
		=
		2 \exp\!\left(
		- n \left(
		\frac{\alpha \epsilon}{2 M^{1/(1+\lambda)}}
		\right)^{\frac{1+\lambda}{\lambda}}
		\right)
		=
		\delta(\epsilon).
	\end{equation}
	
	Therefore, substituting $\delta(\epsilon)$ into the fixed-$h$ bound gives
	\begin{equation}
		\mathbb{P}\big( R_\alpha(h) - \widehat{R}_\alpha(h) > \epsilon \big)
		\le
		\delta(\epsilon)
		=
		2 \exp\!\left(
		- n \left(
		\frac{\alpha \epsilon}{2 M^{1/(1+\lambda)}}
		\right)^{\frac{1+\lambda}{\lambda}}
		\right).
	\end{equation}
	
	Finally, observe that
	\begin{equation}
		\left(
		\frac{\alpha \epsilon}{2 M^{1/(1+\lambda)}}
		\right)^{\frac{1+\lambda}{\lambda}}
		=
		2^{-(1+\lambda)/\lambda}
		\left(
		\frac{\alpha \epsilon}{M^{1/(1+\lambda)}}
		\right)^{\frac{1+\lambda}{\lambda}}
		=
		C_\lambda
		\left(
		\frac{\alpha \epsilon}{M^{1/(1+\lambda)}}
		\right)^{\frac{1+\lambda}{\lambda}},
	\end{equation}
	where $C_\lambda = 2^{-(1+\lambda)/\lambda}$. This yields the stated bound.
    \end{proof}
    This also matches the rates of \cite{prashanth2019concentration}. But they assume Assumption \ref{asm:moment} with a strictly increasing CDF.

\subsection{Uniform Bounds for Finite                       $\mathcal{H}$}\label{finite hypothesis}
    For a finite $\mathcal{H}$, we extend the bound to all hypotheses using the union bound.
    \begin{proposition}\textbf{Uniform Lower Derivation}\label{Uniform Lower Derivation}
				With probability at least $1 - \delta$,
			\[
			\sup_{h \in \mathcal{H}} \left( R_{\alpha}(h) - \widehat{R}_{\alpha}(h) \right) \leq \frac{2}{\alpha} M^{\frac{1}{1+\lambda}} \left( \frac{\log(2|\mathcal{H}|/\delta)}{n} \right)^{\frac{\lambda}{1+\lambda}}.
			\]
    \end{proposition}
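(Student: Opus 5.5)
The plan is to combine the fixed-hypothesis upper deviation bound, Lemma~\ref{prop:fixed-lower}, with a union bound over the finite class $\mathcal H$. Lemma~\ref{prop:fixed-lower} states that for each fixed $h\in\mathcal H$ and every $\delta'\in(0,1)$, with probability at least $1-\delta'$,
\[
R_\alpha(h)-\widehat R_\alpha(h)\le \frac{2}{\alpha}M^{\frac{1}{1+\lambda}}\left(\frac{\log(2/\delta')}{n}\right)^{\frac{\lambda}{1+\lambda}} .
\]
I will apply this with the per-hypothesis confidence level $\delta'=\delta/|\mathcal H|$. With that choice, $\log(2/\delta')=\log(2|\mathcal H|/\delta)$, which is exactly the logarithmic factor appearing in the claim.

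Next, let $E_h$ denote the bad event that the displayed inequality fails for hypothesis $h$. Each $E_h$ has probability at most $\delta/|\mathcal H|$. Boole's inequality then gives $\Prob(\bigcup_{h\in\mathcal H}E_h)\le\delta$. On the complement of this union, the bound holds simultaneously for every $h$. Because its right-hand side does not depend on $h$, we can take the supremum (here a maximum over finitely many $h$) of the left-hand side, which yields the stated uniform inequality with probability at least $1-\delta$.

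No independence across hypotheses is needed, since the union bound is insensitive to dependence. The only point requiring care is that Lemma~\ref{prop:fixed-lower} really holds for arbitrary $\delta'\in(0,1)$ with the explicit constant $2$. In that lemma's proof, the covering over $\theta$ produced a generic constant $C$. If the sharp constant $2$ is not actually delivered, I would fall back on proving the proposition with the same constant $C$, absorbing any extra $\log n$ from the $\theta$-net.

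A cleaner alternative for $\theta$ avoids the net altogether. One applies the reduction of Lemma~\ref{lemma:reduction_empirical_process} at the empirical minimizer $\hat\theta$, and controls the resulting empirical process pointwise only after symmetrizing. I expect this constant bookkeeping to be the main obstacle; the union-bound step itself is routine.
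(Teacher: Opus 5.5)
Your argument is exactly the paper's: apply Lemma~\ref{prop:fixed-lower} to each $h$ with $\delta'=\delta/|\mathcal{H}|$, so that $\log(2/\delta')=\log(2|\mathcal{H}|/\delta)$, and take a union bound over the finite class. Your caveat about the constant is legitimate but concerns that lemma, not this step: its proof only yields a generic constant and silently absorbs the $\log n$ from the $\theta$-grid, and the proposition simply inherits whatever the lemma delivers. The symmetrization alternative is unnecessary, and since $\hat\theta$ is data-dependent it would still need uniform control over $\theta$.
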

	\begin{proof} of result \ref{Uniform Lower Derivation}\\ \label{proof Uniform Lower Derivation}
		For each $h \in \mathcal{H}$, apply result~\ref{prop:fixed-lower} with failure probability $\delta' = \delta / |\mathcal{H}|$:
		\[
		R_{\alpha}(h) - \widehat{R}_{\alpha}(h) \leq \frac{2}{\alpha} M^{\frac{1}{1+\lambda}} \left( \frac{\log(2|\mathcal{H}|/\delta)}{n} \right)^{\frac{\lambda}{1+\lambda}}.
		\]
		\\
		by Union bound,The probability that all bounds hold is at least $1 - |\mathcal{H}| \cdot \frac{\delta}{|\mathcal{H}|} = 1 - \delta$.
		\\
		Thus:
		\[
		\sup_{h \in \mathcal{H}} \left( R_{\alpha}(h) - \widehat{R}_{\alpha}(h) \right) \leq \frac{2}{\alpha} M^{\frac{1}{1+\lambda}} \left( \frac{\log(2|\mathcal{H}|/\delta)}{n} \right)^{\frac{\lambda}{1+\lambda}}.
		\]
	\end{proof}
    
	\begin{corollary} \textbf{(Uniform Absolute Error)}\label{Uniform Absolute Error}
			With probability at least $1 - \delta$, for all $h \in \mathcal{H}$:
			\[
			\left| R_{\alpha}(h) - \widehat{R}_{\alpha}(h) \right| \leq \frac{2}{\alpha} M^{\frac{1}{1+\lambda}} \left( \frac{\log(4|\mathcal{H}|/\delta)}{n} \right)^{\frac{\lambda}{1+\lambda}}.
			\]
	\end{corollary}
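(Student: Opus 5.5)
The corollary (Uniform Absolute Error) follows by combining the two one-sided uniform bounds with a union bound. I would split the failure probability $\delta$ evenly between the two directions.

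For the lower direction, Proposition~\ref{Uniform Lower Derivation} applied with $\delta/2$ in place of $\delta$ gives, with probability at least $1-\delta/2$,
\[
\sup_{h\in\mathcal H}\big(R_\alpha(h)-\widehat R_\alpha(h)\big)\le \frac{2}{\alpha}M^{\frac{1}{1+\lambda}}\Big(\frac{\log(4|\mathcal H|/\delta)}{n}\Big)^{\frac{\lambda}{1+\lambda}}.
\]

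For the upper direction, I would use the fixed-hypothesis lower deviation lemma (the first lemma of this appendix, bounding $\widehat R_\alpha(h)-R_\alpha(h)$). I would apply it to each $h\in\mathcal H$ at confidence level $\delta/(2|\mathcal H|)$. A union bound over the $|\mathcal H|$ hypotheses then shows that, with probability at least $1-\delta/2$, the same right-hand side bounds $\sup_h(\widehat R_\alpha(h)-R_\alpha(h))$. The logarithm is again $\log(4|\mathcal H|/\delta)$, since $2\cdot 2|\mathcal H|/\delta=4|\mathcal H|/\delta$.

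A final union bound over the two events yields probability at least $1-\delta$ that both one-sided bounds hold simultaneously for every $h$. Since $|x|=\max(x,-x)$, this is exactly the claimed two-sided bound.

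The only point needing care is the bookkeeping of constants. Both one-sided bounds carry the same leading factor $\frac{2}{\alpha}M^{1/(1+\lambda)}$ and the same $\log(2\cdot/\delta')$ form, so they combine without any loss in constants. One should also note that the appendix lemma is written with $\varepsilon$ in place of $\lambda$, which is the same moment exponent.
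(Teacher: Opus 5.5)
\textbf{The bookkeeping matches the paper.} Your argument is exactly the paper's. It also takes $\delta'=\delta/(2|\mathcal H|)$ per hypothesis and per direction, so that $\log(2/\delta')=\log(4|\mathcal H|/\delta)$. You are right that the $\widehat R_\alpha-R_\alpha$ direction must come from the first appendix lemma. The paper's proof cites Lemma~\ref{prop:fixed-lower} and Proposition~\ref{Uniform Lower Derivation}, and both of these control $R_\alpha-\widehat R_\alpha$, so your reading is the coherent one.

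\textbf{The gap is in that imported lemma.} It reduces $\widehat R_\alpha(h)-R_\alpha(h)$ to the \emph{upper} deviation of an empirical mean of nonnegative variables that have only a $(1+\lambda)$-moment. It then invokes Proposition~\ref{heavy_tail_r.v} for a bound with $\log(1/\delta)$ dependence. That proposition is false: its own proof arrives at a $\delta^{-1/(1+\lambda)}$ factor and then appeals to unspecified known results. The corollary fails with it, so no rearrangement of union bounds can close your argument.

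\textbf{Counterexample.} Take $|\mathcal H|=1$ and $\ell(h,Z)=Z$, where $Z=x$ with probability $p=2\delta/n$ and $Z=0$ otherwise, with $px^{1+\lambda}=M$ and $\alpha n\ge 1$.
Since $p<\alpha$, we have $R_\alpha(h)=px/\alpha$.
Consider the event that exactly one sample equals $x$. It has probability $np(1-p)^{n-1}\ge 2\delta(1-2\delta)>\delta$ for $\delta<1/4$, and on it $\widehat R_\alpha(h)=x/(\alpha n)$. Hence
\[
\widehat R_\alpha(h)-R_\alpha(h)=\frac{1-2\delta}{\alpha}\,M^{\frac{1}{1+\lambda}}\,n^{-\frac{\lambda}{1+\lambda}}\,(2\delta)^{-\frac{1}{1+\lambda}},
\]
which exceeds $\frac{2}{\alpha}M^{\frac{1}{1+\lambda}}\bigl(\log(4/\delta)/n\bigr)^{\frac{\lambda}{1+\lambda}}$ once $\delta$ is small.

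\textbf{What survives.} A single large observation inflates the plain empirical CVaR. Only the direction $R_\alpha-\widehat R_\alpha$ admits $\log(1/\delta)$-type dependence, up to constant (and possibly logarithmic) factors. That direction is a lower deviation of nonnegative variables, handled by $\min(g,B)\le g$ plus Bernstein. For $\widehat R_\alpha-R_\alpha$ you have two options:
\begin{itemize}
\item accept polynomial dependence on $1/\delta$, e.g.\ truncate at $B\asymp(Mn/\delta)^{1/(1+\lambda)}$ and pay $\Pr(\max_i\ell(h,Z_i)>B)$;
\item replace the empirical CVaR by a robust estimate, such as a median-of-means version.
\end{itemize}
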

	\begin{proof}of result~\ref{Uniform Absolute Error} \label{proof Uniform Absolute Error}\\
		Combine the both directions to get the absolute error as follow:
		\[
		\left| R_{\alpha}(h) - \widehat{R}_{\alpha}(h) \right| = \max \left\{ R_{\alpha}(h) - \widehat{R}_{\alpha}(h), \widehat{R}_{\alpha}(h) - R_{\alpha}(h) \right\}.
		\]
		Use Proposition~\ref{prop:fixed-lower} and Proposition~\ref{Uniform Lower Derivation} with $\delta' = \delta / (2|\mathcal{H}|)$, so $\log(2 / \delta') = \log(4|\mathcal{H}|/\delta)$.
		Both directions hold with probability at least $1 - \delta$.
	\end{proof}
    
	\begin{lemma}\textbf{Excess Risk Bound} \label{finite Excess Risk Bound_1}
			Under Assumption 1, with probability at least $1 - \delta$,
			\[
			R_{\alpha}(\widehat{h}) - R_{\alpha}(h^*) \leq \frac{4}{\alpha} M^{\frac{1}{1+\lambda}} \left( \frac{\log(4|\mathcal{H}|/\delta)}{n} \right)^{\frac{\lambda}{1+\lambda}}.
			\]
    \end{lemma}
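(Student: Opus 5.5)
The argument is the usual ERM decomposition, with the two-sided uniform deviation bound of Corollary~\ref{Uniform Absolute Error} applied once to $\widehat h$ and once to $h^*$. Throughout, I work on the event $\mathcal E$ on which that corollary holds at confidence level $\delta$. On $\mathcal E$, for every $h\in\mathcal H$ simultaneously,
\[
\big|R_\alpha(h)-\widehat R_\alpha(h)\big|\le \Delta_n:=\frac{2}{\alpha}M^{\frac{1}{1+\lambda}}\left(\frac{\log(4|\mathcal H|/\delta)}{n}\right)^{\frac{\lambda}{1+\lambda}},
\]
and $\Prob(\mathcal E)\ge 1-\delta$. The corollary itself comes from a union bound over $\mathcal H$ in both directions. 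The upper-tail direction follows from the fixed-hypothesis lower-deviation lemma, which uses the plug-in of the population threshold $\theta^*$. The lower-tail direction follows from Lemma~\ref{prop:fixed-lower}.

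Next I write the excess risk as three terms:
\[
R_\alpha(\widehat h)-R_\alpha(h^*)=\big[R_\alpha(\widehat h)-\widehat R_\alpha(\widehat h)\big]+\big[\widehat R_\alpha(\widehat h)-\widehat R_\alpha(h^*)\big]+\big[\widehat R_\alpha(h^*)-R_\alpha(h^*)\big].
\]
The middle term is $\le 0$ because $\widehat h$ minimizes $\widehat R_\alpha$ over $\mathcal H$. The first term is at most $\Delta_n$ on $\mathcal E$. This step needs the bound to be uniform, because $\widehat h$ is data dependent, and that is why the union bound over $\mathcal H$ is required. The third term involves the fixed hypothesis $h^*$ and is also at most $\Delta_n$ on $\mathcal E$. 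Summing the three terms gives $R_\alpha(\widehat h)-R_\alpha(h^*)\le 2\Delta_n$, which is exactly the claimed constant $4/\alpha$ with the same logarithmic factor $\log(4|\mathcal H|/\delta)$.

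The only delicate point is bookkeeping of the confidence level. Both deviation directions and all hypotheses must hold on one common event of probability at least $1-\delta$. Corollary~\ref{Uniform Absolute Error} already provides this, since it allocates $\delta/(2|\mathcal H|)$ to each of the $2|\mathcal H|$ one-sided events, so no further splitting of $\delta$ is needed. One could tighten the constants by using only the upper deviation at $h^*$ and the lower deviation at $\widehat h$, but the stated bound does not require this. I do not expect any step to be a genuine obstacle beyond relying on the earlier lemmas as stated.
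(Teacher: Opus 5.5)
Your argument is the paper's proof step for step: the same three-term split, the middle term discarded by empirical optimality of $\widehat h$, and Corollary~\ref{Uniform Absolute Error} applied at $\widehat h$ and at $h^*$ to get $2\Delta_n$. As a deduction from that corollary it is valid. The problem is that the step bounding the third term, $\widehat R_\alpha(h^*)-R_\alpha(h^*)\le\Delta_n$, does not hold, and so the lemma fails. That term is an \emph{upper} deviation of the empirical mean of the nonnegative heavy-tailed variable $(\ell(h^*,Z)-\theta^*)_+$. Under only a $(1+\lambda)$-moment, such deviations are not $\log(1/\delta)$-concentrated. The auxiliary heavy-tail concentration proposition behind the corollary's upper direction does not establish this: its truncation-plus-Hoeffding step yields a bound polynomial in $1/\delta$ and then defers to unspecified ``known results''.

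Here is a counterexample to the lemma. Let $\delta<1/2$ and $n\alpha\ge 1$. Take $\cH=\{h^*,h'\}$ with $\ell(h',\cdot)\equiv c$ and $\ell(h^*,Z)=x\,\mathbf{1}\{Z\in A\}$, where $P(A)=p:=2\delta/n$ and $x:=(M/p)^{1/(1+\lambda)}$. Since $p<\alpha$, $R_\alpha(h^*)=px/\alpha$, while $R_\alpha(h')=\widehat R_\alpha(h')=c$. Set $c:=px/\alpha+g$ with $g:=(1-2\delta)x/(2n\alpha)$, so $h^*$ is the unique population minimizer. Consider the event that some $Z_i\in A$; it has probability $1-(1-2\delta/n)^n\ge 1-e^{-2\delta}>\delta$. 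On it, $\widehat R_\alpha(h^*)\ge x/(n\alpha)>c$, so ERM returns $h'$ and the excess risk is
\[
g=\frac{1-2\delta}{2\alpha}\,(2\delta)^{-\frac{1}{1+\lambda}}\,M^{\frac{1}{1+\lambda}}\,n^{-\frac{\lambda}{1+\lambda}}.
\]
This exceeds $\frac{4}{\alpha}M^{\frac{1}{1+\lambda}}\bigl(\log(8/\delta)/n\bigr)^{\frac{\lambda}{1+\lambda}}$ for all sufficiently small $\delta$. Taking $n$ large enough that $c^{1+\lambda}\le M$ makes Assumption~\ref{asm:moment} hold for both hypotheses.

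What survives is the asymmetric split you mention at the end. The first term only needs the \emph{lower} deviation $R_\alpha(h)-\widehat R_\alpha(h)$, uniformly over $\cH$. That is the benign direction for nonnegative losses: since $\min(Y,B)\le Y$, truncation plus Bernstein controls it with logarithmic dependence on $|\cH|/\delta$. The third term concerns the single fixed hypothesis $h^*$, so no union bound is needed there. The von Bahr--Esseen inequality plus Markov's inequality give a deviation of order $\alpha^{-1}(M/\delta)^{1/(1+\lambda)}n^{-\lambda/(1+\lambda)}$. The example shows this polynomial dependence on $1/\delta$ cannot be removed for plain empirical CVaR minimization. A $\log(1/\delta)$ guarantee of the stated form would therefore need a different estimator (e.g.\ a truncated or median-of-means version of the empirical CVaR), not just a different proof.
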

	\begin{proof}{of Theorem~\ref{thm:fixed and finite}},In particular result~\ref{finite Excess Risk Bound}
    \label{proof finite Excess Risk Bound}
    \begin{equation}
			R_{\alpha}(\widehat{h}) - R_{\alpha}(h^*) = \left[ R_{\alpha}(\widehat{h}) - \widehat{R}_{\alpha}(\widehat{h}) \right] + \left[ \widehat{R}_{\alpha}(\widehat{h}) - \widehat{R}_{\alpha}(h^*) \right] + \left[ \widehat{R}_{\alpha}(h^*) - R_{\alpha}(h^*) \right].
    \end{equation}
		We bound the middle term $\widehat{R}_{\alpha}(\widehat{h}) \leq \widehat{R}_{\alpha}(h^*)$ as below:
	\begin{equation}
    	R_{\alpha}(\widehat{h}) - R_{\alpha}(h^*) \leq \left| R_{\alpha}(\widehat{h}) - \widehat{R}_{\alpha}(\widehat{h}) \right| + \left| R_{\alpha}(h^*) - \widehat{R}_{\alpha}(h^*) \right|   
	\end{equation}
		Apply result~2 to get
		\[
		R_{\alpha}(\widehat{h}) - R_{\alpha}(h^*) \leq 2 \cdot \frac{2}{\alpha} M^{\frac{1}{1+\lambda}} \left( \frac{\log(4|\mathcal{H}|/\delta)}{n} \right)^{\frac{\lambda}{1+\lambda}}.
		\]
	\end{proof}
\subsection{CVaR Generalization for Infinite Classes}
    \label{proof:vc}
    	We will be proving Theorem~\ref{thm:vc}, in particular result~\ref{eq:inf iid} in this subsection.\\
	Let $\theta^*$ be an optimal CVaR threshold for $h^*$:
	\[
	R_\alpha(h^*)
	=
	\theta^*+\frac{1}{\alpha}\E[(\ell(h^*,Z)-\theta^*)_+].
	\]
	Define the \emph{excess loss function}
	\[f_h(Z)=(\ell(h,Z)-\theta^*)_+-(\ell(h^*,Z)-\theta^*)_+.\]
	For truncation level $B>0$, define the \emph{truncated excess loss}
	\[f_h^B(Z)
	=[(\ell(h,Z)-\theta^*)_+]_B-[(\ell(h^*,Z)-\theta^*)_+]_B,\]
	where $[x]_B=\min(x,B)$.
	\begin{proof}
		By definition of $R_\alpha(h)$ as an infimum,
		\[R_\alpha(h)
		\le
		\theta^*+\frac{1}{\alpha}\E[(\ell(h,Z)-\theta^*)_+]\]
		for the optimal threshold $\theta^*$ of $h^*$.
		Since
		\[
		R_\alpha(h^*)
		=
		\theta^*+\frac{1}{\alpha}\E[(\ell(h^*,Z)-\theta^*)_+],
		\]
		we have
		\begin{equation}
			\label{eq:linearization}
			R_\alpha(h)-R_\alpha(h^*)
			\le
			\frac{1}{\alpha}\E[f_h(Z)].
		\end{equation}
		Fix $B>0$. By the triangle inequality,
		\begin{align*}
			|f_h(Z)-f_h^B(Z)|
			&\le
			\big|(\ell(h,Z)-\theta^*)_+ - [(\ell(h,Z)-\theta^*)_+]_B\big|\\
			&\quad +
			\big|(\ell(h^*,Z)-\theta^*)_+ - [(\ell(h^*,Z)-\theta^*)_+]_B\big|.
		\end{align*}
		
		For $x\ge 0$, we have $|x-[x]_B|=x\cdot\mathbf{1}\{x>B\}$. Thus,
		\begin{align*}
			\E\big|(\ell(h,Z)-\theta^*)_+ - [(\ell(h,Z)-\theta^*)_+]_B\big|
			&=
			\E\big[(\ell(h,Z)-\theta^*)_+\cdot\mathbf{1}\{(\ell(h,Z)-\theta^*)_+>B\}\big]\\
			&\le
			\E\big[\ell(h,Z)\cdot\mathbf{1}\{\ell(h,Z)>B\}\big].
		\end{align*}
		
		Write
		\[
		\E[\ell(h,Z)\cdot\mathbf{1}\{\ell(h,Z)>B\}]
		=
		\E[\ell(h,Z)^{1+\lambda}\cdot\ell(h,Z)^{-\lambda}\cdot\mathbf{1}\{\ell(h,Z)>B\}].
		\]
		
		By Hölder's inequality with exponents $(1+\lambda,\frac{1+\lambda}{\lambda})$:
		\begin{align*}
			&\E[\ell(h,Z)^{1+\lambda}\cdot\ell(h,Z)^{-\lambda}\cdot\mathbf{1}\{\ell(h,Z)>B\}]\\
			&\le
			\big(\E[\ell(h,Z)^{1+\lambda}]\big)^{1/(1+\lambda)}
			\cdot
			\big(\E[\mathbf{1}\{\ell(h,Z)>B\}]\big)^{\lambda/(1+\lambda)}\\
			&=
			\big(\E[\ell(h,Z)^{1+\lambda}]\big)^{1/(1+\lambda)}
			\cdot
			\Prob(\ell(h,Z)>B)^{\lambda/(1+\lambda)}.
		\end{align*}
		
		By Markov's inequality,
		\[
		\Prob(\ell(h,Z)>B)
		\le
		\frac{\E[\ell(h,Z)^{1+\lambda}]}{B^{1+\lambda}}.
		\]
		
		Therefore,
		\begin{align*}
			\E[\ell(h,Z)\cdot\mathbf{1}\{\ell(h,Z)>B\}]
			&\le
			\big(\E[\ell(h,Z)^{1+\lambda}]\big)^{1/(1+\lambda)}
			\cdot
			\left(\frac{\E[\ell(h,Z)^{1+\lambda}]}{B^{1+\lambda}}\right)^{\lambda/(1+\lambda)}\\
			&=
			\frac{\E[\ell(h,Z)^{1+\lambda}]}{B^\lambda}
			\le
			\frac{M}{B^\lambda}.
		\end{align*}
		
		The same bound holds for $h^*$, so
		\begin{equation}
			\label{eq:bias}
			\E|f_h(Z)-f_h^B(Z)|
			\le
			\frac{2M}{B^\lambda}.
		\end{equation}
		
		Combining~\eqref{eq:linearization} and~\eqref{eq:bias}:
		\begin{equation}
			\label{eq:decomposition}
			R_\alpha(h)-R_\alpha(h^*)
			\le
			\frac{1}{\alpha}\left(
			\E[f_h^B(Z)]
			+
			\frac{2M}{B^\lambda}
			\right).
		\end{equation}
		
		Since $\E[f_h^B(Z)]\ge 0$,
		\[
		\Var(f_h^B(Z))
		\le
		\E[(f_h^B(Z))^2].
		\]
		
		Using $(a-b)^2\le 2a^2+2b^2$:
		\begin{align*}
			(f_h^B(Z))^2
			&\le
			2\big([(\ell(h,Z)-\theta^*)_+]_B\big)^2
			+
			2\big([(\ell(h^*,Z)-\theta^*)_+]_B\big)^2.
		\end{align*}
		
		Since $[(\ell(h,Z)-\theta^*)_+]_B\le B$,
		\begin{align*}
			\big([(\ell(h,Z)-\theta^*)_+]_B\big)^2
			&\le
			B\cdot[(\ell(h,Z)-\theta^*)_+]_B\\
			&\le
			B\cdot\ell(h,Z).
		\end{align*}
		
		For $x\ge 0$, $Bx\le B^{1-\lambda}x^{1+\lambda}$ (by AM-GM or Young's inequality). Thus,
		\[
		B\cdot\ell(h,Z)
		\le
		B^{1-\lambda}\cdot\ell(h,Z)^{1+\lambda}.
		\]
		
		Taking expectations:
		\[
		\E\big[\big([(\ell(h,Z)-\theta^*)_+]_B\big)^2\big]
		\le
		B^{1-\lambda}\E[\ell(h,Z)^{1+\lambda}]
		\le
		MB^{1-\lambda}.
		\]
		
		Similarly for $h^*$, so
		\begin{equation}
			\label{eq:variance}
			\Var(f_h^B(Z))
			\le
			4MB^{1-\lambda}.
		\end{equation}
		Also, $|f_h^B(Z)|\le 2B$ uniformly.
		Define $\cF_B=\{f_h^B:h\in\cH\}$ with pseudo-dimension $O(d)$, envelope $2B$, 
		and variance bound $4MB^{1-\lambda}$.
		
		By variance-sensitive VC theory (Bartlett-Bousquet-Mendelson)[we need to cite  \hyperlink{https://www.stat.berkeley.edu/~bartlett/papers/bbm-lrc-05.pdf}{this} paper.], for $\delta\in(0,1)$, 
		with probability at least $1-\delta$, uniformly over all $h\in\cH$,
		\[
		\E[f_h^B(Z)]-\frac{1}{n}\sum_{i=1}^nf_h^B(Z_i)
		\le
		C\left(
		\sqrt{\frac{MB^{1-\lambda}(d\log n+\log(1/\delta))}{n}}
		+
		\frac{B(d\log n+\log(1/\delta))}{n}
		\right).
		\]
		
		Let $D=d\log n+\log(1/\delta)$.	
		If $\E[f_{\widehat{h}}^B(Z)]>C\left(\sqrt{\frac{MB^{1-\lambda}D}{n}}+\frac{BD}{n}\right)$, 
		then the empirical average would be positive, contradicting that $\widehat{h}$ minimizes 
		empirical risk.
		
		Therefore,
		\[
		\E[f_{\widehat{h}}^B(Z)]
		\le
		C\left(
		\sqrt{\frac{MB^{1-\lambda}D}{n}}+
		\frac{BD}{n}
		\right).
		\]
		Now we will optimize the truncation level B.
		Balance the main terms:
		\[
		\sqrt{\frac{MB^{1-\lambda}}{n}}
		\asymp
		\frac{M}{B^\lambda}.
		\]
		
		This gives
		\[
		M^{1/2}B^{(1-\lambda)/2}n^{-1/2}
		\asymp
		MB^{-\lambda},
		\]
		so
		\[
		B^{(1-\lambda)/2+\lambda}
		\asymp
		M^{1/2}n^{1/2},
		\]
		thus
		\[B^{(1+\lambda)/2}
		\asymp
		M^{1/2}n^{1/2}.
		\]
		Therefore,
		\[B
		\asymp
		(Mn)^{1/(1+\lambda)}
		\asymp
		M^{1/(1+\lambda)}\left(\frac{n}{D}\right)^{1/(1+\lambda)},
		\]
		incorporating the logarithmic factor.
		At this optimal $B$:
		\begin{align*}
			\sqrt{\frac{MB^{1-\lambda}D}{n}}
			&=
			\sqrt{\frac{M\cdot M^{(1-\lambda)/(1+\lambda)}(n/D)^{(1-\lambda)/(1+\lambda)}D}{n}}\\
			&=
			M^{(1+(1-\lambda)/(1+\lambda))/2}\sqrt{\frac{D}{n}\cdot (n/D)^{(1-\lambda)/(1+\lambda)}}\\
			&=
			M^{1/(1+\lambda)}\sqrt{D^{1-(1-\lambda)/(1+\lambda)}n^{-(1-\lambda)/(1+\lambda)-1}}\\
			&=
			M^{1/(1+\lambda)}\sqrt{D^{\lambda/(1+\lambda)}n^{-\lambda/(1+\lambda)}\cdot n^{-1}}\\
			&=
			M^{1/(1+\lambda)}\left(\frac{D}{n}\right)^{\lambda/(2(1+\lambda))}\cdot n^{-1/2}.
		\end{align*}
		Since the dominant scaling is:
		\[
		\sqrt{\frac{MB^{1-\lambda}D}{n}}
		\asymp
		M^{1/(1+\lambda)}\left(\frac{D}{n}\right)^{\lambda/(1+\lambda)}\cdot\sqrt{\frac{n}{D}}
		\asymp
		M^{1/(1+\lambda)}\left(\frac{D}{n}\right)^{\lambda/(1+\lambda)}.
		\]
		Similarly,
		\[\frac{M}{B^\lambda}
		=\frac{M}{M^{\lambda/(1+\lambda)}(n/D)^{\lambda/(1+\lambda)}}=M^{1/(1+\lambda)}\left(\frac{D}{n}\right)^{\lambda/(1+\lambda)}.\]
		The second term $\frac{BD}{n}$ is lower order.
		Therefore,
		\begin{equation}
			R_\alpha(\widehat{h})-R_\alpha(h^*)
			\le
			\frac{C_\lambda}{\alpha}
			M^{\frac{1}{1+\lambda}}
			\left(
			\frac{d\log n+\log(1/\delta)}{n}
			\right)^{\frac{\lambda}{1+\lambda}}.
		\end{equation} This Proves Theorem~\ref{thm:vc}
	\end{proof}
    \label{proof:inf_bound}
\subsubsection{Lower bound for i.i.d Data}\label{proof minimax lower}
    \textit{Proof of Theorem~\ref{thm:vc},in particular result~\ref{eq:minimax lower}}\\
    	The sketch of the proof is that :we will construct a finite family $\{P_v\}_{v\in\mathcal V}$, $|\mathcal V|=N$, of distributions in $\mathcal P(M,\lambda)$ together with hypotheses $\{h_v\}_{v\in\mathcal V}\subset\mathcal H$ such that:
		\begin{enumerate}
			\item For each $v\ne u$ the excess CVaR of $h_u$ under $P_v$ is at least a positive gap $\Delta$ which we will express in terms of $p,t,M,\lambda,\alpha$.
			\item The pairwise KL divergences $\mathrm{KL}(P_v\|P_u)$ are $\lesssim p$ (a constant times $p$) uniformly in $v,u$.
			\item Applying Fano with the mutual information bound (via averaging of pairwise KLs).
		\end{enumerate}
		Combining these gives the desired minimax lower bound after enforcing the moment constraint, which links $t$ and $p$.
        
    	\subsubsection*{1. Hypercube Construction}
		Let $m = \lceil \log_2 N \rceil$. We construct a packing set $\mathcal{V} \subset \{0,1\}^m$ of size $N$. We select $N$ binary vectors $v$ of equal Hamming weight $m/2$ (assuming $m$ is even; odd $m$ requires trivial adjustments) such that the pairwise Hamming distance satisfies $d_H(u, v) \ge m/4$ for all distinct $u, v \in \mathcal{V}$.
		The existence of such a set is guaranteed by the Gilbert-Varshamov bound. Specifically, the volume of a Hamming ball of radius $m/8$ is exponentially smaller than the set of constant-weight vectors $\binom{m}{m/2}$, allowing us to pack $N$ such vectors for $m \approx \log_2 N$.
		
		For any distinct $u, v \in \mathcal{V}$, let $a_{vu} := \#\{j: v_j=1, u_j=0\}$. Since both vectors have weight $m/2$,
		\[
		a_{vu} = \#\{j: v_j=0, u_j=1\} = \frac{d_H(u,v)}{2} \ge \frac{m}{8}.
		\]
    We will now define a well-defined probability distribution on $\mathcal Z$ for each $v \in \mathcal V$.
    
	\subsubsection*{2. Distribution and Hypothesis Definitions}
		Fix a constant $\theta \in (0, 1/4)$. Let the sample space be $\mathcal{Z} = \{0, 1, \dots, m\}$. Let $p \in (0, \alpha/2]$ be a probability parameter, and $t > 0$ be a loss magnitude.
		For each $v \in \mathcal{V}$, define the distribution $P_v$ on $\mathcal{Z}$:
		\begin{align*}
			P_v(0) &= 1-p, \\
			P_v(j) &= \frac{p}{m}\Big(1+2\theta (v_j-1/2)\Big), \quad j=1,\dots,m.
		\end{align*}
		Note that $\sum_{j=1}^m P_v(j) = p$ because $\sum v_j = m/2$, ensuring $P_v$ is a valid distribution.
		Define the hypotheses $\{h_v\}_{v\in\mathcal V}$ with loss function $\ell$:
		\[
		\ell(h_v, 0) = 0, \qquad \ell(h_v, j) = t \cdot (1-v_j), \quad j=1,\dots,m.
		\]
		The hypothesis $h_v$ suffers loss $t$ on atom $j$ if and only if $v_j=0$.
		
		\subsubsection*{3. CVaR Gap Analysis}
		The probability that $h_u$ incurs nonzero loss under $P_v$ is
		\[
		\pi_{v,u} := \sum_{j=1}^m P_v(j)(1-u_j).
		\]
		Since $P_v(j) \le \frac{p}{m}(1+\theta)$ and $\sum (1-u_j) = m/2$, we have $\pi_{v,u} \le \frac{p}{2}(1+\theta)$. Since $\theta < 1/4$, $\pi_{v,u} < p$.
		We enforce $p \le \alpha/2$. Thus, the probability of a nonzero loss is strictly less than $\alpha$, implying the $(1-\alpha)$-quantile is 0. In this regime, CVaR is simply the expected loss scaled by $1/\alpha$:
		\[
		R_\alpha(h_u; P_v) = \frac{1}{\alpha} \mathbb{E}_{P_v}[\ell(h_u, Z)].
		\]
		
		\begin{lemma}[Expected Losses]
			For any $v, u \in \mathcal{V}$:
			\begin{enumerate}
				\item If $u = v$: $\mathbb{E}_{P_v}[\ell(h_v, Z)] = \frac{tp}{2}(1-\theta)$.
				\item If $u \neq v$: $\mathbb{E}_{P_v}[\ell(h_u, Z)] \ge \frac{tp}{2}\left(1-\frac{\theta}{2}\right)$.
			\end{enumerate}
		\end{lemma}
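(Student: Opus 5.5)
The plan is a direct computation: write $\mathbb{E}_{P_v}[\ell(h_u,Z)]$ as an explicit sum over the atoms $j=1,\dots,m$, rewrite it in terms of the overlap count $a_{vu}$ from the hypercube construction, and then use the packing property $a_{vu}\ge m/8$. The atom $0$ contributes nothing, since $\ell(h_u,0)=0$.

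First I expand, using $\ell(h_u,j)=t(1-u_j)$:
\[
\mathbb{E}_{P_v}[\ell(h_u,Z)]=\frac{tp}{m}\sum_{j=1}^m\Big(1+2\theta\big(v_j-\tfrac12\big)\Big)(1-u_j)
=\frac{tp}{m}\Big[\sum_{j=1}^m(1-u_j)+2\theta\sum_{j=1}^m\big(v_j-\tfrac12\big)(1-u_j)\Big].
\]
Every $u\in\mathcal V$ has Hamming weight $m/2$, so $\sum_j(1-u_j)=m/2$. For the second sum,
\[
\sum_j v_j(1-u_j)=\#\{j:v_j=1,u_j=0\}=a_{vu},
\qquad
\tfrac12\sum_j(1-u_j)=\tfrac{m}{4}.
\]
Hence the second sum equals $a_{vu}-m/4$, and we get the exact identity
\[
\mathbb{E}_{P_v}[\ell(h_u,Z)]=\frac{tp}{2}(1-\theta)+\frac{2\theta tp}{m}\,a_{vu}.
\]

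Both claims then follow from this identity. When $u=v$, we have $a_{vv}=0$ (no coordinate has $v_j=1$ and $v_j=0$), which gives exactly $\frac{tp}{2}(1-\theta)$. When $u\neq v$, the packing construction guarantees $a_{vu}=d_H(u,v)/2\ge m/8$. This yields the lower bound
\[
\frac{tp}{2}(1-\theta)+\frac{\theta tp}{4}=\frac{tp}{2}\Big(1-\frac{\theta}{2}\Big).
\]

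There is no real obstacle here; the only step needing care is the bookkeeping that turns $\sum_j (v_j-\tfrac12)(1-u_j)$ into $a_{vu}-m/4$. This relies on the equal-weight property of $\mathcal V$ and on the identity $a_{vu}=d_H(u,v)/2$ established in the hypercube construction. If $m$ is odd, the same computation applies with the adjusted weights, at the cost of constant factors.
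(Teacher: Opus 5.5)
Your proof is correct and matches the paper's argument: you expand the expectation over the atoms, use the equal-weight property to reduce $\sum_j (v_j-\tfrac12)(1-u_j)$ to $a_{vu}-m/4$, and then apply $a_{vu}\ge m/8$. Your exact identity $\mathbb{E}_{P_v}[\ell(h_u,Z)]=\frac{tp}{2}(1-\theta)+\frac{2\theta tp}{m}a_{vu}$ handles both cases at once, since the $u=v$ case is just $a_{vv}=0$, whereas the paper treats that case by summing directly over $\{j : v_j=0\}$.
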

		\begin{proof}
			For the first case, sum over indices where $v_j=0$.
			For the second case, we utilize the Hamming separation. The term $\sum_{j=1}^m (v_j - 1/2)(1-u_j)$ evaluates to $a_{vu} - m/4$. Since $a_{vu} \ge m/8$, this sum is lower bounded by $-m/8$. Substituting this into the expectation formula yields the result.
		\end{proof}
		
		\begin{theorem}[CVaR Separation]
			\label{thm:gap}
			For any distinct $v, u \in \mathcal{V}$,
			\[
			\Delta := R_\alpha(h_u; P_v) - R_\alpha(h_v; P_v) \ge \frac{\theta}{4} \cdot \frac{tp}{\alpha}.
			\]
		\end{theorem}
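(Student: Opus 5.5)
Since the probability of a nonzero loss under $P_v$ is strictly below $\alpha$ for every hypothesis (as shown just above, $\pi_{v,u}<p\le\alpha/2$), the $(1-\alpha)$-quantile of each loss is $0$. The CVaR therefore reduces to $R_\alpha(h_u;P_v)=\frac{1}{\alpha}\mathbb E_{P_v}[\ell(h_u,Z)]$. The gap $\Delta$ is thus $\frac1\alpha$ times a difference of expected losses, and the proof reduces to the preceding Lemma on expected losses. I would first record this reduction explicitly. The cleanest justification uses the RU formula: for $\theta'\ge 0$ the objective $\theta'+\frac1\alpha\mathbb E[(L-\theta')_+]$ has right derivative $1-\frac1\alpha P(L>\theta')\ge 1-\pi_{v,u}/\alpha>0$, while for $\theta'<0$ the derivative is $1-\frac1\alpha<0$. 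Hence the infimum is attained at $\theta'=0$, which gives the identity for both $h_u$ and $h_v$.

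Next I would compute the two expectations exactly rather than only bounding them. Write $\mathbb E_{P_v}[\ell(h_u,Z)]=\frac{tp}{m}\sum_j (1-u_j)\bigl(1+2\theta(v_j-\tfrac12)\bigr)=\frac{tp}{m}\bigl(\tfrac m2+2\theta\sum_j(v_j-\tfrac12)(1-u_j)\bigr)$. For $u=v$ the cross sum equals $-m/4$, which gives $\frac{tp}{2}(1-\theta)$. For $u\ne v$ the cross sum equals $a_{vu}-m/4$, where $a_{vu}=\#\{j:v_j=1,u_j=0\}$. Subtracting the two gives
\[
\mathbb E_{P_v}[\ell(h_u,Z)]-\mathbb E_{P_v}[\ell(h_v,Z)]=\frac{2\theta tp}{m}\,a_{vu}.
\]

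Finally, the packing property $a_{vu}=d_H(u,v)/2\ge m/8$ makes this difference at least $\frac{\theta tp}{4}$. Dividing by $\alpha$ yields $\Delta\ge\frac{\theta}{4}\cdot\frac{tp}{\alpha}$.

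One caveat: the preceding Lemma's item 2, as stated (a lower bound by $\frac{tp}{2}(1-\theta/2)$ from the crude bound $a_{vu}-m/4\ge -m/8$), already gives a difference of at least $\frac{tp}{2}\cdot\frac{\theta}{2}=\frac{\theta tp}{4}$. So either route closes the argument. The only point needing care is the quantile reduction, where $p\le\alpha/2$ and $\theta<1/4$ are used to keep $P_v(\ell(h_u,Z)>0)<\alpha$ uniformly. I expect no real obstacle beyond checking that $m$ is even so that the constant-weight construction holds exactly.
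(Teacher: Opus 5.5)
Your proposal is correct and follows the paper's argument. You reduce $R_\alpha(\cdot;P_v)$ to $\frac{1}{\alpha}\mathbb{E}_{P_v}[\ell]$ because every nonzero-loss probability is below $\alpha$, and then subtract the two expected-loss computations from the preceding Lemma using $a_{vu}\ge m/8$. Your exact identity $\mathbb{E}_{P_v}[\ell(h_u,Z)]-\mathbb{E}_{P_v}[\ell(h_v,Z)]=\frac{2\theta tp}{m}a_{vu}$ and your explicit Rockafellar--Uryasev derivative justification of the reduction are slightly cleaner than the paper's version, which writes the gap with an equality sign even though only the inequality holds.
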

		\begin{proof}
			Subtracting the expectations from the Lemma and dividing by $\alpha$:
			\[
			\Delta = \frac{1}{\alpha} \left( \frac{tp}{2}\left(1-\frac{\theta}{2}\right) - \frac{tp}{2}(1-\theta) \right) = \frac{tp}{2\alpha} \left( \frac{\theta}{2} \right) = \frac{\theta tp}{4\alpha}.
			\]
		\end{proof}
		
	\subsubsection*{4. Moment Constraint}
		To ensure $P_v \in \mathcal{P}(M, \lambda)$, we require $\mathbb{E}[\ell^{1+\lambda}] \le M$. Since losses are binary $\{0, t\}$:
		\[
		\mathbb{E}[\ell^{1+\lambda}] = t^{1+\lambda} \mathbb{P}(\ell=t) \le t^{1+\lambda} p.
		\]
		We set $t^{1+\lambda} p = M \implies t = (M/p)^{1/(1+\lambda)}$.
		Substituting $t$ into the gap $\Delta$:
		\[
		\Delta \ge \frac{\theta}{4\alpha} M^{1/(1+\lambda)} p^{\lambda/(1+\lambda)}.
		\]
		
		\subsubsection*{5. KL Divergence and Fano's Inequality}
		For $v \neq u$, the KL divergence is bounded using the $\chi^2$-divergence:
		\[
		\mathrm{KL}(P_v \| P_u) \le \sum_{z \in \mathcal{Z}} \frac{(P_v(z) - P_u(z))^2}{P_u(z)}.
		\]
		Using the bounds on $P_v(j)$, we derived:
		\[
		\mathrm{KL}(P_v \| P_u) \le \frac{4\theta^2}{1-\theta} p.
		\]
		For $n$ i.i.d.\ samples, $\mathrm{KL}(P_v^n \| P_u^n) \le n \frac{4\theta^2}{1-\theta} p$.
		By Fano's inequality, to ensure the error probability $\mathbb{P}(\widehat V \neq V) \ge 1/2$, it suffices to bound the mutual information $I(V; X^n) \le \frac{1}{2} \log N$ (assuming $\log N \ge 2\log 2$). This is satisfied if:
		\[
		n \frac{4\theta^2}{1-\theta} p \le \frac{1}{8} \log N \implies p \le \frac{1-\theta}{32\theta^2} \frac{\log N}{n}.
		\]
		We define $p$ to satisfy both the Fano condition and the CVaR regime condition ($p \le \alpha/2$):
		\[
		p := \min \left\{ \frac{\alpha}{2}, \frac{1-\theta}{32\theta^2} \frac{\log N}{n} \right\}.
		\]
		The sample size condition $n \ge \frac{1-\theta}{16\theta^2} \frac{\log N}{\alpha}$ ensures that the second term is the minimum. Thus, we set $p = \frac{1-\theta}{32\theta^2} \frac{\log N}{n}$.
		
		\subsubsection*{6. Final Lower Bound}
		For any algorithm $A$, let $\widehat V$ be the index of the closest hypothesis to $A(S)$.
		\[
		\sup_{P \in \mathcal{P}} \mathbb{E}[\text{Excess Risk}] \ge \mathbb{P}(\widehat V \neq V) \cdot \Delta \ge \frac{1}{2} \Delta.
		\]
		Substituting the chosen $p$ into $\Delta$:
		\[
		\frac{1}{2} \Delta = \frac{1}{2} \cdot \frac{\theta M^{1/(1+\lambda)}}{4\alpha} \left( \frac{1-\theta}{32\theta^2} \frac{\log N}{n} \right)^{\lambda/(1+\lambda)}.
		\]
		Rearranging terms yields the claimed bound with $c(\theta, \lambda) = \frac{\theta}{8} (\frac{1-\theta}{32\theta^2})^{\lambda/(1+\lambda)}$.
        
\subsection{Truncated Empirical CVaR Estimator}
    We will be proving Theorem~\ref{thm:inf_bound} in this subsection.
    \begin{proof}
    	Let $R^B_\alpha(h)$ be the population CVaR defined on the truncated loss $\ellB$. Since $\ellB(h, z) \le \ell(h, z)$ pointwise, we have $R^B_\alpha(h) \le R_\alpha(h)$ for all $h$.
	We decompose the excess risk as:
	\begin{align*}
		R_\alpha(\hat{h}_B) - R_\alpha(h^*) &= R_\alpha(\hat{h}_B) - R^B_\alpha(\hat{h}_B) + R^B_\alpha(\hat{h}_B) - R^B_\alpha(h^*) + R^B_\alpha(h^*) - R_\alpha(h^*) \\
		&= \underbrace{\left( R_\alpha(\hat{h}_B) - R^B_\alpha(\hat{h}_B) \right)}_{\text{Bias } > 0} + \underbrace{\left( R^B_\alpha(\hat{h}_B) - R^B_\alpha(h^*) \right)}_{\text{Estimation}} + \underbrace{\left( R^B_\alpha(h^*) - R_\alpha(h^*) \right)}_{\le 0}.
	\end{align*}
	The third term is non-positive, so we remove it from the upper bound.
	The estimation term is bounded by $2 \sup_{h \in \calH} |R^B_\alpha(h) - \Rhat^B_\alpha(h)|$. Thus:
	\begin{equation}\label{eq:decomp}
		R_\alpha(\hat{h}_B) - R_\alpha(h^*) \leq \sup_{h \in \calH} (R_\alpha(h) - R^B_\alpha(h)) + 2 \sup_{h \in \calH} |R^B_\alpha(h) - \Rhat^B_\alpha(h)|.
	\end{equation}
    	We bound the error introduced by truncating the loss distribution. Using the variational definition and the fact that CVaR is $\frac{1}{\alpha}$-Lipschitz w.r.t the $L_1$ norm:
	\begin{align*}
		\sup_{h \in \calH} (R_\alpha(h) - R_\alpha^B(h)) &\leq \frac{1}{\alpha} \E \left[ \ell(h, Z) - \ellB(h, Z) \right] \\
		&= \frac{1}{\alpha} \E \left[ (\ell(h, Z) - B) \mathbb{I}(\ell(h, Z) > B) \right].
	\end{align*}
	Using the integral identity $\E[X] = \int_0^\infty \mathbb{P}(X > t) dt$:
	\begin{equation}
		\E[(\ell - B)_+] = \int_B^\infty \mathbb{P}(\ell(h, Z) > t) dt.
	\end{equation}
	By Markov’s inequality on the $(1+\lambda)$-moment (Assumption \ref{asm:moment}), $\mathbb{P}(\ell > t) \leq \frac{M}{t^{1+\lambda}}$. Integrating this tail:
	\begin{equation}
		\int_B^\infty \frac{M}{t^{1+\lambda}} dt = M \left[ \frac{t^{-\lambda}}{-\lambda} \right]_B^\infty = \frac{M}{\lambda B^\lambda}.
	\end{equation}
	Thus, we obtain the rigorous bias bound:
	\begin{equation} \label{eq:bias_rigorous}
		\sup_{h \in \calH} (R_\alpha(h) - R_\alpha^B(h)) \leq \frac{M}{\alpha \lambda B^\lambda}.
	\end{equation}
    
    Now we will bound the Estimation Error. We control the uniform deviation $Z_n = \sup_{h \in \calH} |R^B_\alpha(h) - \Rhat^B_\alpha(h)|$. Define the function class associated with the variational CVaR objective:
	\begin{equation}
		\calG_B = \left\{ z \mapsto \phi_{h, \theta}(z) := \theta + \frac{1}{\alpha}(\ellB(h, z) - \theta)_+ \;\middle|\; h \in \calH, \theta \in [0, B] \right\}.
	\end{equation}
	We apply \textbf{Bousquet’s concentration inequality} for the supremum of empirical processes.For this we will now check the Conditions for Bousquet's Inequality.
	
	\textbf{1. Tight Uniform Upper Bound ($K$):}
	For any $g \in \calG_B$, since $\ellB \in [0, B]$ and $\theta \in [0, B]$:
	\begin{equation}
		0 \leq \phi_{h, \theta}(z) \leq \sup_{\theta \in [0, B]} \left( \theta + \frac{1}{\alpha}(B-\theta) \right) = \frac{B}{\alpha}.
	\end{equation}
	(The maximum occurs at $\theta=0$ since $\alpha \le 1$). Thus, we set $K = B/\alpha$.
	
	\textbf{2. Variance Bound ($\sigma^2$):}
	The variance of $\phi_{h, \theta}(Z)$ depends only on the random part $\frac{1}{\alpha}(\ellB(h, Z) - \theta)_+$.
	\begin{equation}
		\text{Var}(\phi_{h, \theta}) = \frac{1}{\alpha^2} \text{Var}\left( (\ellB - \theta)_+ \right) \leq \frac{1}{\alpha^2} \E\left[ (\ellB - \theta)_+^2 \right] \leq \frac{1}{\alpha^2} \E[\ellB^2].
	\end{equation}
	Using the inequality $x^2 \leq x^{1+\lambda} B^{1-\lambda}$ for $x \in [0, B]$ and Assumption \ref{asm:moment}:
	\begin{equation}
		\sigma_{\calG}^2 = \sup_{g \in \calG_B} \E[g^2] \leq \frac{M B^{1-\lambda}}{\alpha^2}.
	\end{equation}

    	By Bousquet's inequality, for any $\delta \in (0,1)$, with probability at least $1-\delta$:
	\begin{equation}
		Z_n \leq \E[Z_n] + \sqrt{\frac{2 \sigma_{\calG}^2 \log(1/\delta)}{n}} + \frac{K \log(1/\delta)}{3n}.
	\end{equation}
	\textbf{Bounding the Expectation (Rademacher Complexity):}
	The class $\calG_B$ involves a supremum over $\theta$. Since the objective is convex in $\theta$ and $1/\alpha$-Lipschitz in $\ellB$, standard contraction results (e.g., Levy et al., 2020) imply:
	\begin{equation}
		\E[Z_n] \leq 2 \Rad_n(\calG_B) \leq \frac{2}{\alpha} \Rad_n(\ell \circ \calH) + O(n^{-1/2}).
	\end{equation}
	Substituting the bounds for $\sigma^2$ and $K$ into Bousquet's inequality:
	\begin{equation}
		Z_n \leq \frac{2}{\alpha}\Rad_n(\ell \circ \calH) + \sqrt{\frac{2 M B^{1-\lambda} \log(1/\delta)}{\alpha^2 n}} + \frac{B \log(1/\delta)}{3 \alpha n}.
	\end{equation}
	The total estimation error contribution is $2 Z_n$:
	\begin{equation} \label{eq:est_final}
		\text{Est} \leq \frac{4}{\alpha}\Rad_n(\ell \circ \calH) + \frac{1}{\alpha}\sqrt{\frac{8 M B^{1-\lambda} \log(1/\delta)}{n}} + \frac{2 B \log(1/\delta)}{3 \alpha n}.
	\end{equation}

    	We minimize the total error bound (Bias + Estimation) with respect to $B$:
	\begin{equation}
		\text{Error}(B) \approx \frac{M}{\alpha \lambda B^\lambda} + \frac{1}{\alpha} \sqrt{\frac{8 M B^{1-\lambda} \log(1/\delta)}{n}}.
	\end{equation}
	Balancing the order of bias ($B^{-\lambda}$) and variance ($B^{(1-\lambda)/2} n^{-1/2}$) yields the optimal scaling:
	\begin{equation}
		n \asymp B^{1+\lambda} \implies B = (Mn)^{\frac{1}{1+\lambda}}.
	\end{equation}
	We define the rate factor $\Delta_n = M^{\frac{1}{1+\lambda}} n^{-\frac{\lambda}{1+\lambda}}$. Substituting $B$ back into the terms:
	
	\begin{enumerate}
		\item \textbf{Bias Term:}
		\[
		\frac{M}{\alpha \lambda (Mn)^{\frac{\lambda}{1+\lambda}}} = \frac{1}{\alpha \lambda} \Delta_n.
		\]
		\item \textbf{Variance Term (Bousquet Main):}
		\[
		\frac{1}{\alpha} \sqrt{8 \log(1/\delta)} \sqrt{\frac{M (Mn)^{\frac{1-\lambda}{1+\lambda}}}{n}} = \frac{\sqrt{8 \log(1/\delta)}}{\alpha} \Delta_n.
		\]
		\item \textbf{Variance Term (Bousquet Linear):}
		Note that $\frac{B}{n} = \frac{(Mn)^{\frac{1}{1+\lambda}}}{n} = \Delta_n$.
		\[
		\frac{2 B \log(1/\delta)}{3 \alpha n} = \frac{2 \log(1/\delta)}{3 \alpha} \Delta_n.
		\]
	\end{enumerate}
	
	Summing the coefficients gives the final constant $C_{\lambda, \delta}$:
	\begin{equation}
		C_{\lambda, \delta} = \frac{1}{\lambda} + \sqrt{8 \log(1/\delta)} + \frac{2}{3}\log(1/\delta).
	\end{equation}
    	 This completes the proof of Theorem~\ref{thm:inf_bound}
    \end{proof}

\subsection{Generalization with Dependent Data}
	
	\begin{proof}\label{anx:upper-bound_D}of {Upper Bound}~\ref{thm:upper-bound_D}\\
		For any $h$, by the variational form of CVaR and the Lipschitz property 
		$|R_\alpha(X)-R_\alpha(Y)| \le \frac{1}{\alpha}\mathbb{E}|X-Y|$ (Lemma~\ref{lem:cvar-lipschitz}),
		\begin{equation}
			\sup_h |\widehat{R}_\alpha(h)-R_\alpha(h)|
			\le \frac{1}{\alpha} \sup_{h\in\mathcal{H},\, \theta\in\mathbb{R}}
			\bigl| \widehat{L}_\alpha(h,\theta) - L_\alpha(h,\theta) \bigr|.
		\end{equation}
		where $\widehat{L}_\alpha(h,\theta)=\frac1n\sum_{i=1}^n (\ell(h,Z_i)-\theta)_+$,
		$L_\alpha(h,\theta)=\mathbb{E}[(\ell(h,Z)-\theta)_+]$.
		
		From the moment condition, any population CVaR minimizer $\theta_h^*$ satisfies
		$0\le\theta_h^*\le R:=M^{1/(1+\lambda)}/\alpha$ (Theorem~\ref{thm:bounded thm}).
		Hence we may restrict $\theta$ to $\Theta=[0,R]$.
		
		Fix $B>0$ and define the truncated loss class
		\begin{equation}
			\mathcal{F}_B = \bigl\{ f_{h,\theta}^B(z)=[(\ell(h,z)-\theta)_+]\wedge B 
			\;:\; h\in\mathcal{H},\,\theta\in\Theta \bigr\}.
		\end{equation}
		
		Using Markov and H\"older inequalities,
		\begin{equation}
			\sup_{h,\theta} \bigl| L_\alpha(h,\theta) - \mathbb{E}[f_{h,\theta}^B(Z)] \bigr|
			\le \frac{M}{B^\lambda}.
		\end{equation}
		
		Thus,
		\begin{equation}
			\sup_{h,\theta} |\widehat{L}_\alpha(h,\theta)-L_\alpha(h,\theta)|
			\le \sup_{f\in\mathcal{F}_B} \bigl| \tfrac1n\sum f(Z_i) - \mathbb{E}f \bigr|
			+ \frac{2M}{B^\lambda}.
		\end{equation}
		
		Assume the process $(Z_i)$ is $\beta$-mixing with exponential decay
		$\beta(k) \le \exp(-c k^\gamma)$ for some $\gamma>0$.
		Choose $a_n \asymp \log n$, $b_n \asymp (\log n)^{1/\gamma}$ and partition $\{1,\dots,n\}$
		into $\mu_n\asymp n/\log n$ blocks of size $a_n$ separated by gaps of size $b_n$.
		Let $N=\mu_n \asymp n/\log n$.
		
		Define block sums $S_j(f)=\sum_{i\in B_j} f(Z_i)$. By Berbee's lemma, there exist
		independent blocks $\tilde{B}_1,\dots,\tilde{B}_N$ with the same marginals such that
		$\mathbb{P}((B_j)\neq(\tilde{B}_j)) \le N\beta(b_n) \le n^{-O(1)}$.
		Hence, with probability $\ge 1-n^{-O(1)}$,
		\begin{equation}
			\sup_{f\in\mathcal{F}_B} \bigl| \tfrac1n\sum f(Z_i)-\mathbb{E}f \bigr|
			\le \sup_{f\in\mathcal{F}_B} 
			\Bigl| \tfrac1{N}\sum_{j=1}^N \bigl(\tfrac1{a_n}\tilde{S}_j(f)-\mathbb{E}f \bigr) \Bigr|.
		\end{equation}
		
		Define the centered block variables
		\[
		\tilde{Z}_j(f)=\tfrac1{a_n}\tilde{S}_j(f)-\mathbb{E}f.
		\]
		Then $\tilde{Z}_j(f)$ are i.i.d. across $j$, mean zero, bounded by $B$, and satisfy
		\[
		\mathbb{E}[|\tilde{Z}_j(f)|^{1+\lambda}] \le C_\lambda B^{1+\lambda}
		\]
		by the von~Bahr-Esseen inequality.
		The class $\mathcal{F}_B$ has pseudo-dimension 
		$\mathrm{Pdim}(\mathcal{F}_B) \le C(d+1)$ (Lemma~\ref{lem:pdim-truncated}).
		
		Applying the heavy-tailed uniform deviation inequality for i.i.d. data 
		(Theorem~\ref{thm:vc}) with envelope $B$ and moment bound $B^{1+\lambda}$,
		we obtain that with probability $\ge 1-\delta$,
		\begin{equation}
			\sup_{f\in\mathcal{F}_B} \bigl| \tfrac1N\sum_{j=1}^N \tilde{Z}_j(f) \bigr|
			\le C_\lambda\, B 
			\Bigl( \frac{d\log N + \log(1/\delta)}{N} \Bigr)^{\frac{\lambda}{1+\lambda}}.
		\end{equation}
		
		Combining the bounds, with probability $\ge 1-\delta-n^{-O(1)}$,
		\begin{equation}
			\sup_h |\widehat{R}_\alpha(h)-R_\alpha(h)|
			\le \frac{1}{\alpha}\Bigl[
			C_\lambda B \Bigl( \frac{d\log n + \log(1/\delta)}{N} \Bigr)^{\frac{\lambda}{1+\lambda}}
			+ \frac{2M}{B^\lambda} \Bigr].
		\end{equation}
		
		Choose $B$ to balance the two terms:
		\begin{equation}
			B \asymp M^{\frac{1}{1+\lambda}} 
			\Bigl( \frac{N}{d\log n + \log(1/\delta)} \Bigr)^{\frac{1}{1+\lambda}}.
		\end{equation}
		
		Substituting back yields the dominant scaling
		\begin{equation}
			\sup_h |\widehat{R}_\alpha(h)-R_\alpha(h)|
			\le \frac{C_{\lambda,\gamma}}{\alpha} \, M^{\frac{1}{1+\lambda}}
			\Bigl( \frac{d\log n + \log(1/\delta)}{N} \Bigr)^{\frac{\lambda}{1+\lambda}}.
		\end{equation}
		
		Recalling $N\asymp n/\log n$ gives the first claim.
		For the empirical CVaR minimizer $\hat{h}$, by the usual ERM argument,
		\begin{equation}
			R_\alpha(\hat{h}) - R_\alpha(h^*)
			\le 2 \sup_h |\widehat{R}_\alpha(h)-R_\alpha(h)|.
		\end{equation}
		which yields the second inequality. 
	\end{proof}

    	\begin{proof}\label{anx:lower-bound_D} of Lower Bound~\ref{thm:lower-bound_D}\\
		Let $\{Z_i\}_{i=1}^n$ be drawn from any $P \in \mathcal{P}(\lambda,M,\beta)$.
		Using the blocking scheme similar to Theorem~\ref{thm:upper-bound_D}, we partition
		$\{1,\dots,n\}$ into $\mu_n$ disjoint blocks of size $a_n \asymp \log n$,
		separated by gaps of length $b_n \asymp (\log n)^{1/\gamma}$.
		Let $N := \mu_n \asymp n/\log n$ denote the number of retained blocks.
		
		By Berbee’s coupling, there exist independent blocks
		$\tilde{B}_1,\dots,\tilde{B}_N$ with the same marginals as the original blocks
		such that
		\[
		\mathbb{P}\bigl( (B_1,\dots,B_N) \neq (\tilde{B}_1,\dots,\tilde{B}_N) \bigr)
		\le N \beta(b_n) \le n^{-A}
		\]
		for any fixed $A>0$ and all sufficiently large $n$.
		
		Therefore, for minimax lower bounds, it suffices to work with the independent
		block model: any estimator based on the original data induces an estimator
		based on $(\tilde{B}_1,\dots,\tilde{B}_N)$ whose risk differs by at most
		$n^{-A} \sup_h R_\alpha(h)$, which is negligible compared to the target rate.
		
		Henceforth, we assume we observe $N$ i.i.d. samples.
		
		Since $\mathrm{VCdim}(\mathcal{H}) \ge d$, there exist points
		$z_1,\dots,z_d \in \mathcal{Z}$ and hypotheses
		$h_1,\dots,h_d \in \mathcal{H}$ shattered by $\mathcal{H}$.
		Using the same construction we had in the proof~\ref{proof minimax lower}, we construct a family of
		$2^d$ distributions $\{P_v : v \in \{0,1\}^d\}$ and hypotheses
		$\{h_v : v \in \{0,1\}^d\}$ such that:
		
		\begin{enumerate}
			\item For all $u \neq v$,
			\[
			R_\alpha(h_u;P_v) - R_\alpha(h_v;P_v)
			\ge \frac{\theta}{4\alpha} t p.
			\]
			
			\item The heavy-tail condition is satisfied:
			\[
			\sup_{h \in \mathcal{H}} \mathbb{E}_{P_v}[\ell(h,Z)^{1+\lambda}]
			\le t^{1+\lambda} p \le M.
			\]
			
			\item The pairwise Kullback–Leibler divergences are controlled:
			\[
			\mathrm{KL}(P_v \| P_u) \le C_\theta p,
			\qquad \forall u \neq v,
			\]
			where $C_\theta>0$ depends only on $\theta$.
		\end{enumerate}
		
		Let $V$ be uniformly distributed over $\{0,1\}^d$ and
		let $X^N = (Z_1,\dots,Z_N)$ be drawn from $P_V^{\otimes N}$.
		
		By the chain rule and the KL bound,
		\[
		I(V;X^N)
		\le \frac{1}{2^d} \sum_{u,v} \mathrm{KL}(P_v^{\otimes N} \| P_u^{\otimes N})
		\le N \max_{u \neq v} \mathrm{KL}(P_v \| P_u)
		\le C_\theta N p.
		\]
		
		Choose
		\[
		p = c_1 \frac{d}{N}
		\]
		for $c_1>0$ sufficiently small so that
		$I(V;X^N) \le \frac{d}{8}$.
		
		By Fano’s inequality,
		\[
		\inf_{\hat{V}}
		\mathbb{P}(\hat{V} \neq V)
		\ge 1 - \frac{I(V;X^N) + \log 2}{\log(2^d)}
		\ge \frac{1}{2}
		\]
		for all sufficiently large $d$.
		
		Let $\hat{h} = \hat{h}(X^N)$ be any estimator and define
		$\hat{V}$ such that $\hat{h} = h_{\hat{V}}$ (ties broken arbitrarily).
		Then
		\begin{align*}
			\sup_{P \in \mathcal{P}}
			\mathbb{E}_P\!\left[ R_\alpha(\hat{h}) - R_\alpha(h^*_P) \right]
			&\ge
			\frac{1}{2^d} \sum_{v}
			\mathbb{E}_{P_v}
			\left[ R_\alpha(h_{\hat{V}};P_v) - R_\alpha(h_v;P_v) \right] \\
			&\ge
			\mathbb{P}(\hat{V} \neq V) \cdot \frac{\theta}{4\alpha} t p \\
			&\ge
			\frac{\theta}{8\alpha} t p.
		\end{align*}
		
		Finally, enforce the moment constraint with equality:
		\[
		t = \left(\frac{M}{p}\right)^{\frac{1}{1+\lambda}}.
		\]
		Substituting gives
		\[
		\sup_{P \in \mathcal{P}}
		\mathbb{E}_P\!\left[ R_\alpha(\hat{h}) - R_\alpha(h^*_P) \right]
		\ge
		\frac{c}{\alpha}
		M^{\frac{1}{1+\lambda}}
		p^{\frac{\lambda}{1+\lambda}}
		=
		\frac{c}{\alpha}
		M^{\frac{1}{1+\lambda}}
		\Bigl( \frac{d}{N} \Bigr)^{\frac{\lambda}{1+\lambda}}.
		\]
		
		Recalling $N \asymp n/\log n$ completes the proof.
		
	\end{proof}


\section{Random active-set theory and uniform Bahadur-Kiefer  expansions for CVaR} \label{ap:rnd_active}

	\paragraph{Setup and notation.}
	Let $Z\sim P$ on $(\mathcal Z,\mathcal A)$ and let $\ell:\mathcal H\times\mathcal Z\to\mathbb R_+$ be measurable.
	Fix $\alpha\in(0,1)$. For each $h\in\mathcal H$ define the nonnegative loss random variable
	\[
	X_h:=\ell(h,Z)\ge 0.
	\]
	For a scalar threshold $\theta\in\mathbb R$, define the Rockafellar-Uryasev (RU) lift
	\begin{equation}
		\Phi(h,\theta)
		:=\theta+\frac{1}{\alpha}\mathbb{E}\big[(X_h-\theta)_+\big].
		\label{eq:B3_RU_pop}
	\end{equation}
	Given i.i.d.\ data $Z_1,\dots,Z_n\sim P$ and the empirical measure $P_n=\frac1n\sum_{i=1}^n\delta_{Z_i}$,
	let $X_{h,i}:=\ell(h,Z_i)$ and define the empirical RU lift
	\begin{equation}
		\widehat\Phi_n(h,\theta)
		:=\theta+\frac{1}{\alpha n}\sum_{i=1}^n (X_{h,i}-\theta)_+.
		\label{eq:B3_RU_emp}
	\end{equation}
	The population and empirical CVaR objectives are
	\begin{equation}
		R_\alpha(h):=\inf_{\theta\in\mathbb R}\Phi(h,\theta),
		\qquad
		\widehat R_{\alpha,n}(h):=\inf_{\theta\in\mathbb R}\widehat\Phi_n(h,\theta).
		\label{eq:B3_CVAR_pop_emp}
	\end{equation}
	
	\paragraph{Endogenous threshold selections (minimal RU minimizers).}
	Define the (minimal) population RU threshold and the (minimal) empirical RU threshold, respectively, by
	\begin{equation}
		\theta^\star(h)
		:=\inf\big\{\theta\in\mathbb R:\ P(X_h>\theta)\le \alpha\big\},
		\qquad
		\widehat\theta_n(h)
		:=\inf\big\{\theta\in\mathbb R:\ P_n(X_h>\theta)\le \alpha\big\}.
		\label{eq:B3_theta_star_hat_def}
	\end{equation}
	It is standard that $\widehat\theta_n(h)$ is always a minimizer of $\theta\mapsto\widehat\Phi_n(h,\theta)$,
	and similarly $\theta^\star(h)$ is a minimizer of $\theta\mapsto \Phi(h,\theta)$; we use these minimal
	selections to keep the arguments deterministic and monotone.
	
	\paragraph{Tail maps and empirical tail maps.}
	For $h\in\mathcal H$ and $\theta\in\mathbb R$, define the (strict) tail probability and its empirical counterpart:
	\begin{equation}
		T_h(\theta):=P(X_h>\theta),
		\qquad
		\widehat T_{h,n}(\theta):=P_n(X_h>\theta)=\frac1n\sum_{i=1}^n \mathbf 1\{X_{h,i}>\theta\}.
		\label{eq:B3_tail_maps}
	\end{equation}
	Each $T_h(\cdot)$ and $\widehat T_{h,n}(\cdot)$ is nonincreasing and right-continuous. By construction,
	\begin{equation}
		\widehat T_{h,n}\big(\widehat\theta_n(h)\big)\le \alpha,
		\qquad
		\text{and if }\theta<\widehat\theta_n(h)\text{ then }\widehat T_{h,n}(\theta)>\alpha.
		\label{eq:B3_hat_theta_min_characterization}
	\end{equation}
	We are now reiterating our assumptions to make this section self-contained.
	\paragraph{Assumption A (uniform tail empirical process control).}
	There exists a function $\varepsilon_n:(0,1)\to(0,\infty)$ such that for all $\delta\in(0,1)$,
	with probability at least $1-\delta$,
	\begin{equation}
		\sup_{h\in\mathcal H}\sup_{\theta\in\mathbb R}
		\big|\widehat T_{h,n}(\theta)-T_h(\theta)\big|
		\le \varepsilon_n(\delta).
		\label{eq:B3_assump_E1}
	\end{equation}
	Define the corresponding event
	\begin{equation}
		\mathcal E_1(\delta)
		:=
		\Big\{
		\sup_{h\in\mathcal H}\sup_{\theta\in\mathbb R}
		\big|\widehat T_{h,n}(\theta)-T_h(\theta)\big|
		\le \varepsilon_n(\delta)
		\Big\}.
		\label{eq:B3_event_E1}
	\end{equation}
	
	\paragraph{Assumption B (local quantile margin at level $\alpha$).}
	There exist constants $u_0>0$, $\kappa\ge 1$, and $0<c_-\le c_+<\infty$ such that for all $h\in\mathcal H$ and all
	$u\in(0,u_0]$,
	\begin{equation}
		\alpha + c_- u^\kappa
		\ \le\
		T_h\big(\theta^\star(h)-u\big)
		\ \le\
		\alpha + c_+ u^\kappa,
		\label{eq:B3_margin_left_alpha}
	\end{equation}
	and
	\begin{equation}
		\alpha - c_+ u^\kappa
		\ \le\
		T_h\big(\theta^\star(h)+u\big)
		\ \le\
		\alpha - c_- u^\kappa.
		\label{eq:B3_margin_right_alpha}
	\end{equation}
	This is the standard ``two-sided quantile margin'' formulation centered at the target level $\alpha$; it rules out
	arbitrarily flat tails at level $\alpha$ and, in particular, prevents the lower-deviation argument from failing in the
	presence of atoms at $\theta^\star(h)$.
	
	\paragraph{Assumption C (uniform hinge empirical process at the population threshold).}
	There exists a function $\eta_n:(0,1)\to(0,\infty)$ such that for all $\delta\in(0,1)$,
	with probability at least $1-\delta$,
	\begin{equation}
		\sup_{h\in\mathcal H}\Big|(P_n-P)\big[(X_h-\theta^\star(h))_+\big]\Big|
		\le \eta_n(\delta).
		\label{eq:B3_assump_E2}
	\end{equation}
	Define the corresponding event
	\begin{equation}
		\mathcal E_2(\delta)
		:=
		\Big\{
		\sup_{h\in\mathcal H}\Big|(P_n-P)\big[(X_h-\theta^\star(h))_+\big]\Big|
		\le \eta_n(\delta)
		\Big\}.
		\label{eq:B3_event_E2}
	\end{equation}
	
	\paragraph{A convenient threshold-deviation envelope.}
	Whenever $\varepsilon_n(\delta)\le (c_-/2)\,u_0^\kappa$, define
	\begin{equation}
		\Delta_n(\delta):=\Big(\frac{2\,\varepsilon_n(\delta)}{c_-}\Big)^{1/\kappa}.
		\label{eq:B3_Delta_def}
	\end{equation}

	\begin{theorem}[Uniform deviation of the empirical RU threshold]
		\label{thm:B3_threshold}
		Fix $\alpha\in(0,1)$ and let $Z_1,\dots,Z_n\overset{\mathrm{iid}}{\sim}P$, with empirical measure $P_n$.
		Assume \eqref{eq:B3_assump_E1} and \eqref{eq:B3_margin_left_alpha}-\eqref{eq:B3_margin_right_alpha}.
		If $\varepsilon_n(\delta)\le (c_-/2)\,u_0^\kappa$, then with probability at least $1-\delta$,
		\begin{equation}
			\sup_{h\in\mathcal H}\big|\widehat\theta_n(h)-\theta^\star(h)\big|
			\ \le\
			\Delta_n(\delta)
			=
			\Big(\frac{2\,\varepsilon_n(\delta)}{c_-}\Big)^{1/\kappa}.
			\label{eq:B3_threshold_rate}
		\end{equation}
	\end{theorem}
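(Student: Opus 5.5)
We work on the event $\mathcal E_1(\delta)$ from \eqref{eq:B3_event_E1}, which has probability at least $1-\delta$. On it we fix an arbitrary $h\in\mathcal H$ and write $\varepsilon:=\varepsilon_n(\delta)$ and $\Delta:=\Delta_n(\delta)$, so that $c_-\Delta^\kappa=2\varepsilon$. The hypothesis $\varepsilon\le (c_-/2)u_0^\kappa$ is exactly $\Delta\le u_0$, so the margin bounds \eqref{eq:B3_margin_left_alpha}--\eqref{eq:B3_margin_right_alpha} apply at $u=\Delta$. The deviation $|\widehat\theta_n(h)-\theta^\star(h)|$ is then controlled deterministically by two one-sided bounds. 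Neither uses anything specific to $h$, so the bound is automatically uniform over $\mathcal H$.

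\emph{Upper bound, $\widehat\theta_n(h)\le\theta^\star(h)+\Delta$.} By the right margin bound \eqref{eq:B3_margin_right_alpha}, $T_h(\theta^\star(h)+\Delta)\le \alpha-c_-\Delta^\kappa=\alpha-2\varepsilon$. On $\mathcal E_1(\delta)$,
\[
\widehat T_{h,n}(\theta^\star(h)+\Delta)\le \alpha-2\varepsilon+\varepsilon<\alpha .
\]
By the definition of $\widehat\theta_n(h)$ as an infimum, this gives $\widehat\theta_n(h)\le\theta^\star(h)+\Delta$.

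\emph{Lower bound, $\widehat\theta_n(h)\ge\theta^\star(h)-\Delta$.} For any $\theta<\theta^\star(h)-\Delta$, monotonicity of $T_h$ together with the left margin bound \eqref{eq:B3_margin_left_alpha} gives $T_h(\theta)\ge T_h(\theta^\star(h)-\Delta)\ge\alpha+2\varepsilon$. On $\mathcal E_1(\delta)$ this yields $\widehat T_{h,n}(\theta)\ge\alpha+\varepsilon>\alpha$. So no $\theta$ below $\theta^\star(h)-\Delta$ belongs to the set $\{\theta:\widehat T_{h,n}(\theta)\le\alpha\}$, and its infimum $\widehat\theta_n(h)$ is at least $\theta^\star(h)-\Delta$.

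The step needing the most care is the use of strict versus non-strict inequalities around the infimum, which must be handled correctly in the presence of ties or atoms in the empirical distribution. Right-continuity of $\widehat T_{h,n}$ ensures the infimum is attained, and the factor $2$ in $\Delta$ provides the slack that makes both inequalities strict. If $\varepsilon=0$ the strictness degenerates, but in that case the same argument run with any $u\in(0,u_0]$ gives $|\widehat\theta_n(h)-\theta^\star(h)|\le u$ for every such $u$, so the deviation is $0$. Combining the two one-sided bounds and taking the supremum over $h$ gives \eqref{eq:B3_threshold_rate}.
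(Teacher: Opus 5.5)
Your proof is correct and follows the paper's argument essentially step for step. You work on $\mathcal E_1(\delta)$, apply the margin bounds at $u=\Delta_n(\delta)\le u_0$ (where $c_-u^\kappa=2\varepsilon_n(\delta)$), and turn $\widehat T_{h,n}(\theta^\star+\Delta)\le\alpha$ and $\widehat T_{h,n}(\theta)>\alpha$ for $\theta<\theta^\star-\Delta$ into the two one-sided bounds via the infimum definition of $\widehat\theta_n(h)$. Your separate treatment of $\varepsilon=0$ is harmless but unnecessary, since Assumption A takes $\varepsilon_n$ valued in $(0,\infty)$.
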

	
	\begin{proof}
		Fix $h\in\mathcal H$ and abbreviate
		\[
		\theta^\star:=\theta^\star(h),
		\qquad
		\widehat\theta:=\widehat\theta_n(h),
		\qquad
		T(\theta):=T_h(\theta),
		\qquad
		\widehat T(\theta):=\widehat T_{h,n}(\theta).
		\]
		Work on the event $\mathcal E_1(\delta)$, so that $\sup_{\theta\in\mathbb R}|\widehat T(\theta)-T(\theta)|\le \varepsilon_n(\delta)$.
		Recall the minimality characterization \eqref{eq:B3_hat_theta_min_characterization}.
		
		\paragraph{Upper deviation: $\widehat\theta\le \theta^\star+u$.}
		Let $u\in(0,u_0]$ and suppose $c_-u^\kappa\ge 2\varepsilon_n(\delta)$.
		By the right-side margin bound \eqref{eq:B3_margin_right_alpha},
		\[
		T(\theta^\star+u)\le \alpha-c_-u^\kappa.
		\]
		On $\mathcal E_1(\delta)$,
		\[
		\widehat T(\theta^\star+u)
		\le
		T(\theta^\star+u)+\varepsilon_n(\delta)
		\le
		\alpha-c_-u^\kappa+\varepsilon_n(\delta)
		\le
		\alpha-\varepsilon_n(\delta)
		\le \alpha.
		\]
		Hence $\widehat T(\theta^\star+u)\le\alpha$, and minimality \eqref{eq:B3_hat_theta_min_characterization} gives
		$\widehat\theta\le \theta^\star+u$.
		
		\paragraph{Lower deviation: $\widehat\theta\ge \theta^\star-u$.}
		Let $u\in(0,u_0]$ and suppose $c_-u^\kappa\ge 2\varepsilon_n(\delta)$.
		By the left-side margin bound \eqref{eq:B3_margin_left_alpha},
		\[
		T(\theta^\star-u)\ge \alpha+c_-u^\kappa.
		\]
		On $\mathcal E_1(\delta)$,
		\[
		\widehat T(\theta^\star-u)
		\ge
		T(\theta^\star-u)-\varepsilon_n(\delta)
		\ge
		\alpha+c_-u^\kappa-\varepsilon_n(\delta)
		\ge
		\alpha+\varepsilon_n(\delta)
		>
		\alpha.
		\]
		Therefore $\widehat T(\theta^\star-u)>\alpha$, and minimality \eqref{eq:B3_hat_theta_min_characterization} implies
		$\widehat\theta\ge \theta^\star-u$.
		
		\paragraph{Choice of $u$ and uniformization.}
		Choose $u:=\Delta_n(\delta)=(2\varepsilon_n(\delta)/c_-)^{1/\kappa}$.
		Under $\varepsilon_n(\delta)\le (c_-/2)\,u_0^\kappa$ we have $u\le u_0$ and $c_-u^\kappa=2\varepsilon_n(\delta)$.
		The two deviation bounds yield $|\widehat\theta-\theta^\star|\le u$.
		Since the argument holds for each $h$ on $\mathcal E_1(\delta)$,
		\[
		\sup_{h\in\mathcal H}|\widehat\theta_n(h)-\theta^\star(h)|
		\le
		\Delta_n(\delta)
		\qquad\text{on }\mathcal E_1(\delta).
		\]
		Finally, $P(\mathcal E_1(\delta))\ge 1-\delta$ by \eqref{eq:B3_assump_E1}.
	\end{proof}

\noindent We now provide the proof for theorem~\ref{thm:B3_BK}. We also restate the theorem statement. 

	\begin{theorem*}[Restatement of Theorem~\ref{thm:B3_BK}]
		Fix $\alpha\in(0,1)$ and let $Z_1,\dots,Z_n\overset{\mathrm{iid}}{\sim}P$ with empirical measure $P_n$.
		Assume \eqref{eq:B3_assump_E1}, \eqref{eq:B3_margin_left_alpha}-\eqref{eq:B3_margin_right_alpha}, and \eqref{eq:B3_assump_E2}.
		Assume further that $\varepsilon_n(\delta/2)\le (c_-/2)\,u_0^\kappa$.
		Then, with probability at least $1-\delta$,
		\small
		\begin{align}
			\sup_{h\in\mathcal H}\Big|
			\widehat R_{\alpha,n}(h)- R_\alpha(h)
			-\frac{1}{\alpha}(P_n-P)\big[(X_h-\theta^\star(h))_+\big]
			-\frac{1}{\alpha}\big(\widehat\theta_n(h)-\theta^\star(h)\big)\big(\alpha-P(X_h>\theta^\star(h))\big)
			\Big|
			\le
			\frac{C_1}{\alpha}\,
			\varepsilon_n(\delta/2)^{\frac{\kappa+1}{\kappa}},
			\label{eq:BK_clean}
		\end{align}
		\normalsize
		where one may take, for instance,
		\begin{equation}
			C_1
			:=
			\Big(\frac{2}{c_-}\Big)^{1/\kappa}
			\;+\;
			\frac{c_+}{\kappa+1}\Big(\frac{2}{c_-}\Big)^{\frac{\kappa+1}{\kappa}}.
			\label{eq:B3_C1_def}
		\end{equation}
		In particular, on the same event,
		\small
		\begin{equation}
			\sup_{h\in\mathcal H}\big|\widehat R_{\alpha,n}(h)- R_\alpha(h)\big|
			\le
			\frac{1}{\alpha}\eta_n(\delta/2)
			+
			\frac{1}{\alpha}\Big(
			\varepsilon_n(\delta/2)\,\Delta_n(\delta/2)
			+
			\frac{c_+}{\kappa+1}\,\Delta_n(\delta/2)^{\kappa+1}
			\Big).
			\label{eq:BK_in_particular}
		\end{equation}
		\normalsize
	\end{theorem*}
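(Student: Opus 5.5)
We work on the intersection $\mathcal E:=\mathcal E_1(\delta/2)\cap\mathcal E_2(\delta/2)$, which has probability at least $1-\delta$ by a union bound. The hypothesis $\varepsilon_n(\delta/2)\le (c_-/2)u_0^\kappa$ lets us invoke Theorem~\ref{thm:B3_threshold} at level $\delta/2$. This gives $\sup_h|\widehat\theta_n(h)-\theta^\star(h)|\le \Delta_n:=\Delta_n(\delta/2)$ on $\mathcal E$. Everything after that is deterministic and done for a fixed $h$. We abbreviate $\theta^\star,\widehat\theta$ and write $T(\theta)=P(X_h>\theta)$ and $\widehat T(\theta)=P_n(X_h>\theta)$.

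The key step is an exact decomposition. Since $\widehat\theta$ minimizes $\widehat\Phi_n(h,\cdot)$ and $\theta^\star$ minimizes $\Phi(h,\cdot)$,
\[
\widehat R_{\alpha,n}(h)-R_\alpha(h)=\widehat\Phi_n(h,\widehat\theta)-\Phi(h,\theta^\star)
=\frac1\alpha(P_n-P)\big[(X_h-\theta^\star)_+\big]+\big[\widehat\Phi_n(h,\widehat\theta)-\widehat\Phi_n(h,\theta^\star)\big].
\]
For the bracket we use the identity $(x-b)_+-(x-a)_+=-\int_a^b \mathbf 1\{x>s\}\,ds$. This gives
\[
\widehat\Phi_n(h,\widehat\theta)-\widehat\Phi_n(h,\theta^\star)=(\widehat\theta-\theta^\star)-\frac1\alpha\int_{\theta^\star}^{\widehat\theta}\widehat T(s)\,ds
=\frac1\alpha(\widehat\theta-\theta^\star)\big(\alpha-T(\theta^\star)\big)-\frac1\alpha\int_{\theta^\star}^{\widehat\theta}\big(\widehat T(s)-T(\theta^\star)\big)\,ds .
\]
The first term is exactly the correction term in \eqref{eq:BK_clean}. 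It therefore suffices to bound the remainder $\frac1\alpha\big|\int_{\theta^\star}^{\widehat\theta}(\widehat T(s)-T(\theta^\star))ds\big|$.

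To bound the remainder, split $\widehat T(s)-T(\theta^\star)=(\widehat T(s)-T(s))+(T(s)-T(\theta^\star))$.
\begin{itemize}
\item On $\mathcal E_1$ the first piece is at most $\varepsilon_n(\delta/2)$ pointwise. Over an interval of length at most $\Delta_n$ it contributes at most $\varepsilon_n\Delta_n=(2/c_-)^{1/\kappa}\varepsilon_n^{(\kappa+1)/\kappa}$.
\item For the second piece, Assumption~\ref{ass:B3_margin_local_} gives $|T(\theta^\star\pm u)-T(\theta^\star)|\le c_+u^\kappa$ for $u\le\Delta_n\le u_0$. Integrating yields $\int_0^{\Delta_n}c_+u^\kappa du=\frac{c_+}{\kappa+1}\Delta_n^{\kappa+1}=\frac{c_+}{\kappa+1}(2/c_-)^{(\kappa+1)/\kappa}\varepsilon_n^{(\kappa+1)/\kappa}$.
\end{itemize}
Summing the two contributions gives exactly $C_1$ from \eqref{eq:B3_C1_def}. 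The bound is uniform in $h$ because every ingredient on $\mathcal E$ is uniform. This proves \eqref{eq:BK_clean}.

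For \eqref{eq:BK_in_particular}, note that the margin assumption is two-sided continuous at $\theta^\star$, so $T(\theta^\star)=\alpha$ and the correction term vanishes. The triangle inequality then gives the bound. The hinge term is bounded by $\eta_n(\delta/2)/\alpha$ on $\mathcal E_2$. The remainder is bounded by the unsimplified form $\frac1\alpha(\varepsilon_n\Delta_n+\frac{c_+}{\kappa+1}\Delta_n^{\kappa+1})$ obtained above.

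The main obstacle I expect is the sign and orientation of the integral when $\widehat\theta<\theta^\star$. We handle it by always bounding $|\int_{\theta^\star}^{\widehat\theta}|$ by the integral of the absolute integrand over the interval between $\theta^\star$ and $\widehat\theta$. We also have to check that the margin bounds are applied only for $u\in[0,u_0]$, which holds since $\Delta_n\le u_0$.
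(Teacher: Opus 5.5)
Your proposal is correct and follows essentially the paper's argument. It uses the same hinge-integral representation and the same split of the remainder into $\frac1\alpha\int_{\theta^\star}^{\widehat\theta}(\widehat T(s)-T(s))\,ds$, bounded by $\varepsilon_n\Delta_n$ via Theorem~\ref{thm:B3_threshold}, and $\frac1\alpha\int_{\theta^\star}^{\widehat\theta}(T(s)-T(\theta^\star))\,ds$, bounded by the $c_+$ margin, which gives the same $C_1$. Your two-term decomposition $\widehat R_{\alpha,n}(h)-R_\alpha(h)=\frac1\alpha(P_n-P)[(X_h-\theta^\star)_+]+[\widehat\Phi_n(h,\widehat\theta)-\widehat\Phi_n(h,\theta^\star)]$ is a cleaner bookkeeping of the same thing (the paper's four-term version has sign and labeling slips in the hinge identity and the coupling term), and you rightly make explicit that $T(\theta^\star)=\alpha$ kills the correction term in the ``in particular'' bound, a step the paper leaves implicit.
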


\begin{proof} [Proof of Theorem~\ref{thm:B3_BK}]
Fix $h\in\mathcal H$ and abbreviate
		\[
		\theta^\star:=\theta^\star(h),
		\qquad
		\widehat\theta:=\widehat\theta_n(h),
		\qquad
		X:=X_h,
		\qquad
		\Phi(\theta):=\Phi(h,\theta),
		\qquad
		\widehat\Phi(\theta):=\widehat\Phi_n(h,\theta).
		\]
		By definition \eqref{eq:B3_CVAR_pop_emp} and the fact that $\theta^\star$ and $\widehat\theta$ are minimizers,
		\[
		R_\alpha(h)=\Phi(\theta^\star),
		\qquad
		\widehat R_{\alpha,n}(h)=\widehat\Phi(\widehat\theta).
		\]
		We begin from the algebraic decomposition
		\begin{equation}
			\widehat R_{\alpha,n}(h)-R_\alpha(h)
			=
			\widehat\Phi(\widehat\theta)-\Phi(\theta^\star)
			=
			\underbrace{\big(\widehat\Phi(\theta^\star)-\Phi(\theta^\star)\big)}_{(A)}
			+
			\underbrace{\big(\widehat\Phi(\widehat\theta)-\widehat\Phi(\theta^\star)\big)}_{(B)}
			+
			\underbrace{\big(\Phi(\widehat\theta)-\Phi(\theta^\star)\big)}_{(C)}
			-
			\underbrace{\big(\Phi(\widehat\theta)-\widehat\Phi(\widehat\theta)\big)}_{(D)}.
			\label{eq:B3_decomp}
		\end{equation}
		
		\paragraph{Term (A): the leading empirical-process term.}
		From \eqref{eq:B3_RU_pop}-\eqref{eq:B3_RU_emp},
		\begin{equation}
			(A)=\widehat\Phi(\theta^\star)-\Phi(\theta^\star)
			=\frac{1}{\alpha}(P_n-P)\big[(X-\theta^\star)_+\big].
			\label{eq:B3_A_term}
		\end{equation}
		
		\paragraph{A hinge integral identity.}
		For every $\theta\in\mathbb R$,
		\begin{equation}
			(X-\theta)_+ = \int_{\theta}^{\infty}\mathbf 1\{X>s\}\,ds.
			\label{eq:B3_hinge_integral}
		\end{equation}
		Consequently, for any $a,b\in\mathbb R$,
		\begin{equation}
			(X-a)_+-(X-b)_+ = \int_{b}^{a}\mathbf 1\{X>s\}\,ds,
			\label{eq:B3_hinge_diff_integral}
		\end{equation}
		where the integral is interpreted with the correct sign when $a<b$.
		
		Applying \eqref{eq:B3_hinge_diff_integral} in \eqref{eq:B3_RU_pop} and \eqref{eq:B3_RU_emp} yields
		\begin{align}
			\Phi(\widehat\theta)-\Phi(\theta^\star)
			&=(\widehat\theta-\theta^\star)
			+\frac{1}{\alpha}\int_{\theta^\star}^{\widehat\theta}P(X>s)\,ds,
			\label{eq:B3_pop_phi_diff}\\
			\widehat\Phi(\widehat\theta)-\widehat\Phi(\theta^\star)
			&=(\widehat\theta-\theta^\star)
			+\frac{1}{\alpha}\int_{\theta^\star}^{\widehat\theta}P_n(X>s)\,ds.
			\label{eq:B3_emp_phi_diff}
		\end{align}
		Subtracting \eqref{eq:B3_pop_phi_diff} from \eqref{eq:B3_emp_phi_diff} gives the exact coupling identity
		\begin{equation}
			(B)-(D)
			=
			\frac{1}{\alpha}\int_{\theta^\star}^{\widehat\theta}\big(P_n(X>s)-P(X>s)\big)\,ds.
			\label{eq:B3_coupling_integral}
		\end{equation}
		
		\paragraph{Linearization of (C) and the endogenous-threshold correction.}
		Define the shorthand tail function $T(s):=P(X>s)$. Starting from \eqref{eq:B3_pop_phi_diff},
		add and subtract $T(\theta^\star)$ inside the integral:
		\begin{align}
			\Phi(\widehat\theta)-\Phi(\theta^\star)
			&=(\widehat\theta-\theta^\star)+\frac{1}{\alpha}\int_{\theta^\star}^{\widehat\theta}T(s)\,ds
			\nonumber\\
			&=(\widehat\theta-\theta^\star)+\frac{1}{\alpha}\int_{\theta^\star}^{\widehat\theta}\Big(T(\theta^\star)+\big(T(s)-T(\theta^\star)\big)\Big)\,ds
			\nonumber\\
			&=\frac{1}{\alpha}(\widehat\theta-\theta^\star)\big(\alpha-T(\theta^\star)\big)
			+\underbrace{\frac{1}{\alpha}\int_{\theta^\star}^{\widehat\theta}\big(T(s)-T(\theta^\star)\big)\,ds}_{=:R_{\mathrm{pop}}(h)}.
			\label{eq:B3_pop_linear_plus}
		\end{align}
		The first term in \eqref{eq:B3_pop_linear_plus} is precisely the endogenous-threshold correction term in \eqref{eq:BK_clean}.
		
		\paragraph{Bounding the coupling remainder $(B)-(D)$.}
		Work on the event $\mathcal E_1(\delta/2)$ from \eqref{eq:B3_event_E1}. Then
		$\sup_{h,\theta}|\widehat T_{h,n}(\theta)-T_h(\theta)|\le \varepsilon_n(\delta/2)$, and from
		\eqref{eq:B3_coupling_integral},
		\begin{equation}
			\big|(B)-(D)\big|
			\le \frac{1}{\alpha}\int_{\theta^\star}^{\widehat\theta}\varepsilon_n(\delta/2)\,ds
			=
			\frac{1}{\alpha}\,\varepsilon_n(\delta/2)\,|\widehat\theta-\theta^\star|.
			\label{eq:B3_emp_coupling_bound}
		\end{equation}
		
		\paragraph{Bounding the population remainder $R_{\mathrm{pop}}(h)$.}
		Assume $\varepsilon_n(\delta/2)\le (c_-/2)\,u_0^\kappa$ and work on $\mathcal E_1(\delta/2)$.
		By Theorem~\ref{thm:B3_threshold},
		\begin{equation}
			\sup_{h\in\mathcal H}|\widehat\theta_n(h)-\theta^\star(h)|
			\le \Delta_n(\delta/2)
			\le u_0.
			\label{eq:B3_threshold_insert}
		\end{equation}
		Fix $h$ and consider any $s$ between $\theta^\star$ and $\widehat\theta$.
		By \eqref{eq:B3_margin_left_alpha}-\eqref{eq:B3_margin_right_alpha}, the upper side of the margin condition gives
		\[
		|T(s)-T(\theta^\star)|\le c_+|s-\theta^\star|^\kappa.
		\]
		Therefore,
		\begin{align}
			|R_{\mathrm{pop}}(h)|
			&\le \frac{1}{\alpha}\int_{\theta^\star}^{\widehat\theta} c_+|s-\theta^\star|^\kappa\,ds
			= \frac{c_+}{\alpha(\kappa+1)}\,|\widehat\theta-\theta^\star|^{\kappa+1}.
			\label{eq:B3_Rpop_bound}
		\end{align}
		
		\paragraph{Assembling the expansion.}
		Plug \eqref{eq:B3_A_term}, \eqref{eq:B3_emp_coupling_bound}, and \eqref{eq:B3_pop_linear_plus}-\eqref{eq:B3_Rpop_bound}
		into \eqref{eq:B3_decomp}. On $\mathcal E_1(\delta/2)$ we obtain
		\begin{align}
			&\Big|
			\widehat R_{\alpha,n}(h)-R_\alpha(h)
			-\frac{1}{\alpha}(P_n-P)\big[(X-\theta^\star)_+\big]
			-\frac{1}{\alpha}(\widehat\theta-\theta^\star)\big(\alpha-P(X>\theta^\star)\big)
			\Big|
			\nonumber\\
			&\qquad\le
			\frac{1}{\alpha}\Big(
			\varepsilon_n(\delta/2)\,|\widehat\theta-\theta^\star|
			+\frac{c_+}{\kappa+1}\,|\widehat\theta-\theta^\star|^{\kappa+1}
			\Big).
			\label{eq:B3_pointwise_BK_bound}
		\end{align}
		
		\paragraph{Uniformization and conversion to an $\varepsilon_n$-rate.}
		Take $\sup_{h\in\mathcal H}$ in \eqref{eq:B3_pointwise_BK_bound} and use \eqref{eq:B3_threshold_insert}:
		\[
		\sup_{h\in\mathcal H}\Big|\cdots\Big|
		\le
		\frac{1}{\alpha}\Big(
		\varepsilon_n(\delta/2)\Delta_n(\delta/2)
		+\frac{c_+}{\kappa+1}\Delta_n(\delta/2)^{\kappa+1}
		\Big)
		\qquad \text{on }\mathcal E_1(\delta/2).
		\]
		Now substitute $\Delta_n(\delta/2)=(2\varepsilon_n(\delta/2)/c_-)^{1/\kappa}$:
		\[
		\varepsilon_n(\delta/2)\Delta_n(\delta/2)
		=
		\Big(\frac{2}{c_-}\Big)^{1/\kappa}\varepsilon_n(\delta/2)^{\frac{\kappa+1}{\kappa}},
		\qquad
		\Delta_n(\delta/2)^{\kappa+1}
		=
		\Big(\frac{2}{c_-}\Big)^{\frac{\kappa+1}{\kappa}}\varepsilon_n(\delta/2)^{\frac{\kappa+1}{\kappa}}.
		\]
		This yields \eqref{eq:BK_clean} with the constant $C_1$ in \eqref{eq:B3_C1_def}.
		
		\paragraph{Deriving the ``in particular'' bound and the probability statement.}
		Work on the intersection event $\mathcal E_1(\delta/2)\cap \mathcal E_2(\delta/2)$.
		By \eqref{eq:B3_assump_E1} and \eqref{eq:B3_assump_E2} and a union bound,
		\[
		P\big(\mathcal E_1(\delta/2)\cap \mathcal E_2(\delta/2)\big)\ge 1-\delta.
		\]
		On $\mathcal E_2(\delta/2)$,
		\[
		\sup_{h\in\mathcal H}\Big|\frac{1}{\alpha}(P_n-P)\big[(X_h-\theta^\star(h))_+\big]\Big|
		\le \frac{1}{\alpha}\eta_n(\delta/2).
		\]
		Combining this with the uniform expansion bound gives \eqref{eq:BK_in_particular}.
\end{proof}


\section{Functional Robustness} \label{ap:funct_robust}

\begin{proof}[Proof of Proposition~\ref{thm:tail-holder-cvar}]

\textbf{Wasserstein bound with moment control.} Assume $(\mathcal Z,d)$ is a metric space and $z\mapsto \ell(h,Z)$ is We use the Rockafellar-Uryasev representation
		\[
		\mathrm{CVaR}_\alpha(X)=\inf_{\theta\in\mathbb R}\Big\{\theta+\kappa_\alpha\,\mathbb E[(X-\theta)_+]\Big\}.
		\]
		Fix $h\in\mathcal{H}$ and $\theta\in\mathbb R$ and define
		\[
		\varphi_{h,\theta}(z):=(\ell(h,Z)-\theta)_+.
		\]
		Since $x\mapsto(x-t)_+$ is $1$-Lipschitz on $\mathbb R$, the Hölder condition implies
		\[
		|\varphi_{h,\theta}(Z)-\varphi_{h,\theta}(Z')|
		\le |\ell(h,Z)-\ell(h,Z')|
		\le L_\beta\,d(Z,Z')^\beta,
		\]
		so $\varphi_{h,\theta}$ is $\beta$-Hölder with constant $L_\beta$ (uniformly in $h,\theta$).
		
		Let $\pi$ be any coupling of $(P,Q)$ with marginals $P,Q$. Then
		\[
		\Big|\mathbb E_P[\varphi_{h,\theta}(Z)]-\mathbb E_Q[\varphi_{h,\theta}(Z')]\Big|
		=
		\Big|\mathbb E_\pi[\varphi_{h,\theta}(Z)-\varphi_{h,\theta}(Z')]\Big|
		\le L_\beta\,\mathbb E_\pi[d(Z,Z')^\beta].
		\]
		Take infimum over couplings. For any $r\ge\beta$, Jensen yields
		\[
		\inf_\pi \mathbb E_\pi[d(Z,Z')^\beta]
		\le \inf_\pi \Big(\mathbb E_\pi[d(Z,Z')^r]\Big)^{\beta/r}
		= W_r(P,Q)^\beta.
		\]
		Hence, for every $t$,
		\[
		\Big|t+\kappa_\alpha\mathbb E_P[\varphi_{\theta,t}]
		-\Big(t+\kappa_\alpha\mathbb E_Q[\varphi_{\theta,t}]\Big)\Big|
		\le \kappa_\alpha\,L_\beta\,W_r(P,Q)^\beta.
		\]
		
		From here, the result is obvious. 
		
		\textbf{Lévy-Prokhorov bound with moment control.}
		Assume $\mathcal Z=\mathbb R^d$ with Euclidean metric and $\pi(P,Q)\le \varepsilon$.
		By Strassen's theorem for the Prokhorov metric, there exists a coupling $(Z,\tilde Z)$ with marginals $P,Q$ such that
		\[
		\mathbb P(\|Z-\tilde Z\|>\varepsilon)\le \varepsilon.
		\]
		Let $G:=\{\|Z-\tilde Z\|\le \varepsilon\}$ and $B:=G^c$, so $\mathbb P(B)\le \varepsilon$.
		Fix $t\in\mathbb R$ and write $X=\ell(\theta,Z)$, $Y=\ell(\theta,\tilde Z)$.
		Using again that $x\mapsto (x-t)_+$ is $1$-Lipschitz,
		\[
		\big|(X-t)_+-(Y-t)_+\big|\le |X-Y|.
		\]
		Split on $G$ and $B$:
		\[
		\mathbb E|X-Y|
		\le \mathbb E\big[|X-Y|\mathbf 1_G\big]+\mathbb E\big[|X-Y|\mathbf 1_B\big].
		\]
		On $G$, Lipschitzness gives $|X-Y|\le L_\theta\|Z-\tilde Z\|\le L_\theta\varepsilon$, so
		$\mathbb E[|X-Y|\mathbf 1_G]\le L_\theta\varepsilon$.
		On $B$, use $|X-Y|\le |X|+|Y|$ and Hölder:
		\[
		\mathbb E[|X|\mathbf 1_B]\le \big(\mathbb E|X|^p\big)^{1/p}\,\mathbb P(B)^{1-1/p}
		\le M_p^{1/p}\,\varepsilon^{1-1/p},
		\]
		and similarly for $Y$.
		Thus
		\[
		\mathbb E|X-Y|
		\le L_\theta\varepsilon + 2M_p^{1/p}\varepsilon^{1-1/p}.
		\]
		Consequently, for each fixed $\theta$,
		\[
		\Big|\,\theta+\kappa_\alpha\mathbb E_P[(X-\theta)_+]-\Big(\theta+\kappa_\alpha\mathbb E_Q[(Y-\theta)_+]\Big)\Big|
		\le \kappa_\alpha\Big(L_\theta\varepsilon + 2M_p^{1/p}\varepsilon^{1-1/p}\Big).
		\]
		Taking infimum over $\theta $ yields
		\[
		\big|\mathrm{CVaR}_\alpha(L_P)-\mathrm{CVaR}_\alpha(L_Q)\big|
		\le \kappa_\alpha\Big(L_\theta\varepsilon + 2M_p^{1/(\lambda+1)}\varepsilon^{\lambda/(\lambda+1)}\Big).
		\]
		Replacing $\varepsilon$ by $2\varepsilon$ and symmetrizing (a standard padding) gives the following
		\[
		\big|\mathrm{CVaR}_\alpha(L_P)-\mathrm{CVaR}_\alpha(L_Q)\big|
		\le \kappa_\alpha\Big(2L_\theta\varepsilon + 2M_p^{1/p}\varepsilon^{1-1/p}\Big).
		\]

		Use the  two-point construction as in the TV lower bound:
		$P=\delta_0$, $Q=(1-\varepsilon)\delta_0+\varepsilon\delta_b$ with $b=(M_p/\varepsilon)^{1/p}$ and $\varepsilon\le 1-\alpha$.
		For every Borel set $A\subset\mathbb R$ we have
		\[
		Q(A)\le P(A^\varepsilon)+\varepsilon
		\quad\text{and}\quad
		P(A)\le Q(A^\varepsilon)+\varepsilon,
		\]
		because the only mass unmatched within an $\varepsilon$-enlargement is the $\varepsilon$ mass at $b$, which is absorbed by the additive $\varepsilon$ slack.
		Hence $\pi(P,Q)\le \varepsilon$.
		As computed above,
		\[
		\big|\mathrm{CVaR}_\alpha(Q)-\mathrm{CVaR}_\alpha(P)\big|
		=\kappa_\alpha\,M_p^{1/p}\,\varepsilon^{1-1/p},
		\]
		which implies the bound is tight.

\end{proof}

\section{Estimator Robustness}
\label{ap:est_robust}
\subsection{Truncated CVaR Loss based ERM is Optimal}
    	\begin{proof}{Proof of Theorem \ref{thm:cvar_robustness}}
		First we decomposing the CVaR difference  
		Fix \( h \in \mathcal{H} \). Let \( \theta_Q^* \) minimize \( R_\alpha^Q(h) \). Then
		\begin{equation}
			R_\alpha^Q(h) = \theta_Q^* + \frac{1}{\alpha} \mathbb{E}_Q[(\ell(h, Z) - \theta_Q^*)_+].
		\end{equation}
		By definition of infimum,
		\begin{equation}
			R_\alpha^P(h) \leq \theta_Q^* + \frac{1}{\alpha} \mathbb{E}_P[(\ell(h, Z) - \theta_Q^*)_+].
		\end{equation}
		Subtracting gives
		\begin{equation}
			R_\alpha^P(h) - R_\alpha^Q(h) 
			\leq \frac{1}{\alpha} \left( \mathbb{E}_P[(\ell(h, Z) - \theta_Q^*)_+] - \mathbb{E}_Q[(\ell(h, Z) - \theta_Q^*)_+] \right).
		\end{equation}
		Analogously, using \( \theta_P^* \),
		\begin{equation}
			R_\alpha^Q(h) - R_\alpha^P(h) 
			\leq \frac{1}{\alpha} \left( \mathbb{E}_Q[(\ell(h, Z) - \theta_P^*)_+] - \mathbb{E}_P[(\ell(h, Z) - \theta_P^*)_+] \right).
		\end{equation}
		Thus,
		\begin{multline}
			|R_\alpha^P(h) - R_\alpha^Q(h)| 
			\leq \frac{1}{\alpha} \max \Big\{ 
			\big| \mathbb{E}_P[(\ell(h, Z) - \theta_Q^*)_+] - \mathbb{E}_Q[(\ell(h, Z) - \theta_Q^*)_+] \big|, \\
			\big| \mathbb{E}_P[(\ell(h, Z) - \theta_P^*)_+] - \mathbb{E}_Q[(\ell(h, Z) - \theta_P^*)_+] \big|
			\Big\}.
		\end{multline}
		\\		 
		Define \( f_\theta(z) = (\ell(h, z) - \theta)_+ \).  For truncation level \( T > 0 \),
		
		\begin{equation}
			f_\theta(z) = f_\theta(z)\,\mathbbm{1}_{\{f_\theta(z) \leq T\}} + f_\theta(z)\,\mathbbm{1}_{\{f_\theta(z) > T\}}.
		\end{equation}
		Thus
		\begin{multline}
			\Delta(h, \theta) := \big|\mathbb{E}_P[f_\theta(Z)] - \mathbb{E}_Q[f_\theta(Z)]\big| \\
			\leq \big|\mathbb{E}_P[f_\theta(Z)\,\mathbbm{1}_{\{f_\theta(Z) \leq T\}}] - \mathbb{E}_Q[f_\theta(Z)\,\mathbbm{1}_{\{f_\theta(Z) \leq T\}}]\big|
			+ \mathbb{E}_P[f_\theta(Z)\,\mathbbm{1}_{\{f_\theta(Z) > T\}}] + \mathbb{E}_Q[f_\theta(Z)\,\mathbbm{1}_{\{f_\theta(Z) > T\}}].
		\end{multline}
		\\ 
		S \( 0 \leq f_\theta(z)\,\mathbbm{1}_{\{f_\theta(z) \leq T\}} \leq T \), by TV inequality,
		\begin{equation}
			\big|\mathbb{E}_P[f_\theta(Z)\,\mathbbm{1}_{\{f_\theta(Z) \leq T\}}] - \mathbb{E}_Q[f_\theta(Z)\,\mathbbm{1}_{\{f_\theta(Z) \leq T\}}]\big|
			\leq T \cdot d_{TV}(P, Q).
		\end{equation}
		
		We have \( f_\theta(z) \leq \ell(h, z) \) from corollary \ref{cor:justify} , This is very crusial for the rest of the proof, a detailed justification is give after proof of the current theorem. one motivation for this is that because CVaR’s inner optimization never benefits from a negative threshold when losses are nonnegative [Proposition \ref{prop:theta-nonneg}], the optimal threshold obeys $\theta_P^*\ge 0$. Under this natural regime, the truncated loss $f_\theta=(\ell-\theta)+$ is pointwise dominated by the raw loss $\ell$ [lemma \ref{lem:dominance}]. This dominance legitimizes replacing the tail of $f\theta$ by the tail of $\ell$ in expectation bounds, letting you control the truncation remainder via the $(1+\lambda)$-moment \ref{cor:justify}.
		
		since \( f_\theta(z) \leq \ell(h, z) \), 
		\begin{equation}
			\mathbb{E}_P[f_\theta(Z)\,\mathbbm{1}_{\{f_\theta(Z) > T\}}] \leq \mathbb{E}_P[\ell(h, Z)\,\mathbbm{1}_{\{\ell(h, Z) > T\}}].
		\end{equation}
		Using Markov’s inequality,
		\begin{equation}
			P(\ell(h, Z) > T) \leq \frac{\mathbb{E}_P[\ell(h, Z)^{1+\lambda}]}{T^{1+\lambda}} \leq \frac{M}{T^{1+\lambda}}.
		\end{equation}
		Therefore,
		\begin{align}
			\mathbb{E}_P[\ell(h, Z)\,\mathbbm{1}_{\{\ell(h, Z) > T\}}]
			&= \int_T^\infty P(\ell(h, Z) > t)\,dt \\
			&\leq \int_T^\infty \frac{M}{t^{1+\lambda}}\,dt \\
			&= \frac{M}{\lambda T^\lambda}.
		\end{align}
		Similarly,
		\begin{equation}
			\mathbb{E}_Q[\ell(h, Z)\,\mathbbm{1}_{\{\ell(h, Z) > T\}}] \leq \frac{M}{\lambda T^\lambda}.
		\end{equation}
		So,
		\begin{equation}
			\mathbb{E}_P[f_\theta(Z)\,\mathbbm{1}_{\{f_\theta(Z) > T\}}] + \mathbb{E}_Q[f_\theta(Z)\,\mathbbm{1}_{\{f_\theta(Z) > T\}}] 
			\leq \frac{2M}{\lambda T^\lambda}.
		\end{equation}
		
		Thus,\begin{equation}
			\Delta(h, \theta) \leq T \cdot d_{TV}(P, Q) + \frac{2M}{\lambda T^\lambda}.
		\end{equation}
		
		Optimize over \(T\).  
		Let \( \delta = d_{TV}(P, Q) \). Define
		\begin{equation}
			g(T) = T\delta + \frac{2M}{\lambda T^\lambda}.
		\end{equation}
		Derivative:
		\begin{equation}
			g'(T) = \delta - \frac{2M}{T^{1+\lambda}}.
		\end{equation}
		Setting \( g'(T)=0 \),
		\begin{equation}
			T^* = \left(\frac{2M}{\delta}\right)^{1/(1+\lambda)}.
		\end{equation}
		At \( T^* \),
		\begin{equation}
			g(T^*) = (2M)^{1/(1+\lambda)} \delta^{\lambda/(1+\lambda)} \left(1 + \frac{1}{\lambda}\right).
		\end{equation}
		Therefore,
		\begin{equation}
			|R_\alpha^P(h) - R_\alpha^Q(h)| 
			\leq \frac{(2M)^{1/(1+\lambda)}\!\left(1+\tfrac{1}{\lambda}\right)}{\alpha}
			\,d_{TV}(P, Q)^{\lambda/(1+\lambda)}.
		\end{equation}
		Since the bound is uniform over \( h \in \mathcal{H} \), the theorem holds.
	\end{proof}

	\begin{proof} of lower bound in Theorem~\ref{thm:cvar_robustness}
		We construct an explicit pair of distributions achieving the claimed scaling. Fix $\varepsilon \in (0,\alpha)$ and consider a scalar loss $Z$.
		
		Let $P$ be the Dirac distribution at zero:
		\[
		P(Z=0) = 1.
		\]
		Clearly, $R_\alpha^P(h) = 0$.
		
		Next, define $Q$ by shifting an $\varepsilon$-fraction of mass to a positive value $z > 0$:
		\[
		Q(Z=z) = \varepsilon, 
		\qquad 
		Q(Z=0) = 1 - \varepsilon.
		\]
		It is immediate that $d_{TV}(P,Q) = \varepsilon$.
		
		To satisfy Assumption~\ref{asm:moment}, we require
		\[
		\mathbb{E}_Q[|Z|^{1+\lambda}] 
		= \varepsilon z^{1+\lambda} 
		\le M.
		\]
		We choose the largest admissible value,
		\[
		z = \left(\frac{M}{\varepsilon}\right)^{\frac{1}{1+\lambda}}.
		\]
		
		We now compute the CVaR of $Q$. Since $\varepsilon < \alpha$, the worst $\alpha$-fraction of outcomes consists of the entire $\varepsilon$-mass at $z$ together with an additional $(\alpha - \varepsilon)$-mass at $0$. The $(1-\alpha)$-quantile of $Q$ is therefore zero, and the CVaR reduces to the average loss over this tail:
		\begin{align*}
			R_\alpha^Q(h) 
			&= \frac{1}{\alpha} \Big( \varepsilon z + (\alpha - \varepsilon)\cdot 0 \Big) \\
			&= \frac{\varepsilon}{\alpha} \left(\frac{M}{\varepsilon}\right)^{\frac{1}{1+\lambda}} \\
			&= \frac{M^{\frac{1}{1+\lambda}}}{\alpha} \, \varepsilon^{\frac{\lambda}{1+\lambda}}.
		\end{align*}
		
		Since $R_\alpha^P(h) = 0$, we obtain
		\[
		|R_\alpha^P(h) - R_\alpha^Q(h)|
		= \frac{M^{\frac{1}{1+\lambda}}}{\alpha} \, \varepsilon^{\frac{\lambda}{1+\lambda}}.
		\]
		This pair $(P,Q)$ is feasible for the supremum given in Theorem~\ref{thm:cvar_robustness}, which proves the minimax lower bound.
	\end{proof}

\subsection{Robust estimation under adversarial contamination under oblivious adversaries}
                \begin{proof}
                		The proof proceeds in four main steps: establishing the boundedness of the auxiliary variable $\theta$, decomposing the error into bias and estimation terms, bounding the error within a single block using uniform concentration and corruption control, and finally aggregating the block estimates via the median.

                Applying Theorem~\ref{thm:bounded thm} pointwise with $l=\ell(h,Z)$, we may restrict the optimization to the compact domain
                \[
                \Theta := [0,R], \qquad R = \frac{M^{1/(1+\lambda)}}{\alpha}.
                \]
                
                \medskip
                
                \paragraph{Moment bound for the variational loss.}
                
                Define the variational loss
                \[
                \phi(Z;h,\theta) = \theta + \frac{1}{\alpha}(\ell(h,Z)-\theta)_+,
                \qquad \theta \in \Theta.
                \]
                
                \begin{lemma}[Moment inflation bound]
                	\label{lem:phi_moment}
                	Under Assumption~\ref{asm:moment} and Theorem~\ref{thm:bounded thm}, there exists a constant $C_\lambda>0$ such that
                	\[
                	\sup_{h \in \calH,\, \theta \in \Theta}
                	\E\big[ |\phi(Z;h,\theta)|^{1+\lambda} \big]
                	\le
                	C_\lambda\!\left(
                	\frac{M}{\alpha^{1+\lambda}} + R^{1+\lambda}
                	\right)
                	=: M_\phi.
                	\]
                	In particular, since $R = M^{1/(1+\lambda)}/\alpha$, we obtain
                	\[
                	M_\phi \;\lesssim_\lambda\; \frac{M}{\alpha^{1+\lambda}}.
                	\]
                \end{lemma}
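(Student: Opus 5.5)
The plan is to bound $|\phi|^{1+\lambda}$ pointwise by a sum of two pieces, one involving $\theta$ and one involving the hinge, and then take expectations using only Assumption~\ref{asm:moment} and the restriction $\theta\in\Theta=[0,R]$ from Theorem~\ref{thm:bounded thm}.

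\textbf{Step 1: pointwise bound.} Since $\theta\ge 0$ and $(\ell(h,Z)-\theta)_+\ge 0$, we have $\phi(Z;h,\theta)\ge 0$, so $|\phi|=\phi$. Apply the elementary convexity inequality $(a+b)^{1+\lambda}\le 2^{\lambda}(a^{1+\lambda}+b^{1+\lambda})$, valid for $a,b\ge 0$ because $x\mapsto x^{1+\lambda}$ is convex. With $a=\theta$ and $b=\alpha^{-1}(\ell(h,Z)-\theta)_+$ this gives
\[
\phi(Z;h,\theta)^{1+\lambda}\le 2^{\lambda}\Big(\theta^{1+\lambda}+\alpha^{-(1+\lambda)}(\ell(h,Z)-\theta)_+^{1+\lambda}\Big).
\]

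\textbf{Step 2: take expectations.} Use $\theta\le R$ for the first term. For the second term, use the dominance $0\le(\ell-\theta)_+\le \ell$, which holds because $\theta\ge 0$ and $\ell\ge 0$ (this is exactly where the restriction to $\Theta$ is needed). Together with Assumption~\ref{asm:moment} this yields
\[
\E\big[\phi^{1+\lambda}\big]\le 2^{\lambda}\big(R^{1+\lambda}+M\alpha^{-(1+\lambda)}\big).
\]
The right-hand side does not depend on $(h,\theta)$, so taking the supremum over $\calH\times\Theta$ gives the first claim with $C_\lambda=2^{\lambda}$.

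\textbf{Step 3: simplify.} Substitute $R=M^{1/(1+\lambda)}/\alpha$, so that $R^{1+\lambda}=M/\alpha^{1+\lambda}$. Hence $M_\phi\le 2^{1+\lambda}M/\alpha^{1+\lambda}$, which is the stated $M_\phi\lesssim_\lambda M/\alpha^{1+\lambda}$.

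The argument is routine. The only point needing care is that $\theta$ must actually lie in $[0,R]$, since negative $\theta$ would break the dominance $(\ell-\theta)_+\le\ell$. Here I rely on Theorem~\ref{thm:bounded thm}: it asserts that the minimizing threshold is nonnegative and at most $R$. I would cite it together with the nonnegativity of optimal thresholds for nonnegative losses (Proposition~\ref{prop:theta-nonneg}), so that restricting to $\Theta$ loses nothing.
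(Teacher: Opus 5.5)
Your proof is correct and follows the same route as the paper: the convexity inequality $(a+b)^{1+\lambda}\le 2^{\lambda}(a^{1+\lambda}+b^{1+\lambda})$, then expectations using $\theta\in[0,R]$ and Assumption~\ref{asm:moment}. Your dominance $(\ell-\theta)_+\le\ell$ for $\theta\ge 0$ is in fact the sharper and correct ingredient. The paper instead cites $(x-\theta)_+\le x+\theta$, which taken literally puts a factor $(1+\alpha^{-1})^{1+\lambda}$ on the $\theta^{1+\lambda}$ term and spoils the $\lambda$-only constant. Your version gives the displayed bound with $C_\lambda=2^{\lambda}$ exactly.
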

                
                \begin{proof}
                	Using $(a+b)^{1+\lambda} \le 2^{\lambda}(a^{1+\lambda}+b^{1+\lambda})$ for $a,b\ge0$ and $(x-\theta)_+ \le x + \theta$, we have
                	\[
                	|\phi(Z;h,\theta)|^{1+\lambda}
                	\le
                	C_\lambda \left(
                	|\theta|^{1+\lambda}
                	+
                	\alpha^{-(1+\lambda)} \ell(h,Z)^{1+\lambda}
                	\right).
                	\]
                	Taking expectations and using $\theta \in [0,R]$ and Assumption~\ref{asm:moment} yields the result.
                \end{proof}

                		\paragraph{Error Decomposition.}
                		Let $R^B(h, \theta) = \E[\min(\phi(Z; h, \theta), B)]$ be the expected truncated risk. We decompose the total error:
                		\[
                		\sup_{h} |\widehat{R}_\alpha(h) - R_\alpha(h)| \le \sup_{h, \theta} |\widehat{R}_\alpha(h, \theta) - R(h, \theta)|
                		\le \sup_{h, \theta} \underbrace{|R(h, \theta) - R^B(h, \theta)|}_{\text{Truncation Bias}} + \sup_{h, \theta} \underbrace{|\widehat{R}_\alpha(h, \theta) - R^B(h, \theta)|}_{\text{Estimation Error}}.
                		\]
                		\textbf{Bias Control:} Since $\phi \ge 0$, $|R - R^B| \le \E[\phi \ind_{\phi > B}]$. Using Hölder's inequality and the moment bound $M_\phi$:
                		\[ \E[\phi \ind_{\phi > B}] \le (\E \phi^{1+\lambda})^{\frac{1}{1+\lambda}} (\Prob(\phi > B))^{\frac{\lambda}{1+\lambda}} \le M_\phi B^{-\lambda}. \]
                		
                \paragraph{Analysis of a Single Block.}
                
                Fix a block index $j \in \{1,\dots,K\}$. Recall that
                \[
                \widehat{\mu}_j(h,\theta)
                = \frac{1}{m} \sum_{i \in \mathcal{B}_j} \phi^B(Z_i;h,\theta),
                \quad
                \text{where } \phi^B = \min(\phi,B),
                \]
                and define the truncated population risk
                \[
                R^B(h,\theta) = \E[\phi^B(Z;h,\theta)].
                \]
                
                Let $S_{clean} \subset \{1,\dots,n\}$ denote the indices of uncorrupted points, and define
                \[
                \mathcal{B}_j^{clean} = \mathcal{B}_j \cap S_{clean}, 
                \quad
                N_j = |\mathcal{B}_j \setminus \mathcal{B}_j^{clean}|
                \]
                to be the set of clean indices and the number of outliers in block $j$, respectively.
                
                We decompose:
                \begin{align*}
                	\widehat{\mu}_j(h,\theta) - R^B(h,\theta)
                	&=
                	\frac{1}{m}\sum_{i\in \mathcal{B}_j^{clean}} \big(\phi^B(Z_i;h,\theta) - R^B(h,\theta)\big) \\
                	&\quad + \frac{1}{m}\sum_{i\in \mathcal{B}_j \setminus \mathcal{B}_j^{clean}}
                	\big(\phi^B(Z'_i;h,\theta) - R^B(h,\theta)\big) \\
                	&=: \xi_j(h,\theta) + \Delta_j(h,\theta),
                \end{align*}
                where $Z'_i$ denotes the (possibly adversarial) corrupted values.
                
                \noindent Thus, $\xi_j$ captures the sampling fluctuation of clean data, and $\Delta_j$ captures the corruption bias.
                
                \medskip
                
                \noindent \textbf{(a) Uniform concentration of the clean part.}
                
                Conditional on the corruption pattern and the random permutation, the set $\mathcal{B}_j^{clean}$ consists of points drawn \emph{without replacement} from the clean sample. By Hoeffding’s reduction principle, concentration inequalities for sampling without replacement are dominated by those for i.i.d. sampling. Therefore, it suffices to analyze the i.i.d. case.
                \begin{lemma}[Uniform concentration on a single clean block]
                	\label{lem:single_block_conc}
                	There exists a universal constant $C>0$ such that, conditional on $\mathcal{B}_j^{clean}$, with probability at least $1-0.1$,
                	\[
                	\sup_{h \in \mathcal{H},\,\theta \in \Theta}
                	|\xi_j(h,\theta)|
                	\le
                	C\left(
                	\sqrt{\frac{M_\phi B^{1-\lambda} d \log m}{m}}
                	+
                	\frac{B d \log m}{m}
                	\right)
                	=: \mathcal{E}_{stat}(B).
                	\]
                \end{lemma}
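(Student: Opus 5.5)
We prove Lemma~\ref{lem:single_block_conc} with Talagrand--Bousquet concentration for the truncated class $\calF_B=\{\phi^B(\cdot;h,\theta): h\in\calH,\theta\in\Theta\}$, combined with a localized entropy bound for its expected supremum. By the Hoeffding reduction, we may treat the clean points in block $j$ as i.i.d.\ draws from $P$. The number of clean points is at most $m$; if it is smaller, the factor $1/m$ only shrinks the sum, so it suffices to bound $\sup_{f\in\calF_B}|(P_{m}-P)f|$ for an i.i.d.\ sample of size $m$. The centering uses $R^B(h,\theta)=\E[\phi^B]$, and this matches the expectation of each clean summand exactly.

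\textbf{Step 1: envelope and variance.} Every $f\in\calF_B$ satisfies $0\le f\le B$. The variance is controlled as in the proof of Theorem~\ref{thm:inf_bound}: since $x^2\le B^{1-\lambda}x^{1+\lambda}$ for $x\in[0,B]$ and $\phi^B\le\phi$, we get
\[
\sup_{f\in\calF_B}\E f^2\le B^{1-\lambda}\sup_{h,\theta}\E|\phi|^{1+\lambda}\le M_\phi B^{1-\lambda}=:\sigma^2,
\]
where the last inequality is Lemma~\ref{lem:phi_moment}.

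\textbf{Step 2: expected supremum.} We symmetrize and then bound the Rademacher average by Dudley's entropy integral, localized at radius $\sigma$. Assumption~\ref{ass:complexity} gives $\log\mathcal N(\calF_B,L_2(Q),u)\le d\log(C_0B/u)$ uniformly over $Q$. A standard result for VC-type classes (Gin\'e--Guillou / Talagrand) then gives
\[
\E\sup_{f}|(P_m-P)f|\lesssim \sqrt{\frac{\sigma^2 d\log(C_0B/\sigma)}{m}}+\frac{Bd\log(C_0B/\sigma)}{m}.
\]
For the $B$ we will choose later, the ratio $B/\sigma$ is polynomial in $m$, so $\log(C_0B/\sigma)\lesssim\log m$. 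We expect this to be the main technical point, because $\sigma$ is much smaller than the envelope $B$. A crude Dudley bound with envelope $B$ would lose the variance-sensitive term. We would therefore invoke the localized inequality directly rather than re-derive it. If $B/\sigma$ is not polynomially bounded, we absorb the extra logarithm into the constant $C$, since the lemma only needs the stated form up to universal constants.

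\textbf{Step 3: high probability at constant confidence.} We apply Bousquet's inequality with envelope $B$, variance $\sigma^2$ and failure probability $0.1$:
\[
\sup_f|(P_m-P)f|\le 2\E\sup_f|(P_m-P)f|+\sqrt{\frac{2\sigma^2\log 10}{m}}+\frac{cB\log 10}{m}.
\]
The two deviation terms are dominated by the corresponding terms from Step 2, because $d\log m\ge 1$. Collecting everything gives $\sup|\xi_j|\le C\big(\sqrt{M_\phi B^{1-\lambda}d\log m/m}+Bd\log m/m\big)=\mathcal E_{stat}(B)$ with probability at least $0.9$. This constant confidence level is exactly what the subsequent median-of-blocks aggregation requires.
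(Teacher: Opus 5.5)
Your argument follows the same route as the paper's proof, with more detail. The paper's proof is a one-sentence appeal to ``Bousquet's version of Talagrand's inequality combined with standard entropy integral bounds.'' Your version supplies the pieces explicitly: the envelope $B$, the second-moment bound $M_\phi B^{1-\lambda}$ (Lemmas~\ref{lem:phi_moment} and~\ref{lem:bias_var}), the covering bound of Assumption~\ref{ass:complexity} fed into a localized Gin\'e--Guillou type bound on the expected supremum, Bousquet at confidence $0.9$, and the Hoeffding reduction.

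One step needs repair. The ratio $\log(C_0B/\sigma)/\log m$ can be unbounded, and an unbounded ratio cannot be absorbed into a universal constant. So your fallback for general $B$ fails as written, and the lemma is stated for every $B>0$, not only the $B$ chosen later. The standard fix is to run the localized bound with
\[
\bar\sigma:=\min\bigl\{B,\max\{\sigma,B\sqrt{d/m}\}\bigr\}.
\]
This choice has three properties:
\begin{itemize}
\item It still dominates $\sup_f\|f\|_{L_2(P)}$, because $0\le f\le B$.
\item It gives $\log(C_0B/\bar\sigma)\lesssim\log m$.
\item It costs only $\sqrt{\bar\sigma^2 d\log m/m}\le\sqrt{\sigma^2 d\log m/m}+Bd\sqrt{\log m}/m$, and the extra term is absorbed into the second term of $\mathcal E_{stat}(B)$.
\end{itemize}

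A related imprecision concerns the case $m':=|\mathcal B_j^{clean}|<m$. Comparing with a size-$m$ sample is not literally valid. Instead, apply the expectation bound and Bousquet's inequality to the $m'$-term sum itself, which gives $\sqrt{m'\bar\sigma^2 d\log m}+Bd\log m$ up to constants, and then divide by $m\ge m'$.
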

                
                \begin{proof}
                	The function class $\mathcal{F}_B$ is uniformly bounded by $B$. Moreover, by Lemma~\ref{lem:bias_var},
                	\[
                	\sup_{f \in \mathcal{F}_B} \Var(f(Z)) \le M_\phi B^{1-\lambda}.
                	\]
                	By Assumption~\ref{ass:complexity}, the $L_2(Q)$ covering numbers of $\mathcal{F}_B$ satisfy
                	\[
                	\log \mathcal{N}(\mathcal{F}_B, \|\cdot\|_{L_2(Q)}, u)
                	\le d \log(C_0 B/u).
                	\]
                	Therefore, by Bousquet’s version of Talagrand’s inequality combined with standard entropy integral bounds,for i.i.d. samples we obtain
                	\[
                	\sup_{f \in \mathcal{F}_B}
                	\left|
                	\frac{1}{m}\sum_{i=1}^m (f(Z_i)-\E f)
                	\right|
                	\le
                	C\left(
                	\sqrt{\frac{M_\phi B^{1-\lambda} d \log m}{m}}
                	+
                	\frac{B d \log m}{m}
                	\right)
                	\]
                	with probability at least $1-0.1$. By Hoeffding’s reduction principle, the same bound holds for sampling without replacement from the clean data.
                \end{proof}
                
                \medskip
                
                \noindent \textbf{(b) Control of the corruption level.}
                
                Since the adversary is oblivious and the learner shuffles the data uniformly at random, the number of corrupted points in block $j$ satisfies
                \[
                N_j \sim \mathrm{Hypergeo}(n, \epsilon n, m).
                \]
                
                \begin{lemma}[Outlier proportion in a block]
                	\label{lem:single_block_corruption}
                	Assume $\epsilon \le 1/2 - \gamma$. Then for all sufficiently large $m$,
                	\[
                	\Prob\left(
                	\frac{N_j}{m} \le \epsilon + \frac{\gamma}{4}
                	\right) \ge 1-0.1.
                	\]
                \end{lemma}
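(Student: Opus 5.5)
The count $N_j$ of corrupted points in a fixed block of size $m$ is hypergeometric: $N_j \sim \mathrm{Hypergeo}(n,\epsilon n, m)$ with mean $\E[N_j] = \epsilon m$. The plan is to show that $N_j/m$ is concentrated around $\epsilon$ at scale $O(1/\sqrt m)$, which is below the slack $\gamma/4$ once $m$ is large.

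\emph{Step 1 (reduce to the binomial case).} I would use Hoeffding's classical comparison theorem: for sampling without replacement, $\E f(\sum_i Y_i) \le \E f(\sum_i X_i)$ for every convex continuous $f$, where the $Y_i$ are drawn without replacement and the $X_i$ with replacement from the same finite population. Applying this with $f(x) = e^{sx}$ shows that the moment generating function of $N_j$ is dominated by that of a $\mathrm{Bin}(m,\epsilon)$ variable. This is the same ``Hoeffding reduction principle'' already invoked for the clean part in Lemma~\ref{lem:single_block_conc}.

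\emph{Step 2 (Chernoff bound).} The Chernoff argument then gives Hoeffding's inequality for $N_j$:
\[
\Prob\left(\frac{N_j}{m} - \epsilon > t\right) \le \exp(-2mt^2).
\]
Taking $t = \gamma/4$ yields
\[
\Prob\left(\frac{N_j}{m} > \epsilon + \frac{\gamma}{4}\right) \le \exp\left(-\frac{m\gamma^2}{8}\right),
\]
which is at most $0.1$ as soon as $m \ge 8\log(10)/\gamma^2$. This threshold is exactly what ``sufficiently large $m$'' means here.

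\emph{Step 3 (link to the sample-size condition).} It remains to check that this requirement on $m$ is compatible with the block-size choice in Theorem~\ref{thm:main}. There, $m \asymp n/K$ with $K$ of order $\log(1/\delta)$, together with the assumption $n \ge C d\log n/\gamma^2$. Choosing the constant $C$ large enough therefore guarantees $m \gtrsim \gamma^{-2}$.

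\emph{Possible obstacles.} If $\epsilon n$ is not an integer, I would replace it by $\lfloor \epsilon n\rfloor$; this only lowers the mean and so only helps. The assumption $\epsilon \le 1/2-\gamma$ is not needed for this lemma itself. Its role comes later: it ensures $\epsilon + \gamma/4 < 1/2$, so that in the median step more than half of the blocks are mostly clean.

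The only step requiring care is the reduction from hypergeometric to binomial. If one wants to avoid citing Hoeffding's theorem, a fallback is Chebyshev's inequality. Since $\Var(N_j) \le m\epsilon(1-\epsilon) \le m/4$,
\[
\Prob\left(\frac{N_j}{m} > \epsilon + \frac{\gamma}{4}\right) \le \frac{4}{m\gamma^2} \le 0.1 \quad \text{once } m \ge \frac{40}{\gamma^2},
\]
which suffices for the constant-probability claim.
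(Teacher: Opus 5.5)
Your proof is correct and is essentially the paper's argument. Chvátal's hypergeometric tail bound, which the paper cites, is exactly the $\exp(-2mt^2)$ bound you obtain through Hoeffding's comparison with $\mathrm{Bin}(m,\epsilon)$; with $t=\gamma/4$ both routes give $m\gtrsim\gamma^{-2}$, and your Chebyshev fallback also works.

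One aside: Step 3 is not needed for the lemma, and it does not hold as written. The paper takes $K\ge 8\gamma^{-2}\log(4/\delta)$, not $K$ of order $\log(1/\delta)$. Then $m\asymp n/K$, and the hypothesis $n\ge Cd\log n/\gamma^2$ does not by itself force $m\gtrsim\gamma^{-2}$.
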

                
                \begin{proof}
                	By Chvátal’s hypergeometric tail bound,
                	\[
                	\Prob\left(N_j \ge (\epsilon + \gamma/4)m\right)
                	\le \exp\left(-2m(\gamma/4)^2\right).
                	\]
                	For $m \ge C/\gamma^2$, the right-hand side is at most $0.1$.
                \end{proof}
                
                \medskip
                
                \noindent \textbf{(c) Corruption bias bound.}
                
                On the event of Lemma~\ref{lem:single_block_corruption}, since $0 \le \phi^B \le B$,
                \[
                \sup_{h,\theta} |\Delta_j(h,\theta)|
                \le \frac{N_j}{m} B
                \le \left(\epsilon + \frac{\gamma}{4}\right)B.
                \]
                
                \medskip
                
                \noindent \textbf{(d) Conclusion for a single block.}
                
                Combining Lemmas~\ref{lem:single_block_conc} and \ref{lem:single_block_corruption}, with probability at least $0.8$ a block satisfies simultaneously:
                \[
                \sup_{h,\theta} |\widehat{\mu}_j(h,\theta) - R^B(h,\theta)|
                \le
                \mathcal{E}_{stat}(B) + \left(\epsilon + \frac{\gamma}{4}\right)B.
                \]
                Such a block will be called \emph{Good}.

                \paragraph{Robust Aggregation via the Median.}
                
                Recall from the previous step that for each block $j$ we defined the events
                \[
                \mathcal{E}_{j}^{stat} = \Big\{ \sup_{h,\theta} |\xi_j(h,\theta)| \le \mathcal{E}_{stat}(B) \Big\},
                \quad
                \mathcal{E}_{j}^{corr} = \Big\{ \frac{N_j}{m} \le \epsilon + \frac{\gamma}{4} \Big\}.
                \]
                A block $j$ is called \emph{Good} if $\mathcal{E}_{j}^{stat} \cap \mathcal{E}_{j}^{corr}$ holds.
                
                From Lemmas~\ref{lem:single_block_conc} and \ref{lem:single_block_corruption}, by the union bound,
                \[
                \Prob(\text{block } j \text{ is Good}) \ge 1 - (0.1 + 0.1) = 0.8.
                \]
                
                Define the indicator variables
                \[
                I_j = \ind_{\{\text{block } j \text{ is Good}\}}, \quad j=1,\dots,K.
                \]
                
                \begin{lemma}[Majority of blocks are good]
                	\label{lem:majority_good}
                	Assume $\epsilon \le 1/2 - \gamma$. There exists a universal constant $c>0$ such that if
                	\[
                	K \ge \frac{8}{\gamma^2} \log\!\left(\frac{4}{\delta}\right),
                	\]
                	then with probability at least $1-\delta$,
                	\[
                	\sum_{j=1}^K I_j > \frac{K}{2}.
                	\]
                \end{lemma}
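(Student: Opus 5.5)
The plan is to reduce the claim to a binomial-type lower-tail bound for $\sum_j I_j$, where each block is Good with probability at least $0.8$, comfortably above $1/2$. The only real difficulty is that the indicators $I_j$ are not literally independent. The random permutation couples the blocks: the outlier counts $(N_1,\dots,N_K)$ are a multivariate hypergeometric vector, and they sum to $\epsilon n$. So I will not apply Hoeffding to $\sum_j I_j$ directly. Instead I split the failures into statistical failures and corruption failures and control each separately:
\[
\sum_{j=1}^K I_j \;\ge\; K - \sum_{j=1}^K \ind\{(\mathcal E_j^{stat})^c\} - \sum_{j=1}^K \ind\{(\mathcal E_j^{corr})^c\}.
\]
It then suffices to show that, with probability at least $1-\delta/2$ each, both failure counts are strictly less than $K/4$.

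\emph{Statistical failures.} Condition on the permutation, that is, on the assignment of clean indices to blocks. The sets $\mathcal B_j^{clean}$ are then disjoint sets of clean indices. The clean points are i.i.d. and the adversary is oblivious, so the events $\mathcal E_j^{stat}$ depend on disjoint collections of independent variables and are conditionally independent. By Lemma~\ref{lem:single_block_conc} each fails with conditional probability at most $0.1$. Hoeffding's inequality then gives
\[
\Prob\Big(\sum_j \ind\{(\mathcal E_j^{stat})^c\} \ge K/4\Big) \le \exp(-2K(0.15)^2).
\]
Since this holds conditionally on every permutation, it also holds unconditionally.

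\emph{Corruption failures.} Here I use that the counts of a multivariate hypergeometric vector are negatively associated (Joag-Dev and Proschan). The indicators $\ind\{N_j > (\epsilon+\gamma/4)m\}$ are nondecreasing functions of disjoint coordinates, so they are also negatively associated. Hoeffding/Chernoff upper-tail bounds for sums of negatively associated indicators hold exactly as in the independent case. Since Lemma~\ref{lem:single_block_corruption} gives each indicator mean at most $0.1$, the same bound $\exp(-2K(0.15)^2)$ follows for $\sum_j \ind\{(\mathcal E_j^{corr})^c\}\ge K/4$.

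\emph{Choice of $K$.} I need $\exp(-0.045K)\le\delta/2$, i.e. $K\ge 22.3\log(2/\delta)$. Because $\gamma<1/2$, we have $8/\gamma^2>32$, so the hypothesis $K\ge \frac{8}{\gamma^2}\log(4/\delta)$ implies this. A union bound then gives $\sum_j I_j > K - K/4 - K/4 = K/2$ with probability at least $1-\delta$. Note that $\epsilon\le 1/2-\gamma$ enters only through Lemma~\ref{lem:single_block_corruption}.

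\emph{Main obstacle.} The delicate step is the dependence among blocks. Invoking negative association is the cleanest route. If one prefers to avoid it, a fallback is to compare sampling without replacement to i.i.d. assignment of corruption labels via Hoeffding's reduction principle, as was already done in Lemma~\ref{lem:single_block_conc}; the resulting Chernoff bound is the same up to constants.
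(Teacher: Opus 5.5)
Your proof is correct, but it takes a different route from the paper's. The paper argues in one step. It asserts that the indicators $I_j$ themselves form a negatively associated family, as ``functions of a random partition of a finite population''. It then applies a single Hoeffding bound with $\E[I_j]\ge 0.8$, giving $\Prob(\sum_j I_j\le K/2)\le \exp(-2K(0.3)^2)\le \exp(-\gamma^2K/2)$.

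You instead split the failures by source:
\begin{itemize}
\item For the statistical failures, you get exact conditional independence of the events $\mathcal E_j^{stat}$ given the permutation. This uses the oblivious adversary and the disjointness of the clean index sets $\mathcal B_j^{clean}$.
\item For the corruption failures, you use negative association only where it is standard: for monotone functions of the multivariate hypergeometric counts $(N_1,\dots,N_K)$, via Joag-Dev--Proschan and Dubhashi--Ranjan.
\item A union bound combines the two.
\end{itemize}

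The paper's route is shorter and has a better exponent. However, its key claim is asserted rather than derived. The indicator $I_j=\ind\{\mathcal E_j^{stat}\}\,\ind\{\mathcal E_j^{corr}\}$ mixes sampling randomness with partition randomness. The conditional probability of $\mathcal E_j^{stat}$ given the partition may depend on $N_j$, and not necessarily monotonically. So $I_j$ need not be a monotone function of NA variables on disjoint index sets, and negative association of the $I_j$ does not follow from the standard closure properties. Your decomposition isolates exactly the two dependence structures that are provable. The cost is weaker constants: $\exp(-0.045K)$ per failure type, requiring $K\gtrsim 22.3\log(2/\delta)$. The hypothesis $K\ge 8\gamma^{-2}\log(4/\delta)$ still covers this, since $\gamma<1/2$ gives $8/\gamma^2>32$.

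One caveat concerns your stated fallback, not the main argument. Hoeffding's reduction principle compares $\E f(\sum_i X_i)$ for a single without-replacement sum and convex $f$. The number of bad blocks is not such a functional, so that route would need a different comparison argument. Your negative-association argument does not rely on it.
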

                
                \begin{proof}
                	The indicators $I_j$ are functions of a random partition of a finite population.  They form a negatively associated family. For negatively associated Bernoulli random variables, Chernoff-Hoeffding inequalities hold in the same form as for independent variables (see Dubhashi and Ranjan, 1998). Since $\E[I_j] \ge 0.8$, Hoeffding’s inequality implies
                	\[
                	\Prob\left( \sum_{j=1}^K I_j \le \frac{K}{2} \right)
                	\le \exp\left( -2K(0.8 - 0.5)^2 \right)
                	\le \exp\left( -\frac{\gamma^2 K}{2} \right).
                	\]
                	Choosing $K \ge \frac{8}{\gamma^2} \log(4/\delta)$ ensures the right-hand side is at most $\delta$.
                \end{proof}
                
                \medskip
                
                On the event of Lemma~\ref{lem:majority_good}, strictly more than half the blocks are Good. For any Good block $j$, we have simultaneously:
                \[
                \sup_{h,\theta} |\xi_j(h,\theta)| \le \mathcal{E}_{stat}(B),
                \quad
                \sup_{h,\theta} |\Delta_j(h,\theta)| \le \left(\epsilon + \frac{\gamma}{4}\right)B.
                \]
                Hence,
                \[
                \sup_{h,\theta} |\widehat{\mu}_j(h,\theta) - R^B(h,\theta)|
                \le
                \mathcal{E}_{stat}(B) + \left(\epsilon + \frac{\gamma}{4}\right)B.
                \]
                
                Since the median of $K$ numbers lies between the minimum and maximum of any subset of more than $K/2$ elements, it follows deterministically that
                \[
                \sup_{h,\theta}
                |\widehat{R}_\alpha(h,\theta) - R^B(h,\theta)|
                \le
                \max_{j:\, I_j=1}
                |\widehat{\mu}_j(h,\theta) - R^B(h,\theta)|
                \le
                \mathcal{E}_{stat}(B) + \left(\epsilon + \frac{\gamma}{4}\right)B.
                \]
                
                \medskip
                
                \paragraph{Final Error Bound and Balancing.}
                
                Combining previous step with the truncation bias bound from Lemma~\ref{lem:bias_var}, we obtain that with probability at least $1-\delta$,
                \[
                \sup_{h,\theta}
                |\widehat{R}_\alpha(h,\theta) - R(h,\theta)|
                \le
                M_\phi B^{-\lambda}
                + \mathcal{E}_{stat}(B)
                + \left(\epsilon + \frac{\gamma}{4}\right)B.
                \]
                
                Recalling the definition
                \[
                \mathcal{E}_{stat}(B)
                = C\left(
                \sqrt{\frac{M_\phi B^{1-\lambda} d \log m}{m}}
                +
                \frac{B d \log m}{m}
                \right),
                \]
                and using $m \asymp n/K$ with $K = O(\gamma^{-2}\log(1/\delta))$, we may rewrite the bound (absorbing constants) as:
                \[
                \text{Err}(B)
                \;\lesssim\;
                M_\phi B^{-\lambda}
                +
                \sqrt{\frac{M_\phi B^{1-\lambda} d \log n}{n}}
                +
                \frac{B d \log n}{n}
                +
                \epsilon B.
                \]
                
                Assuming $n \gtrsim d\log n$, the linear term $\frac{Bd\log n}{n}$ is of smaller order than the variance term under the optimal choice of $B$ and may be absorbed. We therefore balance the remaining three dominant terms.
                
                \medskip
                
                \noindent \textit{(i) Statistical regime.}
                Balancing bias and variance,
                \[
                M_\phi B^{-\lambda}
                \asymp
                \sqrt{\frac{M_\phi B^{1-\lambda} d}{n}}
                \quad \Longrightarrow \quad
                B_{stat} \asymp \left( \frac{M_\phi n}{d} \right)^{\frac{1}{1+\lambda}}.
                \]
                Substituting yields:
                \[
                \text{Err}
                \;\lesssim\;
                \left( \frac{M_\phi d}{n} \right)^{\frac{\lambda}{1+\lambda}}.
                \]
                
                \medskip
                
               \noindent  \textit{(ii) Adversarial regime.}
                Balancing bias and corruption,
                \[
                M_\phi B^{-\lambda}
                \asymp
                \epsilon B
                \quad \Longrightarrow \quad
                B_{adv} \asymp \left( \frac{M_\phi}{\epsilon} \right)^{\frac{1}{1+\lambda}}.
                \]
                Substituting yields:
                \[
                \text{Err}
                \;\lesssim\;
                M_\phi^{\frac{1}{1+\lambda}} \epsilon^{\frac{\lambda}{1+\lambda}}.
                \]
                
                \medskip
                
                \noindent Taking $B = \min(B_{stat}, B_{adv})$ and recalling that
                \[
                \sup_{h} |\widehat{R}_\alpha(h) - R_\alpha(h)|
                \le
                \sup_{h,\theta} |\widehat{R}_\alpha(h,\theta) - R(h,\theta)|,
                \]
                we conclude the proof of Theorem~\ref{thm:main}.
                \end{proof}
\section{Decision Robustness} \label{ap:dec_robust}

\subsection{Tail-scarcity instability for CVaR-ERM under finite $p$-moment}
\label{subsec:C5_tail_scarcity_correct}

Throughout, $\alpha\in(0,1)$ is fixed and
\[
\mathrm{CVaR}_\alpha(X;Q)=\inf_{\theta\in\R}\Big\{\theta+\frac{1}{\alpha}\E_Q[(X-\theta)_+]\Big\}.
\]

\paragraph{Empirical CVaR objective.}
Given i.i.d.\ samples $Z_1,\dots,Z_n\sim P$, let $P_n:=\frac1n\sum_{i=1}^n\delta_{Z_i}$.
For $h\in\mathcal{H}$ define the empirical RU functional and empirical CVaR
\begin{align}
	\widehat\Phi_n(h,\theta)
	&:=\theta+\frac{1}{\alpha}\E_{P_n}\big[(\ell(h,Z)-\theta)_+\big]
	=\theta+\frac{1}{\alpha n}\sum_{i=1}^n(\ell(h,Z_i)-\theta)_+,\qquad \theta\in\R, \label{eq:C5_hatPhi}\\
	\widehat R_n(h)
	&:=\inf_{\theta\in\mathbb{R}}\widehat\Phi_n(h,\theta)
	=\mathrm{CVaR}_\alpha(\ell(h,Z);P_n). \label{eq:C5_hatF}
\end{align}
The population objective is $R_P(h):=\mathrm{CVaR}_\alpha(\ell(h,Z);P)$.

\medskip

\begin{lemma}[RU minimizer at $\theta=0$; scaled-mean identity]
	\label{lem:C5_RU_zero_correct}
	Let $X\ge 0$ be integrable and suppose $Q(X=0)\ge 1-\alpha$. Then $\theta^\star=0$ minimizes
	$\theta\mapsto \theta+\frac{1}{\alpha}\E_Q[(X-\theta)_+]$ and
	\[
	\mathrm{CVaR}_\alpha(X;Q)=\frac{1}{\alpha}\E_Q[X].
	\]
	In particular, if $x_1,\dots,x_n\ge 0$ satisfy $\#\{i:x_i=0\}\ge (1-\alpha)n$, then the empirical CVaR satisfies
	\[
	\mathrm{CVaR}_\alpha(x_1,\dots,x_n)=\frac{1}{\alpha n}\sum_{i=1}^n x_i.
	\]
\end{lemma}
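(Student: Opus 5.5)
We prove the lemma by checking first-order optimality of $\theta=0$ for the convex function $g(\theta):=\theta+\frac{1}{\alpha}\E_Q[(X-\theta)_+]$, and then evaluating $g(0)$. Since $X\ge 0$ is integrable, $g$ is finite, convex and $1$-Lipschitz-type in $\theta$, so $0$ is a minimizer as soon as $0\in\partial g(0)$. By the subgradient characterization \eqref{eq:Tstar_char}, this is equivalent to
\[
Q(X>0)\le \alpha \le Q(X\ge 0).
\]
The right inequality is immediate, because $X\ge 0$ gives $Q(X\ge 0)=1\ge\alpha$. The left inequality follows from the hypothesis, since $Q(X>0)=1-Q(X=0)\le \alpha$.

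To avoid relying on the subdifferential formula, we can instead compare directly using the hinge identity \eqref{eq:B3_hinge_diff_integral}.

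For $\theta>0$ we have
\[
g(\theta)-g(0)=\theta-\frac{1}{\alpha}\int_0^\theta Q(X>s)\,ds\ \ge\ \theta-\frac{\theta}{\alpha}Q(X>0)\ \ge 0.
\]
Here we used that $s\mapsto Q(X>s)$ is nonincreasing.

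For $\theta<0$, every $X\ge 0>\theta$, so $(X-\theta)_+=X-\theta$. Hence
\[
g(\theta)=\theta+\frac{1}{\alpha}(\E_Q X-\theta)=g(0)+\theta\Big(1-\frac{1}{\alpha}\Big)\ \ge g(0),
\]
because $\theta<0$ and $1-1/\alpha<0$.

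Thus $\theta^\star=0$ is a global minimizer. Evaluating there, and using $(X)_+=X$, gives $\mathrm{CVaR}_\alpha(X;Q)=g(0)=\frac{1}{\alpha}\E_Q[X]$.

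The empirical statement is the special case $Q=P_n=\frac1n\sum_i\delta_{x_i}$. The counting condition $\#\{i:x_i=0\}\ge(1-\alpha)n$ is exactly $P_n(X=0)\ge 1-\alpha$, and integrability is trivial for a finite sample. Applying the first part then yields $\frac{1}{\alpha n}\sum_i x_i$.

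There is no real obstacle here. The only point needing care is the case $\theta>0$, where one must bound $\int_0^\theta Q(X>s)\,ds$ by $\theta\,Q(X>0)$ using monotonicity of the tail function, rather than assuming continuity.
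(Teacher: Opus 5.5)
Your proof is correct and follows essentially the same route as the paper. Both arguments split into two cases. For $\theta<0$ the objective equals $\frac{1}{\alpha}\mathbb{E}_Q[X]+\theta(1-\frac{1}{\alpha})$, so it exceeds its value at $0$. For $\theta\ge 0$ the bound $Q(X>s)\le Q(X>0)\le\alpha$ makes the objective nondecreasing. The paper phrases this second case through the right derivative $1-\frac{1}{\alpha}Q(X>\theta)\ge 0$, whereas you integrate it to get $g(\theta)-g(0)\ge 0$ directly.

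One small caution: the hinge-difference identity you cite is printed in the paper with its limits reversed. The formula you actually use, $g(\theta)-g(0)=\theta-\frac{1}{\alpha}\int_0^\theta Q(X>s)\,ds$, has the correct sign.
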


\begin{proof}
	Define $\phi(\theta):=\theta+\frac{1}{\alpha}\E_Q[(X-\theta)_+]$.
	For $\theta<0$, $\phi'(\theta)=1-\frac{1}{\alpha}Q(X>\theta)=1-\frac{1}{\alpha}<0$, so no minimizer lies below $0$.
	For $\theta\ge 0$, the right derivative equals $\phi'_+(\theta)=1-\frac{1}{\alpha}Q(X>\theta)\ge 1-\frac{1}{\alpha}Q(X>0)\ge 0$
	because $Q(X>0)\le \alpha$. Hence $\phi$ is minimized at $\theta=0$ and $\phi(0)=\frac{1}{\alpha}\E_Q[X]$.
	The empirical statement is identical with $Q=P_n$.
\end{proof}

\medskip

\begin{theorem*}[Restatement of Theorem~\ref{thm:C5_compact} (Detailed)]
	\label{thm:C5_discontinuity_sharp_correct}
	Fix $\alpha\in(0,1)$ and $1<p<2$. There exist a fixed distribution $P$ on $\cZ=\mathbb{R}_+$ and constants
	$\varepsilon\in(0,\alpha/4)$, $\gamma>0$, and $C\in(0,1)$ such that $C>\alpha\gamma$ with the following property.
	
	For every $n$ sufficiently large, define $\mathcal{H}=\{h_A,h_B\}$ and define the (sample-size dependent) losses
	\begin{equation}
		\ell_n(h_A,z):=z,
		\qquad
		\ell_n(h_B,z):=
		\begin{cases}
			0, & z=0,\\
			z+\gamma - Cn\,\mathbf 1\{z\in(n,2n]\}, & z>0.
		\end{cases}
		\label{eq:C5_losses_n}
	\end{equation}
	Let $R_{P}^{(n)}(h):=\mathrm{CVaR}_\alpha(\ell_n(h,Z);P)$ and
	$\widehat R_n(h):=\mathrm{CVaR}_\alpha(\ell_n(h,Z);P_n)$.
	
	Then the following hold:
	\begin{enumerate}
		\item \textbf{($p$-moment only).} For each fixed $n$ and $h\in\mathcal{H}$,
		\[
		\E_P|\ell_n(h,Z)|^p<\infty,
		\qquad
		\E_P|\ell_n(h,Z)|^q=\infty\quad \forall q>p.
		\]
		\item \textbf{(Strict population optimality, uniformly for large $n$).}
		There exist $\gamma_0>0$ and $n_0$ such that for all $n\ge n_0$,
		\[
		R_P^{(n)}(h_B)-R_P^{(n)}(h_A)\ \ge\ \gamma_0,
		\qquad \text{so } \ \cS(P)=\{h_A\}.
		\]
		\item \textbf{(One-point flip with sharp probability).}
		There exist constants $c>0$ and $n_0$ such that for all $n\ge n_0$,
		\begin{equation}\small
			\Pr_{D\sim P^{\otimes n}}\Big(
			\exists\,D'\text{ differing from }D\text{ in exactly one sample such that }
			\arg\min_{\mathcal{H}}\widehat R_n(\cdot;D)=\{h_B\},\ 
			\arg\min_{\mathcal{H}}\widehat R_n(\cdot;D')=\{_A\}
			\Big)
			\ \ge\
			c\,\frac{n^{1-p}}{(\log n)^2}.
			\label{eq:C5_flip_prob}
		\end{equation}
		\item \textbf{(Sharpness).}
		Under $\sup_{h}\E_P|\ell(h,Z)|^p<\infty$ and a fixed population margin, any such flip probability is
		$\mathcal O(n^{1-p})$ up to logarithmic factors.
	\end{enumerate}
\end{theorem*}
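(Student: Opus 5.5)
The plan is to prove the four items in order, with one explicit heavy-tailed $P$. The key device is Lemma~\ref{lem:C5_RU_zero_correct}: a large atom at $0$ makes every CVaR, population and empirical, equal to a scaled mean. The comparison of $h_A$ and $h_B$ then becomes a comparison of sums, and one rare sample in $(n,2n]$ decides it.

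\textbf{Construction and population-level items.} Take
\[
P=(1-\varepsilon)\delta_0+\varepsilon\,\mu, \qquad \varepsilon\in(0,\alpha/4),
\]
where $\mu$ is supported on $[e,\infty)$ with tail $\mu(Z>t)\asymp t^{-p}(\log t)^{-2}$. Fix $\gamma>0$ and $C\in(\alpha\gamma,1)$.
\begin{itemize}
\item \emph{Moments (item 1).} The tail integral $\int t^{p-1}\cdot t^{-p}(\log t)^{-2}\,dt$ converges, so $\E Z^p<\infty$. For $q>p$ the integrand $t^{q-p-1}(\log t)^{-2}$ is not integrable, so $\E Z^q=\infty$. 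The loss $\ell_n(h_B,\cdot)$ differs from $z$ by at most $\gamma+Cn$, and only on a bounded set or by the constant $\gamma$. Hence both losses have the same moment profile for each fixed $n$.
\item \emph{Strict population optimality (item 2).} Both losses vanish on $\{Z=0\}$, and $P(Z=0)=1-\varepsilon\ge 1-\alpha$. Lemma~\ref{lem:C5_RU_zero_correct} then gives $R^{(n)}_P(h)=\alpha^{-1}\E_P\ell_n(h,Z)$, so
\[
R^{(n)}_P(h_B)-R^{(n)}_P(h_A)=\alpha^{-1}\bigl(\gamma\varepsilon-Cn\,q_n\bigr), \qquad q_n:=P(Z\in(n,2n])\asymp n^{-p}(\log n)^{-2}.
\]
Since $n\,q_n\to0$, taking $\gamma_0=\gamma\varepsilon/(2\alpha)$ works for all large $n$.
\end{itemize}

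\textbf{Empirical flip event (item 3).} Let $N_+=\#\{i:Z_i>0\}$ and $K=\#\{i:Z_i\in(n,2n]\}$.
\begin{itemize}
\item On the event $G=\{N_+\le 2\varepsilon n\}$, the fraction of zeros is at least $1-2\varepsilon\ge 1-\alpha$. A Chernoff bound gives $\Pr(G^c)\le e^{-cn}$.
\item On $G$, Lemma~\ref{lem:C5_RU_zero_correct} applies to the empirical measure. The $z$-parts cancel, so
\[
\widehat R_n(h_B)-\widehat R_n(h_A)=\frac{1}{\alpha n}\bigl(\gamma N_+-CnK\bigr).
\]
\item On $G\cap\{K=1\}$: we have $\gamma N_+\le 2\gamma\varepsilon n<\alpha\gamma n<Cn$, because $2\varepsilon<\alpha$ and $C>\alpha\gamma$. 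So $h_B$ is the unique empirical minimizer.
\item Let $D'$ replace the unique sample in $(n,2n]$ by any value $z\in(0,n]$ outside $(n,2n]$. Then $N_+$ is unchanged, $G$ still holds, $K=0$, and the difference equals $\gamma N_+/(\alpha n)\ge\gamma/(\alpha n)>0$. So $h_A$ is the unique minimizer. (Replacing by $z=0$ would also flip the decision whenever $N_+\ge 2$.)
\item The flip event therefore contains $G\cap\{K=1\}$, and
\[
\Pr(K=1)=nq_n(1-q_n)^{n-1}\gtrsim n^{1-p}(\log n)^{-2}.
\]
Subtracting $\Pr(G^c)\le e^{-cn}$ gives the claimed bound $c\,n^{1-p}/(\log n)^2$.
\end{itemize}

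\textbf{Sharpness (item 4).} Replacing one sample $z_i$ by $z_i'$ changes each $\widehat R_n(h)$ by at most $(\ell(h,z_i)+\ell(h,z_i'))/(\alpha n)$, by the $\alpha^{-1}$-Lipschitz property of CVaR in $L_1(P_n)$.
\begin{itemize}
\item A flip requires either that the empirical gap is already within $O(n^{-1}\max_i\ell)$ of $0$, or that some clean sample has loss $\gtrsim \alpha\gamma_0 n$.
\item The first case needs a deviation of order $\gamma_0$ in the empirical gap. Theorem~\ref{thm:fixed and finite}, applied to the finite class, bounds its probability by $O(n^{1-p})$ up to logs.
\item The second case has probability at most $n\sup_h P(\ell(h,Z)\gtrsim n)\le n\cdot M n^{-p}$ by Markov.
\end{itemize}

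\textbf{Main obstacle.} The hard step is item 4. With adversarial replacement values $z_i'$, the clean argument above only controls the contribution of the removed point. I would first handle the replaced point by assuming the admissible $D'$ obey the same moment bound, or by counting only clean-sample influence, and then refine with a truncation argument at level $n$.
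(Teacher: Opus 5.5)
Items 1--3 are correct and take essentially the paper's route. Lemma~\ref{lem:C5_RU_zero_correct} turns every CVaR into a scaled mean, a single sample in $(n,2n]$ decides the empirical comparison, and replacing that sample reverses the decision. Your variants are all valid: positive mass $\varepsilon$ instead of $\alpha-\varepsilon$, the event $\{N_+\le 2\varepsilon n\}$, and replacing the bin point by a positive value so that $N_+$ is unchanged. They even let you drop the paper's superfluous conditions $\max_i Z_i\le 2n$ and $N_+\ge 2$. Two small repairs are needed. First, take $\mu(Z>t)=t^{-p}(\log t)^{-2}$ exactly, since a two-sided $\asymp$ does not by itself give $\mu((n,2n])\gtrsim n^{-p}(\log n)^{-2}$. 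Second, record that $C<1$ is what makes $\ell_n(h_B,\cdot)\ge 0$, which the lemma requires.

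The genuine gap is item 4, and the missing idea is the \emph{direction} of the flip. The event in item 3, which is what ``any such flip'' refers to, requires $\arg\min\widehat R_n(\cdot;D)=\{h_B\}$. That is, ERM on the original sample must already select the population-suboptimal hypothesis. So the flip probability is at most $\Pr(\widehat h(D)\neq h^\star)$ however $D'$ is chosen, and the replacement value $z_i'$ you flag as the main obstacle never enters. Your proposed workarounds (moment restrictions on $D'$, counting only clean influence) would change the statement instead. Without the direction the claim is false. In your own construction, replace one zero of a typical $D$ by a point of $(n,2n]$. The empirical gap becomes $(\gamma(N_++1)-Cn)/(\alpha n)<0$, so an undirected one-point flip exists with probability tending to one. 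The paper's one-line Markov argument has the same blind spot.

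You also cannot bound $\Pr(\widehat h(D)\neq h^\star)$ via Theorem~\ref{thm:fixed and finite}. Inverting its $\log(1/\delta)$ form at accuracy $\gamma_0$ gives $e^{-cn}$, which your own item 3 refutes, since $\ell_n\le Z+\gamma$ keeps the $p$-th moments bounded uniformly in $n$. Argue directly instead.
\begin{itemize}
\item On the flip event, $\gamma_0< [\widehat R_n(h_A)-R(h_A)]+[R(h_B)-\widehat R_n(h_B)]$.
\item The first bracket is at most $\alpha^{-1}(P_n-P)(\ell_A-\theta^\star_A)_+$. This is a single centered average with finite $p$-th moment, so von Bahr--Esseen and Markov give $\Pr(\cdot\ge\gamma_0/2)=O(n^{1-p})$.
\item The second bracket is at most $\alpha^{-1}\sup_{\theta\ge 0}(P-P_n)(\ell_B-\theta)_+$, a lower deviation of nonnegative variables. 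Truncate at a constant level $B$, with bias at most $MB^{1-p}$. Apply Hoeffding on a finite $\theta$-grid of $[0,T]$, using the $1$-Lipschitz property in $\theta$. For $\theta>T$, use $P(\ell_B-\theta)_+\le MT^{1-p}$. Together these make the term exponentially small.
\end{itemize}
This proves item 4, and without logarithmic factors.
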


\begin{proof}
	
	Let $Y$ satisfy the tail law
	\begin{equation}
		\Prob(Y>y)=\frac{1}{y^{p}(\log y)^2},\qquad y\ge e.
		\label{eq:C5_tailY}
	\end{equation}
	Then $\E Y^p<\infty$ and $\E Y^q=\infty$ for all $q>p$ (tail-integral test).
	Define $Z\sim P$ by the mixture
	\begin{equation}
		Z=
		\begin{cases}
			0, & \text{with probability }1-\alpha+\varepsilon,\\
			Y, & \text{with probability }\alpha-\varepsilon.
		\end{cases}
		\label{eq:C5_Zmix}
	\end{equation}
	Then $P(Z=0)=1-\alpha+\varepsilon>1-\alpha$, and $Z$ inherits $\E Z^p<\infty$ and $\E Z^q=\infty$ for all $q>p$.

	For $h_A$, $\ell_n(h_A,Z)=Z$, so $\E|\ell_n(h_A,Z)|^p<\infty$ and $\E|\ell_n(h_A,Z)|^q=\infty$ for all $q>p$.
	For $h_B$, note that $\ell_n(h_B,0)=0$ and for $z>0$,
	\[
	\ell_n(h_B,z)\ge z+\gamma-Cn\,\mathbf 1\{z\in(n,2n]\}\ge n+\gamma-Cn=(1-C)n+\gamma>0
	\]
	because $C<1$. Also $\ell_n(h_B,z)\le z+\gamma$, hence $\E|\ell_n(h_B,Z)|^p<\infty$.
	Finally, on $\{Z>0,\ Z\notin(n,2n]\}$ we have $\ell_n(h_B,Z)=Z+\gamma\ge Z$, and since $P(Z>0)=\alpha-\varepsilon>0$
	and $\E Z^q=\infty$ for all $q>p$, it follows that $\E|\ell_n(h_B,Z)|^q=\infty$ for all $q>p$.
	This proves (1).

	Because $P(\ell_n(h_A,Z)=0)=P(Z=0)=1-\alpha+\varepsilon>1-\alpha$ and also
	$P(\ell_n(h_B,Z)=0)\ge 1-\alpha+\varepsilon$, and because the losses are nonnegative,
	Lemma~\ref{lem:C5_RU_zero_correct} applies to both actions and yields
	\[
	R_P^{(n)}(h)=\frac{1}{\alpha}\E[\ell_n(h,Z)].
	\]
	Therefore
	\begin{align}
		R_P^{(n)}(h_A)
		&=\frac{1}{\alpha}\E[Z]
		=\frac{\alpha-\varepsilon}{\alpha}\E[Y],\\
		R_P^{(n)}(h_B)
		&=\frac{\alpha-\varepsilon}{\alpha}\E\Big[Y+\gamma-Cn\,\mathbf 1\{Y\in(n,2n]\}\Big].
	\end{align}
	Hence
	\begin{equation}
		R_P^{(n)}(h_B)-R_P^{(n)}(h_A)
		=\frac{\alpha-\varepsilon}{\alpha}\Big(\gamma - Cn\,\Prob(Y\in(n,2n])\Big).
		\label{eq:C5_pop_gap}
	\end{equation}
	Using \eqref{eq:C5_tailY},
	\[
	\Prob(Y\in(n,2n])
	=\Prob(Y>n)-\Prob(Y>2n)
	\le \Prob(Y>n)=\frac{1}{n^p(\log n)^2}.
	\]
	Thus $Cn\,\Prob(Y\in(n,2n])\le \frac{C}{n^{p-1}(\log n)^2}\to 0$.
	Fix $\gamma>0$. Then there exists $n_0$ such that for all $n\ge n_0$,
	$Cn\Prob(Y\in(n,2n])\le \gamma/2$. Plugging into \eqref{eq:C5_pop_gap} yields
	\[
	R_P^{(n)}(h_B)-R_P^{(n)}(h_A)\ge \frac{\alpha-\varepsilon}{\alpha}\cdot \frac{\gamma}{2}=:\gamma_0>0,
	\]
	proving (2).

	Let $D=(Z_1,\dots,Z_n)\sim P^{\otimes n}$ and define $N_0:=\sum_{i=1}^n \mathbf 1\{Z_i=0\}$ and
	$N_+:=\sum_{i=1}^n \mathbf 1\{Z_i>0\}$. Since $\E[N_+]=(\alpha-\varepsilon)n$,
	Hoeffding's inequality implies $\Prob(N_+\ge 2)\ge 1-e^{-c_+ n}$ for all large $n$.
	Similarly $\Prob(N_0\ge (1-\alpha)n+1)\ge 1-e^{-c_0 n}$.
	
	Define the tail-bin event
	\begin{equation}
		\mathcal G_n
		:=
		\Big\{\exists!\,i:\ Z_i\in(n,2n]\Big\}
		\cap
		\Big\{\max_{1\le i\le n} Z_i\le 2n\Big\}.
		\label{eq:C5_Gn}
	\end{equation}
	Let
	\[
	q_n:=\Prob(Z\in(n,2n])=(\alpha-\varepsilon)\Prob(Y\in(n,2n]),
	\qquad
	r_n:=\Prob(Z>2n)=(\alpha-\varepsilon)\Prob(Y>2n).
	\]
	Independence gives $\Pr(\mathcal G_n)=nq_n(1-q_n-r_n)^{n-1}$.
	Since $q_n+r_n=\mathcal O(n^{-p}(\log n)^{-2})$, we have $n(q_n+r_n)\to 0$, hence for all large $n$,
	$(1-q_n-r_n)^{n-1}\ge 1/2$ and therefore
	\begin{equation}
		\Pr(\mathcal G_n)\ \ge\ \frac{1}{2}nq_n.
		\label{eq:C5_Gn_lower}
	\end{equation}
	Moreover, for all large $n$,
	\[
	\Prob(Y\in(n,2n])
	=\Prob(Y>n)-\Prob(Y>2n)
	\ge \frac{1}{n^p(\log n)^2}-\frac{1}{(2n)^p(\log(2n))^2}
	\ge \Big(1-2^{-p}\Big)\frac{1}{n^p(\log(2n))^2}
	\ge c_1\frac{1}{n^p(\log n)^2},
	\]
	so $q_n\ge (\alpha-\varepsilon)c_1\,n^{-p}(\log n)^{-2}$.
	Combining with \eqref{eq:C5_Gn_lower} yields
	\begin{equation}
		\Prob(\mathcal G_n)\ \ge\ c\,\frac{n^{1-p}}{(\log n)^2}.
		\label{eq:C5_Gn_prob}
	\end{equation}

	We work on the event
	\[
	\mathcal E_n:=\mathcal G_n\cap\{N_0\ge (1-\alpha)n+1\}\cap\{N_+\ge 2\}.
	\]
	By \eqref{eq:C5_Gn_prob} and the exponential tails for $N_0,N_+$, we still have
	$\Prob(\mathcal E_n)\ge c\,\frac{n^{1-p}}{(\log n)^2}$ for all large $n$ (possibly with a smaller $c$).
	
	On $\{N_0\ge (1-\alpha)n\}$, Lemma~\ref{lem:C5_RU_zero_correct} implies
	\begin{equation}
		\widehat R_n(h)=\frac{1}{\alpha n}\sum_{i=1}^n \ell_n(h,Z_i),
		\qquad h\in\{h_A,h_B\}.
		\label{eq:C5_emp_mean}
	\end{equation}
	Hence on $\mathcal E_n$,
	\begin{align}
		\widehat R_n(h_B)-\widehat R_n(h_A)
		&=\frac{1}{\alpha n}\sum_{i=1}^n\big(\ell_n(h_B,Z_i)-\ell_n(h_A,Z_i)\big)
		=\frac{1}{\alpha n}\sum_{i:Z_i>0}\big(\gamma-Cn\,\mathbf 1\{Z_i\in(n,2n]\}\big).
		\label{eq:C5_emp_gap}
	\end{align}
	On $\mathcal G_n$ there is exactly one index $i^\star$ with $Z_{i^\star}\in(n,2n]$, so
	\[
	\widehat R_n(h_B)-\widehat R_n(h_A)
	=\frac{1}{\alpha n}\big(N_+\gamma - Cn\big)
	\le \gamma-\frac{C}{\alpha}<0
	\]
	because $N_+\le \alpha n$ and $C>\alpha\gamma$. Thus on $D$,
	$\arg\min_{\mathcal{H}}\widehat R_n(\cdot;D)=\{h_B\}$.
	
	Define $D'$ by replacing the unique bin point by $0$: set $Z'_{i^\star}:=0$ and $Z'_i:=Z_i$ for $i\neq i^\star$.
	Since $N_0\ge (1-\alpha)n+1$ on $\mathcal E_n$, after replacement we still have $N_0'\ge (1-\alpha)n$,
	so \eqref{eq:C5_emp_mean} holds for $D'$.
	Moreover, under $D'$ there are no samples in $(n,2n]$, hence
	\[
	\widehat R_n(h_B;D')-\widehat R_n(h_A;D')
	=\frac{1}{\alpha n}\sum_{i:Z'_i>0}\gamma
	=\frac{N_+'}{\alpha n}\gamma.
	\]
	On $\mathcal E_n$ we have $N_+\ge 2$, and after replacing one positive point we have $N_+'\ge 1$,
	so the difference is strictly positive and therefore
	$\arg\min_{\mathcal{H}}\widehat R_n(\cdot;D')=\{h_A\}$.
	This proves \eqref{eq:C5_flip_prob}.

	Under $\sup_{h}\E_P|\ell(h,Z)|^p<\infty$ and a fixed population margin, flipping the empirical minimizer by changing one sample
	requires an observation of magnitude $\Omega(n)$ since the CVaR objective changes by at most $|\ell|/(\alpha n)$ per sample.
	By Markov and a union bound, $\Prob(\max_i |\ell_i|\gtrsim n)=\mathcal O(n^{1-p})$, proving the upper bound up to logarithms.
    \end{proof}

    \section{Auxiliary results}
    	\subsection{Boundedness of CVaR minimizer under a bounded moment}
	\begin{theorem}[Boundedness of CVaR minimizer under a bounded moment]
		\label{thm:bounded thm} 
		Let \(l\ge0\) be a nonnegative random variable on a probability space \((\Omega,\mathcal{F},\mathbb{P})\) satisfying
		\[
		\mathbb{E}\big[ l^{1+\varepsilon} \big] \le M <\infty
		\]
		for some \(\varepsilon\in(0,1]\) and \(M>0\). Fix \(\alpha\in(0,1)\) and define for \(\theta\in\mathbb{R}\)
		\[
		F(\theta):=\theta+\frac{1}{\alpha}\,\mathbb{E}\big[(l-\theta)_+\big],\qquad (x)_+:=\max\{x,0\}.
		\]
		Then \(F\) attains its minimum on \(\mathbb{R}\), and every minimizer \(\theta^\ast\in\arg\min_{\theta\in\mathbb{R}}F(\theta)\) satisfies
		\[
		0\le \theta^\ast \le R,
		\]
		where one may take
		\[
		R=\frac{M^{1/(1+\varepsilon)}}{\alpha}.
		\]
		In particular, the set of minimizers is nonempty and bounded, contained in \([0,R]\).
	\end{theorem}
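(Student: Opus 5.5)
The statement to prove is Theorem~\ref{thm:bounded thm}: $F$ attains its minimum, and every minimizer lies in $[0,R]$ with $R=M^{1/(1+\varepsilon)}/\alpha$. I will work entirely with the one-dimensional convex function $F$ and its one-sided derivatives. The function $\theta\mapsto(l-\theta)_+$ is convex and $1$-Lipschitz, and $\mathbb{E}[l]\le M^{1/(1+\varepsilon)}<\infty$ by Jensen/Lyapunov, so $F$ is finite, convex and continuous on $\mathbb{R}$. By dominated convergence its right derivative is
\[
F'_+(\theta)=1-\tfrac{1}{\alpha}\mathbb{P}(l>\theta),
\]
and its left derivative is $F'_-(\theta)=1-\tfrac{1}{\alpha}\mathbb{P}(l\ge\theta)$. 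A point $\theta^\ast$ is a minimizer if and only if $F'_-(\theta^\ast)\le 0\le F'_+(\theta^\ast)$, which is exactly the quantile characterization \eqref{eq:Tstar_char}.

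\textbf{Existence.} I will check coercivity. For $\theta<0$, since $l\ge0$, we have $(l-\theta)_+=l-\theta$, so $F(\theta)=\theta(1-1/\alpha)+\mathbb{E}[l]/\alpha\to+\infty$ as $\theta\to-\infty$. For $\theta\to+\infty$, $F(\theta)\ge\theta\to\infty$. A continuous coercive function attains its minimum. Equivalently, the set $\{\theta:\mathbb{P}(l>\theta)\le\alpha\le\mathbb{P}(l\ge\theta)\}$ is a nonempty compact interval, namely the set of upper $\alpha$-quantiles.

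\textbf{Lower bound $\theta^\ast\ge0$.} For $\theta<0$ we have $\mathbb{P}(l>\theta)=1$, so $F'_+(\theta)=1-1/\alpha<0$. Hence $F$ is strictly decreasing on $(-\infty,0]$, and no minimizer can be negative. This is the content of Proposition~\ref{prop:theta-nonneg} invoked earlier.

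\textbf{Upper bound $\theta^\ast\le R$.} Any minimizer satisfies $F'_-(\theta^\ast)\le0$, i.e.\ $\mathbb{P}(l\ge\theta^\ast)\ge\alpha$. Suppose $\theta^\ast>0$ (otherwise there is nothing to show). Markov's inequality on the $(1+\varepsilon)$-moment gives
\[
\alpha\le\mathbb{P}(l\ge\theta^\ast)\le \frac{M}{(\theta^\ast)^{1+\varepsilon}},
\]
so $\theta^\ast\le (M/\alpha)^{1/(1+\varepsilon)}$. Since $\alpha\in(0,1)$ and $1/(1+\varepsilon)\le1$, we have $\alpha^{-1/(1+\varepsilon)}\le\alpha^{-1}$. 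Therefore $(M/\alpha)^{1/(1+\varepsilon)}\le M^{1/(1+\varepsilon)}/\alpha=R$, which gives the claim; in fact it gives the slightly sharper bound $(M/\alpha)^{1/(1+\varepsilon)}$.

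\textbf{Where care is needed.} The argument is essentially routine. The only point requiring attention is to use the left derivative $\mathbb{P}(l\ge\theta)$, not the strict tail, in the upper-bound step, so that atoms of $l$ at $\theta^\ast$ are handled correctly. I will also justify differentiating under the expectation via monotone difference quotients, which are bounded by $1$. Finally, boundedness of the minimizer set follows immediately, since it is contained in $[0,R]$.
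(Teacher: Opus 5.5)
Your proof is correct. The existence and nonnegativity steps are the paper's: coercivity, and the affine form $F(\theta)=\mathbb{E}[l]/\alpha+\theta(1-1/\alpha)$ on $(-\infty,0]$, which you phrase as $F'_+(\theta)=1-1/\alpha<0$.

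The upper bound is where you take a genuinely different route. The paper uses a zeroth-order value comparison. Since $F(\theta)\ge\theta$, any minimizer satisfies $\theta^\ast\le F(\theta^\ast)\le F(0)=\mathbb{E}[l]/\alpha$, and Lyapunov's inequality $\mathbb{E}[l]\le M^{1/(1+\varepsilon)}$ finishes. That argument needs no one-sided derivatives or dominated convergence, only a first moment. It also shows, in passing, that the optimal threshold never exceeds the CVaR value itself.

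You instead use the first-order (quantile) optimality condition $\mathbb{P}(l\ge\theta^\ast)\ge\alpha$ together with Markov's inequality on the $(1+\varepsilon)$-th moment. This costs you the justification of the derivative formulas, which you handle correctly, including using the non-strict tail so that atoms at $\theta^\ast$ are treated properly. In return you get the strictly sharper bound $\theta^\ast\le(M/\alpha)^{1/(1+\varepsilon)}$, with dependence $\alpha^{-1/(1+\varepsilon)}$ instead of $\alpha^{-1}$. Here the higher moment is used essentially, not merely through Lyapunov.

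In the paper's downstream use, the moment-inflation lemma, your bound would give $R^{1+\lambda}=M/\alpha$ rather than $M/\alpha^{1+\lambda}$. That term is dominated by the $\alpha^{-(1+\lambda)}\mathbb{E}[\ell^{1+\lambda}]$ contribution anyway, so either $R$ suffices there.
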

	
	\begin{proof}
		
		We divide the argument into steps: finiteness and continuity of \(F\), behavior as \(\theta\to\pm\infty\) (coercivity), nonnegativity of minimizers, and the explicit upper bound.
		
		\medskip\noindent\textbf{(A) $F$ is finite and convex.}
		For any fixed \(\theta\in\mathbb{R}\) we have the pointwise inequality
		\[
		(l-\theta)_+ \le l + |\theta|.
		\]
		Since \(\mathbb{E}[l] \le (\mathbb{E}[l^{1+\varepsilon}])^{1/(1+\varepsilon)} \le M^{1/(1+\varepsilon)}<\infty\) (see (C) below), it follows that
		\[
		\mathbb{E}[(l-\theta)_+] \le \mathbb{E}[l] + |\theta| <\infty,
		\]
		so \(F(\theta)\) is finite for every \(\theta\). For each fixed \(\omega\in\Omega\) the map \(\theta\mapsto (l(\omega)-\theta)_+\) is convex (it is the positive part of an affine function), and expectation preserves convexity. Therefore \(F\) is convex on \(\mathbb{R}\).
		
		A convex function that is finite on all of \(\mathbb{R}\) is continuous (hence lower semicontinuous). Thus \(F\) is continuous and finite-valued on \(\mathbb{R}\).
		
		\medskip\noindent\textbf{(B) Coercivity: $F(\theta)\to+\infty$ as \(|\theta|\to\infty\).}
		First, for every \(\theta\in\mathbb{R}\),
		\[
		F(\theta) = \theta + \frac{1}{\alpha}\mathbb{E}[(l-\theta)_+] \ge \theta,
		\]
		because \((l-\theta)_+\ge0\). Hence as \(\theta\to+\infty\), \(F(\theta)\ge\theta\to+\infty\).
		
		Next, for \(\theta\le0\) we have \(l-\theta\ge l\ge0\) (since \(l\ge0\)), so \((l-\theta)_+ = l-\theta\). Thus for \(\theta\le0\)
		\[
		F(\theta) = \theta + \frac{1}{\alpha}\mathbb{E}[l-\theta]
		= \frac{\mathbb{E}[l]}{\alpha} + \theta\Big(1-\frac{1}{\alpha}\Big).
		\]
		Because \(\alpha\in(0,1)\), the coefficient \(1-1/\alpha\) is negative, and therefore as \(\theta\to -\infty\) the term \(\theta(1-1/\alpha)\to +\infty\). Hence \(F(\theta)\to+\infty\) as \(\theta\to-\infty\).
		
		Combining the two directions shows \(F(\theta)\to+\infty\) as \(|\theta|\to\infty\); therefore \(F\) is coercive.
		
		\medskip\noindent\textbf{(C) Existence of a minimizer.}
		Since \(F\) is continuous on \(\mathbb{R}\) and coercive (tends to \(+\infty\) at \(\pm\infty\)), it attains its minimum on \(\mathbb{R}\). Thus the set \(\arg\min_{\theta\in\mathbb{R}}F(\theta)\) is nonempty.
		( {lower semi-continuity and coercivity imply existence of a minimizer.})
		
		\medskip\noindent\textbf{(D) No minimizer is negative.}
		For \(\theta\le0\) we have the formula
		\[
		F(\theta) = \frac{\mathbb{E}[l]}{\alpha} + \theta\Big(1-\frac{1}{\alpha}\Big).
		\]
		Evaluating at \(\theta=0\) gives \(F(0)=\mathbb{E}[l]/\alpha\). For any \(\theta<0\),
		\[
		F(\theta)-F(0) = \theta\Big(1-\frac{1}{\alpha}\Big).
		\]
		Since \(1-1/\alpha<0\) and \(\theta<0\), we have \(F(\theta)-F(0)>0\), hence \(F(\theta)>F(0)\). Thus no \(\theta<0\) can be a global minimizer, and every minimizer satisfies \(\theta^\ast\ge0\).
		
		\medskip\noindent\textbf{(E) Upper bound on \(\mathbb{E}[l]\).}
		Using H\"older's inequality (or the monotonicity of \(L^p\)-norms on a probability space) with \(p=1+\varepsilon>1\) we obtain
		\[
		\mathbb{E}[l] \le \big(\mathbb{E}[l^{1+\varepsilon}]\big)^{1/(1+\varepsilon)} \le M^{1/(1+\varepsilon)}.
		\]
		Consequently
		\[
		F(0)=\frac{\mathbb{E}[l]}{\alpha} \le \frac{M^{1/(1+\varepsilon)}}{\alpha}. \tag{*}
		\]
		
		\medskip\noindent\textbf{(F) Upper bound on minimizers.}
		For any \(\theta\ge0\) we have \(F(\theta)\ge\theta\). If \(\theta>F(0)\) then \(\theta>F(0)\) implies \(F(\theta)\ge\theta>F(0)\), so such a \(\theta\) cannot be a minimizer. Therefore every minimizer \(\theta^\ast\) satisfies \(\theta^\ast\le F(0)\). Combining this with \((*)\) yields
		\[
		0\le \theta^\ast \le F(0) \le \frac{M^{1/(1+\varepsilon)}}{\alpha}.
		\]
		Hence one may take \(R=M^{1/(1+\varepsilon)}/\alpha\).
		
		We have shown: \(F\) is finite, continuous and coercive, so it attains a minimum; moreover every minimizer satisfies \(0\le\theta^\ast\le R\) with \(R=M^{1/(1+\varepsilon)}/\alpha\). This completes the boundedness of the minimizer proof.
	\end{proof}
    	\subsection{Concentration bound for heavy-tailed random variables}
	\begin{proposition}
		Let $X_1, X_2, \ldots, X_n$ be independent and identically distributed (i.i.d.) non-negative random variables satisfying the moment condition $\mathbb{E}[X_i^{1+\lambda}] \leq M$ for some constants $\lambda \in (0,1)$ and $M > 0$. Then, with probability at least $1 - \delta$, the sample mean satisfies
		\[
		\frac{1}{n} \sum_{i=1}^n X_i - \mathbb{E}[X_i] \leq 2 M^{\frac{1}{1+\lambda}} \left( \frac{\log(2/\delta)}{n} \right)^{\frac{\lambda}{1+\lambda}}.
		\]
	\end{proposition}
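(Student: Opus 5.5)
The plan is to split each $X_i$ at a truncation level $B>0$ as $X_i=X_i\wedge B+(X_i-B)_+$. This writes the centred sample mean as
\[
\frac1n\sum_{i=1}^n X_i-\mathbb{E}[X_1]
=\underbrace{\frac1n\sum_{i=1}^n\big(X_i\wedge B-\mathbb{E}[X_1\wedge B]\big)}_{(I)}
+\underbrace{\frac1n\sum_{i=1}^n(X_i-B)_+}_{(II)}
-\mathbb{E}[(X_1-B)_+].
\]
Only an upper bound is needed and the last term is nonpositive, so I drop it. Terms $(I)$ and $(II)$ are then bounded separately at failure probability $\delta/2$ each. That is the source of the $\log(2/\delta)$.

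\textbf{Term $(I)$.} The summands lie in $[-B,B]$ after centring. Their variance is at most $\mathbb{E}[(X\wedge B)^2]\le B^{1-\lambda}\mathbb{E}[X^{1+\lambda}]\le MB^{1-\lambda}$, using $x^2\le B^{1-\lambda}x^{1+\lambda}$ on $[0,B]$, exactly as in the proof of Theorem~\ref{thm:inf_bound}. Bernstein's inequality then gives, with probability at least $1-\delta/2$,
\[
(I)\le\sqrt{2MB^{1-\lambda}L/n}+\tfrac{2BL}{3n},\qquad L:=\log(2/\delta).
\]
I then choose $B=(Mn/L)^{1/(1+\lambda)}$, the same balancing as in Theorem~\ref{thm:vc}. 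Both terms become constant multiples of $M^{1/(1+\lambda)}(L/n)^{\lambda/(1+\lambda)}$, and the constants are tuned so that $(I)$ takes at most part of the factor $2$.

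\textbf{Term $(II)$.} This is the step I expect to be the main obstacle. Term $(II)$ is nonnegative and vanishes on the event $\{\max_i X_i\le B\}$. The natural route is therefore
\[
\Pr(\max_i X_i>B)\le nM/B^{1+\lambda}.
\]
With the choice of $B$ above this equals $L$, which is not small. More generally, any bound on $(II)$ built only from the $(1+\lambda)$-moment, whether by Markov on $\mathbb{E}[(II)]\le M/(\lambda B^\lambda)$ or by a union bound on the maximum, has polynomial rather than logarithmic dependence on $1/\delta$. The one-sided upper tail of a plain sample mean of nonnegative heavy-tailed variables is genuinely polynomial, because a single observation of size $\asymp n^{1/(1+\lambda)}\delta^{-1/(1+\lambda)}$ occurs with probability about $\delta$.

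So what I would try first is to decouple the truncation level for $(II)$ from the one used for $(I)$. I would take $B'=(2nM/\delta)^{1/(1+\lambda)}$ for $(II)$, so that $(II)=0$ with probability at least $1-\delta/2$. I would then rerun $(I)$ at level $B'$ and check whether the Bernstein terms still fit inside $2M^{1/(1+\lambda)}(L/n)^{\lambda/(1+\lambda)}$. The linear term $B'L/n$ does not fit for small $\delta$. It does fit in the moderate-confidence regime where $\delta$ is bounded below by a constant, or where $\delta\gtrsim (L/n)^{c}$.

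I therefore expect the argument to go through exactly as worded only under such a restriction on $\delta$, or with $\log(2/\delta)$ replaced by a $\delta^{-1}$-type factor. If neither is acceptable, I would flag that the stated logarithmic form requires a robust estimator, such as truncated mean or median-of-means, in place of the plain sample mean. This is the gap the fixed-hypothesis bound in Theorem~\ref{thm:fixed and finite} inherits from this proposition.
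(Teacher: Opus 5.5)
Your diagnosis is correct, and it can be sharpened from ``this route fails'' to ``the proposition is false.'' No argument can give a $\log(2/\delta)$ upper tail for the plain sample mean under only a $(1+\lambda)$-moment. Take $\Pr(X=t)=p:=2\delta/n$ and $\Pr(X=0)=1-p$, with $t=(M/p)^{1/(1+\lambda)}$, so that $\mathbb{E}[X^{1+\lambda}]=M$. With probability $1-(1-p)^n\ge 1-e^{-2\delta}>\delta$ (for any $\delta<1/2$), some $X_i$ equals $t$. On that event
\[
\frac1n\sum_{i=1}^n X_i-\mathbb{E}[X]\ \ge\ t\Big(\frac1n-p\Big)=(1-2\delta)\,(2\delta)^{-\frac{1}{1+\lambda}}\,M^{\frac{1}{1+\lambda}}\,n^{-\frac{\lambda}{1+\lambda}}.
\]
At $\delta=0.01$ the prefactor here is $0.98\cdot 50^{1/(1+\lambda)}>6.9$, while the claimed prefactor is $2(\log 200)^{\lambda/(1+\lambda)}<4.7$. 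So the statement fails for every $\lambda\in(0,1)$, $M>0$ and $n\ge 1$.

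The correct form is polynomial in $1/\delta$. For example, the von Bahr--Esseen inequality plus Markov gives $C_\lambda M^{1/(1+\lambda)}n^{-\lambda/(1+\lambda)}\delta^{-1/(1+\lambda)}$, and the example above shows this is sharp.

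One correction to your hedging. At level $B'$ both Bernstein terms already have exactly the target $n$-dependence, and their ratio to the target depends on $\delta$ alone; the linear term's ratio is of order $(L/\delta)^{1/(1+\lambda)}$. So no regime of the form $\delta\gtrsim(L/n)^c$ rescues the stated form; only an $n$-independent lower bound on $\delta$ could. Conversely, your decomposition does give the logarithmic form, with a somewhat larger constant, for the reverse deviation $\mathbb{E}[X]-\frac1n\sum_i X_i$. There (II) enters with the favorable sign, and only the deterministic bias $\mathbb{E}[(X-B)_+]\le MB^{-\lambda}$ remains.

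The paper's proof breaks at exactly the point you flagged. It truncates at your $B'=(2nM/\delta)^{1/(1+\lambda)}$ and discards (II) via $\Pr(\max_i X_i>B')\le\delta/2$. It then applies Hoeffding, which is cruder than your Bernstein step, to the truncated mean. The resulting threshold is of order $M^{1/(1+\lambda)}n^{1/(1+\lambda)-1/2}\delta^{-1/(1+\lambda)}\sqrt{\log(2/\delta)}$: polynomial in $1/\delta$, and actually growing in $n$ when $\lambda<1$. The proof then simply asserts the logarithmic bound as a ``known result'' attributed to Brownlees, Joly and Lugosi. Their guarantees concern Catoni-type robust estimators, not the empirical average.

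So there is no missing idea in your proposal. The remedies you list (polynomial $\delta$-dependence, or a truncated or median-of-means estimator) are the right ones. You are also right that the first display of Theorem~\ref{thm:fixed and finite} inherits the defect. For this loss with $n\ge 1/\alpha$ (so $p<\alpha$), $R_\alpha(h)=\mathbb{E}[X]/\alpha$, and on the same event $\widehat R_\alpha(h)\ge t/(\alpha n)$. The counterexample therefore transfers after dividing by $\alpha$.
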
\label{heavy_tail_r.v}
	This result is adapted from \cite{brownlees2015erm}, which provides concentration bounds for empirical risk minimization under heavy-tailed losses.
	
	\begin{proof}
	To prove this theorem, we will use a truncation approach combined with concentration inequalities for bounded random variables. The key idea is to control the large values of the random variables by truncating them at a carefully chosen level and then applying standard concentration bounds to the truncated variables while accounting for the probability of truncation.

	Since the random variables $X_i$ are non-negative and potentially heavy-tailed, their large values can dominate the behavior of the sample mean. To handle this, we introduce a truncation level $B > 0$ and define the truncated random variables as:
	\[
	X_i^B = \min(X_i, B) = X_i \cdot \mathbf{1}_{\{X_i \leq B\}} + B \cdot \mathbf{1}_{\{X_i > B\}}.
	\]
	Thus, $X_i^B$ is equal to $X_i$ when $X_i \leq B$ and equal to $B$ otherwise. Note that $0 \leq X_i^B \leq B$, so $X_i^B$ is bounded.

	Let $S_n = \sum_{i=1}^n X_i$ be the sum of the original random variables, and let $\mu = \mathbb{E}[X_i]$ be the common expected value. We can write the sample mean as:
	\begin{equation}
    	\frac{S_n}{n} = \frac{1}{n} \sum_{i=1}^n X_i = \frac{1}{n} \sum_{i=1}^n X_i^B + \frac{1}{n} \sum_{i=1}^n (X_i - X_i^B)
	\end{equation}
	Since $X_i - X_i^B = (X_i - B) \cdot \mathbf{1}_{\{X_i > B\}} \geq 0$, it follows that:
	\begin{equation}
	\frac{S_n}{n} - \mu \leq \left( \frac{1}{n} \sum_{i=1}^n X_i^B - \mathbb{E}[X_i^B] \right) + (\mathbb{E}[X_i^B] - \mu) + \frac{1}{n} \sum_{i=1}^n (X_i - X_i^B).
	\end{equation}
	However, since $\mathbb{E}[X_i^B] \leq \mu$, the term $\mathbb{E}[X_i^B] - \mu \leq 0$, so we can drop it to obtain:
    \begin{equation}
    	\frac{S_n}{n} - \mu \leq \left( \frac{1}{n} \sum_{i=1}^n X_i^B - \mathbb{E}[X_i^B] \right) + \frac{1}{n} \sum_{i=1}^n (X_i - X_i^B)
	\end{equation}

	We aim to bound the probability:
	\[
	\mathbb{P}\left( \frac{S_n}{n} - \mu > t \right)
	\]
	for some $t > 0$. Using the decomposition above, we have:
	\begin{proof}
        $\mathbb{P}\left( \frac{S_n}{n} - \mu > t \right) \leq \mathbb{P}\left( \left( \frac{1}{n} \sum_{i=1}^n X_i^B - \mathbb{E}[X_i^B] \right) + \frac{1}{n} \sum_{i=1}^n (X_i - X_i^B) > t \right)$
	\end{proof}
.
	
	Since $\frac{1}{n} \sum_{i=1}^n (X_i - X_i^B) \geq 0$, this probability is further bounded by:
	\begin{align}
	\mathbb{P}\left( \frac{1}{n} \sum_{i=1}^n X_i^B - \mathbb{E}[X_i^B] > t - \frac{1}{n} \sum_{i=1}^n (X_i - X_i^B) \right).
	\end{align}
	
	Let $A = \{\max_{i=1,\ldots,n} X_i \leq B\}$. On $A$, we have $X_i = X_i^B$ for all $i$, so $\frac{1}{n} \sum_{i=1}^n (X_i - X_i^B) = 0$. On $A^c$, at least one $X_i > B$. Therefore:
	\begin{equation}
        	\mathbb{P}\left( \frac{S_n}{n} - \mu > t \right) \leq \mathbb{P}\left( \frac{1}{n} \sum_{i=1}^n X_i^B - \mathbb{E}[X_i^B] > t, A \right) + \mathbb{P}(A^c)
	\end{equation}

	On $A$, $S_n/n = \frac{1}{n} \sum_{i=1}^n X_i^B$, hence:
	\begin{equation}
        	\mathbb{P}\left( \frac{S_n}{n} - \mu > t \right) \leq \mathbb{P}\left( \frac{1}{n} \sum_{i=1}^n X_i^B - \mathbb{E}[X_i^B] > t \right) + \mathbb{P}(A^c)
	\end{equation}

	Bounding $\mathbb{P}(A^c)$\\
	
	\begin{equation}
    	\mathbb{P}(A^c) = \mathbb{P}\left( \bigcup_{i=1}^n \{ X_i > B \} \right) \leq n \cdot \mathbb{P}(X_1 > B) 
	\end{equation}

	Using Markov's inequality and the moment condition:
	\begin{equation}
        \mathbb{P}(X_1 > B) = \mathbb{P}(X_1^{1+\lambda} > B^{1+\lambda}) \leq \frac{\mathbb{E}[X_1^{1+\lambda}]}{B^{1+\lambda}} \leq \frac{M}{B^{1+\lambda}}
	\end{equation}

	Thus:
	\[
	\mathbb{P}(A^c) \leq n \cdot \frac{M}{B^{1+\lambda}}.
	\]

	Choose $B$ such that $n \cdot \frac{M}{B^{1+\lambda}} = \frac{\delta}{2}$. Solving for $B$:
	\begin{equation}
	B^{1+\lambda} = \frac{2nM}{\delta} \implies B = \left( \frac{2nM}{\delta} \right)^{\frac{1}{1+\lambda}}.
	\end{equation}
	With this choice, $\mathbb{P}(A^c) \leq \frac{\delta}{2}$.

	Apply Hoeffding's inequality for i.i.d. bounded random variables:
	\begin{equation}
        	\mathbb{P}\left( \frac{1}{n} \sum_{i=1}^n X_i^B - \mathbb{E}[X_i^B] > s \right) \leq \exp\left( -\frac{2ns^2}{B^2} \right)
	\end{equation}

	Require this probability $\leq \frac{\delta}{2}$:
	\begin{equation}
        	\exp\left( -\frac{2ns^2}{B^2} \right) \leq \frac{\delta}{2} \implies s \geq B \sqrt{\frac{\log(2/\delta)}{2n}}
	\end{equation}
	Substituting $B$:
	\begin{equation}
    	B = \left( \frac{2nM}{\delta} \right)^{\frac{1}{1+\lambda}} 
	\end{equation},
	
	we get:
	\[
	s \geq \left( \frac{2nM}{\delta} \right)^{\frac{1}{1+\lambda}} \cdot \sqrt{\frac{\log(2/\delta)}{2n}}.
	\]
	Simplify:
	\begin{equation}
    	s \geq (2M)^{\frac{1}{1+\lambda}} n^{\frac{1}{1+\lambda} - \frac{1}{2}} \delta^{-\frac{1}{1+\lambda}} \left( \frac{\log(2/\delta)}{2} \right)^{\frac{1}{2}}
	\end{equation}
	
	Known results for i.i.d. non-negative random variables satisfying $\mathbb{E}[X_i^{1+\lambda}] \leq M$ provide:
	\[
	\mathbb{P}\left( \frac{1}{n} \sum_{i=1}^n X_i - \mathbb{E}[X_i] > 2M^{\frac{1}{1+\lambda}} \left( \frac{\log(2/\delta)}{n} \right)^{\frac{\lambda}{1+\lambda}} \right) \leq \delta.
	\]
	Thus, with probability at least $1-\delta$:
	\begin{equation}
        	\frac{1}{n} \sum_{i=1}^n X_i - \mathbb{E}[X_i] \leq 2M^{\frac{1}{1+\lambda}} \left( \frac{\log(2/\delta)}{n} \right)^{\frac{\lambda}{1+\lambda}}
	\end{equation}
	\end{proof}
    	\begin{lemma}[Reduction to Empirical Process]
		\label{lemma:reduction_empirical_process}
		For any hypothesis $h \in \mathcal{H}$ and any risk level $\alpha \in (0,1)$, the deviation between the population CVaR risk $R_\alpha(h)$ and the empirical CVaR risk $\hat{R}_\alpha(h)$ satisfies:
		\[
		|R_\alpha(h) - \hat{R}_\alpha(h)| 
		\le 
		\frac{1}{\alpha} \sup_{\theta \in \mathbb{R}} \left| (\mathbb{E} - \mathbb{P}_n) g_{h,\theta} \right|,
		\]
		where $g_{h,\theta}(z) := (\ell(h,z) - \theta)_+$.
	\end{lemma}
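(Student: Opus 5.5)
The plan is to use the elementary fact that the infimum is $1$-Lipschitz with respect to the sup-norm: if $f,g:\mathbb R\to\mathbb R$ both have finite infima, then $|\inf_\theta f(\theta)-\inf_\theta g(\theta)|\le\sup_\theta|f(\theta)-g(\theta)|$. We apply this with $f=\phi_P(h,\cdot)$ and $g=\phi_{P_n}(h,\cdot)$, the population and empirical Rockafellar--Uryasev objectives of Section~\ref{sec:notations}. By definition $R_\alpha(h)=\inf_\theta \phi_P(h,\theta)$ and $\hat R_\alpha(h)=\inf_\theta\phi_{P_n}(h,\theta)$. The linear term $\theta$ is common to both, so pointwise in $\theta$ we have the exact identity
\[
\phi_P(h,\theta)-\phi_{P_n}(h,\theta)=\frac{1}{\alpha}\,(\mathbb E-\mathbb P_n)\,g_{h,\theta},\qquad g_{h,\theta}(z)=(\ell(h,z)-\theta)_+ .
\]
The claimed bound follows once the Lipschitz property of the infimum is justified.

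First I would check that both infima are finite and attained. For the population objective this is Theorem~\ref{thm:bounded thm}: a minimizer $\theta_P^\ast\in[0,R]$ exists whenever $\mathbb E[\ell(h,Z)^{1+\lambda}]<\infty$. For the empirical objective the same argument applies to $P_n$, which has finitely many atoms and hence all moments. Concretely, $\phi_{P_n}(h,\cdot)$ is convex, piecewise linear and coercive (slopes $1-1/\alpha<0$ at $-\infty$ and $1$ at $+\infty$), so it has a minimizer $\hat\theta$. If $\sup_\theta|(\mathbb E-\mathbb P_n)g_{h,\theta}|=\infty$ the statement is vacuous, so we may assume it is finite.

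Next comes the two one-sided comparisons, mirroring the proof of Proposition~\ref{thm:cvar_robustness}. Since $R_\alpha(h)\le\phi_P(h,\hat\theta)$, we get
\[
R_\alpha(h)-\hat R_\alpha(h)\le\phi_P(h,\hat\theta)-\phi_{P_n}(h,\hat\theta)=\tfrac1\alpha(\mathbb E-\mathbb P_n)g_{h,\hat\theta}.
\]
Symmetrically, since $\hat R_\alpha(h)\le\phi_{P_n}(h,\theta_P^\ast)$, we get
\[
\hat R_\alpha(h)-R_\alpha(h)\le\tfrac1\alpha(\mathbb P_n-\mathbb E)g_{h,\theta^\ast_P}.
\]
Each right-hand side is at most $\frac1\alpha\sup_\theta|(\mathbb E-\mathbb P_n)g_{h,\theta}|$, and taking the maximum of the two inequalities gives the absolute-value bound.

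No step is a real obstacle. The only point needing care is existence of the minimizers. This can be avoided entirely by using $\eta$-approximate minimizers and letting $\eta\downarrow 0$, which requires only that the infima be finite. Finiteness holds because $g_{h,\theta}\ge 0$ gives $\phi\ge\theta$ for $\theta\ge 0$, while for $\theta\le 0$ the objective equals $\mathbb E[\ell]/\alpha+\theta(1-1/\alpha)\ge \mathbb E[\ell]/\alpha$. I would also remark that the supremum can be restricted to $\theta\in[0,R]$ with $R=M^{1/(1+\lambda)}/\alpha$ (and $\hat\theta\in[0,\max_i\ell(h,Z_i)]$), which is what the covering argument in Lemma~\ref{prop:fixed-lower} uses downstream.
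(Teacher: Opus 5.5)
Your proposal is correct and follows the paper's proof. Both rest on $|\inf_\theta f-\inf_\theta g|\le\sup_\theta|f-g|$ applied to the two Rockafellar--Uryasev objectives, where the linear term $\theta$ cancels; your one-sided comparisons at (approximate) minimizers and your finiteness check simply spell out the elementary fact the paper invokes without proof.
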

	
	\begin{proof}
		Recall the variational definitions of the population and empirical CVaR:
		\[
		R_\alpha(h) = \inf_{\theta \in \mathbb{R}} G(\theta) \quad \text{and} \quad \hat{R}_\alpha(h) = \inf_{\theta \in \mathbb{R}} \hat{G}(\theta),
		\]
		where we define the objective functions:
		\[
		G(\theta) := \theta + \frac{1}{\alpha} \mathbb{E}[g_{h,\theta}(Z)] 
		\quad \text{and} \quad 
		\hat{G}(\theta) := \theta + \frac{1}{\alpha} \mathbb{P}_n[g_{h,\theta}(Z)].
		\]
		We use the elementary property that the difference between the infima of two functions is bounded by the supremum of their difference. Specifically, for any two functions $f, g : \mathbb{R} \to \mathbb{R}$:
		\[
		|\inf_{\theta} f(\theta) - \inf_{\theta} g(\theta)| \le \sup_{\theta} |f(\theta) - g(\theta)|.
		\]
		Applying this to $G$ and $\hat{G}$:
		\[
		|R_\alpha(h) - \hat{R}_\alpha(h)| 
		\le \sup_{\theta \in \mathbb{R}} |G(\theta) - \hat{G}(\theta)|.
		\]
		Substituting the definitions of $G(\theta)$ and $\hat{G}(\theta)$, the linear term $\theta$ cancels out:
		\begin{align*}
			|G(\theta) - \hat{G}(\theta)| 
			&= \left| \left(\theta + \frac{1}{\alpha} \mathbb{E}[g_{h,\theta}]\right) - \left(\theta + \frac{1}{\alpha} \mathbb{P}_n[g_{h,\theta}]\right) \right| \\
			&= \frac{1}{\alpha} \left| \mathbb{E}[g_{h,\theta}] - \mathbb{P}_n[g_{h,\theta}] \right| \\
			&= \frac{1}{\alpha} \left| (\mathbb{E} - \mathbb{P}_n) g_{h,\theta} \right|.
		\end{align*}
		Taking the supremum over $\theta$ yields the result:
		\[
		|R_\alpha(h) - \hat{R}_\alpha(h)| \le \frac{1}{\alpha} \sup_{\theta \in \mathbb{R}} \left| (\mathbb{E} - \mathbb{P}_n) g_{h,\theta} \right|.
		\]
	\end{proof}

    \begin{lemma}[Lipschitz Continuity in $\theta$]
		\label{lemma:lipschitz_theta}
		Fix any hypothesis $h \in \mathcal{H}$. Define the empirical process deviation map $\Phi_h: \mathbb{R} \to \mathbb{R}$ by
		\[
		\Phi_h(\theta) := (\mathbb{E} - \mathbb{P}_n) g_{h,\theta},
		\]
		where $g_{h,\theta}(z) := (\ell(h,z) - \theta)_+$. Then $\Phi_h$ is $2$-Lipschitz with respect to $\theta$. That is, for all $\theta_1, \theta_2 \in \mathbb{R}$:
		\[
		|\Phi_h(\theta_1) - \Phi_h(\theta_2)| \le 2|\theta_1 - \theta_2|.
		\]
	\end{lemma}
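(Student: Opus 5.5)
Fix $\theta_1,\theta_2\in\mathbb R$ and write
\[
\Phi_h(\theta_1)-\Phi_h(\theta_2)=\mathbb E\big[g_{h,\theta_1}(Z)-g_{h,\theta_2}(Z)\big]-\mathbb P_n\big[g_{h,\theta_1}-g_{h,\theta_2}\big].
\]
The plan is to bound each of the two pieces by $|\theta_1-\theta_2|$ and then combine them with the triangle inequality.

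The only ingredient needed is a pointwise Lipschitz property of the hinge. For every $z$, the map $\theta\mapsto(\ell(h,z)-\theta)_+$ is $1$-Lipschitz, since $x\mapsto x_+$ is $1$-Lipschitz and $\theta\mapsto \ell(h,z)-\theta$ is an isometric shift. Hence
\[
|g_{h,\theta_1}(z)-g_{h,\theta_2}(z)|\le|\theta_1-\theta_2|\qquad\text{for all } z.
\]
The hinge integral identity \eqref{eq:B3_hinge_diff_integral} gives the same bound, because the integrand $\mathbf 1\{X>s\}$ lies in $[0,1]$.

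Next I integrate this pointwise bound. By Assumption~\ref{asm:moment}, $\ell(h,Z)$ is integrable, so both $\mathbb E[g_{h,\theta_j}]$ are finite; this uses $g_{h,\theta}\le \ell(h,Z)+|\theta|$. Taking expectations and using Jensen ($|\mathbb E W|\le\mathbb E|W|$) gives
\[
|\mathbb E g_{h,\theta_1}-\mathbb E g_{h,\theta_2}|\le|\theta_1-\theta_2|.
\]
Averaging the pointwise bound over the $n$ sample points gives the same inequality for $\mathbb P_n$. Adding the two bounds via the triangle inequality yields the constant $2$.

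There is no real obstacle here. The only point to check is finiteness of the population expectations, so that the subtraction is well defined, and the moment assumption supplies it. One could even sharpen the constant to $1$, since the difference of two $[0,1]$-valued averages of indicators is at most $1$ in absolute value. I would nevertheless state the bound as $2$, matching the lemma and its use in the covering argument.
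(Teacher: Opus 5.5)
Your proof is correct and follows the paper's argument: pointwise $1$-Lipschitzness of the hinge in $\theta$, integrated separately against $\mathbb{E}$ and $\mathbb{P}_n$, then combined with the triangle inequality to get the constant $2$. Your side remark is also right — since $g_{h,\theta_1}-g_{h,\theta_2}$ takes values in an interval of length $|\theta_1-\theta_2|$, the constant improves to $1$, though $2$ is all the covering argument needs.
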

	
	\begin{proof}
		Recall that the function $x \mapsto (x)_+ = \max(0, x)$ is $1$-Lipschitz. Therefore, for any fixed $z \in \mathcal{Z}$ and fixed $h \in \mathcal{H}$, the map $\theta \mapsto g_{h,\theta}(z)$ is $1$-Lipschitz:
		\begin{align*}
			|g_{h,\theta_1}(z) - g_{h,\theta_2}(z)| 
			&= |(\ell(h,z) - \theta_1)_+ - (\ell(h,z) - \theta_2)_+| \\
			&\le |(\ell(h,z) - \theta_1) - (\ell(h,z) - \theta_2)| \\
			&= |-\theta_1 + \theta_2| \\
			&= |\theta_1 - \theta_2|.
		\end{align*}
		
		Now consider the deviation term:
		\[
		|\Phi_h(\theta_1) - \Phi_h(\theta_2)| 
		= \left| (\mathbb{E}[g_{h,\theta_1}] - \mathbb{P}_n[g_{h,\theta_1}]) - (\mathbb{E}[g_{h,\theta_2}] - \mathbb{P}_n[g_{h,\theta_2}]) \right|.
		\]
		By the triangle inequality:
		\[
		|\Phi_h(\theta_1) - \Phi_h(\theta_2)| 
		\le \left| \mathbb{E}[g_{h,\theta_1} - g_{h,\theta_2}] \right| + \left| \mathbb{P}_n[g_{h,\theta_1} - g_{h,\theta_2}] \right|.
		\]
		Using the pointwise Lipschitz property derived above:
		\begin{enumerate}
			\item \textbf{Population term:} 
			$\left| \mathbb{E}[g_{h,\theta_1} - g_{h,\theta_2}] \right| \le \mathbb{E} \left| g_{h,\theta_1}(Z) - g_{h,\theta_2}(Z) \right| \le \mathbb{E}[|\theta_1 - \theta_2|] = |\theta_1 - \theta_2|$.
			\item \textbf{Empirical term:}
			$\left| \mathbb{P}_n[g_{h,\theta_1} - g_{h,\theta_2}] \right| \le \frac{1}{n} \sum_{i=1}^n \left| g_{h,\theta_1}(Z_i) - g_{h,\theta_2}(Z_i) \right| \le |\theta_1 - \theta_2|$.
		\end{enumerate}
		Summing these bounds yields:
		\[
		|\Phi_h(\theta_1) - \Phi_h(\theta_2)| \le |\theta_1 - \theta_2| + |\theta_1 - \theta_2| = 2|\theta_1 - \theta_2|.
		\]
	\end{proof}

    	\begin{lemma}[Pointwise dominance of the truncated loss]\label{lem:dominance}
		Let $\ell(h,z)\ge 0$ and $\theta\ge 0$. Define $f_\theta(z):=(\ell(h,z)-\theta)_+$. Then
		\[
		0 \le f_\theta(z) \le \ell(h,z)\qquad\text{for all }z.
		\]
	\end{lemma}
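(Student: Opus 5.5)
The inequality is pointwise and elementary, so I will argue by two cases on the sign of $\ell(h,z)-\theta$. Fix $h\in\calH$, $\theta\ge 0$ and $z\in\cZ$, and write $x:=\ell(h,z)\ge 0$, so that $f_\theta(z)=(x-\theta)_+=\max\{x-\theta,0\}$.

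Nonnegativity is immediate, since $(\cdot)_+\ge 0$ by definition; this gives the left inequality $0\le f_\theta(z)$.

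For the upper bound, I first consider the case $x\le\theta$. Then $f_\theta(z)=0\le x$, where the last step uses the standing nonnegativity of the loss.

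In the remaining case $x>\theta$, we have $f_\theta(z)=x-\theta\le x$, because $\theta\ge 0$. Combining the two cases gives $f_\theta(z)=\max\{x-\theta,0\}\le\max\{x,0\}=x$, which is the claim.

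Equivalently, the whole argument is that $t\mapsto(x-t)_+$ is nonincreasing, so $(x-\theta)_+\le(x-0)_+=x$ for every $\theta\ge0$. There is no real obstacle here. The only point worth flagging is that both hypotheses are genuinely needed: $\theta\ge0$ fails for negative thresholds (e.g.\ $x=0$, $\theta=-1$ gives $f_\theta=1>0$), and $\ell\ge0$ is needed for the case $x\le\theta$.

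This is why, in its applications, the lemma is paired with Theorem~\ref{thm:bounded thm}. That theorem guarantees that the relevant RU minimizers lie in $[0,R]$, so the restriction $\theta\ge0$ costs nothing.
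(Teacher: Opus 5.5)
Your proof is correct and follows the paper's argument: the same two-case split on whether $\ell(h,z)\le\theta$, using $\ell\ge 0$ in the first case and $\theta\ge 0$ in the second. Your added remarks are also accurate: the monotonicity reformulation $(x-\theta)_+\le(x-0)_+$ is valid, and your counterexample correctly shows that the hypothesis $\theta\ge0$ is needed.
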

	
	\begin{proof}
		If $\ell(h,z)\le \theta$, then $f_\theta(z)=0\le \ell(h,z)$. If $\ell(h,z)>\theta$, then
		$f_\theta(z)=\ell(h,z)-\theta \le \ell(h,z)$ since $\theta\ge 0$. In both cases the claim holds.
	\end{proof}
	
	\begin{proposition}[Nonnegativity of the optimal CVaR threshold]\label{prop:theta-nonneg}
		Let $L=\ell(h,Z)\ge 0$ and $\alpha\in(0,1)$. For $P$ a probability measure, define
		\[
		F(\theta;P):=\theta+\frac{1}{\alpha}\,\mathbb{E}_P[(L-\theta)_+],\qquad
		R_\alpha^P(h):=\inf_{\theta\in \mathbb{R}}F(\theta;P).
		\]
		Any minimizer $\theta_P^*\in\arg\min_\theta F(\theta;P)$ satisfies $\theta_P^*\ge 0$.
	\end{proposition}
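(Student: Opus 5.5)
The plan is to work directly with the convex one-dimensional function $F(\theta;P)=\theta+\frac{1}{\alpha}\mathbb{E}_P[(L-\theta)_+]$. The first step is to check that $F$ is finite. This requires $\mathbb{E}_P[L]<\infty$, which follows from Assumption~\ref{asm:moment}. If one wants the statement for an arbitrary $P$, the natural reading is that $R_\alpha^P(h)$ is finite. Otherwise $F\equiv+\infty$ and the claim is vacuous or conventional, and I will note this explicitly.

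The key step is the same computation already used in part (D) of the proof of Theorem~\ref{thm:bounded thm}, now carried out for a general $P$. For $\theta\le 0$ and $L\ge 0$ we have $L-\theta\ge 0$ pointwise. Hence $(L-\theta)_+=L-\theta$, so $F$ is affine on $(-\infty,0]$:
\[
F(\theta;P)=\frac{\mathbb{E}_P[L]}{\alpha}+\theta\Big(1-\frac{1}{\alpha}\Big),\qquad \theta\le 0 .
\]
Because $\alpha\in(0,1)$, the slope $1-1/\alpha$ is strictly negative. Therefore for every $\theta<0$,
\[
F(\theta;P)-F(0;P)=\theta\Big(1-\frac{1}{\alpha}\Big)>0 .
\]
So no $\theta<0$ attains the infimum, because $F(0;P)$ is strictly smaller. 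Every minimizer $\theta_P^*$ must then satisfy $\theta_P^*\ge 0$.

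Finally, I will remark on existence: the set of minimizers is nonempty. By coercivity (again as in Theorem~\ref{thm:bounded thm}), $F(\theta)\ge\theta\to\infty$ as $\theta\to+\infty$, and the affine formula above gives $F\to\infty$ as $\theta\to-\infty$. So the statement is not vacuous. Equivalently, one can use the subgradient characterization \eqref{eq:Tstar_char}: for $\theta<0$, $P(L>\theta)=1>\alpha$, which violates $P(L>\theta)\le\alpha$.

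There is no real obstacle here. The only delicate point is making sure $\mathbb{E}_P[L]<\infty$ so that the difference $F(\theta)-F(0)$ is well defined. This is exactly where Assumption~\ref{asm:moment} (or finiteness of the CVaR) is invoked.
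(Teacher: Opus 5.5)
Your proposal is correct and matches the paper's proof: the paper derives this proposition directly from part (D) of Theorem~\ref{thm:bounded thm}, using the same affine formula $F(\theta)-F(0)=\theta(1-1/\alpha)>0$ for $\theta<0$, with $\mathbb{E}_P[L]<\infty$ supplied by the moment assumption. One correction to your side remark: if $\mathbb{E}_P[L]=\infty$, then $F\equiv+\infty$ and every $\theta$, including negative ones, attains the infimum, so that case is a genuine exception rather than a vacuous one, and the finiteness hypothesis is actually needed.
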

	
	\begin{proof}
       Follows directly from Theorem~\ref{thm:bounded thm}
	\end{proof}
	
	\begin{corollary}[Justification of the truncation comparison]\label{cor:justify}
		With $\theta=\theta_P^*\ge 0$ as in Proposition~\ref{prop:theta-nonneg}, Lemma~\ref{lem:dominance}
		gives $f_{\theta_P^*}(z)=(\ell(h,z)-\theta_P^*)_+\le \ell(h,z)$. Consequently,
		for any $T>0$,
		\[
		\mathbb{E}_P\!\big[f_{\theta_P^*}(Z)\,\mathds{1}_{\{f_{\theta_P^*}(Z)>T\}}\big]
		\;\le\;
		\mathbb{E}_P\!\big[\ell(h,Z)\,\mathds{1}_{\{\ell(h,Z)>T\}}\big],
		\]
		which is the key step enabling tail control by the $(1+\lambda)$-moment bound.
	\end{corollary}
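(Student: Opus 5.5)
The plan is to reduce the claim to a pointwise inequality between the two integrands and then take expectations. No concentration or moment argument is needed; the corollary is purely deterministic once the sign of the threshold is known.

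First, I will fix the threshold. By Proposition~\ref{prop:theta-nonneg}, which rests on Theorem~\ref{thm:bounded thm}, every minimizer $\theta_P^*$ of $\theta\mapsto\theta+\frac1\alpha\E_P[(\ell(h,Z)-\theta)_+]$ satisfies $\theta_P^*\ge 0$. This uses $\ell\ge 0$ and $\alpha\in(0,1)$. The same theorem also guarantees that a minimizer exists, since $\E_P[\ell(h,Z)]<\infty$ under Assumption~\ref{asm:moment}. With $\theta=\theta_P^*\ge0$, Lemma~\ref{lem:dominance} then gives the pointwise sandwich
\[
0\le f_{\theta_P^*}(z)\le \ell(h,z)\qquad\text{for all } z.
\]

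Second, I will upgrade this to a domination of the truncated tails. The key observation is that $\psi_T(x):=x\,\mathds 1_{\{x>T\}}$ is nondecreasing on $[0,\infty)$. Fix $z$ and consider two cases.
\begin{itemize}
\item If $f_{\theta_P^*}(z)\le T$, the left integrand vanishes and is therefore bounded by $\psi_T(\ell(h,z))\ge0$.
\item If $f_{\theta_P^*}(z)>T$, then $\ell(h,z)\ge f_{\theta_P^*}(z)>T$. Both indicators equal one, and the inequality reduces to $f_{\theta_P^*}(z)\le\ell(h,z)$.
\end{itemize}
In either case $f_{\theta_P^*}(z)\mathds 1_{\{f_{\theta_P^*}(z)>T\}}\le \ell(h,z)\mathds 1_{\{\ell(h,z)>T\}}$ pointwise. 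Taking $\E_P$ of both nonnegative sides, which is legitimate by monotonicity of the integral, yields the stated inequality.

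The only step that needs care is the second case of the pointwise comparison. The indicator $\mathds 1_{\{x>T\}}$ alone is monotone, but the tail event for $f$ must be contained in the tail event for $\ell$. This containment is exactly where $\theta_P^*\ge0$ enters: for a negative threshold, $f$ could exceed $\ell$ and the inclusion $\{f>T\}\subseteq\{\ell>T\}$ would fail. I therefore expect no real obstacle beyond invoking Proposition~\ref{prop:theta-nonneg} correctly.

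The argument applies verbatim to $\theta_Q^*$ under $Q$, which is how it is used in the proof of Proposition~\ref{thm:cvar_robustness}.
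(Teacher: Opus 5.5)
Your proof is correct and follows the same route as the paper: Proposition~\ref{prop:theta-nonneg} gives $\theta_P^*\ge 0$, Lemma~\ref{lem:dominance} then gives the pointwise bound $0\le f_{\theta_P^*}\le \ell(h,\cdot)$, and the tail inequality follows by integrating. Your case analysis, based on the monotonicity of $x\mapsto x\,\mathds 1_{\{x>T\}}$ on $[0,\infty)$, makes precise the ``consequently'' step that the paper leaves implicit.
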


    	\begin{lemma}[Lipschitz Continuity of CVaR]
		\label{lem:cvar-lipschitz}
		For any random variables $X,Y\ge0$ and $\alpha\in(0,1)$,
		\[
		|R_\alpha(X) - R_\alpha(Y)| \le \frac{1}{\alpha}\, \mathbb{E}|X-Y|.
		\]
	\end{lemma}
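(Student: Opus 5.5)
The plan is to apply the Rockafellar--Uryasev representation to both random variables with a common threshold and use the fact that the hinge $x\mapsto(x-\theta)_+$ is $1$-Lipschitz. For $\theta\in\mathbb R$, set
\[
F_X(\theta):=\theta+\tfrac1\alpha\mathbb E[(X-\theta)_+], \qquad F_Y(\theta):=\theta+\tfrac1\alpha\mathbb E[(Y-\theta)_+].
\]
Then $R_\alpha(X)=\inf_\theta F_X(\theta)$ and $R_\alpha(Y)=\inf_\theta F_Y(\theta)$.

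\textbf{Step 1 (trivial case).} If $\mathbb E|X-Y|=\infty$, there is nothing to prove. Otherwise, assume $X$ is integrable; since $Y\le X+|X-Y|$, it follows that $Y$ is integrable too. Both $F_X$ and $F_Y$ are then finite everywhere. By Theorem~\ref{thm:bounded thm} (its existence argument uses only integrability and nonnegativity), both infima are attained on $[0,\infty)$. In fact attainment is not needed, because the elementary inequality $|\inf f-\inf g|\le\sup|f-g|$ already used in Lemma~\ref{lemma:reduction_empirical_process} suffices.

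\textbf{Step 2 (pointwise comparison).} For every fixed $\theta$,
\[
|F_X(\theta)-F_Y(\theta)|=\tfrac1\alpha\bigl|\mathbb E[(X-\theta)_+-(Y-\theta)_+]\bigr|\le\tfrac1\alpha\mathbb E|X-Y|.
\]
The inequality holds because $|(x-\theta)_+-(y-\theta)_+|\le|x-y|$ pointwise, as in Lemma~\ref{lemma:lipschitz_theta}. This bound is uniform in $\theta$.

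\textbf{Step 3 (conclusion).} Take the infimum over $\theta$ on each side. One direction uses $R_\alpha(X)\le F_X(\theta)\le F_Y(\theta)+\tfrac1\alpha\mathbb E|X-Y|$ for every $\theta$, and then the infimum over $\theta$. The symmetric argument gives the other direction, which completes the proof.

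There is no real obstacle here. The only care needed is the integrability bookkeeping in Step~1, which ensures no $\infty-\infty$ expressions arise when the inequality is nonvacuous.
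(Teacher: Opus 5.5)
Your proposal is correct and follows essentially the same route as the paper: evaluate both Rockafellar--Uryasev objectives at a common threshold and use that the hinge $x\mapsto(x-\theta)_+$ is $1$-Lipschitz, then argue symmetrically. The only cosmetic difference is that the paper plugs in an optimal threshold $\theta_Y^*$ (implicitly assuming attainment), whereas your $|\inf f-\inf g|\le\sup|f-g|$ step avoids needing the minimizer to exist.
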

	\begin{proof}
		Using the dual representation $R_\alpha(X)=\inf_{\theta}\{\theta+\frac1\alpha\mathbb{E}[(X-\theta)_+]\}$,
		let $\theta_Y^*$ be optimal for $Y$. Then
		\[
		R_\alpha(X) - R_\alpha(Y)
		\le \theta_Y^* + \frac1\alpha\mathbb{E}[(X-\theta_Y^*)_+] 
		- \theta_Y^* - \frac1\alpha\mathbb{E}[(Y-\theta_Y^*)_+]
		= \frac1\alpha\mathbb{E}[(X-\theta_Y^*)_+ - (Y-\theta_Y^*)_+]
		\le \frac1\alpha\mathbb{E}|X-Y|.
		\]
		Symmetrically, $R_\alpha(Y)-R_\alpha(X)\le\frac1\alpha\mathbb{E}|X-Y|$.
	\end{proof}
	
	\begin{lemma}[Pseudo-Dimension of Truncated CVaR Class]
		\label{lem:pdim-truncated}
		Let $\mathcal{H}$ be a hypothesis class with $\mathrm{Pdim}(\mathcal{H})\le d$.
		Define for $B>0$
		\[
		\mathcal{F}_B
		=
		\Bigl\{\, z \mapsto \min\bigl\{ (\ell(h,z)-\theta)_+,\, B \bigr\}
		:\; h\in\mathcal{H},\; \theta\in[0,R] \Bigr\},
		\]
		where $R=M^{1/(1+\lambda)}/\alpha$.
		Then there exists an absolute constant $C>0$ such that
		\[
		\mathrm{Pdim}(\mathcal{F}_B) \le C(d+1).
		\]
	\end{lemma}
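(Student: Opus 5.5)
The plan is to bound the pseudo-dimension of $\mathcal{F}_B$ by exhibiting it as a composition of the base class with a fixed, low-complexity operation. Then I will invoke a standard combinatorial bound for such compositions.

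\textbf{Reduction to a thresholded class.} Recall that $\mathrm{Pdim}(\mathcal F_B)$ is the VC dimension of the subgraph class
\[
\bigl\{(z,s)\mapsto \mathbf 1\{\min\{(\ell(h,z)-\theta)_+,B\}>s\}\bigr\}.
\]
For $s<0$ the indicator is identically one, and for $s\ge B$ it is identically zero. Hence only $s\in[0,B)$ matters, and there we have
\[
\min\{(\ell(h,z)-\theta)_+,B\}>s \iff \ell(h,z)>\theta+s.
\]
So every subgraph set of $\mathcal F_B$, restricted to the points that can possibly be shattered, coincides with a set of the form $\{(z,s):\ell(h,z)-\theta-s>0\}$. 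Consequently the shattering problem reduces to the class of sign patterns of $(z,s)\mapsto \ell(h,z)-s-\theta$, indexed by $(h,\theta)\in\mathcal H\times[0,R]$.

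\textbf{Bounding that class.} Given points $(z_1,s_1),\dots,(z_m,s_m)$, each pattern is determined by the comparisons $\ell(h,z_i)>s_i+\theta$. I will interpret Assumption~\ref{ass:complexity_D} in the usual sense, namely as a pseudo-dimension bound on the loss class $\{\ell(h,\cdot)\}$. I count patterns in two stages:
\begin{itemize}
\item First I fix the $h$-dependent behaviour.
\item Then, for fixed $h$, the map $\theta\mapsto(\mathbf 1\{\ell(h,z_i)-s_i>\theta\})_i$ is a one-dimensional threshold on the $m$ reals $\ell(h,z_i)-s_i$, so it produces at most $m+1$ patterns.
\end{itemize}
To control the $h$-stage uniformly, I instead treat the pair $(h,\theta)$ as a single function class. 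The class $\{(z,s)\mapsto \ell(h,z)-s\}$ has pseudo-dimension at most $d+1$, via the standard fact that adding a fixed function preserves pseudo-dimension up to an additive $1$ once the extra coordinate $s$ is included. Thresholding this class at the level $\theta$ is exactly its subgraph construction.

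\textbf{Combining the counts.} By Sauer's lemma, the number of patterns on $m$ points is at most $(em/(d+2))^{d+2}$, possibly times the $m+1$ factor from $\theta$ if I use the cruder two-stage count. Shattering requires $2^m\le (m+1)(em/(d+2))^{d+2}$, which forces $m\le C(d+1)$ for an absolute constant $C$. The truncation at $B$ and the restriction $\theta\in[0,R]$ only remove patterns, so they cannot increase the bound.

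\textbf{Main obstacle.} The delicate step is the $h$-stage: making precise that the sign patterns of $\ell(h,z_i)-s_i-\theta$ over all $h$ are controlled by $\mathrm{Pdim}$ of the loss class, uniformly in the shift. My first approach is to absorb $s_i+\theta$ into the witness levels of the pseudo-dimension definition. For fixed $\theta$ this gives at most $(em/d)^d$ patterns over $h$. The difficulty is that $\theta$ varies, so I cannot simply multiply the $h$-count and the $\theta$-count. To handle the joint variation, I will discretize $\theta$ at the at most $m\cdot|\text{patterns}|$ breakpoints $\ell(h,z_i)-s_i$. If that gets messy, I will fall back on the composition result (Lemma~\ref{lem:pdim-truncated} is of the same type as Anthony--Bartlett, Theorem~11.3, on pseudo-dimension under monotone post-composition and parameter augmentation).
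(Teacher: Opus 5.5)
\textbf{There is a genuine gap: the joint count over $(h,\theta)$ cannot be closed, and the step is false under the stated hypothesis.}

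Your reduction is correct, and cleaner than the paper's chain of closure steps. For $s\in[0,B)$ one has $\min\{(\ell(h,z)-\theta)_+,B\}>s$ if and only if $\ell(h,z)-\theta>s$. For $s\notin[0,B)$ the indicator is constant. Hence $\mathrm{Pdim}(\mathcal F_B)\le\mathrm{Pdim}\bigl(\{\ell(h,\cdot)-\theta:\ h\in\mathcal H,\ \theta\in[0,R]\}\bigr)$, and the positive part and the truncation cost nothing.

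Like the paper, you then need this shifted class to have pseudo-dimension $O(d+1)$. That is exactly the step you flag as the main obstacle, and none of your remedies closes it:
\begin{itemize}
\item Sauer's lemma bounds the patterns of $\mathbf 1\{\ell(h,z_i)-s_i>\theta\}$ over $h$ only for each fixed $\theta$.
\item Subtracting a \emph{fixed} function preserves pseudo-dimension. But thresholding $\ell(h,z)-s$ at a common free level $\theta$ gives a level-set class, not the subgraph class, which uses fixed per-point witnesses and has no free shift.
\item The breakpoints $\ell(h,z_i)-s_i$ depend on the real values of $\ell(h,\cdot)$, not on its sign patterns. There can be up to $m|\mathcal H|$ of them, not $m$ times the number of patterns.
\item Anthony--Bartlett's Theorem~11.3 concerns post-composition with a fixed nondecreasing map. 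That disposes of $(\cdot)_+$ and $\min\{\cdot,B\}$, but not of adjoining a new parameter.
\end{itemize}
So the count $2^m\le(m+1)(em/(d+2))^{d+2}$ is unsupported.

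In fact the step is false under the stated hypothesis, so no argument can supply it. Fix $m\ge 3$ and $\sigma\in(0,B)$. Take distinct values $\theta_b\in[0,R]$, indexed by $b\in\{0,1\}^m$, with pairwise gaps at least $\Delta>0$, and let $0<\epsilon<\min\{\Delta,\sigma\}/2$. On $\mathcal Z=\{z_1,\dots,z_m\}$ let $\mathcal H=\{h_b\}$ with
\[
\ell(h_b,z_i)=\sigma+\theta_b+\epsilon\,(2b_i-1)>0 .
\]
Using $(h_b,\theta_b)$ gives $\min\{(\ell(h_b,z_i)-\theta_b)_+,B\}=\sigma+\epsilon(2b_i-1)$. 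So $\mathcal F_B$ shatters all $m$ points with witnesses $\sigma$.

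Yet the loss class has pseudo-dimension at most $2$. Take distinct points $z_{i_1},\dots,z_{i_k}$ with witnesses $t_1,\dots,t_k$. The bit $\mathbf 1\{\ell(h_b,z_{i_j})>t_j\}$ equals $\mathbf 1\{\theta_b>t_j-\sigma+\epsilon\}$ unless $\theta_b$ lies in $(t_j-\sigma-\epsilon,\,t_j-\sigma+\epsilon]$. That interval has length $2\epsilon<\Delta$, so it contains at most one $\theta_b$. The remaining hypotheses produce nested patterns, so there are at most $(k+1)+k<2^k$ patterns when $k\ge 3$.

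Since $m$ is arbitrary, no absolute $C$ exists. Assumption~\ref{asm:moment} is no obstruction: taking the $\theta_b$ in $[0,\alpha R/2]$ and $\sigma<\min\{B,\alpha R/4\}$ keeps $\ell<M^{1/(1+\lambda)}$.

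The paper's proof has the same hole. It asserts $\mathrm{Pdim}(\mathcal G_1)\le d+1$ ``by standard results on pseudo-dimension under addition of a real parameter'', and this example refutes that. A correct statement needs a stronger hypothesis, for example:
\begin{itemize}
\item a pseudo-dimension bound imposed directly on the shifted class $\{\ell(h,\cdot)-\theta\}$, in which case your reduction yields the lemma with $C=1$;
\item the loss class lying in a $d$-dimensional vector space of functions, since adjoining constants raises the dimension, and hence the pseudo-dimension, by at most one;
\item a parametric Goldberg--Jerrum-type bound in which $\theta$ is one more parameter.
\end{itemize}
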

	
	\begin{proof}
		Define the base class
		\[
		\mathcal{L} := \{ z \mapsto \ell(h,z) : h \in \mathcal{H} \},
		\]
		so that $\mathrm{Pdim}(\mathcal{L}) = d$.
		
		We construct $\mathcal{F}_B$ from $\mathcal{L}$ by a finite sequence of operations,
		each of which increases pseudo-dimension by at most a constant factor.
		Consider the augmented class
		\[
		\mathcal{G}_1
		:=
		\{ (z,t) \mapsto \ell(h,z) - t : h\in\mathcal{H},\, t\in[0,R] \}.
		\]
		By standard results on pseudo-dimension under addition of a real parameter,
		\[
		\mathrm{Pdim}(\mathcal{G}_1) \le d+1.
		\]
		
		\smallskip
		\noindent
		Define
		\[
		\mathcal{G}_2 := \{ (z,t) \mapsto (\ell(h,z)-t)_+ : h\in\mathcal{H},\, t\in[0,R] \}.
		\]
		The map $x \mapsto x_+ = \max\{x,0\}$ is the maximum of two affine functions.
		By closure of pseudo-dimension under finite maxima,
		\[
		\mathrm{Pdim}(\mathcal{G}_2) \le C_1(d+1)
		\]
		for a universal constant $C_1$.
		Finally, the standard truncation argument we used before,
		\[
		\mathcal{F}_B
		=
		\{ (z,t) \mapsto \min\{ g(z,t), B \} : g \in \mathcal{G}_2 \}.
		\]
		Since $x \mapsto \min\{x,B\}$ is the minimum of $x$ and a constant function,
		\[
		\mathrm{Pdim}(\mathcal{F}_B) \le C_2\, \mathrm{Pdim}(\mathcal{G}_2)
		\le C(d+1),
		\]
		for an absolute constant $C$.
		Combining the steps completes the proof.
	\end{proof}

    	\begin{lemma}[Bias and variance of truncation]
		\label{lem:bias_var}
		Let $X$ be a Non negative real-valued random variable such that
		\[
		\mathbb{E}[|X|^{1+\lambda}] \le M'
		\]
		for some $\lambda \in (0,1]$ and $M'>0$. Define the truncated variable
		\[
		X^B := \min\{X,B\}
		\]
		for $B>0$. Then:
		\begin{enumerate}
			\item[(i)] (\textbf{Bias})
			\[
			\big|\mathbb{E}[X^B] - \mathbb{E}[X]\big| \le M' B^{-\lambda}.
			\]
			\item[(ii)] (\textbf{Variance})
			\[
			\mathrm{Var}(X^B) \le M' B^{1-\lambda}.
			\]
		\end{enumerate}
	\end{lemma}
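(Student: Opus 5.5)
The statement to be proved is Lemma~\ref{lem:bias_var}. The plan is to derive both parts from two elementary pointwise inequalities for a nonnegative variable truncated at level $B$, each integrated against the moment bound $\mathbb{E}[X^{1+\lambda}]\le M'$.

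\textbf{Bias.} Since $X\ge 0$, we have $0\le X-X^B=(X-B)_+\le X\,\mathbf 1\{X>B\}$. Hence
\[
\big|\mathbb{E}[X^B]-\mathbb{E}[X]\big| = \mathbb{E}[X-X^B] \le \mathbb{E}\big[X\,\mathbf 1\{X>B\}\big].
\]
On the event $\{X>B\}$ we have $(X/B)^{\lambda}\ge 1$, which gives the pointwise bound
\[
X\,\mathbf 1\{X>B\}\le X\,(X/B)^{\lambda}=X^{1+\lambda}B^{-\lambda}.
\]
Taking expectations yields $M'B^{-\lambda}$. This is the Hölder/Markov argument already used in the proof of Theorem~\ref{thm:vc}, where it gave the bound $M/B^\lambda$; it produces exactly the constant claimed here, without the $1/\lambda$ factor that would come from integrating the tail.

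\textbf{Variance.} First bound $\mathrm{Var}(X^B)\le \mathbb{E}[(X^B)^2]$. For $0\le x\le B$ we have $x^{1-\lambda}\le B^{1-\lambda}$, because $1-\lambda\ge 0$. Therefore
\[
(X^B)^2=(X^B)^{1+\lambda}(X^B)^{1-\lambda}\le B^{1-\lambda}(X^B)^{1+\lambda}\le B^{1-\lambda}X^{1+\lambda},
\]
and taking expectations gives $M'B^{1-\lambda}$. The case $\lambda=1$ is included, since the bound then reads $\mathbb{E}[X^2]\le M'$.

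There is no real obstacle. The only points to check are that the case split in the bias bound is valid, using $X\ge0$ and $X-X^B\le X$ on $\{X>B\}$, and that the exponent $1-\lambda$ is nonnegative so the monotonicity step in the variance bound goes through. Both are guaranteed by $\lambda\in(0,1]$ and nonnegativity.
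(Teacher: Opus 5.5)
Your proof is correct and follows the paper's own argument essentially line for line. For the bias you use the same pointwise bound $X\mathbf 1\{X>B\}\le X^{1+\lambda}B^{-\lambda}$, and for the variance the same factorization $(X^B)^2=(X^B)^{1-\lambda}(X^B)^{1+\lambda}$ together with $X^B\le B$ and $X^B\le X$. One minor point: your aside calling this the H\"older/Markov argument from Theorem~\ref{thm:vc} is slightly inaccurate, since that proof applies H\"older and then Markov, whereas here you and the paper use the direct pointwise bound; it gives the same constant, so nothing is affected.
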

	
	\begin{proof}
		We treat the two claims separately.
		
		Observe that
		\[
		\mathbb{E}[X^B] - \mathbb{E}[X]
		= \mathbb{E}\big[(X^B - X)\mathbbm{1}_{\{X > B\}}\big]
		= -\mathbb{E}\big[(X - B)\mathbbm{1}_{\{X > B\}}\big].
		\]
		Hence,
		\[
		\big|\mathbb{E}[X^B] - \mathbb{E}[X]\big|
		\le \mathbb{E}\big[|X| \mathbbm{1}_{\{X > B\}}\big].
		\]
		
		On the event $\{X > B\}$, we have $1 \le (X/B)^{\lambda}$, and therefore
		\[
		|X| = |X|^{1+\lambda} |X|^{-\lambda}
		\le |X|^{1+\lambda} B^{-\lambda}.
		\]
		Substituting this inequality yields
		\[
		\mathbb{E}\big[|X| \mathbbm{1}_{\{X > B\}}\big]
		\le B^{-\lambda} \mathbb{E}[|X|^{1+\lambda}]
		\le M' B^{-\lambda}.
		\]
		
		Since $\mathrm{Var}(X^B) \le \mathbb{E}[(X^B)^2]$, it suffices to bound the second moment.
		Write
		\[
		(X^B)^2 = |X^B|^{1-\lambda} |X^B|^{1+\lambda}.
		\]
		Using $|X^B| \le B$ and $|X^B| \le |X|$, we obtain
		\[
		|X^B|^{1-\lambda} \le B^{1-\lambda},
		\qquad
		|X^B|^{1+\lambda} \le |X|^{1+\lambda}.
		\]
		Thus,
		\[
		\mathbb{E}[(X^B)^2]
		\le B^{1-\lambda} \mathbb{E}[|X|^{1+\lambda}]
		\le M' B^{1-\lambda}.
		\]
	\end{proof}

    	\begin{lemma}[Uniform MoM concentration under contamination]
		\label{lem:uniform_mom}
		Let $\mathcal{G}$ be a class of functions mapping into $[0,B]$ such that
		\[
		\sup_{g \in \mathcal{G}} \mathrm{Var}(g(Z)) \le \sigma^2.
		\]
		Assume its covering numbers satisfy
		\[
		\log \mathcal{N}(\mathcal{G}, \|\cdot\|_\infty, \eta) \le d \log(C/\eta).
		\]
		Consider the $\epsilon$-contamination model. Partition the sample into $K$ blocks of size $m=n/K$, with $K \ge 8 \log(1/\delta)$ and $m \ge C_0 d$ for a sufficiently large constant $C_0$.
		
		Then, with probability at least $1-\delta$,
		\[
		\sup_{g \in \mathcal{G}}
		\left|
		\Median_{j \in [K]} \widehat{\mu}_j(g) - \mathbb{E}[g]
		\right|
		\le 
		C\left(
		\sigma \sqrt{\frac{d}{m}} + \frac{B d}{m} + \epsilon B
		\right),
		\]
		where $\widehat{\mu}_j(g) = \frac{1}{m}\sum_{i\in \mathcal{B}_j} g(z_i)$.
	\end{lemma}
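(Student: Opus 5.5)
The plan is the standard median-of-means argument, organized so that the ``good block'' event is uniform in $g$ from the start. Then only one Chernoff bound over the $K$ blocks is needed, not a union bound over $\mathcal{G}$. Let $Z_1,\dots,Z_n$ be the uncorrupted i.i.d.\ sample, and let $z_1,\dots,z_n$ be the observed sample, in which at most $\epsilon n$ entries have been replaced. For block $j$ write $\widehat\mu_j^{\circ}(g)=\frac1m\sum_{i\in\mathcal B_j} g(Z_i)$ for the clean block mean. Let $N_j$ be the number of corrupted indices in $\mathcal B_j$. Since $0\le g\le B$, we have deterministically
\[
\sup_{g\in\mathcal G}\big|\widehat\mu_j(g)-\widehat\mu_j^{\circ}(g)\big|\le \frac{N_j}{m}B .
\]
This splits the block error into a clean statistical part and a corruption part, exactly as in the single-block analysis in the proof of Theorem~\ref{thm:main}.

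\textbf{Corruption part (deterministic counting).} Since $\sum_j N_j\le \epsilon n=\epsilon K m$, Markov's inequality over the blocks shows that at most $K/4$ blocks have $N_j/m>4\epsilon$. This requires no shuffling and no hypergeometric bound, so it holds even for a non-oblivious placement of the outliers.

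\textbf{Statistical part.} Fix a block. The clean points in it are i.i.d. I would bound
\[
\mathbb E\sup_{g}\big|\widehat\mu_j^{\circ}(g)-\mathbb E g\big|
\]
by symmetrization and Dudley's entropy integral, using the sup-norm covering bound $\log\mathcal N(\mathcal G,\|\cdot\|_\infty,\eta)\le d\log(C/\eta)$. Sup-norm covers are also $L_2(P_m)$ covers, and the localization to variance $\sigma^2$ is handled as in Bousquet's inequality or the Gin\'e--Koltchinskii localized bound. The target is $C(\sigma\sqrt{d/m}+Bd/m)$, with $m\ge C_0 d$ guaranteeing that the linear term is controlled. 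Markov's inequality then makes the event $\{\sup_g|\widehat\mu_j^{\circ}(g)-\mathbb E g|\le 10\,\mathbb E\sup\}$ hold with probability at least $0.9$ per block. These events are independent across blocks because the clean blocks are disjoint. Hoeffding's inequality with $K\ge 8\log(1/\delta)$ then gives that, with probability at least $1-\delta$, more than $0.7K$ blocks are statistically good.

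\textbf{Median step.} On this event, more than $0.7K-K/4>K/2$ blocks are simultaneously statistically good and lightly corrupted. For every such block and every $g$,
\[
|\widehat\mu_j(g)-\mathbb E g|\le C(\sigma\sqrt{d/m}+Bd/m)+4\epsilon B .
\]
A majority of the values then lies in this interval around $\mathbb E g$, so the median does too, simultaneously for all $g$. This yields the claim.

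\textbf{Main obstacle.} The hardest step is the expected-supremum bound $\sigma\sqrt{d/m}$ without a logarithmic factor. Dudley's integral with $\log(C/\eta)$ entropy naturally produces $\sigma\sqrt{d\log(CB/\sigma)/m}$. I would first try the localized chaining bound, starting the integral at scale $\sigma$ and exploiting the polynomial (VC-type) entropy so that the integral $\int_0^\sigma\sqrt{d\log(C/\eta)}\,d\eta\lesssim \sigma\sqrt{d\log(C/\sigma)}$ appears. If the logarithm cannot be removed in general, I would either absorb it into $C$ under a lower bound on $\sigma/B$, or state the bound with $\sqrt{d\log(CB/\sigma)/m}$. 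That form is consistent with the $d\log n$ factors used in Theorem~\ref{thm:main}.
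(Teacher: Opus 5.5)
Your plan has the same architecture as the paper's proof: deterministic counting of heavily corrupted blocks, a constant-probability uniform bound on each block's clean part, a Chernoff bound across blocks, and the majority-implies-median step. Your decomposition is cleaner than the paper's. You compare $\widehat\mu_j$ with the block mean of the \emph{original} sample at every index of $\mathcal B_j$. This makes the corruption error exactly at most $N_jB/m$ and makes the per-block events genuinely independent, even against an adaptive adversary. The paper instead compares with $\frac1m\sum_{i\in\mathcal B_j\cap\mathrm{clean}}g(z_i)$, whose mean is $(1-N_j/m)\mathbb E g$ rather than $\mathbb E g$. It also appeals only to ``approximate independence'' from the permutation.

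The median step, however, fails as written. With at most $K/4$ blocks having $N_j>4\epsilon m$ and more than $0.7K$ statistically good blocks, you only get more than $0.7K-0.25K=0.45K$ blocks that are both. That is not a majority. Use the paper's threshold instead: fewer than $K/10$ blocks have $N_j>10\epsilon m$. This leaves more than $0.6K$ valid blocks and costs only $10\epsilon B$ instead of $4\epsilon B$.

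The probability constant also needs repair. Per-block success $0.9$ with Hoeffding gives $\Pr(\#\mathrm{good}\le 0.7K)\le e^{-0.08K}$, which under $K\ge 8\log(1/\delta)$ is only $\delta^{0.64}$. You can fix this in either of two ways:
\begin{itemize}
\item use the relative-entropy Chernoff bound, since $\mathrm{kl}(0.7\,\|\,0.9)\approx 0.154>1/8$;
\item apply Markov at level $20\,\mathbb E\sup$ to get success $0.95$, so that Hoeffding gives $e^{-K/8}\le\delta$.
\end{itemize}

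On your stated main obstacle you are right, and the paper does not resolve it either. Its proof simply asserts the log-free per-block bound $C(\sigma\sqrt{d/m}+Bd/m)$ ``by Bernstein's inequality combined with a union bound over an $\eta$-net and standard chaining.'' A net fine enough to reach scale $\sigma$ costs $d\log(CB/\sigma)$. With constants independent of $\sigma/B$, this factor cannot be removed in general. Here is an example:
\begin{itemize}
\item take $B=1$ and $g_i=a\mathbf 1_{A_i}$ on $k$ disjoint sets of probability $1/k$, with $a\asymp C/k$;
\item the entropy condition then holds with $d=1$, and $\sigma\asymp a/\sqrt k$;
\item with $m\asymp k^3$, the claimed right-hand side is $O(1/m)$;
\item yet for fixed $K$ the left-hand side is, with high probability, of order $\sigma\sqrt{\log k/m}\asymp\sqrt{\log k}/m$.
\end{itemize}
So your fallback, carrying $\sqrt{d\log(CB/\sigma)/m}$ or a $\log m$ factor, is the provable version of the lemma. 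It also matches the $d\log m$ factor the paper itself keeps in Lemma~\ref{lem:single_block_conc}.
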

	
	\begin{proof}
		
		Let $N_j$ denote the number of corrupted points in block $\mathcal{B}_j$. Since the adversary corrupts exactly $\epsilon n = \epsilon K m$ samples and the data are randomly permuted, $(N_1,\dots,N_K)$ follows a multivariate hypergeometric distribution.
		
		Deterministically,
		\[
		\sum_{j=1}^K N_j = \epsilon K m.
		\]
		Hence, by a counting argument, at most $0.1K$ blocks can satisfy $N_j > 10 \epsilon m$. Define
		\[
		\mathcal{J}_{\mathrm{low}} := \{j : N_j \le 10 \epsilon m\},
		\qquad |\mathcal{J}_{\mathrm{low}}| \ge 0.9K.
		\]
		
		For any $j \in \mathcal{J}_{\mathrm{low}}$ and any $g \in \mathcal{G}$, since $g \in [0,B]$,
		\[
		\left|
		\frac{1}{m} \sum_{i\in \mathcal{B}_j} g(z_i)
		-
		\frac{1}{m} \sum_{i\in \mathcal{B}_j \cap \mathrm{clean}} g(z_i)
		\right|
		\le \frac{N_j}{m} B \le 10 \epsilon B.
		\]
		Thus, corruption introduces a deterministic bias of at most $10\epsilon B$ on these blocks.
		
		FOr uniform concentration on clean data.
		Fix a block $j$ and consider only its clean samples. By Bernstein’s inequality combined with a union bound over an $\eta$-net of $\mathcal{G}$ and standard chaining, there exists a constant $C$ such that if $m \ge C_0 d$, then with probability at least $0.9$,
		\[
		\sup_{g \in \mathcal{G}}
		\left|
		\frac{1}{m} \sum_{i\in \mathcal{B}_j \cap \mathrm{clean}} g(z_i) - \mathbb{E}[g]
		\right|
		\le 
		C\left(
		\sigma \sqrt{\frac{d}{m}} + \frac{B d}{m}
		\right).
		\]
		Call this event $E_j$. The random permutation ensures approximate independence across blocks, and a Chernoff bound yields that with probability at least $1-\delta$, at least $0.8K$ blocks satisfy $E_j$.
		
		Let
		\[
		\mathcal{J}_{\mathrm{valid}} := \mathcal{J}_{\mathrm{low}} \cap \{j : E_j \text{ holds}\}.
		\]
		With probability at least $1-\delta$, $|\mathcal{J}_{\mathrm{valid}}| \ge 0.7K > K/2$.
		
		For any $j \in \mathcal{J}_{\mathrm{valid}}$ and any $g \in \mathcal{G}$,
		\[
		\left|
		\widehat{\mu}_j(g) - \mathbb{E}[g]
		\right|
		\le 
		C\left(
		\sigma \sqrt{\frac{d}{m}} + \frac{B d}{m}
		\right) + 10 \epsilon B.
		\]
		Since a strict majority of blocks satisfy this bound, the median must lie within the same range. Therefore,
		\[
		\sup_{g \in \mathcal{G}}
		\left|
		\Median_j \widehat{\mu}_j(g) - \mathbb{E}[g]
		\right|
		\le 
		C\left(
		\sigma \sqrt{\frac{d}{m}} + \frac{B d}{m} + \epsilon B
		\right),
		\]
		after absorbing constants.
	\end{proof}

\section{On the Algorithmic Aspects of Robust CVaR-ERM Estimator} \label{ap:algorithmic}

\subsection{$\eta$-cover based Algorithm}

We discretize both $\mathcal{H}$ and the $\theta$-range.
	
	\paragraph{Finite $\eta$-net for $\mathcal{H}$.}
	For $\eta_h>0$, let $\mathcal N_h(\eta_h)$ be any finite set such that for every $h\in\mathcal{H}$
	there exists $h'\in\mathcal N_h(\eta_h)$ with $\|h-h'\|_2\le \eta_h$.
	Since $\mathcal{H}$ is contained in the Euclidean ball of radius $R$, one can take $\mathcal N_h(\eta_h)$ with cardinality bounded by
	\[
	|\mathcal N_h(\eta_h)|\le \left(1+\frac{2R}{\eta_h}\right)^d.
	\]
	(For example, take a lattice grid of mesh $\eta_h/\sqrt{d}$ and intersect with the ball.)
	
	\paragraph{$\eta_\theta$-grid for $\theta$.}
	For $\eta_\theta>0$, define
	\[
	\mathcal N_\theta(\eta_\theta):=\{0,\eta_\theta,2\eta_\theta,\dots\}\cap[0,T],
	\qquad
	|\mathcal N_\theta(\eta_\theta)|\le 1+\frac{T}{\eta_\theta}.
	\]
	
	\begin{algorithm}[t]
		\caption{Discretized MOM-CVaR-ERM with truncation (implementable)}
		\begin{algorithmic}[1]
			\STATE \textbf{Input:} data $Z_1,\dots,Z_n$; $\alpha\in(0,1)$; truncation level $B$; block count $K\ge 8$ with $n=Km$; net radii $\eta_h,\eta_\theta$; known bounds $R,T$.
			\STATE Draw a uniform random permutation $\pi$ and form blocks $\mathcal B_1,\dots,\mathcal B_K$ of size $m$.
			\STATE Construct a finite $\eta_h$-net $\mathcal N_h(\eta_h)$ of $\mathcal{H}$ inside the ball of radius $R$.
			\STATE Construct the grid $\mathcal N_\theta(\eta_\theta)$ of $[0,T]$.
			\FOR{each $h\in\mathcal N_h(\eta_h)$}
			\FOR{each $\theta\in\mathcal N_\theta(\eta_\theta)$}
			\STATE Compute block risks $\widehat R_j(\theta,\tau)=\frac{1}{m}\sum_{i\in\mathcal B_j}\varphi_B(h,\theta;Z_i)$ for $j=1,\dots,K$.
			\STATE Compute $\widehat R_{\rm MOM}(\theta,\tau)=\mathrm{median}(\widehat R_1,\dots,\widehat R_K)$.
			\ENDFOR
			\ENDFOR
			\STATE Output $(\widehat\theta,\widehat\tau)\in\arg\min_{\theta\in\mathcal N_\theta(\eta_\theta),\ \tau\in\mathcal N_\tau(\eta_\tau)}\widehat R_{\rm MOM}(\theta,\tau)$.
		\end{algorithmic}
	\end{algorithm}
    $\varphi_B(h,\theta;Z)$ is the truncted CVaR loss for the sample $Z$ at the parameters $(h,\theta)$ for threshold $B$. 
 	
	\paragraph{Implementability.}
	The search set is finite, hence the algorithm terminates in finite time and returns an exact minimizer over the discretization.

%% file: main.bbl
\begin{thebibliography}{29}
\providecommand{\natexlab}[1]{#1}
\providecommand{\url}[1]{\texttt{#1}}
\expandafter\ifx\csname urlstyle\endcsname\relax
  \providecommand{\doi}[1]{doi: #1}\else
  \providecommand{\doi}{doi: \begingroup \urlstyle{rm}\Url}\fi

\bibitem[Aminian et~al.()Aminian, Asadi, Li, Beirami, Reinert, and
  Cohen]{aminian2025generalization}
Gholamali Aminian, Amir~R Asadi, Tian Li, Ahmad Beirami, Gesine Reinert, and
  Samuel~N Cohen.
\newblock Generalization and robustness of the tilted empirical risk.
\newblock In \emph{Forty-second International Conference on Machine Learning}.

\bibitem[Bahadur(1966)]{bahadur1966note}
R~Raj Bahadur.
\newblock A note on quantiles in large samples.
\newblock \emph{The Annals of Mathematical Statistics}, 1966.

\bibitem[Brownlees et~al.(2015)Brownlees, Joly, and Lugosi]{brownlees2015erm}
Christian Brownlees, Emilien Joly, and G{\'a}bor Lugosi.
\newblock Empirical risk minimization for heavy-tailed losses.
\newblock \emph{The Annals of Statistics}, 2015.

\bibitem[Cardoso and Xu(2019)]{pmlr-v89-cardoso19a}
Adrian~Rivera Cardoso and Huan Xu.
\newblock Risk-averse stochastic convex bandit.
\newblock In \emph{Proceedings of the Twenty-Second International Conference on
  Artificial Intelligence and Statistics}, 2019.

\bibitem[de~Juan and Mazuelas(2025)]{de2025optimality}
Xabier de~Juan and Santiago Mazuelas.
\newblock On the optimality of the median-of-means estimator under adversarial
  contamination.
\newblock \emph{arXiv preprint arXiv:2510.07867}, 2025.

\bibitem[El~Hanchi et~al.(2024)El~Hanchi, Maddison, and Erdogdu]{el2024minimax}
Ayoub El~Hanchi, Chris Maddison, and Murat Erdogdu.
\newblock Minimax linear regression under the quantile risk.
\newblock In \emph{The Thirty Seventh Annual Conference on Learning Theory},
  2024.

\bibitem[Holland and Haress(2021)]{holland2021learning}
Matthew Holland and El~Mehdi Haress.
\newblock Learning with risk-averse feedback under potentially heavy tails.
\newblock In \emph{International Conference on Artificial Intelligence and
  Statistics}, 2021.

\bibitem[Holland and Haress(2022)]{holland2022spectral}
Matthew~J Holland and El~Mehdi Haress.
\newblock Spectral risk-based learning using unbounded losses.
\newblock In \emph{International conference on artificial intelligence and
  statistics}, 2022.

\bibitem[Howard and Matheson(1972)]{doi:10.1287/mnsc.18.7.356}
Ronald~A. Howard and James~E. Matheson.
\newblock Risk-sensitive markov decision processes.
\newblock \emph{Management Science}, 1972.

\bibitem[Kiefer(1967)]{kiefer1967bahadur}
Jack Kiefer.
\newblock On bahadur's representation of sample quantiles.
\newblock \emph{The Annals of Mathematical Statistics}, 1967.

\bibitem[Kolla et~al.(2019)Kolla, Prashanth, Bhat, and
  Jagannathan]{kolla2019concentration}
Ravi~Kumar Kolla, LA~Prashanth, Sanjay~P Bhat, and Krishna Jagannathan.
\newblock Concentration bounds for empirical conditional value-at-risk: The
  unbounded case.
\newblock \emph{Operations Research Letters}, 2019.

\bibitem[Laforgue et~al.(2021)Laforgue, Staerman, and
  Cl{\'e}men{\c{c}}on]{laforgue2021generalization}
Pierre Laforgue, Guillaume Staerman, and Stephan Cl{\'e}men{\c{c}}on.
\newblock Generalization bounds in the presence of outliers: a median-of-means
  study.
\newblock In \emph{International conference on machine learning}, 2021.

\bibitem[Lee et~al.(2021)Lee, Park, and
  Shin]{lee2021learningboundsrisksensitivelearning}
Jaeho Lee, Sejun Park, and Jinwoo Shin.
\newblock Learning bounds for risk-sensitive learning, 2021.

\bibitem[Li et~al.(2021)Li, Beirami, Sanjabi, and
  Smith]{li2021tiltedempiricalriskminimization}
Tian Li, Ahmad Beirami, Maziar Sanjabi, and Virginia Smith.
\newblock Tilted empirical risk minimization, 2021.

\bibitem[Lugosi and Mendelson(2019{\natexlab{a}})]{lugosi2019mean}
G{\'a}bor Lugosi and Shahar Mendelson.
\newblock Mean estimation and regression under heavy-tailed distributions: A
  survey.
\newblock \emph{Foundations of Computational Mathematics}, 2019{\natexlab{a}}.

\bibitem[Lugosi and Mendelson(2019{\natexlab{b}})]{lugosi2019risk}
Gabor Lugosi and Shahar Mendelson.
\newblock Risk minimization by median-of-means tournaments.
\newblock \emph{Journal of the European Mathematical Society},
  2019{\natexlab{b}}.

\bibitem[Lugosi and Mendelson(2021)]{lugosi2021robust}
Gabor Lugosi and Shahar Mendelson.
\newblock Robust multivariate mean estimation: the optimality of trimmed mean.
\newblock 2021.

\bibitem[Mathieu and Minsker(2021)]{10.1093/imaiai/iaab004}
Timothée Mathieu and Stanislav Minsker.
\newblock Excess risk bounds in robust empirical risk minimization.
\newblock \emph{Information and Inference: A Journal of the IMA}, 2021.

\bibitem[Oliveira and Resende(2025)]{oliveira2025trimmed}
Roberto~I Oliveira and Lucas Resende.
\newblock Trimmed sample means for robust uniform mean estimation and
  regression.
\newblock \emph{The Annals of Statistics}, 2025.

\bibitem[Oliveira et~al.(2025)Oliveira, Orenstein, and
  Rico]{oliveira2025finite}
Roberto~I Oliveira, Paulo Orenstein, and Zoraida~F Rico.
\newblock Finite-sample properties of the trimmed mean.
\newblock \emph{arXiv preprint arXiv:2501.03694}, 2025.

\bibitem[Prashanth et~al.(2019)Prashanth, Jagannathan, and
  Kolla]{prashanth2019concentration}
L.~A. Prashanth, K.~Jagannathan, and R.~K. Kolla.
\newblock Concentration bounds for {CVaR} estimation: The cases of light-tailed
  and heavy-tailed distributions.
\newblock \emph{arXiv preprint arXiv:1901.00997}, 2019.

\bibitem[Rockafellar and Uryasev(2000)]{Rockafellar2000OptimizationOC}
R.~Tyrrell Rockafellar and Stanislav Uryasev.
\newblock Optimization of conditional value-at risk.
\newblock \emph{Journal of Risk}, 2000.

\bibitem[Rockafellar and Uryasev(2002)]{rockafellar2000conditional}
R.~Tyrrell Rockafellar and Stanislav Uryasev.
\newblock Conditional value-at-risk for general loss distributions.
\newblock \emph{Journal of Banking \& Finance}, 2002.

\bibitem[Roy et~al.(2021)Roy, Balasubramanian, and Erdogdu]{roy2021empirical}
Abhishek Roy, Krishnakumar Balasubramanian, and Murat~A Erdogdu.
\newblock On empirical risk minimization with dependent and heavy-tailed data.
\newblock \emph{Advances in Neural Information Processing Systems}, 2021.

\bibitem[Shalev-Shwartz and Ben-David(2014)]{Shalev-Shwartz_Ben-David_2014}
Shai Shalev-Shwartz and Shai Ben-David.
\newblock \emph{Understanding Machine Learning: From Theory to Algorithms}.
\newblock Cambridge University Press, 2014.

\bibitem[Shen et~al.(2026)Shen, Zhang, and Zhou]{shen2026sgd}
Yinan Shen, Yichen Zhang, and Wen-Xin Zhou.
\newblock Sgd with dependent data: Optimal estimation, regret, and inference.
\newblock \emph{arXiv preprint arXiv:2601.01371}, 2026.

\bibitem[Shen et~al.(2014)Shen, Tobia, Sommer, and Obermayer]{6855488}
Yun Shen, Michael~J. Tobia, Tobias Sommer, and Klaus Obermayer.
\newblock Risk-sensitive reinforcement learning.
\newblock \emph{Neural Computation}, 2014.

\bibitem[Tsybakov(2008)]{10.5555/1522486}
Alexandre~B. Tsybakov.
\newblock \emph{Introduction to Nonparametric Estimation}.
\newblock Springer Publishing Company, Incorporated, 2008.
\newblock ISBN 0387790519.

\bibitem[Van~der Vaart(2000)]{van2000asymptotic}
Aad~W Van~der Vaart.
\newblock \emph{Asymptotic statistics}, volume~3.
\newblock Cambridge University Press, 2000.

\end{thebibliography}
